\newcommand{\lp}{\left}
\newcommand{\rp}{\right}
\newtheorem{theorem}{Theorem}[section]
\newtheorem{definition}[theorem]{Definition}
\newtheorem{lemma}[theorem]{Lemma}
\newtheorem{claim}[theorem]{Claim}
\newtheorem{remark}[theorem]{Remark}
\newtheorem*{question*}{Question}
\newcommand{\Ex}{{\mathbb{E}}}
\newcommand{\eps}{\epsilon}
\newcommand{\Ldim}{\mathsf{Ldim}}
\newcommand{\dS}{\mathbb{B}}
\newcommand{\maj}{\mathsf{Lmaj}}
\newcommand{\E}{\mathcal{E}}
\newcommand{\A}{\mathcal{A}}
\newcommand{\Bin}{\mathrm{Bin}}
\newcommand{\Ber}{\mathrm{Ber}}
\newcommand{\Res}{\mathrm{Res}}
\newcommand{\Uni}{\mathrm{Uni}}
\newcommand{\support}{\mathrm{support}}
\newcommand{\VC}{\mathrm{VC}}
\newcommand{\App}{\mathrm{App}}
\newcommand{\Net}{\mathrm{Net}}
\newcommand{\Disc}{\mathrm{Disc}}
\renewcommand{\vec}[1]{\overline{#1}}
\newcommand{\rv}[1]{\boldsymbol{#1}}
\newcommand{\rvv}[1]{\vec{\rv{#1}}}
\newcommand{\rvI}{{\rv{I}}}
\newcommand{\rvvI}{\vec{\rvI}}
\newcommand{\rvx}{\rv{x}}
\newcommand{\vx}{\vec{x}}
\newcommand{\rvvx}{\vec{\rvx}}
\newcommand{\om}{\overline{m}}
\newcommand{\um}{\underline{m}}
\newcommand{\Adv}{\mathrm{Adv}}
\newcommand{\omri}[1]{\textcolor{orange}{Omri: {#1}}}
\newcommand{\shay}[1]{\textcolor{blue}{Shay: {#1}}}
\newcommand{\yuval}[1]{\textcolor{purple}{ Yuval: {#1}}}
\title{Adversarial Laws of Large Numbers \\and\\ Optimal Regret in Online Classification}
\author{\begin{tabular}[t]{c@{\extracolsep{4em}}cc} 
	
	Noga Alon\thanks{Department of Mathematics, Princeton University, Princeton, New Jersey, USA and Schools of Mathematics and Computer Science, Tel Aviv University, Tel Aviv, Israel. Research supported in part by
NSF grant DMS-1855464,
BSF grant 2018267
and the Simons Foundation. Email: \url{nalon@math.princeton.edu}.}
		&
		Omri Ben-Eliezer\thanks{Center for Mathematical Sciences and Applications, Harvard University, Massachusetts, USA. Research partially
conducted while the author was at Weizmann Institute of Science, supported in part by a grant from the Israel Science Foundation (no. 950/15). Email: \url{omribene@cmsa.fas.harvard.edu}.}
		&
		Yuval Dagan\thanks{Computer Science and Artificial Intelligence Laboratory, Massachusetts Institute of Technology, Cambridge, Massachusetts, USA. Email: \url{dagan@mit.edu}.} 
		\vspace{0.15cm}
		\\
		Shay Moran\thanks{Department of Mathematics, Technion, Israel. Email: \url{smoran@technion.ac.il}.
		Research supported in part by the Israel Science Foundation (grant No. 1225/20), by an Azrieli Faculty Fellowship, and by a grant from the United States - Israel Binational Science Foundation (BSF).}
		&
		Moni Naor\thanks{Department of Computer Science and Applied Mathematics, Weizmann Institute of Science, Rehovot, Israel.
Supported in part by grants from the Israel Science Foundation (no. 950/15 and 2686/20) and by the Simons Foundation Collaboration
on the Theory of Algorithmic Fairness. Incumbent of the Judith Kleeman Professorial Chair. Email: \url{moni.naor@weizmann.ac.il}.}
		&
		Eylon Yogev\thanks{Department of Computer Science, Boston University and Department of Computer Science, Tel Aviv University. Email: \url{eylony@gmail.com}. Research supported in part by ISF grants 484/18, 1789/19, Len Blavatnik and the Blavatnik Foundation, and The Blavatnik Interdisciplinary Cyber Research Center at Tel Aviv University.}
		\end{tabular}
		}
\date{}
\begin{document}
\maketitle
\begin{abstract}
Laws of large numbers guarantee that given a large enough sample from some population, the measure of any fixed sub-population is well-estimated by its frequency in the sample.
We study laws of large numbers in sampling processes that can affect the environment they are acting upon and interact with it. Specifically, we consider the sequential sampling model proposed by Ben-Eliezer and Yogev (2020), and characterize the classes which admit a uniform law of large numbers in this model: these are exactly the classes that are \emph{online learnable}.
Our characterization may be interpreted as an online analogue to the equivalence between learnability and uniform convergence in statistical (PAC) learning. 
	
The sample-complexity bounds we obtain are tight for many parameter regimes, and as an application, we determine the optimal regret bounds in online learning, stated in terms of \emph{Littlestone's dimension}, thus resolving the main open question from Ben-David, P\'al, and Shalev-Shwartz (2009), which was also posed by Rakhlin, Sridharan, and Tewari (2015).
\end{abstract}

\thispagestyle{empty}
\pagebreak
\thispagestyle{empty}

\tableofcontents
\thispagestyle{empty}
\pagebreak
\pagenumbering{arabic}

\section{Introduction}
	When analyzing an entire population is infeasible, statisticians apply {\em sampling methods}
	by selecting a {\it sample} of elements from a target population as a guide to the entire population. 
	Thus, one of the most fundamental tasks in statistics is to provide bounds on the sample size
	which is sufficient to soundly represent the population, and probabilistic tools are used to derive such guarantees,
	under a variety of assumptions. Virtually all of these guarantees are based on classical probabilistic models which assume that {\it the target population is fixed in advance and does not depend on the sample collected throughout the process}. Such an assumption, that the setting is \emph{offline} (or \emph{oblivious} or \emph{static}), is however not always realistic. 
	In this work we explore an abstract framework which removes this 
	assumption, and prove that natural and efficient sampling processes 
	produce samples which soundly represent the target population.

Situations where the sampling process explicitly or implicitly affects the target population are abundant in modern data analysis.
Consider, for instance, navigation apps that optimize traffic by routing drivers to less congested routes:
{such apps collect statistics from drivers to estimate the traffic-load on the routes, 
and use these estimates to guide their users through faster routes.
Thus, such apps interact with and affect the statistics they estimate.
Consequently, the assumption that the measured populations do not depend on the measurements is not realistic.}

Similar issues generally arise in settings involving decision-making in the face of an ever-changing (and sometimes even adversarial) environment;
a few representative examples include autonomous driving \cite{Sitawarin2018}, adaptive data analysis \cite{Dwork2015Science,Woodworth18}, security \cite{NaorY15}, and theoretical analysis of algorithms \cite{ChuGPSSW18}. 
Consequently, there has recently been a surge of works exploring such scenarios, a partial list includes
\cite{MironovNS11,GilbertHRSW12,GilbertHSWW12,HardtW13,NaorY15,BenEliezerJWY20,cherapanamjeri2020adaptive,haghtalab2020smoothed,HassidimKMMS20}. 
In this work, we focus on the sequential sampling model recently proposed by Ben-Eliezer and Yogev \cite{BenEliezerYogev2020}.

\subsection{The Adversarial Sampling Model}
We next formally describe the sampling setting and the main question we investigate.
Ben-Eliezer and Yogev \cite{BenEliezerYogev2020} model sampling processes over a domain $X$ as a sequential game between two players: a sampler and an adversary.
The game proceeds in $n$ rounds, where in each round $i=1,\ldots,n$:
\begin{itemize}[noitemsep]
\item The adversary picks an item $x_i\in X$ and provides it to the sampler. 
The choice of $x_i$ might depend on $x_1, \ldots, x_{i-1}$ and on all information sent to the adversary up to this point.
\item Then, the sampler decides whether to add $x_i$ to its sample.
\item Finally, the adversary is informed of whether $x_i$ was sampled by the sampler.
\end{itemize}
The number of rounds $n$ is known in advance to both players.\footnote{Though we will also consider samplers which are oblivious to the number of rounds $n$.} We stress that both players can be randomized, in which case their randomness is private (i.e., not known to the other player).

\vspace{1mm}

\noindent{\bf Oblivious Adversaries.} In the oblivious (or static) case, the sampling process consists only of the first two bullets.
Equivalently, oblivious adversaries decide on the entire stream in advance, without receiving any feedback from the sampler.
Unless stated otherwise, the adversary in this paper is assumed to be adaptive (not oblivious).

\paragraph{Uniform Laws of Large Numbers.}

Uniform laws of large numbers (ULLN) quantify the minimum sample size which is sufficient to {\it uniformly estimate multiple statistics of the data.} (Rather than just a {\it single} statistic, as in standard laws of large numbers.)
This is relevant, for instance, in the example given above regarding the navigation app: 
	it is desirable to accurately compute the congestion along \emph{all} routes (paths). 
	Otherwise, one congested route may be regarded as entirely 
	non-congested, and it will be selected for navigation. 

Given a family $\E$ of subsets of $X$,
	we consider ULLNs that estimate the frequencies of each subset $E\in \E$ within the adversarial stream.
	Formally, let $\vx=\{x_1,\ldots,x_n\}$ denote the input-stream produced by the adversary, 
	and let $\vec{s}=\{x_{i_1},\ldots,x_{i_k}\}$ denote the sample chosen by the sampler.
	The sample $\vec{s}$ is called an {\it $\eps$-approximation of the stream~$\vx$ with respect to $\E$} if:
\begin{equation}\label{eq:ULLN-def}
(\forall E\in {\cal E} ) : \quad \left\lvert \frac{\lvert\bar s \cap E \rvert}{\lvert\bar s \rvert} - \frac{\lvert\bar x \cap E \rvert}{\lvert\bar x \rvert} \right\rvert \le \epsilon.
\end{equation}
That is, $\vec{s}$ is an $\eps$-approximation of $\vx$ if the {\it true-frequencies} ${\lvert\bar x \cap E \rvert}/{\lvert\bar x \rvert}$ are uniformly approximated by the {\it empirical frequencies} ${\lvert\bar s \cap E \rvert}/{\lvert\bar s \rvert}$. 
The following question is the main focus of this work:

\begin{question*}[Main Question]
Given a family $\E$, an error-parameter $\epsilon>0$, and $k \in \mathbb{N}$, 
is there a sampler that, given any adversarially-produced input stream $\vx$,
picks a sample $\vec{s}$ of at most $k$ items 
which forms an $\eps$-approximation of $\bar x$, with high probability?
\end{question*}

\paragraph{The Story in the Statistical Setting.}
It is instructive to compare with the statistical setting in which the sample $\vec{s}$ is drawn independently from an unknown distribution over $X$.
	Here, ULLNs are characterized by the Vapnik-Chervonenkis (VC) Theory
	which asserts that a family $\E$ satisfies a ULLN if and only if its VC dimension, $\VC(\E)$, is finite~\cite{VC1971}.

This fundamental result became a corner-stone in statistical machine learning.
	In particular, {\it The Fundamental Theorem of PAC Learning} states that the following properties are equivalent for any family $\E$: 
	(1) $\E$ satisfies a uniform law of large numbers,
	(2) $\E$ is PAC learnable, and (3)~$\E$~has a finite VC dimension.
	Quantitatively, the sample size required for both $\epsilon$-approximation and for PAC learning with excess-error $\epsilon$ is $\Theta((\VC(\E)+\log(1/\delta))/\epsilon^2)$.
	
\paragraph{Spoiler:}	Our main result (stated below) can be seen as an online/adversarial analogue of this theorem where the {\bf Littlestone dimension} replaces the VC dimension.

\if 0
To formally describe the classical meaning of being ``representative'', we take a short detour to discuss uniform laws of large numbers and the notion of an $\eps$-approximation.

Laws of Large Numbers (abbreviated LLN) are fundamental phenomena in statistics.
They assert that the expectation of a random variable $Z$ is estimated well
by the average of a large enough IID sample drawn from it.
\emph{Uniform} laws of large numbers (ULLN) concern scenarios in which one wishes to estimate the expectations of a family of random variables $\{Z_i\}_{i\in {\cal I}}$ \emph{simultaneously}. 
For example, the classical Vapnik-Chervonenkis (VC) Theorem~\cite{VC1971}
applies in the case where each random variable $Z_i$ is an indicator of an event $E_i$.
(So in this case $\Ex[Z_i]$ is the probability of the event~$E_i$.)
The theorem asserts that if the {\it VC dimension} of the family of events $\mathcal{E}=\{E_i\}_{i\in\cal I}$ is $d$,
and $z_1,\ldots, z_k$ are sampled independently from an arbitrary distribution $\mu$, then with high probability
\begin{equation}
 (\forall E_i\in {\cal E} ) :\quad \bigl\lvert \mu(E_i) - \hat\mu(E_i) \bigr\rvert \leq O\Bigl(\sqrt{\frac{d}{k}}\Bigr),
 \end{equation}
where $\hat \mu(E_i)=\frac{1}{k}\sum_{j=1}^k {\bf1}[z_j\in E_i]$ is the empirical probability of $E_i$. \omri{It seems from the mentioned bound that Vapnik and Chervonenkis were already able to get the tight bound of $d/\eps^2$ for $\eps$-approximation, which doesn't seem right. What should we actually write here? } 
\shay{Add a footnote? Also, it should be $\sqrt{d/k}$ rather than $d/\sqrt{k}$.}

The condition depicted in \eqref{eq:ULLN-def} is usually referred to as saying that the random sample is an \emph{$\eps$-approximation} of the distribution $\mu$ with respect to the family $\cal E$, for some $\eps = O(\sqrt{d/k})$. Thus, an equivalent reformulation of the above statement is that a sample of size $C \cdot d/\eps^2$ (for large enough absolute constant $C$) is with high probability an $\eps$-approximation of $\mu$ with respect to $\cal E$.

\paragraph{Our Problem: ULLN in Adversarial Model.}
Analogously, in the sequential adversarial setting, we say that the sample $\bar s$ is an $\eps$-approximation of the entire sequence (population)\footnote{Note that while the offline laws of large numbers are stated for probability distributions, in the adversarial setting, we formulate these laws with respect to the entire (finite) sequence of elements produced by the adversary along the process, rather than for some adversarial probability space. This allows as to work with a clean combinatorial problem definition, without the burden of dealing with measure-theoretic subtleties. Still, the results we prove here should carry on to properly-defined adversarial probability spaces.} $\vx = (x_1,\dots,x_n)$ with respect to a family ${\cal E} \subseteq 2^X$ of subsets of $X$ if
\begin{equation}\label{eq:ULLN-def-adv}
 (\forall E_i\in {\cal E} ) :\quad \left\lvert \frac{\lvert\bar s \cap E_i \rvert}{\lvert\bar s \rvert} - \frac{\lvert\bar x \cap E_i \rvert}{\lvert\bar x \rvert} \right\rvert \leq \eps,
 \end{equation}
 where $\bar s$ and $\bar x$ are viewed as multisets. 
Our goal is to prove a ULLN for the adversarial setting, that is, to answer questions of the following type. The abbreviation w.h.p.~stands for ``with high probability'', say, with probability $1-\delta$ where $\delta > 0$ is thought of as some small parameter. \yuval{it is common to use w.h.p. for "with high probability". Instead of defining it we can just write w.p. $1-\delta$}
\begin{center}
\it
Let ${\cal E} \subseteq 2^X$ and let $\bar x$ be a sequence over $X$ generated in the adversarial model.
What sample size guarantees that w.h.p., the sample $\bar s$ is an $\eps$-approximation of the sequence $\bar x$ with respect to $\cal E$?
\end{center}
\fi

\section{Main Results}

\subsection{Adversarial Laws of Large Numbers}

The main result in this paper is a characterization of adversarial uniform laws of large numbers in the spirit of VC theory and The Fundamental Theorem of PAC Learning.
	We begin with the following central definition.
\begin{definition}[Adversarial ULLN]
We say that a family $\E$ satisfies an \emph{adversarial ULLN} if for any $\epsilon,\delta > 0$, 
	there exist $k=k(\epsilon,\delta)\in\mathbb{N}$ and a sampler $\cal S$ satisfying the following.
	For any adversarially-produced input-stream~$\vx$ (of any size), $\cal S$ chooses a sample of at most $k$ items,
	which form an $\eps$-approximation of $\vx$ with probability at least $1-\delta$.
	We denote by $k(\E,\epsilon,\delta)$ the minimal such value of $k$.
\end{definition}

Note that this definition requires the sample complexity $k=k(\epsilon,\delta)$ to be a constant independent of the stream size $n$. Another reasonable requirement is $k = o(n)$. It turns out that these two requirements are equivalent.

Which families $\E$ satisfy an adversarial law of large numbers? 
	Clearly, $\E$ must have a finite VC-dimension, as otherwise, basic VC-theory implies that
	any sampler will fail to produce an $\epsilon$-approximation even against oblivious adversaries 
	which draw the input-stream $\vx$ independently from a distribution on $X$.
	However, finite VC dimension is not enough in the fully adversarial setting: 
	\cite{BenEliezerYogev2020} exhibit a family $\E$ with $\VC(\E)=1$ that does not satisfy an adversarial
	ULLN.



Our first result provides a characterization of adversarial ULLN in terms of \emph{Online Learnability},
	which is analogous to the Fundamental Theorem of PAC Learning.
	In this context, the role of VC dimension is played by the {\it Littlestone dimension}, 
	a combinatorial parameter which captures online learnability similar to how the VC dimension captures PAC learnability.
	{(See \Cref{sec:basic_defs} for the formal definition.)}
\begin{theorem}[Adversarial ULLNs -- Qualitative Characterization]\label{thm:qualitative}
Let $\E$ be a family of subsets of $X$.
	Then, the following statements are equivalent:
	\begin{enumerate}
	\item $\E$ satisfies an adversarial ULLN;
	\item $\E$ is online learnable; and 
	\item $\E$ has a finite Littlestone dimension.
	\end{enumerate}
\end{theorem}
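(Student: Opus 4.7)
The equivalence $(2)\Leftrightarrow(3)$ is a classical theorem of Ben-David--P\'al--Shalev-Shwartz (2009), building on Littlestone (1988), so I would quote it and focus on establishing $(1)\Leftrightarrow(3)$.

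For the direction $(1)\Rightarrow(3)$, I would construct an adaptive adversary that exploits a large Littlestone tree. Suppose toward contradiction that $\Ldim(\E)=\infty$, and let $\cal S$ be any sampler with budget $k=k(\epsilon,\delta)$. Pick a Littlestone tree $T$ for $\E$ of depth $d$ with $d$ much larger than $k/\epsilon$. The adversary walks down $T$ level by level: at an internal node $v$ labeled by $x_v$ it presents $x_v$ to $\cal S$, then descends to the left child if $\cal S$ added $x_v$ to the sample and to the right child otherwise. At the leaf the tree certifies the existence of a set $E\in\E$ consistent with the binary pattern ``$0$ on sampled elements, $1$ on unsampled ones''. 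Then the sample frequency of $E$ is $0$ while the stream frequency is $(d-k)/d\ge 1-\epsilon/2$, so $\vec s$ fails to be an $\epsilon$-approximation. Taking $d$ arbitrarily large (possible since $\Ldim(\E)=\infty$) rules out any finite sample-complexity $k(\epsilon,\delta)$.

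For the harder direction $(3)\Rightarrow(1)$, the plan is to analyze a Bernoulli-type sampler at rate $p$ and leverage $\Ldim(\E)<\infty$ to control the adaptive adversary. For any single fixed $E\in\E$, the sequence $\mathbf 1[x_i\in E]-\tfrac1p\mathbf 1[\text{$x_i$ sampled}]\mathbf 1[x_i\in E]$ is a bounded martingale-difference sequence with respect to the natural filtration generated by the adversary's moves and the sampler's coins, so Azuma/Freedman concentration controls the discrepancy. The obstruction is that the adversary may adapt, so we cannot fix $E$ in advance; instead I would pass to the family $\E|_{\vec x_{\le t}}$ of restrictions of sets in $\E$ to the prefix $x_1,\dots,x_t$. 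An online Sauer--Shelah--type statement (the combinatorial heart of Ldim-bounded classes) bounds this restricted family by a polynomial in $t$ of degree $\Ldim(\E)$, and crucially this bound holds for the adversarial stream, not just for an i.i.d.\ one. Taking a union bound over the polynomially many ``live'' restrictions, and then a covering/chaining step to transfer the guarantee from $\E|_{\vec x}$ back to $\E$, yields an $\epsilon$-approximation with $k$ polynomial in $\Ldim(\E)/\epsilon$ and logarithmic in $1/\delta$.

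\textbf{Main obstacle.} The delicate point is the interplay between adaptivity and the union bound in $(3)\Rightarrow(1)$: in the i.i.d.\ setting a single union bound over the growth function suffices, but in the adaptive setting the sampler's past coin flips are visible to the adversary, so the identity of the ``bad'' $E$ that may be certified at the end is itself a random variable correlated with the martingale. The resolution I would pursue is a symmetrization/double-sample argument in an online form, where the relevant measurability with respect to the past filtration is guaranteed by defining the covering of $\E$ one round at a time, and only then is concentration applied. This is exactly the place where Littlestone dimension (an online, tree-based complexity measure) must replace VC dimension (a static, shattering-based one), and where I expect the bulk of the technical work to lie.
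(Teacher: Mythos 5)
Your overall structure matches the paper's: $(2)\Leftrightarrow(3)$ is quoted from Littlestone and Ben-David--P\'al--Shalev-Shwartz, $(1)\Rightarrow(3)$ is a tree-walking adversary, and $(3)\Rightarrow(1)$ goes via Bernoulli-type sampling plus martingale concentration plus an online covering argument. Your $(1)\Rightarrow(3)$ argument is essentially identical to the paper's Theorem~\ref{thm:lbforall}: walk down a shattered tree of depth $d>k/(1-\eps)$, branching on the sampler's decision, and let the shattering property produce a set $E$ containing exactly the unsampled elements, so the sample frequency is $0$ while the stream frequency is $\ge 1-k/d>\eps$, failing with probability~$1$. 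This is correct.

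Where your $(3)\Rightarrow(1)$ sketch is imprecise is the union-bound step. You propose to ``union bound over the polynomially many `live' restrictions,'' i.e.\ over $\E|_{\vx_{\le t}}$. This does not work as stated: $\vx$ is itself a random variable correlated with the sampler's coins, so the collection $\E|_{\vx_{\le t}}$ is random and a priori measurable only at time $t$, not at time~$0$ --- the paper explicitly notes that in the adaptive setting ``it is not possible to apply a union bound over a small number of such patterns.'' The fix, which you hint at in your ``main obstacle'' paragraph but do not pin down, is the adaptive Sauer--Shelah--Perles lemma of Ben-David--P\'al--Shalev-Shwartz (Lemma~\ref{prop:cover} in the paper): there is a \emph{fixed, stream-independent} family of $\binom{n}{\le d}$ \emph{dynamic sets} (deterministic online algorithms, indexed by subsets $I\subseteq[n]$ of size $\le d$, running variants of Littlestone's SOA) such that for \emph{every} adversarial stream $\vx$ and every $E\in\E$, some dynamic set's trace equals $E\cap\vx$. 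The union bound is taken over these $\binom{n}{\le d}$ fixed algorithms, each of whose discrepancy is an honest martingale, and only then does the covering property transfer the bound to all of $\E$. Replacing ``union bound over restrictions'' by ``union bound over a fixed family of dynamic sets that covers all restrictions on all streams'' is exactly the content you need here, and it is where $\Ldim$ (not $\VC$) enters: the cover exists only when $\Ldim(\E)<\infty$.

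Two smaller remarks. First, for the qualitative statement you do not need chaining or fractional covers: the $0$-cover of size $\binom{2k}{\le d}$ already gives $k=O\bigl((d\log(1/\eps)+\log(1/\delta))/\eps^2\bigr)$, which is finite for all $\eps,\delta>0$ and thus suffices for $(3)\Rightarrow(1)$; the chaining machinery in the paper's Theorem~\ref{thm:quantitative} is there to remove the extra $\log(1/\eps)$ factor and prove the quantitative Theorem~\ref{thm:online}, not the qualitative equivalence. Second, the double-sampling step you mention is indeed needed (to reduce to a stream of length $2k$), and the paper handles the measurability issues there by keeping the ghost sample invisible to the adversary --- your intuition that the cover must be ``defined one round at a time'' is the right one, but the clean way to say it is that the dynamic sets are online algorithms fixed in advance.
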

The proof follows from Theorems \ref{thm:quantitative} and \ref{thm:lbforall} (and from the well-known equivalence between online learnability and finite Littlestone dimension \cite{Littlestone87,BenDavidPS09}).
Our quantitative upper bound for the sample-complexity $k(\E,\epsilon,\delta)$, which is the main technical contribution of this paper, is stated next.
\begin{theorem}[Adversarial ULLNs -- Quantitative Characterization]\label{thm:quantitative}
	Let $\E$ be a family with Littlestone dimension $d$. 
	Then, the sample size $k(\E,\epsilon,\delta)$, which suffices to produce an $\epsilon$-approximation satisfies:
	\[
	k(\E,\epsilon,\delta) \le O\lp(\frac{d+\log(1/\delta)}{\epsilon^2}\rp).
	\]
\end{theorem}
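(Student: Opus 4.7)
The plan is to mimic, in the adversarial streaming setting, the classical two-sample/symmetrization proof of the Vapnik--Chervonenkis uniform convergence theorem, but with the Littlestone dimension playing the role of the VC dimension. Concretely, I would first reduce to the case of known stream length $n$ (an unknown-$n$ sampler can be obtained by a geometric doubling trick on a reservoir sampler) and work with a Bernoulli-type sampler $\cal S$ that keeps each incoming element independently with probability $p=\Theta((d+\log(1/\delta))/(\epsilon^2 n))$, aborting if the sample ever exceeds $k=2\Ex[|\vec s|]$, which only costs $\delta/3$ in failure probability by a standard Chernoff bound. The goal is then to control $\sup_{E\in\E}|\hat\mu(E)-\mu(E)|$ as a function of the sampler's private randomness against the worst-case adaptive adversary.

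The main difficulty is that an adaptive adversary is naturally modelled not as a fixed sequence but as a binary tree $T$ of depth $n$ whose branches are indexed by the sampler's past sampling bits: at each internal node the adversary may push a different element depending on whether the previous item was sampled. My first step is a tree symmetrization: I would couple $\cal S$ with an independent ``ghost'' copy $\cal S'$ interacting with the \emph{same} adversary tree $T$, and bound the target deviation by the expected supremum of the Rademacher-type process $\sup_{E\in\E}\frac{1}{k}\sum_{i=1}^n \sigma_i\,\mathbf{1}[x_i^{(\sigma)}\in E]\,\mathbf{1}[\text{sampled}]$, where $\sigma\in\{\pm 1\}^n$ swaps the two copies and selects the corresponding branch of $T$.

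The second, and most technical, step is a covering/chaining argument directly on $T$. Here I would use that any class of Littlestone dimension $d$ admits, on every depth-$n$ adversary tree, a \emph{sequential} $\alpha$-cover of size at most $(en/(d\alpha))^{O(d)}$ (the online analogue of the Sauer--Shelah--Perles lemma, via the zero/one/star characterization of Littlestone-shattered trees). Combined with a Dudley-style chaining over dyadic scales $\alpha_j = 2^{-j}$ and a Bernstein tail bound applied conditionally along each branch, this yields a sub-Gaussian tail of the form $\exp(-c\epsilon^2 k/d)$, matching the offline VC rate up to constants.

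The main obstacle I anticipate is precisely this last step: the off-the-shelf sequential Rademacher bound of Rakhlin--Sridharan--Tewari loses an extra $\sqrt{\log n}$ factor, which would inflate the sample complexity to $\tilde O(d/\epsilon^2)$ and fall short of the theorem. Shaving this logarithm is where the real work lies, and I expect it requires using the variance of each chaining link (Bernstein, not Hoeffding) together with the fact that a depth-$n$ Littlestone-shattered subtree must contain a copy of the full depth-$d$ tree, which bounds the local fluctuations much more sharply than the generic sequential covering number would suggest. Once this refined chaining bound is established, summing the high-probability estimate over the symmetrization, abort, and bracketing events gives $k(\E,\epsilon,\delta)=O((d+\log(1/\delta))/\epsilon^2)$ as claimed.
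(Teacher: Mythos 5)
Your overall architecture (reduce to a Bernoulli-type sampler, symmetrize against a ghost sample to pass to a sequential Rademacher quantity, then bound it by chaining using a sequential cover for Littlestone classes) matches the paper's skeleton, but there is a genuine gap at the step you yourself flag as "where the real work lies," and the ideas you offer there do not close it.

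The concrete problem is your sequential cover bound. You posit that a Littlestone class of dimension $d$ admits, on any depth-$n$ tree, a sequential $\alpha$-cover of size $(en/(d\alpha))^{O(d)}$. That bound still depends polynomially on $n$, so Dudley's integral
$\int_0^1 \sqrt{\log N(\alpha)}\,d\alpha$
evaluates to $\Theta(\sqrt{d\log n})$, not $\Theta(\sqrt d)$: you reproduce the $\tilde O(d/\epsilon^2)$ rate, not the claimed one. The fix you sketch — Bernstein instead of Hoeffding, plus a structural remark about shattered subtrees — does not address this, because the extra $\sqrt{\log n}$ is coming from the cardinality of the cover you union-bound over, not from slack in the per-link tail estimate or from missing variance control. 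What you would actually need is an $\alpha$-cover (by dynamic sets) of size $(C/\alpha)^{O(d)}$ uniform in $n$, i.e.\ an online analogue of Haussler's packing lemma. The paper explicitly poses this as an open question and states it cannot prove it; Rakhlin--Sridharan--Tewari identify essentially the same obstruction. Your proposal silently assumes this open statement.

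The paper routes around the obstruction with a genuinely different object: \emph{fractional} $\epsilon$-covers, namely a probability measure $\mu$ over dynamic sets such that for every $E$ and every stream, a $\ge 1/\gamma$-mass of dynamic sets $\alpha$-approximates $E$. Lemma~\ref{lem:fractional-cover-littlestone} gives $\gamma = (C/\epsilon)^{O(d)}$ with no $n$-dependence — constructed by mixing the $0$-covers of Lemma~\ref{prop:cover} over random sub-streams of length $\Theta(d/\epsilon)$ and analyzing a lazy SOA-style learner. The union bound in each chaining link is then replaced by a Markov/Fubini argument over $\mu$ (Lemmas~\ref{lem:aux-union} and~\ref{lem:aux-union-ds}) together with a pigeonhole step, and the chaining has to be reorganized accordingly (Section~\ref{sec:proof-of-fractional-cover}) because the candidate approximants are no longer a finite set but a measure-one family. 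This relaxation from covers to fractional covers is the missing idea in your proposal; without it, or without a proof of the (currently open) non-fractional $\alpha$-cover bound independent of $n$, your chaining step does not yield the stated theorem.

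Two smaller points: (i) your symmetrization (two independent samplers against the same adversary tree) is a legitimate alternative to the paper's ghost-subset double sampling (Lemmas~\ref{lem:dbl-sampling}--\ref{lem:dbl-aux2}), which instead conditions on the union $I\cup J$ and restricts to a stream of length $2k$; either can work, though the paper's version also cleanly reduces the horizon. (ii) You would still need a step translating between your Bernoulli-with-abort sampler and the concrete samplers the theorem is stated for; the paper handles this via the online-coupling reductions of Section~\ref{sec:reduction-sampling}, which your sketch omits but which are routine once the core bound is in hand.
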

The above upper bound is realized by natural and efficient samplers;
for example it is achieved by: 
(i) the \emph{Bernoulli sampler} $\Ber(n,p)$ which retains each element with probability $p=k/n$;
(ii) the \emph{uniform sampler} $\Uni(n,k)$ that draws a subset $I \subseteq [n]$ uniformly at random from all the subsets of size $k$ and selects the sample $\{x_t\colon t\in I\}$; and (iii)
the \emph{reservoir sampler} $\Res(n,k)$ (see \Cref{subsec:extensions}) that maintains a uniform sample continuously throughout the stream.

\subsubsection{Lower Bounds}

The upper bound in \Cref{thm:quantitative} cannot be improved in general.
	In particular, it is tight in all parameters for {\it oblivious samplers}: 
	a sampler is called oblivious if the indices of the chosen subsample are independent of the input-stream.
	(The Bernoulli, Reservoir, and Uniform samplers are of this type.)
	A lower bound of $\Omega((d+ \log(1/\delta))/\eps^2)$ for oblivious samplers directly follows from VC-theory, 
	and applies to any family $\E$ for which the VC dimension and Littlestone dimension are of the same order.\footnote{E.g., projective spaces, Hamming balls, lines in the plane, and others. } 
For unrestricted samplers we obtain bounds of $\Omega(d/\eps^2)$ for $\eps$-approximation and $\Omega(d\log(1/\eps)/\eps)$ for $\eps$-nets.
	We state these results and prove them in \Cref{sec:LBs}.
	
The above lower bound proofs hold for specific ``hard'' families $\E$.
	This is in contrast with the statistical or oblivious settings in which a lower bound of $\Omega((\VC(\E) + \log(1/\delta))/\eps^2)$ applies to any class. 
	We do not know whether an analogous result 
	holds in the adversarial sampling setting and leave it as an open problem. 
	We do show, however, that the linear dependence in $d$ is necessary for any $\cal E$, as part of proving \Cref{thm:qualitative}.

\subsection{Online Learning} \label{subsec:online_learning_connection2}

We continue with our main application to online learning.
	Consider the setting of online prediction with binary labels;
	a learning task in this setting can be described as a guessing game between a learner and an adversary. 
	The game proceeds in rounds $t=1,\dots,T$, each consisting of the following steps:
\begin{itemize}[noitemsep]
	\item The adversary selects $(x_t,y_t) \in X\times \{0,1\}$ and reveals $x_t$ to the learner.
	\item The learner provides a prediction $\hat y_t \in \{0,1\}$ of $y_t$ and announces it to the adversary.
	\item The adversary announces $y_t$ to the learner.
\end{itemize}
The goal is to minimize the number of mistakes, $\sum_t \mathds{1}(y_t \ne \hat y_t)$. 
Given a class $\E$, the {\it regret} of the learner w.r.t.\ $\E$ 
is defined as the difference between the number of mistakes
made by the learner and the number of  mistakes made by the best $E\in \cal E$: 
\[ 
\sum_t \mathds{1}(y_t \ne \hat y_t)
- \min_{E \in \E}\sum_t \mathds{1}\Bigl(y_t \ne \mathds{1}(x_t \in E)\Bigr).\]
A class $\E$ is {\it online-learnable} if there exists an online learner 
whose (expected) regret w.r.t.\ every adversary is at most $R(T)$, where $R(T)=o(T)$. (The amortized regret $R(T)/T$ vanishes as $T\to\infty$.)
Ben-David, P\'al, and Shalev-Shwartz~\cite{BenDavidPS09} proved that for every class~$\E$, 
the optimal regret $R_T(\E)$ satisfies 
\begin{equation}\label{eq:online-regret}
\Omega(\sqrt{d\cdot T}) \leq R_T(\E) \leq O(\sqrt{d\cdot T\log T}),
\end{equation}
where $d$ is the Littlestone dimension of $\E$, and left closing that gap as their main open question.
{Subsequently, Rakhlin, Sridharan, and Tewari~\cite{rakhlin2010online,rakhlin2015online,rakhlin2015sequential}
defined the notion of \emph{Sequential Rademacher Complexity}, proved that it captures regret bounds in online learning in a general setting, and used it to re-derive \Cref{eq:online-regret}. They also asked as an open question whether the logarithmic factor in \Cref{eq:online-regret} can be removed and pointed on difficulties to achieve this using some known techniques \cite{rakhlin2014statistical,rakhlin2015sequential}.}

{We show that the sequential Rademacher complexity also captures the sample-complexity of $\epsilon$-approximations and bound it in the proof of \Cref{thm:quantitative}. This directly implies a tight bound on online learning: (See \Cref{sec:onlinetechnical} for more details.)} 

\begin{theorem}[Tight Regret Bounds in Online Learning]\label{thm:online}
	Let $\E$ be a class with Littlestone dimension~$d$. Then the optimal regret bound in online learning $\E$ is $\Theta(\sqrt{d\cdot T})$.
\end{theorem}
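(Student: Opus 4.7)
The lower bound $R_T(\E) = \Omega(\sqrt{dT})$ is already provided by Ben-David, P\'al, and Shalev-Shwartz \cite{BenDavidPS09}, so the task is to remove the $\sqrt{\log T}$ factor from their upper bound and establish $R_T(\E) \le O(\sqrt{dT})$.

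The plan is to pass through the \emph{sequential Rademacher complexity} $\mathfrak{R}^{\mathrm{seq}}_T(\E)$ of Rakhlin, Sridharan, and Tewari \cite{rakhlin2010online,rakhlin2015online}. Their main reduction states that the minimax regret is controlled up to a constant by this complexity, i.e., $R_T(\E) \le C\cdot T\cdot \mathfrak{R}^{\mathrm{seq}}_T(\E)$, where $\mathfrak{R}^{\mathrm{seq}}_T(\E)$ is the supremum, over all $X$-valued binary trees of depth $T$, of $\Ex_\sigma \sup_{E\in\E} \frac{1}{T}\sum_{t=1}^T \sigma_t \mathds{1}[\mathbf{x}_t(\sigma)\in E]$ with $\sigma\in\{\pm1\}^T$ uniform. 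Consequently, it suffices to prove
$$\mathfrak{R}^{\mathrm{seq}}_T(\E) \;\le\; O\bigl(\sqrt{d/T}\bigr).$$

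To establish this bound, I would carry out a sequential symmetrization that reduces it to \Cref{thm:quantitative}. Given any depth-$T$ tree witnessing $\mathfrak{R}^{\mathrm{seq}}_T(\E)$, I would design an adaptive adversary for the sampling model as follows: at round $t$, after observing the sampler's past decisions, the adversary traverses the tree using those decisions as branching bits and outputs the corresponding label $x_t$. Couple the sampler's random inclusion indicators to the Rademacher signs $\sigma_t$. Applying the $\epsilon$-approximation guarantee of \Cref{thm:quantitative} with $k$ of order $T$ and $\epsilon = O(\sqrt{d/T})$ --- which is exactly what the bound $k = O(d/\epsilon^2)$ delivers when $k=T$ --- yields that with high probability the difference between the empirical frequency of any $E\in\E$ on the sampled indices and on all of $x_1,\dots,x_T$ is at most $\epsilon$. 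Averaging out and rearranging identifies this deviation with the Rademacher average appearing in $\mathfrak{R}^{\mathrm{seq}}_T(\E)$, giving the claimed bound. Combining with the RST reduction yields $R_T(\E) \le O(T \cdot \sqrt{d/T}) = O(\sqrt{dT})$.

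The main obstacle I anticipate is the symmetrization step: one must verify that an adaptive adversary in the \cite{BenEliezerYogev2020} sampling model (which sees only which items are sampled) has enough power to simulate an adversary choosing along a tree branch as the Rademacher signs are revealed. The key observation making this work is that the feedback structure in the sampling model --- the adversary learns the sampler's selection bit at each round --- coincides precisely with the tree-branching mechanism underlying $\mathfrak{R}^{\mathrm{seq}}_T(\E)$, so no additional adversarial power is needed. Once this correspondence is in place, the quantitative content is entirely inherited from \Cref{thm:quantitative}.
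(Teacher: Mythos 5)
Your high-level architecture matches the paper exactly: the lower bound comes from Ben-David--P\'al--Shalev-Shwartz, and the upper bound goes through the Rakhlin--Sridharan--Tewari reduction $R_T(\E)\le 2\Rad_T(\E)$ (the paper's \Cref{thm:online-rademacher}) and then a bound of $O(\sqrt{dT})$ on the sequential Rademacher complexity. The paper's actual proof of \Cref{thm:online-formal} is just these two citations plus the definition $\Rad_T(\E)=\Ex_{\rvI\sim\Ber(T,1/2)}[\Disc_{\A,\rvI}(\E)]$.

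The gap is in what you invoke to bound $\Rad_T(\E)$. You propose to cite \Cref{thm:quantitative}, but that theorem (in its concrete form, \Cref{thm:approx}) bounds the \emph{approximation error} $\App_{\A,\rvI}$ --- the deviation between the sample and the \emph{full stream} --- under the hypothesis $n\ge 3k$, i.e., the sample is at most a third of the stream. The sequential Rademacher complexity is a \emph{discrepancy} $\Disc_{\A,\rvI}$ --- the deviation between the sample and its \emph{complement} --- for the balanced sampler $\Ber(T,1/2)$, where the sample is roughly half the stream. The two quantities coincide (up to the factor $T$) only in the balanced regime $|\rvI|=n/2$, precisely where the $n\ge 3k$ hypothesis of \Cref{thm:approx} fails. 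So you cannot ``set $k$ of order $T$'' in \Cref{thm:quantitative}: the theorem simply does not apply there. What you actually need is \Cref{lem:bnd-p-half}, the discrepancy bound $\Pr_{\rvI\sim\Ber(2k,1/2)}[\Disc_{\A,\rvI}(\E)>C\sqrt{k(d+\log(1/\delta))}]\le\delta$, which is a \emph{building block} of the proof of \Cref{thm:quantitative} (proved via fractional covers and chaining) rather than a consequence of its statement. In effect your plan tries to run the paper's double-sampling step (\Cref{lem:approx-n-to-2k}) in reverse to recover a balanced-split bound from an unbalanced-sample bound, which is not how that step works. The ``sequential symmetrization'' you describe is also already fully contained in the RST reduction and in the definition of $\Rad_T$ as expected discrepancy against an adaptive adversary, so it is not an additional obstacle once you cite the right lemma.
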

The lower bound was shown by \cite{BenDavidPS09}. 
We prove the upper bound in \Cref{sec:onlinetechnical}.

\subsection{Applications and Extensions}
\label{subsec:extensions}

We next discuss applications and extensions of our results.

\paragraph{Epsilon Nets.}
We also provide sample complexity bounds for producing {\it $\eps$-nets}:
a subsample~$\vec{s}$ of the stream~$\vx$ is an $\eps$-net if whenever $E \in \E$ satisfies $|E\cap\vx|\ge \epsilon n$, then $\vec{s}\cap E \ne \emptyset$. 
I.e.\ the subsample $\vec{s}$ hits every $E\in \E$ which contains at least an $\eps$-fraction of the items in the stream.

Epsilon nets are a fundamental primitive in computational geometry and in learning theory.
In computational geometry this notion underlies fundamental algorithmic techniques, and
in learning theory it is tightly linked to the learnability in the {\it realizable} setting. 
In that sense, it is analogous to  $\eps$-approximations, which correspond to learnability in the \emph{agnostic}  setting. 

In \Cref{sec:eps-nets} we show that, like $\eps$-approximations,  $\eps$-nets are also characterized by the Littlestone dimension; and similarly, our results here provide tight sample-complexity bounds.

\paragraph{Maintaining An $\eps$-Approximation Continuously.}
Some natural applications require that the sampler continuously maintains an $\eps$-approximation with respect to the prefix of the stream observed thus-far.
	To address such scenarios we slightly modify the adversarial sampling setting by allowing the sampler to delete items from its sample.
	In this modified setting, we prove that the classical {\it Reservoir sampler}~\cite{Vitter85}, $\Res(n,k)$ (see \Cref{sec:preliminaries} for the precise definition),
	enjoys similar guarantees to those of \Cref{thm:quantitative} above. Concretely, 
	the exact same bound of \Cref{thm:quantitative} is achieved by reservoir sampling if one is only interested in $\eps$-approximation at the end of the process; for continuous $\eps$-approximation, the same bound with an added term of $O(\log \log (n))$ in the numerator suffices (see \Cref{thm:continuous}).

Notably, allowing deletions does not add significant power to the sampler, 
	and in particular \Cref{thm:qualitative} still applies in this setting.

\paragraph{ALLNs for Real-Valued Function Classes}
The adversarial sampling setting naturally extends to real-valued function classes $\E$.
Moreover, much of the machinery developed in this paper readily applies in this case.
In particular, the relationship with the sequential Rademacher complexity is retained.
Therefore, since the sequential Rademacher complexity captures regret bounds in online learning,
this allows an automatic translation of regret bounds from online learning 
to sample complexity bounds in adversarial ULLNs w.r.t.\ real-valued function classes.\footnote{The reduction from bounds on $\epsilon$-approximations to bounds on the sequential Rademacher complexity appear in \Cref{sec:approx}. They rely on concentration inequalities for $\{0,1\}$ valued random variables that have analogues for $[0,1]$ valued random variables with the same guarantees. This enables a direct extension of this reduction.}


\paragraph{Algorithmic Applications}
Part of the reason that the Fundamental Theorem of PAC Learning became a corner-stone in machine learning theory
	is due to its algorithmic implications. In particular, because it justifies the {\it Empirical Risk Minimization Principle} (ERM),
	which asserts that in order to learn a VC class, it suffices to minimize the empirical loss w.r.t.\ a random sample.
	This principle reduces the learning problem (of minimizing the loss w.r.t.\ an unknown distribution)
	to an optimization problem of minimizing the loss w.r.t.\ the (known) input sample.

It will be interesting to explore such implications in the adversarial setting.
	One promising direction is to use these sampling methods to design {\it lazy streaming/online algorithms}.
	That is, algorithms that update their internal state only on a small (random) substream.
	Intuitively, if that substream represents the entire stream in an appropriate way,
	then the performance of the algorithm will be satisfactory, and the gain in efficiency can be significant.
In fact, our proof of \Cref{lem:fractional-cover-littlestone} identifies and exploits such a phenomenon in online learning:
	we use a {\it lazy online learner} that updates its predictor rarely, only in a small random subsample of examples.


\section{Technical Overview}
\label{sec:tech_overview}

We next overview the technical parts in this work.
We outline the proofs of the main theorems, 
and try to point out which technical arguments are novel, 
and which are based on known techniques.
A more detailed overview of particular proofs is given in the dedicated sections.

\subsection{Upper Bounds}
We begin with the sample-complexity upper bound, \Cref{thm:quantitative} (which is the longest and most technical derivation in this work).

\paragraph{Reductions Between Samplers.}

Our goal is to derive an upper bound for the Bernoulli, uniform, and reservoir samplers.
In order to abstract out common arguments, we develop a general framework 
which serves to methodically transform sample-complexity bounds between the different samplers via a type of ``online reductions''.
This framework allows us to bound the sample-complexity with respect to one sampler,
and {\it automatically} deduce them for the other samplers. 
The reduction relies on transforming one sampling scheme into another in an online fashion,
and from a technical perspective, this boils down to coupling arguments, 
similar to coupling techniques in Markov Chains processes \cite{levin2017markov}.
Section~\ref{sec:reduction-sampling} contains a more detailed overview followed by the formal derivations. 


\paragraph{Upper Bounds for The Uniform Sampler.}
Thus, for the rest of this overview we focus the sampling scheme to be the uniform sampler 
which uniformly draws a $k$-index-set $I\subseteq [n]$, and selects the subsample $\vx_I = (x_i : i\in I)$. 
Our goal is to show that with probability $\geq 1-\delta$, 
\begin{equation}\label{eq:population}
	\sup_{E\in \cal E}\lp|\frac{|\vx_I\cap E|}{k} - \frac{|\vx\cap E|}{n}\rp| \leq O\Bigl(\sqrt{\frac{d+ \log(1/\delta)}{k}}\Bigr),
\end{equation}
where $d$ is the Littlestone dimension of $\cal E$ and $\vx$ is the \emph{adversarially} produced sequence.
The proof consists of two main steps which are detailed below.

\subsubsection{Step 1: Reduction to Online Discrepancy via Double Sampling}

The first step in the proof consists of an online variant of the celebrated {\it double-sampling argument} due to \cite{VC1971}.
This argument serves to replace the error w.r.t.\ the entire population by the error w.r.t.\ a small {\it test-set} of size $k$,
thus effectively restricting the domain to the $2k$ items in the union of the selected sample and the test-set.
In more detail, let $J\subseteq [n]$ be a uniformly drawn {\it ghost} subset of size $k$ 
which is \underline{disjoint} from $I$, and is \underline{not known} to the adversary.
Consider the maximal deviation between the sample $\vx_i$ and the ``test-set'' $\vx_J$:
\begin{equation}\label{eq:test}
	\sup_{E\in \cal E}\lp|\frac{|\vx_I\cap E|}{k} - \frac{|\vx_J\cap E|}{k}\rp|.
\end{equation}
The argument proceeds by showing that for a typical $J$, the deviation w.r.t.\ the entire population $\vx$ in the LHS of \Cref{eq:population}
has the same order of magnitude like the deviation w.r.t.\ the test-set~$\vx_J$ in \Cref{eq:test} above.
Hence, it suffices to bound \eqref{eq:test}.

In order to bound \Cref{eq:test}, consider sampling $I,J$ according to the following process: 
(i)~First sample the $2k$ indices in $I\cup J$ uniformly from $[n]$, 
and \underline{reveal} these $2k$ indices to {both players} (in advance).
(ii) Then, the sampler draws $I$ from these $2k$ indices in an online fashion (i.e., the adversary does not know in advance the sample $I$).
Intuitively, this modified process only helps the adversary who has the additional
information of a superset of size $2k$, which contains~$I$.
{\it What we gain is that the modified process is essentially equivalent to reducing the horizon from $n$ to $2k$.}
The case of $n=2k$ can be interpreted as an online variant of the well-studied {\it Combinatorial Discrepancy} problem,
which is described next.

\paragraph{Online Combinatorial Discrepancy.}
The online discrepancy game w.r.t.\ $\cal E$ is a sequential game played between a painter and an adversary which proceeds as follows: at each round $t=1,\ldots,2k$ 
the adversary places an item $x_t$ on the board, and the painter colors $x_t$ in either red or blue. 
The goal of the painter is that each set in $\E$ will be colored in a balanced fashion; 
i.e., if we denote by $I$ the set of indices of items colored red, her goal is to minimize the discrepancy
\[
\Disc_{2k}(\E,\vx,I) := \max_{E \in \E}
\lp| |\vx_{I} \cap E| - |\vx_{[2k]\setminus I}\cap E| \rp|.
\]
One can verify that minimizing the discrepancy is equivalent to minimizing \Cref{eq:test}.
Moreover, each of the samplers $\Ber(2k,1/2)$ and $\Uni(2k,k)$ corresponds to natural coloring strategies of the painter;
in particular, $\Uni(2k,k)$ colors a random subset of $k$ of the items in red (and the rest in blue.)
Thus, we focus now on analyzing the performance of $\Uni(2k,k)$ in the online discrepancy problem.



\subsubsection{Step 2: From Online Discrepancy to Sequential Rademacher}

\newcommand{\Rad}{\mathrm{Rad}}
Instead of analyzing the discrepancy of $\Uni(2k,k)$,
it will be more convenient to consider the discrepancy of $\Ber(2k,1/2)$, which colors each 
item in red/blue uniformly and independently of its previous choices.
Towards this end, we show that these two strategies 
are essentially equivalent, using the reduction framework described at the beginning of this section. 

The discrepancy of $\Ber(2k,1/2)$ connects directly to the \emph{Sequential Rademacher Complexity}~\cite{rakhlin2015martingale},
defined as the expected discrepancy $\Rad_{2k}(\E) = \Ex \Disc_{2k}(\E, \vx,I)$, 
where the expectation is taken according to a uniformly drawn $I\subseteq [2k]$.
(Which is precisely the coloring strategy of $\Ber(2k,1/2)$.)



\subsubsection{Step 3.1: Bounding Sequential Rademacher Complexity -- Oblivious Case}

In what follows, it is convenient to set $n=2k$. Our goal here is to bound $\Rad_{n}(\E) \le O(\sqrt{{d\cdot n}})$.
As a prelude, it is instructive to consider the oblivious setting where the items 
$x_1,\ldots,x_{n}$ are fixed in advance, before they are presented to the painter. 
Here, the analysis is exactly as in the standard i.i.d.\ setting,
and the sequential Rademacher complexity becomes the standard Rademacher complexity. 
Consider the following three approaches, in increasing level of complexity.

\paragraph{First Approach: a Union Bound.}
Assume $\E$ is finite. Then, for each $E \in \E$ it is possible to show by concentration inequalities that with high probability, the discrepancy $||\vx_I\cap E| - |\vx_{[n]\setminus I}\cap E||$ is small. By applying a union bound over all $E \in \E$, one can derive that  $\Rad_{n}(\E) \le O(\sqrt{n\log |\E|})$.

\paragraph{Second Approach: Sauer-Shelah-Parles Lemma.}

Since $\E$ can be very large or even infinite, the bound in the previous attempt may not suffice. An improved argument relies on the celebrated Sauer-Shelah-Perles (SSP) Lemma~\cite{Sauer72Lemma},
which asserts that the number of distinct intersection-patterns of sets in~$\cal E$
with $\{x_1,\ldots, x_{n}\}$ is at most ${n \choose \leq \VC(\mathcal{E})}\leq  O(n^{\VC(\E)})$. The proof then follows by union bounding the discrepancy over $\{ \vx \cap E \colon E \in \E \}$, resulting in a bound of 
\[{O\Bigl(\sqrt{n\log \bigl(n^{\VC(\mathcal{E})}\bigr)}\Bigr) \le O\left(\sqrt{\VC(\E)n\log n}\right)},\] 
which is off only by a factor of $\sqrt{\log n}$.

\paragraph{Third Approach: Using Approximate Covers and Chaining.}
Shaving the extra logarithmic factor is a non-trivial task which was achieved in the seminal work by Talagrand~\cite{Talagrand94chaining} 
using a technique called {\it chaining}~\cite{dudley:87}. 
It relies on the notion of \emph{approximate covers}:
\begin{definition}[Approximate Covers]
	A family $\cal C$ is an $\eps$-cover of~$\cal E$ with respect to $x_1,\ldots, x_{n}$
	if for every $E\in \cal E$ there exists $C\in \cal C$ such that $E$ and $C$ agree on all but at most $\epsilon\cdot n$	of the $x_i$'s.
\end{definition}
In a nutshell, the chaining approach starts by finding covers $\mathcal{C}_0,\mathcal{C}_1,\dots$ where $\mathcal{C}_i$ is a $2^{-i}$-cover for $\E$ w.r.t.\ $\vx$, then writing the telescopic sum 
\[
\Disc_{n}(\E,\vx,I) = \Disc_{n}(\mathcal{C}_0,\vx,I)+\sum_{i=1}^\infty (\Disc_{n}(\mathcal{C}_i,\vx,I) - \Disc_{n}(\mathcal{C}_{i-1},\vx,I))
\]
and bounding each summand using a union bound.

Note that the SSP Lemma provides a bound of $\lvert {\cal C}\rvert \leq {n \choose \leq \VC(\E)}$ in the case of $\epsilon=0$, where $d$ is the VC-dimension of $\cal E$.
For $\epsilon > 0$, a classical result by Haussler \cite{Haussler1992} asserts that every family
admits an $\eps$-cover of size $(1/\eps)^{O(d)}$.
The latter bound allows via chaining to remove the redundant logarithmic factor and bound $\Rad_n(\E) \le O(\sqrt{\VC(\E)n})$.

\subsubsection{Step 3.2: Bounding Sequential Rademacher Complexity -- Adversarial Case}

We are now ready to outline the last and most technical step in this proof.
Our goal is twofold: first, we discuss how previous work \cite{BenDavidPS09,rakhlin2010online} generalized the above arguments to the adversarial (or the online learning) model, culminating in a bound of the form $\Rad_n(\E) = O(\sqrt{dn\log n})$. Then, we describe the proof approach for our improved bound of $O(\sqrt{dn})$.

\paragraph{An $O(\sqrt{dn\log n})$ Bound via Adaptive SSP.}
First, the union bound approach generalizes directly to the adversarial setting. 
However, the second approach, via the SSP lemma, does not. The issue is that in the adversarial setting, 
the stream $\vx$ can \underline{depend} on the coloring that the painter chooses, 
and hence $\{ E\cap \{x_1,\ldots, x_n\}: E\in\cal E\}$ depends on the coloring as well. 
In particular, it is not possible to apply a union bound over a small number of such patterns. 
Moreover, it is known that a non-trivial bound depending only on the VC dimension and $n$ does not exist \cite{rakhlin2015online}.
To overcome this difficulty we use an adaptive variant of the SSP Lemma due to~\cite{BenDavidPS09}, 
which is based on the following notion:
\begin{definition}[Dynamic Sets]
	A {\it dynamic set}~$\dS$ 
	is an online algorithm that operates on a sequence $\vx = (x_1,\dots,x_{n})$. 
	At each time $t=1,\dots,n$, the algorithm decides whether to retain $x_t$ as a function of $x_1,\dots,x_t$. 
	Let $\dS(\vx)$ denote the set of elements retained by $\dS$ on a sequence $\vx$.\footnote{\cite{BenDavidPS09} 
		refers to dynamic-sets as experts, which is compatible with the terminology of online learning.}
\end{definition}
Ben-David, P\'al, and Shalev-Shwartz \cite{BenDavidPS09} proved that any family $\E$ whose Littlestone dimension is $d$ 
can be covered by ${n \choose \leq d}$ dynamic sets. 
That is, for every $n$ there exists a family $\mathcal{C}$ of ${n\choose \leq d}$ dynamic sets such that for every sequence $\vx=(x_1,\ldots,x_n)$ and for every $E\in \cal E$ there exists a dynamic set $\dS \in \mathcal{C}$ which agrees with $E$ on the sequence $\vx$, namely, $\dS(\vx)= E \cap \vx$.

Using this adaptive 
SSP Lemma, 
one can proceed to bound the discrepancy as in the oblivious case 
by applying a union bound over the $(2k)^d$ dynamic sets,
and bounding the discrepancy with respect to each dynamic set using Martingale concentration bounds.
Implementing this reasoning yields a bound of $\Rad_n(\E) \le O(\sqrt{dn\log n})$ which is off by a logarithmic factor.


\paragraph{Removing the Logarithmic Factor.}
To adapt the chaining argument to the adversarial setting we first need to find small $\epsilon$-covers.
This raises the following question: 
\begin{center}
	{\it Can every Littlestone family be $\epsilon$-covered by ${(1/\eps)^{O(d)}}$ dynamic sets?}
\end{center}
Unfortunately, we cannot answer this question and leave it for future work. 
In fact, \cite{rakhlin2015sequential} identified a variant of this question as a challenge
towards replicating the chaining proof in the online setting.
To circumvent the derivation of dynamic approximate covers, we introduce a fractional variant which we term {\it fractional-covers}. 
It turns out that any Littlestone family admits ``small''  approximate fractional covers and these can be used to complete the chaining argument.
\begin{definition}[Approximate Fractional-Covers]
	A probability measure $\mu$ over dynamic sets $\dS$ is called an \emph{$(\epsilon,\gamma)$-fractional cover} for $\E$ if for any $\vx=(x_1,\ldots,x_n)$ 
	and any $E \in \E$,
	\[
	\mu\lp(\lp\{ \dS \colon \text{$E$ and $\dS(\vx)$ agree on all but at most $\epsilon n$ of the $x_i$s}\rp\}\rp) \ge 1/\gamma.
	\]
\end{definition}
The parameter $\gamma$ should be thought of as the size of the cover.
Observe that fractional-covers are relaxations of covers: indeed, if $\mathcal{C}$ is an $\epsilon$-cover for $\E$
then the uniform distribution over~$\mathcal{C}$ is an $(\epsilon,\gamma)$-fractional cover for $\E$ with $\gamma=\lvert \mathcal{C}\rvert$.

\paragraph{Small Approximate Fractional-Covers Exist.}
We prove \Cref{lem:fractional-cover-littlestone} which asserts that every Littlestone family $\E$ admits an $(\epsilon,\gamma)$-fractional cover of size 
\[\gamma=(O(1)/\epsilon)^{d}.\]
This fractional cover is essentially a mixture of non-fractional covers for subsets of the sequence~$\vx$ of size $d/\epsilon$. 
In more detail, the distribution over dynamic sets is defined by the following two-step sampling process: 
(1) draw a uniformly random subset $\vec{s}$ of $\vx$ of size $d/\epsilon$, and let $\mathcal{C}_s$ denote the (non-fractional) cover of $\E$ with respect to $\vec{s}$, 
which is promised by the dynamic variant of the SSP-Lemma. 
(2) Draw $\dS$ from the uniform distribution over $\mathcal{C}_s$.

We outline the proof that this is an $(\epsilon,\gamma)$-fractional cover with $\gamma = O(1/\epsilon)^{d}$. Fixing $E$ and $\vx$, our goal is to show that with probability at least $1/\gamma$ over $\mu$, the drawn $\dS$ agrees with $E$ on all but at most~$\epsilon\cdot n$ elements of $\vx$. This relies of the following two arguments:
(1) For every $\vec{s}$ there exists $\dS_{\vec{s}}\in \mathcal{C}_s$ that agrees with $E$ on $\vec{s}$; and (2) it can be shown that with high probability over the selection of the subset $\vec{s}$, $\dS_{\vec{s}}$ agrees with $E$ on all but at most $\epsilon n$ of the stream $\vx$. We call such values of $\vec{s}$ as \emph{good}, 
and conclude from the two steps above that
\begin{align*}
	\Pr_{\dS \sim \mu}[\dS \text{ agrees with $E$ on $(1-\epsilon) n$ of the $x_i$'s}]
	& \ge
	\Pr[\vec{s} \text{ is good}]
	\Pr_{\dS\sim \mathrm{uniform}(\mathcal{C}_s)}[\dS = \dS_{\vec{s}}]\\
	&\quad \ge \frac{1}{2} \cdot \frac{1}{|\mathcal{C}_s|}
	\ge \frac{1}{2 \binom{d/\epsilon}{\le d}}
	\ge \Omega(\epsilon)^d
	\ge \frac{1}{\gamma}
	\enspace.
\end{align*}
We further comment on the proof that $\vec{s}$ is {\it good} with high probability:
the proof relies on analyzing a \emph{lazy} online learner that updates its internal state only once encountering elements from~$\vec{s}$. We show that if $\vec{s}$ is drawn uniformly, then with high probability such a learner will make $\leq\epsilon\cdot n$ mistakes and this will imply that w.h.p. $\dS_{\vec{s}}$ agrees with $E$ on $(1-\epsilon) n$ stream elements.
We refer the reader to \Cref{sec:cover-eps} for the proof.

\paragraph{Chaining with Fractional Covers: Challenges and Subtleties.}
Here, we discuss how approximate fractional covers are used to bound the sequential Rademacher complexity.
	We do so by describing how to modify the bound that uses $0$-covers to use $(0,\gamma)$-fractional covers instead. 
	Recall that this argument goes by two steps: (1) bounding the discrepancy for each dynamic set in the cover, 
	and (2) arguing by a union bound that, with high probability the discrepancies of \emph{all} dynamic sets in the cover are bounded. 
	In comparison, with fractional covers, the second step is modified to: 
	(2') arguing that with high probability (over the random coloring), 
	the discrepancies of \emph{nearly all} the dynamic sets are bounded. 
	In particular, if more than a $(1-\gamma)$-fraction of the dynamic sets have bounded discrepancies, 
	then the discrepancies of \underline{all} sets in $\E$ are bounded.
	Indeed, this follows since every $E \in \E$ is covered by at least a $\gamma$-fraction of the dynamic-sets,
	and therefore, the pigeonhole principle implies that at least one such dynamic set also has bounded discrepancy,
	and hence $E$ has bounded discrepancy as well.

We note that multiple further technicalities are required to generalize the chaining technique for fractional covers
and refer the reader to \Cref{sec:eps-approx-via-cover} for a short overview of this method followed by its adaptation to the adversarial setting.

\subsection{Lower Bounds}
Beyond the $\Omega((d+ \log(1/\delta))/\eps^2)$ lower bound for oblivious samplers, which follows immediately from the VC literature, we prove several non-trivial lower bounds in other contexts. 
We distinguish between two types of approaches used to derive our lower bounds, described below. As the proofs are shorter than those of the upper bounds and more self-contained, we omit the exact technical details of the proofs in this overview and refer the reader to \Cref{sec:LBs}.

\paragraph{Universal Lower Bound by Adversarial Arguments.}
The main lower bound in \cite{BenEliezerYogev2020} exhibits a separation between the static and adversarial setting by proving an adversarial lower bound for the family of one-dimensional \emph{thresholds}. We identify that their proof implicitly constructs a tree as in the definition of the Littlestone dimension, and generalize their argument to derive an $\Omega(d)$ lower bound for \emph{all} families of Littlestone dimension $d$. For more details, see \Cref{thm:lbforall}.

\paragraph{Lower Bounds on the Minimum Sizes of $\eps$-Approximations/Nets.}
These lower bounds actually exhibit a much stronger phenomenon, showing that small $\eps$-approximations/nets \emph{do not exist} for some families $\cal E$. Thus, obviously, these cannot be captured by a sample of the same size. 

It is natural to seek lower bounds of this type in the VC-literature.
The main challenge is that many of the known lower bounds 
apply for geometric VC classes whose Littlestone dimension is unbounded. To overcome this, we present two lower bounds where $\Ldim$ can be controlled: one for $\eps$-approximation, which carefully analyzes a simple randomized construction,
and another 
for $\eps$-nets, which combines intersection properties of lines in the projective plane with probabilistic arguments. For more details, see Theorems \ref{thm:lbexists} and \ref{thm:lbnet} respectively.

\section{Related Work}
\label{sec:related-work}

\subsection{VC Theory}

As suggested by the title, the results presented by this work are inspired
	by uniform laws of large numbers in the statistical i.i.d.\ setting and in particular by VC theory.
	(A partial list of basic manuscripts on this subject include~\cite{VC1971,vapnik:74,Dudely84book,vapnik:98}.)
	Moreover, the established equivalence between online learning and adversarial laws of large numbers
	is analogous to the the equivalence between PAC learning and uniform laws of large numbers in
	the i.i.d.\ setting. (See e.g.~\cite{VC1971,Blumer89VC,bousquet2004introduction,shalev2010learnability,Shalev-Shwartz2014}.)
	From a technical perspective, our approach for deriving sample complexity upper bound
	is based on the chaining technique~\cite{dudley1973,dudley:78,dudley:87}, 
	which was analogously used to establish optimal sample complexity bounds in the statistical setting~\cite{Talagrand94chaining}.
	(The initial bounds by \cite{VC1971} are off by a $\log(1/\eps)$ factor.)

From the lower bound side, our proofs are based on ideas originated
	from combinatorial discrepancy and $\eps$-approximations.
	(E.g.,~\cite{MWW93disc}; see the book by Matou\v{s}ek~\cite{matousekDiscrepencyBook} for a text-book introduction.)

\subsection{Online Learning}

The first works in online learning can be traced back to \cite{robbins1951asymptotically,blackwell1956analog,blackwell1954controlled,hannan1957approximation}. In terms of learning binary functions, Littlestone's dimension was first proposed in \cite{Littlestone87} to characterize online learning in the realizable (noiseless) setting. The agnostic (noisy) setting was first proposed by \cite{Haussler1992} in the statistical model and later extended to the online setting by \cite{littlestone1994weighted} who studied function-classes of bounded cardinality and then by \cite{BenDavidPS09} and \cite{rakhlin2010online} who provided both upper and lower bounds with only a logarithmic gap.

We note that Rakhlin, Sridharan, and Tewari~\cite{rakhlin2010online,rakhlin2015online,rakhlin2015sequential}, in the same line of work that proved the equivalence between online learning and sequential Rademacher complexity, analyzed uniform martingales laws of large numbers in the context of online learning.
These laws of large numbers are conceptually different from ours: roughly, they
assert uniform concentration of certain properties of martingales, where the uniformity
is over a given family of martingales.
In particular, in contrast with our work, there is no aspect of sub-sampling in these laws.
Below, we compare their techniques to those of this paper:
\begin{itemize}
	\item 
	\cite{rakhlin2010online} used a symmetrization argument to reduce from Martingale quantities relating to online learning to the Rademacher complexity. This does not reduce the effective sample size, which is what we achieve using the double sampling argument.
	\item \cite{rakhlin2010online} developed methods for analyzing the sequential Rademacher complexity. In particular, they developed a notion of covering numbers that is generally more powerful than the \emph{non-fractional} cover that uses dynamic sets, which was developed by \cite{BenDavidPS09} and was the baseline for our analysis. Yet, obtaining tight bound on the sequential Rademacher of Littlestone classes remained open.
	\item Reductions between sampling schemes did not appear in the above work as they did not study sampling.
\end{itemize}



\subsection{Streaming Algorithms}

The streaming model of computation is useful when analyzing massive datasets~\cite{AlonMS99}. 
There is a wide variety of algorithms for solving different tasks. One 
common method that is useful for various approximation tasks in streaming 
is random sampling. To approximate a function $f$, each element is sampled with some small probability $p$, and at the end, the function $f$ is computed on the sample. For tasks such as computing a center point of a high-dimensional dataset, where the objective is (roughly speaking) preserved under taking an $\eps$-approximation, this can result in improved space complexity and running time.
Motivated by streaming applications, Ben-Eliezer and Yogev \cite{BenEliezerYogev2020} 
proposed the adversarial sampling model that we study in this paper, and proved preliminary bounds on it. Their main result, a weaker quantitative analogue of our \Cref{thm:quantitative}, is an upper bound of $O((\log(|\E|) + \log(1/\delta))/\eps^2)$ for any finite family $\E$.


Streaming algorithms in the adversarial setting is an emerging topic that is not well understood. Hardt and Woodruff \cite{HardtW13} showed that linear sketches are inherently \emph{non-robust} and cannot be used to compute the Euclidean norm of its input (where in 
the static setting they are used mainly for this reason). Naor and Yogev 
\cite{NaorY15} showed that Bloom filters are susceptible to attacks by an 
adversarial stream of queries. On the positive side, two recent works \cite{BenEliezerJWY20,HassidimKMMS20} present generic compilers that transform non-robust randomized streaming algorithms into efficient adversarially robust ones, for various classical problems such as distinct elements counting and $F_p$-sampling, among others. 

\section{Preliminaries}\label{sec:preliminaries}

\subsection{Basic Definitions: Littlestone Dimension and Sampling Schemes}
\label{sec:basic_defs}

\paragraph{Littlestone Dimension}
Let $X$ be a domain and let $\cal E$ be a family of subsets of $X$. 
The definition of the \emph{Littlestone Dimension} \cite{Littlestone87}, denoted $\Ldim(\mathcal{E})$, is given using mistake-trees: 
these are binary decision trees whose internal nodes are labelled by elements of $X$. 
Any root-to-leaf path corresponds to a sequence of pairs $(x_1, y_1), \ldots,(x_d, y_d)$, where $x_i$
is the label of the $i$'th internal node in the path, and $y_i = 1$ if the $(i+1)$'th node in the path is the right child of the $i$'th node, and otherwise $y_i = 0$. 
We say that a tree $T$ is shattered by $\mathcal{E}$ if for any
root-to-leaf path $(x_1, y_1), \ldots,(x_d, y_d)$ in $T$ there is $E \in \mathcal{E}$ such that $x_i \in R \iff  y_i=+1$, for all $i \le d$.
$\Ldim({\cal E})$ is the depth of the largest complete tree shattered by $\mathcal{E}$, with the convention that $\Ldim({\emptyset})=-1$. See \Cref{fig:shatteredtree} for an illustration.
\begin{figure}
\centering
\includegraphics[scale=0.3]{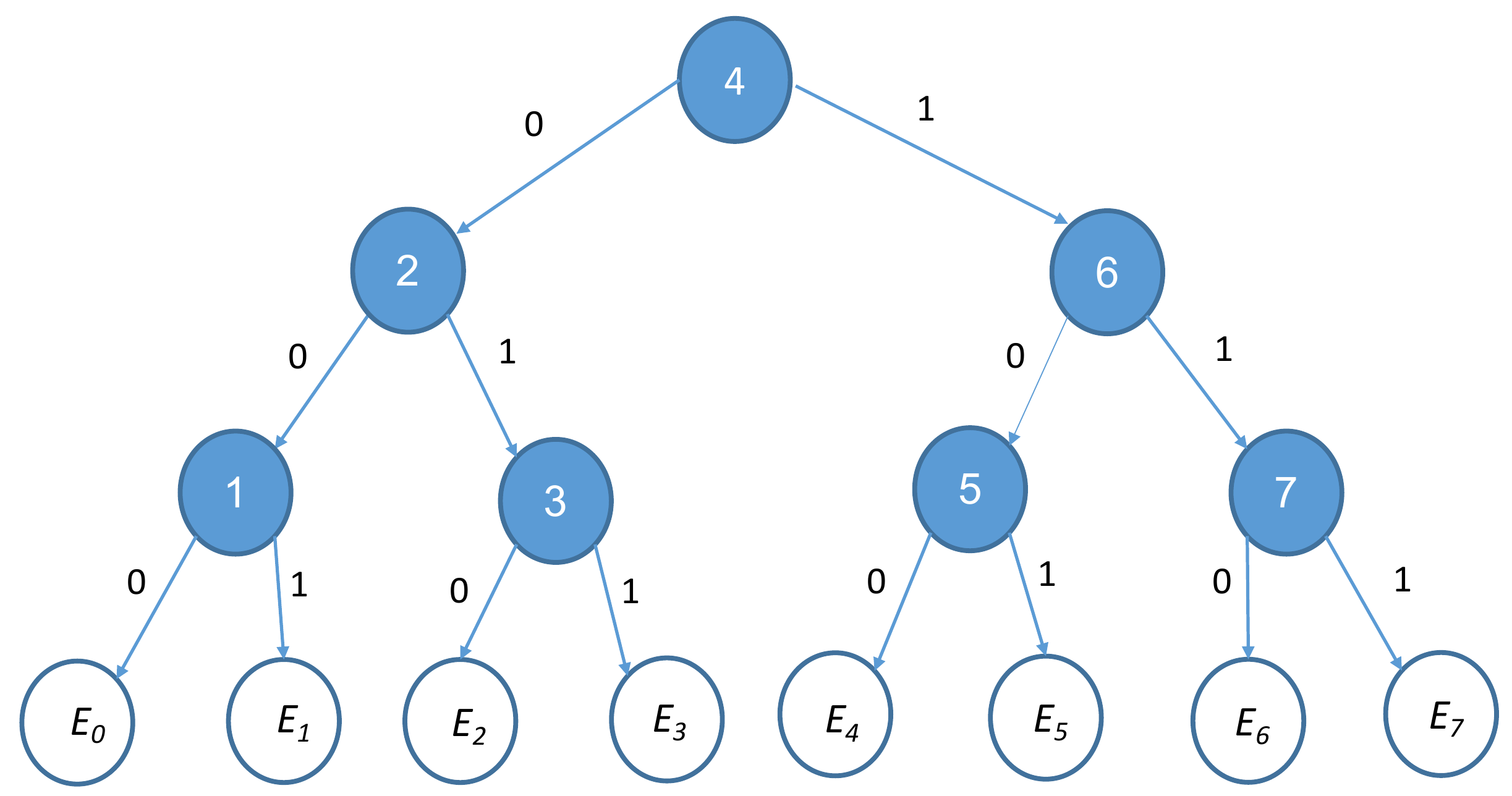}
\caption{\small{A tree shattered by the class $\mathcal{E}$ containing all thresholds $E_i\subseteq \{1,2,\ldots,7\}$, 
where $E_i=\{1,\ldots i\}$.}}\label{fig:shatteredtree}
\end{figure}

\paragraph{Sampling Algorithms}
Our results are achieved by three of the simplest and most commonly used sampling procedures: Bernoulli sampling, uniform sampling, and reservoir sampling.
\begin{itemize}
	\item \textbf{Bernoulli sampling}: $\Ber(n,p)$ samples the element arriving in each round $i \in [n]$ independently with probability $p$.
	\item   \textbf{uniform sampling}: $\Uni(n,k)$ randomly draws $k$ indices $1 \leq i_1 < \ldots < i_k \leq n$ and samples the elements arriving at rounds $i_1, \ldots, i_k$.\footnote{Note that the uniform sampler can be implement efficiently in an online way: after $i$ rounds, the probability that the next element $x_{i+1}$ will be sampled depends only on $i$, $n$, and the number of elements sampled so far.} 
	\item \textbf{Reservoir sampling}: $\Res(n,k)$ \cite{Vitter85} maintains a sample of size $k$ at all times using insertions and deletions: the first $k$ elements are always added to the sample, and for any $i > k$, with probability $k/i$ the element arriving in round $i$ is added to the sample while one of the existing elements (picked uniformly) is removed from the sample. 
\end{itemize}

\subsection{Notation}
\begin{itemize}
\item
	\textbf{Random variables} are denoted in a bold font. 
\item
	\textbf{Universal constants:} let $C,c,C',\dots$ denote universal numerical constants, that are independent of the problem parameters. Further, the values of these constants can change from the left-hand side to the right-hand side of some inequalities. We use $C_0,C_1,\dots$ to denote universal constants whose values are fixed.
\item \textbf{Sampling schemes:}
	We use $\rv{I}$ to denote the set of indices of elements sampled by the algorithm if they are sampled from a scheme with no deletions (e.g. Bernoulli or uniform sampling), and use $\rvvI = (\rvI_1,\dots,\rvI_n)$ to denote a sampling scheme with deletions, where $\rvI_j$ is the set of indices of elements retained after round $j \in [n]$.
\item \textbf{Adversaries:}
	We denote the set of adversaries that generate a stream of size $n$ by $\Adv_n$ and commonly denote adversaries by $\A$. We assume the adversary to be deterministic: since the sampler has a fixed strategy, we can make this assumption without limiting the generality of the theorem.
\item \textbf{The stream and subsets of it:}
	Let $\vx = (x_1,\dots,x_n)$ denote the stream where $x_i \in X$. We set $\vx(\A,I)$ to denote the stream presented by the adversary when the sampler samples elements indexed by $I$. Notice that $\vx(\A,I)_t$ depends only on $\A$ and $I\cap[t-1]$, since the $t$-th stream element is presented before the adversary knows if element $t$ is added to the sample. Given a subset $J \subseteq [n]$ we let $\vx_J = \{x_j \colon j \in J \}$. By abuse of notation, we may use $\vx$ also to denote the multiset $\{x_1,\dots,x_n\}$, allowing operations such as set intersection.
\end{itemize}

\subsection{Additional Central Definitions}
We define the central notions used in this proof, starting with the approximation rate in $\epsilon$-approximations:
\begin{definition}
	Given an arbitrary family $\E$ over a domain $X$, an adversary $\A\in \Adv_n$ and a subset $I \subseteq [n]$, let $\vx= \vx(\A,I)$ and define
	\[
	\App_{\A,I}(\E) = \App_{\A,I} := \max_{E \in \E} \lp| \frac{|E\cap \vx|}{n} - \frac{|E\cap \vx_I|}{k} \rp|.
	\]
\end{definition}

Secondly, define the notion of online discrepancy:
\begin{definition}
	Let $\E$ denote an arbitrary family over $X$, let $\A\in \Adv_{n}$, let $I \subseteq [n]$ and let $\vx = \vx(\A,I)$. The online discrepancy is defined by:
	\[
	\Disc_{\A,I}(\E) = \Disc_{\A,I} := \max_{E \in \E} \lp| |E\cap \vx_{I}| - |E\cap \vx_{[n]\setminus I}| \rp|.
	\]
\end{definition}

The sequential Rademacher complexity is just the expected discrepancy:
\begin{definition}
    The \emph{sequential Rademacher complexity} is defined as
    \[
    \Rad_T(\E)
    := \Ex_{\rvI\sim \Ber(n,1/2)}[\Disc_{\A,\rvI}(\E)].
    \]
\end{definition}
The next definition is used in the proof for $\epsilon$-nets. It defines an indicator to whether there exist $E \in \E$ that is well represented in the stream but not sufficiently represented in the sample.
\begin{definition}
	Fix $n,\om,\um \in \mathbb{N}$ such that $0 \le \um\le \om \le n$, let $\A \in \Adv_n$ and $I \subseteq [n]$. Denote $\vx = \vx(\A,I)$. Define
	\[
	\Net^n_{\A,I,\om,\um}(\E)
	= \Net^n_{\A,I,\om,\um}
	= \begin{cases}
	1 & \exists E \in \E,
	\ |\vx\cap E| \ge \om \text{ and }
	|\vx_I\cap E| \le \um \\
	0 & otherwise
	\end{cases}
	\]
\end{definition}
Notice that $\Net^n_{\A,I,\epsilon n,0}$ is an indicator to whether $\vx_I$ fails to be an $\epsilon$-net for $\vx$.

\subsection{Sampling Without Replacement}
Here we present technical probabilistic lemmas for sampling without replacement that are used in the proof. The proofs of these lemmas appear in Appendix \ref{sec:app-without-rep}.

For a sample $\rvI\sim \Uni(n,k)$ chosen without replacement, one would like to estimate the size of the intersection of $\rvI$ with any fixed set $U \subseteq [n]$. The next lemma bounds the variance of the intersection:
\begin{lemma} \label{lem:var-without-rep}
	Let $n,k$ such that $n\ge k$, let $U\subseteq [n]$, and let $\rvI\sim \Uni(n,k)$. Then, $\Ex[|U\cap \rvI|] = |U|k/n$ and
	$
	\mathrm{Var}(|U\cap \rvI|)
	\le |U|k/n.
	$
\end{lemma}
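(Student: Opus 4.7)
The plan is to write $|U\cap \rvI|$ as a sum of Bernoulli indicators $|U\cap \rvI| = \sum_{i\in U} \rv{X}_i$, where $\rv{X}_i = \mathds{1}[i\in \rvI]$. The marginal distribution of $\rvI$ places each fixed index $i\in [n]$ in $\rvI$ with probability exactly $k/n$, since by symmetry $\Pr[i\in \rvI] = \Ex[|\rvI|]/n = k/n$. Linearity of expectation immediately gives $\Ex[|U\cap \rvI|] = \sum_{i\in U}\Pr[i\in \rvI] = |U|k/n$, settling the first claim.

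For the variance, I would compute the second moment via the identity $|U\cap \rvI|^2 = \sum_{i\in U}\rv{X}_i + \sum_{i\ne j\in U} \rv{X}_i\rv{X}_j$. The joint probability that two distinct indices $i,j$ both lie in $\rvI$ is $\Pr[i,j\in \rvI] = \binom{n-2}{k-2}/\binom{n}{k} = k(k-1)/(n(n-1))$, again by symmetry. Plugging these in,
\[
\Ex\lp[|U\cap \rvI|^2\rp] = \frac{|U|k}{n} + \frac{|U|(|U|-1)\,k(k-1)}{n(n-1)},
\]
and subtracting $(|U|k/n)^2$ yields the exact (hypergeometric) variance
\[
\mathrm{Var}(|U\cap \rvI|) = \frac{|U|\,k\,(n-k)(n-|U|)}{n^2(n-1)}.
\]

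It then remains to bound each factor: since $n-k \le n-1$ and $n-|U| \le n$, the right-hand side is at most $|U|k/n$, completing the proof. An alternative (and conceptually cleaner) route is to note that the indicators $\rv{X}_i$ are pairwise \emph{negatively} correlated — indeed the covariance computation above gives $\mathrm{Cov}(\rv{X}_i,\rv{X}_j) = -k(n-k)/(n^2(n-1)) \le 0$ — so $\mathrm{Var}(\sum_{i\in U}\rv{X}_i) \le \sum_{i\in U}\mathrm{Var}(\rv{X}_i) = |U|\cdot (k/n)(1-k/n) \le |U|k/n$.

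There is no real obstacle here; the only mild care-point is keeping track of the cases $|U|=0$, $k=0$, or $k=n$ where various factors vanish and the claim becomes trivial. I would simply note in passing that these edge cases are immediate and focus the write-up on the generic computation above.
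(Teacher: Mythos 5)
Your proof is correct and follows essentially the same route as the paper's: decompose $|U\cap\rvI|$ into indicators, compute $\Pr[i\in\rvI]=k/n$ and $\Pr[i,j\in\rvI]=k(k-1)/(n(n-1))$, and bound the second moment. The only cosmetic difference is that you derive the exact hypergeometric variance $\frac{|U|k(n-k)(n-|U|)}{n^2(n-1)}$ before bounding it, whereas the paper bounds $\Ex[\rv{z}_i\rv{z}_j]\le k^2/n^2$ directly so the squared-mean term cancels; both yield $|U|k/n$ with the same underlying computation.
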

Further, exponential tail bounds can also be obtained:
\begin{lemma}[\cite{chatterjee2005concentration},\cite{bardenet2015concentration}]\label{lem:subset-intersection}
	Let $n,k \in \mathbb{N}$ such that $n\ge k$.
	Let $U \subseteq [n]$.
	Then, the following holds:
	\begin{enumerate}
	\item \label{itm:without-rep-big-set}
	For any $t \ge 0$,
	\[
	\Pr_{\rvI \sim \Uni(n,k)}\lp[\lp|\frac{|\rvI \cap U|}{k} - \frac{|U|}{n}\rp| \ge t \rp]
	\le 2\exp\lp(-2t^2 k\rp).
	\]
	\item \label{itm:without-rep-small-set}
	For any $\alpha\in [0,1]$,
	\[
	\Pr_{\rvI \sim \Uni(n,k)}\lp[\lp|\frac{|\rvI \cap U|}{k} - \frac{|U|}{n}\rp| \ge \alpha \frac{|U|}{n}\rp]
	\le 2\exp\lp(- \frac{\alpha^2 k|U|}{6n}\rp).
	\]
	\end{enumerate}
\end{lemma}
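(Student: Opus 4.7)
The plan is to reduce the without-replacement setting to the with-replacement setting via Hoeffding's classical coupling result, and then apply standard Chernoff--Hoeffding bounds. Specifically, write $|\rvI \cap U| = \sum_{j=1}^k Y_j$, where $Y_j = \mathbf{1}[i_j \in U]$ and $i_1,\ldots,i_k$ is the sequence of indices drawn without replacement. Each $Y_j$ has the marginal distribution $\mathrm{Ber}(|U|/n)$, but the $Y_j$'s are correlated. Let $Z_1,\ldots,Z_k$ be i.i.d. $\mathrm{Ber}(|U|/n)$ random variables (sampling \emph{with} replacement). Hoeffding's 1963 result states that for every convex function $f$,
\[
\Ex\Bigl[f\Bigl(\sum_{j=1}^k Y_j\Bigr)\Bigr] \le \Ex\Bigl[f\Bigl(\sum_{j=1}^k Z_j\Bigr)\Bigr].
\]
This inequality follows from a coupling argument (or equivalently, from expressing the without-replacement sum as an average of appropriately permuted with-replacement sums). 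Applied to $f(x) = e^{\lambda x}$, it means the moment generating function of $\sum Y_j$ is dominated by that of $\sum Z_j$, so all Chernoff-type tail bounds for i.i.d. Bernoulli sums transfer immediately.

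For part~\ref{itm:without-rep-big-set}, apply the standard Hoeffding bound for bounded i.i.d. random variables to $\sum Z_j / k$: for any $t\ge 0$,
\[
\Pr\Bigl[\bigl|\sum_j Z_j/k - |U|/n\bigr| \ge t\Bigr] \le 2\exp(-2t^2 k),
\]
and transfer the bound to $\sum Y_j / k$ using Hoeffding's reduction above.

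For part~\ref{itm:without-rep-small-set}, apply the multiplicative (Bernstein-type) Chernoff bound for i.i.d. Bernoulli$(p)$ variables with $p = |U|/n$: for $\alpha \in [0,1]$,
\[
\Pr\Bigl[\bigl|\sum_j Z_j - kp\bigr| \ge \alpha kp\Bigr] \le 2\exp(-\alpha^2 kp/3),
\]
which after Hoeffding's reduction yields the stated bound (the constant $1/6$ in the exponent is a conservative version of the standard $1/3$, absorbing the two-sided union bound and keeping the statement clean). Both inequalities are ultimately consequences of the convex-order domination between without- and with-replacement sums of bounded variables.

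The only real technical point is invoking Hoeffding's reduction correctly; once this is in hand, both statements follow from classical one-line Chernoff computations. No deeper step is required because the variance bound from Lemma~\ref{lem:var-without-rep} already confirms that the relevant second-moment term matches the with-replacement case, so no loss is incurred in the exponent.
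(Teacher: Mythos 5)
Your proof is correct, but it takes a genuinely different route from the paper. The paper defers both tails to imported, without-replacement-specific concentration results: item~\ref{itm:without-rep-big-set} is read off directly from a Hoeffding-type bound for sampling without replacement (\Cref{lem:chernoff-wout-replacememnt}, attributed to Bardenet--Maillard), and item~\ref{itm:without-rep-small-set} is obtained by encoding $|\rvI\cap U|$ as a functional $Y=\sum_i a_{i\pi(i)}$ of a random permutation and invoking Chatterjee's Bernstein-type bound for such functionals (\Cref{lem:permutation}), followed by an explicit substitution $t=\alpha\,\Ex Y$ and the simplification $4\Ex Y + 2t\le 6\Ex Y$ (which is the real source of the $1/6$, not a union-bound slack as you suggest). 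You instead reduce to the i.i.d.\ case once and for all via Hoeffding's 1963 convex-order domination, $\Ex[f(\sum Y_j)]\le\Ex[f(\sum Z_j)]$ for convex $f$, so that MGF-based Chernoff bounds transfer verbatim. This is valid for both parts since both the additive Hoeffding bound and the two-sided multiplicative Chernoff bound are proved via $\Ex[e^{\pm\lambda\sum Z_j}]$, and $x\mapsto e^{\pm\lambda x}$ is convex. The upshot is that your argument is more self-contained and needs no specialized without-replacement machinery, while the paper's version is shorter on the page because it outsources the hard work to two citations; both yield constants in the exponent that dominate the stated $1/6$, so the discrepancy in constants is immaterial.
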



\section{Epsilon Approximations}\label{sec:approx}

Below, we prove Theorem~\ref{thm:quantitative}. We start with a more formal statement of the theorem:
\begin{theorem} \label{thm:approx}
	Let $\E$ denote a family of Littlestone dimension $d$, let $\delta,\epsilon \in (0,1/2)$, $n\in\mathbb{N}$, $\A\in \Adv_n$ and $p \in [0,1]$.
	Define $k = \lfloor np \rfloor$.
	If $n \ge 3k$ then, for any $\delta > 0$,
	\[
	\Pr_{\rvI}\lp[\App_{\A,\rvI}(\E) \ge C \sqrt{\frac{d+\log(1/\delta)}{k}}\rp]
	\le \delta,
	\]
	where $\rvI$ is drawn either from $\Uni(n,k)$, $\Ber(n,p)$ or $\Res(n,k)$ and $C>0$ is a universal constant.
\end{theorem}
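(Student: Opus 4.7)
\textbf{Proof plan for Theorem~\ref{thm:approx}.}
My plan is to follow the four-stage pipeline laid out in Section~\ref{sec:tech_overview}. First, I would use the sampler-reduction framework (Section~\ref{sec:reduction-sampling}) to show that, up to constants, it suffices to prove the bound for a single sampler. Concretely, I would target $\Uni(n,k)$ and then transfer the bound to $\Ber(n,p)$ and $\Res(n,k)$ by coupling: a Bernoulli sample conditioned on its size is uniform, and a reservoir sample of size $k$ after $n$ rounds is distributed as $\Uni(n,k)$. The condition $n \ge 3k$ is what guarantees that the relevant conditioning events (e.g.\ $|\Ber(n,p)| = \Theta(k)$) occur with overwhelming probability, so the additional failure probability incurred in these reductions is absorbed into~$\delta$.

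Second, I would perform the online double-sampling argument. Let $\rvI \sim \Uni(n,k)$ and, disjointly, draw a ghost set $\rv{J} \subseteq [n]\setminus \rvI$ with $|\rv{J}|=k$, unknown to the adversary. Using Lemma~\ref{lem:subset-intersection} (sampling-without-replacement concentration), standard symmetrization shows that $\App_{\A,\rvI}(\E)$ is bounded, with the desired probability, by $O(1/k)$ times the \emph{test-set deviation} $\sup_{E\in\E}\bigl||\vx_{\rvI}\cap E|-|\vx_{\rv J}\cap E|\bigr|$. Then I would reveal the random set $\rvI \cup \rv{J}$ of size $2k$ to the adversary up front; since this only strengthens the adversary, it is enough to upper bound the discrepancy in the resulting game on $2k$ rounds, where $\rvI$ is a uniform subset of size $k$ of the $2k$ revealed indices. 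This is exactly $\Disc_{\A',\rvI}(\E)$ with $\A'\in\Adv_{2k}$ and $\rvI\sim \Uni(2k,k)$.

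Third, I would apply the sampler-reduction lemma again to replace $\Uni(2k,k)$ by $\Ber(2k,1/2)$, reducing the task to a high-probability bound on the sequential Rademacher complexity $\Rad_{2k}(\E)$ for the adversarially generated stream. At this point I would combine the dynamic Sauer-Shelah-Perles lemma of \cite{BenDavidPS09} with the approximate fractional covers of Lemma~\ref{lem:fractional-cover-littlestone}: the family $\E$ admits an $(\epsilon,\gamma)$-fractional cover of size $\gamma = (C/\epsilon)^d$ over dynamic sets. Plugging these into a chaining argument, at scale $\epsilon_j = 2^{-j}$ one controls the per-dynamic-set discrepancy via Azuma-Hoeffding on the martingale defined by the Bernoulli colors, and a union bound over the $\gamma_j$ dynamic sets in the cover at scale $\epsilon_j$ yields, for each $j$, a high-probability bound on the discrepancy of \emph{most} dynamic sets; the fractional property then upgrades this to a bound for \emph{every} $E \in \E$ via a pigeonhole step (if $\ge (1-1/\gamma_j)$ dynamic sets are good and every $E$ is covered by a $1/\gamma_j$ mass of dynamic sets, at least one good dynamic set covers $E$). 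Summing the telescoping increments yields $\Disc \le O(\sqrt{(d+\log(1/\delta))\cdot k})$ with probability $1-\delta$, which after dividing by~$k$ gives exactly the claimed approximation bound.

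The main obstacle I anticipate is the last step: adapting chaining to fractional covers while correctly propagating failure probabilities across all scales. In particular, the pigeonhole step requires the union-bound slack at scale $j$ to be compatible with the cover mass $1/\gamma_j = (\epsilon_j/C)^d$, so the concentration inequality for each dynamic set at scale $j$ must save a factor stronger than $\gamma_j$; this forces the Azuma tail bound to be applied at deviation $\Theta(\epsilon_j \sqrt{k(d+\log(1/\delta))})$ and the increments must be linked across scales via a ``closest cover element'' that is itself dynamic-set-valued. Handling this subtle coupling carefully, while keeping all error terms subgaussian, is the delicate part, and is where the excerpt points the reader to Sections~\ref{sec:cover-eps} and~\ref{sec:eps-approx-via-cover} for the full technical execution.
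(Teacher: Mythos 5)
Your proposal reproduces the paper's proof architecture step for step: double sampling to pass from $\Uni(n,k)$ on the full stream to $\Uni(2k,k)$ on a revealed $2k$-window (Lemma~\ref{lem:approx-n-to-2k}), an online coupling to replace $\Uni(2k,k)$ by $\Ber(2k,1/2)$ (Lemma~\ref{lem:approx-comparison}), chaining over fractional covers of dynamic sets to bound the discrepancy (Lemma~\ref{lem:bnd-p-half} via Lemmas~\ref{lem:fractional-cover-littlestone} and~\ref{lem:fractional-cover-bound}), and then transfers to $\Ber(n,p)$ by binomial mixing (Lemma~\ref{lem:bernoulli-from-uniform}) and to $\Res(n,k)$ by online coupling (Lemma~\ref{lem:uniform-to-reservoir}), with $n\ge 3k$ supplying Bernstein-tail slack exactly as you anticipate. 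The one concrete technical detail where the paper deviates from your sketch is the choice of chaining scales: the paper explicitly remarks (just before Section~\ref{sec:proof-of-fractional-cover}) that the geometric scales $\epsilon_j = 2^{-j}$ you propose naturally give only an expectation bound, and the standard fix of bounding $\Ex\sup_{E}|\rv{Y}_E|$ and then concentrating via McDiarmid-type inequalities is unavailable against an adaptive adversary; it instead chains on the doubly-exponential scales $\epsilon_j$ defined by $N'(\E,\epsilon_j)\le 2^{2^j}$, so that the per-level failure budget $\Theta(2^{-j}\delta)$ and the cover-mass threshold $2^{-2^j}$ line up automatically in the pigeonhole step, and so that the approximants $\pi_{E,j}$ can be chosen consistently across levels via the restricted candidate sets $A_j'$. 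Your intuition about the required per-scale deviation $\Theta(\epsilon_j\sqrt{k(d+\log(1/\delta))})$ and the need for a consistently linked dynamic-set chain is exactly right, so you would land on this scale choice once you tried to balance the union-bound allocation precisely.
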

Note that the requirement $n\ge 3k$ is merely technical; when $n$ is smaller than that, one can just add all $n$ elements to the sample and obtain a $0$-approximation trivially.

Here we prove that the \emph{uniform sample} $\Uni(n,k)$ is an $\epsilon$-approximation and in Section~\ref{sec:reduction-sampling} we show a reduction to Bernoulli and reservoir sampling.
The bound for the uniform sampler consists of three  steps (see \Cref{sec:tech_overview}). The first step utilizes the double sampling argument, to bound the approximation error $\App_{\A,I}$ in terms of the discrepancy corresponding to a sampler $\Uni(2k,k)$:
\begin{lemma}\label{lem:approx-n-to-2k}
	Let $\E$ denote an arbitrary family of subsets from some universe. Fix $k,n\in\mathbb{N}$ such that $n\ge 2k$. Then, for any $t\ge 0$ and $\delta \in (0,1)$,
	\begin{equation}\label{eq:221}
	\max_{\A\in\Adv_n} \Pr_{\rvI \sim \Uni(n,k)}[\App_{\A,\rvI}(\E) > t]
	\le 2\max_{\A'\in\Adv_{2k}} \Pr_{\rvI'\sim \Uni(2k,k)}\lp[\Disc_{\A',\rvI'}(\E) > tk - \sqrt{Ck}\rp].
	\end{equation}
\end{lemma}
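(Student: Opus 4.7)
The plan is to carry out a standard double-sampling (symmetrization) argument, carefully adapted to the sequential adversarial setting. Fix an adversary $\A\in\Adv_n$ and draw $\rvI \sim \Uni(n,k)$; alongside, introduce a \emph{ghost sample} $\rvJ$ of size $k$ drawn uniformly from $[n]\setminus\rvI$, so that $(\rvI,\rvJ)$ is jointly uniform among ordered disjoint pairs of $k$-subsets and the marginal on $\rvI$ is $\Uni(n,k)$. The crucial observation is that $\A$'s stream $\vx = \vx(\A,\rvI)$ depends only on $\rvI$ and not on the ghost $\rvJ$, so $\rvJ$ can be sampled after $\vx$ is fully determined.

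First, I would prove a \emph{ghost-concentration} step: conditional on $\rvI$, if $\App_{\A,\rvI}(\E)>t$ is witnessed by some $E\in\E$ (chosen as a function of $\rvI$ and $\vx$), then with probability at least $1/2$ over $\rvJ$,
\[
\bigl||\vx_\rvI\cap E|-|\vx_\rvJ\cap E|\bigr| > tk - \sqrt{Ck}.
\]
A short calculation (using that $\rvJ$ is a uniform $k$-subset of $[n]\setminus\rvI$) gives
$\Ex_{\rvJ}[|\vx_\rvI\cap E| - |\vx_\rvJ\cap E|] = \frac{n}{n-k}\bigl(|\vx_\rvI\cap E| - k|\vx\cap E|/n\bigr)$, whose absolute value exceeds $tk$ on the event $\App_{\A,\rvI}>t$. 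Lemma~\ref{lem:var-without-rep} applied to the universe $[n]\setminus\rvI$ bounds $\mathrm{Var}_\rvJ(|\vx_\rvJ\cap E|)\le k$, so by Chebyshev $|\vx_\rvJ\cap E|$ lies within $2\sqrt{k}$ of its mean with probability at least $3/4$. Combining gives the claim with $C=4$.

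Second, I would reduce to the length-$2k$ discrepancy game by constructing an adversary $\A'\in\Adv_{2k}$ that simulates $\A$. Internally, $\A'$ samples a uniform subset $T=\{t_1<\cdots<t_{2k}\}\subseteq[n]$ of size $2k$; it then runs $\A$ internally, replying ``not sampled'' for every round $j\in[n]\setminus T$. At round $i$ of the $2k$-game, $\A'$ fast-forwards $\A$ up to round $t_i$ (which it can do using only the sampler's decisions on rounds $1,\dots,i-1$ of the $2k$-game) and outputs $y_i := x_{t_i}$; when it subsequently learns whether $i\in\rvI''$, it feeds the matching ``sampled/not sampled'' reply to $\A$ for round $t_i$. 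Under the natural coupling $\rvI = \{t_i : i\in\rvI''\}$, $\rvJ = T\setminus\rvI$, the induced $\rvI$ is marginally $\Uni(n,k)$, and the stream positions sampled by $\A'$ are exactly $\vx_\rvI\cup\vx_\rvJ$; hence $\Disc_{\A',\rvI''}(\E) \ge ||\vx_\rvI\cap E|-|\vx_\rvJ\cap E||$ for the same witness $E$.

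Putting the two steps together,
\[
\Pr_{\rvI''}\bigl[\Disc_{\A',\rvI''}(\E) > tk-\sqrt{Ck}\bigr] \;\ge\; \tfrac{1}{2}\,\Pr_\rvI[\App_{\A,\rvI}(\E)>t],
\]
and finally derandomizing over the internal randomness $T$ (take the best fixed value) yields a \emph{deterministic} $\A'\in\Adv_{2k}$ with the same guarantee, giving \eqref{eq:221}. The main subtlety, and the place I expect to spend most care, is verifying causality and measurability of the simulation: one must check that $\A'$ only uses the $2k$-game feedback available at the moment it needs to commit to $y_i$, and that the witness $E$ (which depends on $\rvI$) is chosen before $\rvJ$ is drawn so that the ghost-concentration argument applies to a fixed set.
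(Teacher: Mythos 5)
Your proof is correct and follows essentially the same double-sampling strategy as the paper (ghost concentration followed by conditioning on the $2k$-set $\rvI\cup\rvJ$ and constructing a simulating adversary for the length-$2k$ game). The paper packages the two steps into a reusable abstraction (Lemma~\ref{lem:dbl-sampling} via Lemmas~\ref{lem:dbl-aux1} and \ref{lem:dbl-aux2}, then invokes it with Lemma~\ref{lem:aux-n-2k}), whereas you instantiate it directly; your only substantive deviation is using Chebyshev via Lemma~\ref{lem:var-without-rep} in place of the exponential tail from Lemma~\ref{lem:subset-intersection}, which is equally valid here since only constant success probability is needed.
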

The technique of double sampling is presented in Section~\ref{sec:dbl-sampling} and the proof of Lemma~\ref{lem:approx-n-to-2k} is in Section~\ref{sec:pr-approx-dbl}.

Applying Lemma~\ref{lem:approx-n-to-2k}, we are left with bounding $\Disc_{\A,\rvI}$ where $\A\in \Adv_{2k}$ and $\rvI\sim \Uni(2k,k)$. However, it is easier to analyze a random sample $\rvI'\sim \Ber(2k,1/2)$ due to the independence of the coordinates. Hence, we prove the following lemma:
\begin{lemma}\label{lem:approx-comparison}
	Let $\E$ denote an arbitrary family. Then, for any $t > 0$ and $\delta > 0$,
	\begin{equation}\label{eq:31}
	\max_{\mathcal{A}\in\Adv_{2k}} \Pr_{\rvI\sim \Uni(2k,k)}\lp[\Disc_{\A,I}(\E) > t \rp]
	\le \max_{\mathcal{A}'\in\Adv_{2k}}\Pr_{\rvI'\sim\Ber(2k,1/2)}\lp[\Disc_{\A',I'}(\E) > t-\sqrt{Ck\log(1/\delta)}\rp] + \delta.
	\end{equation}
\end{lemma}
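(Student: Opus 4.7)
The plan is to reduce the uniform discrepancy to the Bernoulli discrepancy via an online coupling between $\Uni(2k,k)$ and $\Ber(2k,1/2)$, losing only an additive $\sqrt{Ck\log(1/\delta)}$ in the threshold and a $\delta$ failure probability. Fix an arbitrary adversary $\A \in \Adv_{2k}$. I would construct a randomized adversary $\A'$ for the Bernoulli sampler that internally maintains a virtual uniform sample $\tilde{\rvI}$, coupled online with the true Bernoulli sample $\rvI'$. At round $t$, $\A'$ outputs $x_t = \A(\tilde b_1, \ldots, \tilde b_{t-1})$ where $\tilde b_s = \mathds{1}(s \in \tilde{\rvI})$; upon observing the Bernoulli feedback $b'_t$, $\A'$ uses it together with fresh randomness to commit to $\tilde b_t$. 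This ensures $\vx(\A', \rvI') = \vx(\A, \tilde{\rvI})$ as streams, so $\Disc_{\A',\rvI'}$ and $\Disc_{\A,\tilde{\rvI}}$ are evaluated on a \emph{common} stream with only the selected subset differing.

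The coupling is designed so that (i) $\tilde{\rvI} \sim \Uni(2k,k)$ and $\rvI' \sim \Ber(2k,1/2)$ marginally, and (ii) $|\tilde{\rvI} \triangle \rvI'| \le \sqrt{Ck\log(1/\delta)}/2$ with probability at least $1-\delta$. Property (ii) rides on the Chernoff estimate $\Pr[\lvert |\rvI'| - k\rvert > \sqrt{Ck\log(1/\delta)}/2] \le \delta$ for $|\rvI'| \sim \Bin(2k,1/2)$, combined with a Markov-chain-style online coupling (the sort developed in the paper's sampler reduction framework) whose symmetric difference is governed by the deficit $\lvert|\rvI'|-k\rvert$. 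Then a triangle inequality on the common stream $\vx = \vx(\A, \tilde{\rvI})$ gives, for every $E \in \E$,
\[
\bigl\lvert |E\cap \vx_{\tilde{\rvI}}| - |E\cap \vx_{\rvI'}| \bigr\rvert \le |\tilde{\rvI} \triangle \rvI'|,
\]
and similarly for the complements (whose symmetric difference equals $\tilde{\rvI}\triangle\rvI'$), so $\bigl|\Disc_{\A',\rvI'}(\E) - \Disc_{\A,\tilde{\rvI}}(\E)\bigr| \le 2|\tilde{\rvI} \triangle \rvI'|$. Combining this with the tail bound on $|\tilde{\rvI}\triangle \rvI'|$, taking $\max$ over $\A$, and derandomizing the randomized $\A'$ by Yao's principle (picking the best realization of $\A'$'s internal randomness) yields the stated inequality.

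The main obstacle is the construction of the online coupling: the mechanism producing $\tilde b_t$ from $(b'_{\le t}, \tilde b_{<t})$ and fresh randomness must yield the exact $\Uni(2k,k)$ marginal on $\tilde{\rvI}$ while matching $b'_t$ as often as possible. Naive greedy couplings (e.g.\ setting $\tilde b_t = b'_t$ unless the hard constraint $|\tilde{\rvI}|=k$ forces otherwise) destroy the uniform marginal, while generic monotone couplings let $|\tilde{\rvI} \triangle \rvI'|$ grow well beyond $\bigl||\rvI'|-k\bigr|$ through bit-by-bit disagreements. The correct construction is a carefully balanced Markov-chain coupling that preserves the exact uniform distribution on $\tilde{\rvI}$ while still aligning with $\rvI'$ whenever possible, so that the symmetric difference tracks the binomial deficit.
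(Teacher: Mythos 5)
Your overall reduction structure matches the paper's: you fix the worst adversary $\A$ for the uniform sampler, build a randomized adversary $\A'$ for the Bernoulli sampler that maintains a virtual uniform sample $\tilde{\rvI}$ via an online coupling with $\rvI'$, observe that both discrepancies are then evaluated on the \emph{same} stream, and invoke the $O(1)$-Lipschitzness of $\Disc$ in the index set. This is exactly the paper's abstract online-coupling reduction (\Cref{lem:coupling-reduction}) combined with its Lipschitz consequence (\Cref{lem:Lipschitz}). So far so good.

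The gap is in the tail bound $\Pr\lp[\lvert\tilde{\rvI}\triangle\rvI'\rvert > C\sqrt{k\log(1/\delta)}\rp]\le\delta$, which you assert but do not prove, and whose proposed justification is backwards. You claim the symmetric difference is ``governed by'' the binomial deficit $\bigl\lvert\lvert\rvI'\rvert-k\bigr\rvert$ and can therefore be controlled by the Chernoff bound on $\Bin(2k,1/2)$. But for \emph{every} coupling, $\lvert\tilde{\rvI}\triangle\rvI'\rvert\ge\bigl\lvert\lvert\tilde{\rvI}\rvert-\lvert\rvI'\rvert\bigr\rvert=\bigl\lvert k-\lvert\rvI'\rvert\bigr\rvert$, so the deficit is a \emph{lower} bound on the symmetric difference, never an upper bound. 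The offline coupling (condition on $\lvert\rvI'\rvert$, then sample a uniform $k$-subset of $\rvI'$ or uniformly add $k-\lvert\rvI'\rvert$ elements) achieves equality, but it is not an online coupling: the choice of whether $t\in\tilde{\rvI}$ cannot be fixed at time $t$ without knowing all the future Bernoulli bits, so it cannot be fed into the reduction lemma. No online coupling achieves symmetric difference equal to the deficit; there is necessarily a $\Theta(\sqrt{k})$ accumulation of bit-by-bit disagreements.

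You also dismiss ``generic monotone couplings'' as letting the symmetric difference blow up; in fact the paper's monotone online coupling (\Cref{lem:monotone-coupling}) is exactly the right construction, and it is sufficient. What you are missing is that its symmetric difference is controlled not by comparing to the deficit but by an independent two-step probabilistic argument: first $\Ex\lvert\tilde{\rvI}\triangle\rvI'\rvert\le C\sqrt{k}$, via a term-by-term variance computation for the hypergeometric conditional means (\Cref{lem:ber-uni-expected-diff}); and second a Doob-martingale/Azuma concentration bound showing $\lvert\tilde{\rvI}\triangle\rvI'\rvert$ deviates from its mean by at most $C\sqrt{k\log(1/\delta)}$ with probability $1-\delta$ (\Cref{lem:ber-uni-deviation}), which requires a careful bounded-differences argument constructing auxiliary coupled index sets. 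This martingale step is the technical heart of the lemma and is absent from your proposal; the ``carefully balanced'' coupling you invoke to avoid it does not exist in the online model.
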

The technique of reducing between sampling schemes is in Section~\ref{sec:reduction-sampling} and the proof of Lemma~\ref{lem:approx-comparison} is in Section~\ref{sec:reduction-approx}.

The last step is to bound $\Disc_{\A,\rvI}$ for a sample $\rvI$ that is drawn Bernoulli $1/2$, for classes of bounded Littlestone dimension.
\begin{lemma} \label{lem:bnd-p-half}
	Let $\E$ be of Littleston dimension $d$ and let $\A\in \Adv_{2k}$. Then, for all $\delta > 0$,
	\[
	\Pr_{\rvI\sim \Ber(2k,1/2)}\lp[\Disc_{\A,\rvI}(\E) > C \sqrt{k(d+\log(1/\delta))} \rp]
	\le \delta.
	\]
\end{lemma}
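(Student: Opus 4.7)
The plan is to bound $\Disc_{\A,\rvI}(\E)$ via a chaining argument built on the approximate fractional covers for Littlestone classes that are promised by Lemma~\ref{lem:fractional-cover-littlestone}. Set $n := 2k$ and identify $\rvI \sim \Ber(n,1/2)$ with i.i.d.\ Rademacher signs $\sigma_j \in \{\pm 1\}$ (with $\sigma_j = +1$ iff $j\in \rvI$), so that for any dynamic set $\dS$ we have $\Disc_{\A,\rvI}(\dS) = \sum_{j=1}^n \sigma_j \mathds{1}[x_j \in \dS(\vx)]$. Since $x_j$ and the decision of $\dS$ at step $j$ are determined by $\sigma_1,\ldots,\sigma_{j-1}$, the summands form a martingale-difference sequence with $|X_j|\le 1$, and Azuma's inequality yields $\Pr[|\Disc(\dS)|>t]\le 2\exp(-t^2/(2n))$ for any fixed $\dS$.

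Define scales $\epsilon_i := 2^{-i}$ for $i=0,1,\ldots,L$ with $L := \lceil \log_2 n\rceil$, and invoke Lemma~\ref{lem:fractional-cover-littlestone} to obtain an $(\epsilon_i,\gamma_i)$-fractional cover $\mu_i$ with $\gamma_i = (C/\epsilon_i)^d$. Choose $t_i := C_1\sqrt{n\cdot(d\,i+\log(L/\delta))}$ with $C_1$ large enough that Azuma combined with Markov's inequality on the $\mu_i$-measure gives, with probability $\ge 1-\delta/(2L)$ over $\rvI$, that the $\mu_i$-mass of ``bad'' dynamic sets $\{\dS:|\Disc(\dS)|>t_i\}$ is strictly less than $1/(2\gamma_i)$. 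Union-bounding over $i$ produces a good event $G$ of probability $\ge 1-\delta/2$. On $G$, for every $E \in \E$ and every $i$, the fractional cover contributes $\mu_i$-mass $\ge 1/\gamma_i > 1/(2\gamma_i)$ of $\epsilon_i$-approximators of $E$, so the pigeonhole principle yields a dynamic set $\dS_E^i$ that both $\epsilon_i$-approximates $E$ on $\vx$ and satisfies $|\Disc(\dS_E^i)|\le t_i$.

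Now chain via the telescope
\[
\Disc(E) \;=\; \Disc(\dS_E^0) \;+\; \sum_{i=0}^{L-1}\bigl(\Disc(\dS_E^{i+1})-\Disc(\dS_E^i)\bigr) \;+\; \bigl(\Disc(E)-\Disc(\dS_E^L)\bigr).
\]
The tail term is bounded in absolute value by $\epsilon_L n \le 1$, and the first term by $t_0 = O(\sqrt{n\log(L/\delta)})$. Each increment is a martingale-difference sum of $\pm 1$ terms supported on $\dS_E^i(\vx)\triangle\dS_E^{i+1}(\vx) \subseteq (E\triangle\dS_E^i(\vx))\cup(E\triangle\dS_E^{i+1}(\vx))$, a set of size at most $3\epsilon_i n/2$. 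Freedman's inequality, applied with this predictable-variance bound, then delivers tails of order $\exp(-t^2/(C\epsilon_i n))$ for each pair of dynamic sets. Repeating the ``Markov-on-measure'' recipe, this time on the product measure $\mu_i\otimes\mu_{i+1}$ with threshold $s_i := C_2\sqrt{\epsilon_i n\,(d\,i+\log(L/\delta))}$, and again union-bounding over $i$, guarantees that simultaneously for every $E$ the consecutive chosen approximators obey $|\Disc(\dS_E^{i+1})-\Disc(\dS_E^i)|\le s_i$. Summing the convergent series gives $\sum_i s_i = O(\sqrt{n(d+\log(1/\delta))})$ (since $\sqrt{i\cdot 2^{-i}}$ is summable in $i$), so $|\Disc(E)|=O(\sqrt{k(d+\log(1/\delta))})$ uniformly in $E \in \E$, which is the claim.

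The main obstacle is exactly the chaining step. Fractional covers supply only a $\mu$-mass of approximators rather than an explicit small collection, and $\mu_i$ may be supported on an enormous or uncountable set, which rules out a naive union bound over dynamic sets (or over pairs of them). The fix — iterating the Azuma/Freedman plus Markov-on-$\mu_i$ (resp.\ $\mu_i\otimes\mu_{i+1}$) recipe, so that the pigeonhole principle simultaneously selects good consecutive-scale approximators for every $E$ — is the key novelty that lets fractional covers substitute for Haussler's classical $\epsilon$-nets. Securing the refined per-scale tail $\exp(-t^2/(\epsilon_i n))$ (rather than $\exp(-t^2/n)$) on the increments is the delicate point, since without this variance improvement the chaining sum blows up by a $\sqrt{\log n}$ factor and one recovers only the previously known $O(\sqrt{kd\log k})$ bound.
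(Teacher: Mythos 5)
Your high-level strategy---Azuma for a single dynamic set, Markov on the $\mu_i$-mass of ``bad'' dynamic sets as a union-bound substitute, and chaining over the fractional covers of Lemma~\ref{lem:fractional-cover-littlestone}---is exactly the paper's (Lemmas~\ref{lem:single-DS}, \ref{lem:aux-union-ds}, \ref{lem:fractional-cover-bound}), but two of the steps would fail as written, and a third choice loses a factor.

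The Markov-on-measure step for increments does not go through in the form you give. You invoke Freedman on $\Disc(\dS')-\Disc(\dS)$ with predictable variance $O(\epsilon_i n)$ ``for each pair of dynamic sets,'' but a generic pair drawn from $\mu_i\otimes\mu_{i+1}$ can have an enormous symmetric difference on $\vx$; the small-variance bound is available only after you know both sets $\epsilon$-approximate the same $E$, and that fact depends on the stream $\vx$ and hence on $\rvI$. The Markov/Fubini argument requires a uniform per-pair tail over the full support of $\mu_i\otimes\mu_{i+1}$, so you cannot restrict to approximators of $E$ at that stage. The paper's fix is to build the size control into the measure itself: it pushes $(\dS_j,\dS_{j+1})$ forward to the truncated difference $(\dS_j\setminus\dS_{j+1})_{\le 2\epsilon_j^2 n}$, giving an unconditional cardinality bound so Lemma~\ref{lem:single-DS} applies to every pair, and only afterward checks that truncation is inactive on the pairs actually chosen for a given $E$. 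Without this device your tail bound on the increment term has no justification.

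The second gap is the construction of a single consistent chain. A separate pigeonhole for each scale $i$ (giving some $\dS$ in $A_i(E)$ with $|\Disc(\dS)|\le t_i$) and a separate pigeonhole for each pair of scales $(i,i+1)$ (giving some pair in $A_i(E)\times A_{i+1}(E)$ with small increment) do not automatically produce one sequence $\dS_E^0,\dS_E^1,\dots$ satisfying all the constraints: the $\dS$ at level $i$ favored by the $(i-1,i)$ constraint need not be the one favored by the $(i,i+1)$ constraint. The paper closes this with an inductive selection: it defines $A'_j\subseteq A_j$ to be the $\epsilon_j$-approximators of $E$ whose set of compatible $\epsilon_{j+1}$-approximators still has $\mu_{j+1}$-mass $\ge 2/(3\cdot 2^{2^{j+1}})$, proves $\mu_j(A'_j)\ge 2/(3\cdot 2^{2^j})$ (Lemma~\ref{lem:bnd-Ajprime}), and then picks $\pi_{E,j+1}\in A'_{j+1}$ compatible with the already chosen $\pi_{E,j}\in A'_j$; the $1/3$ margins are exactly what makes the three constraints (root discrepancy, membership in $A_j$, compatibility with the next level) fit simultaneously. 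Finally, a quantitative point: allocating $\delta/(2L)$ per scale with $L=\Theta(\log n)$ puts a $\sqrt{\log L}=\sqrt{\log\log n}$ inside your scale-$0$ term $t_0$, so as written you only prove $\Disc = O(\sqrt{k(d+\log\log k+\log(1/\delta))})$. The repair is to allocate $\delta_i\propto 2^{-i}\delta$ (so $\log(1/\delta_i)=\log(1/\delta)+O(i)$, absorbed by the $\sqrt{\epsilon_i n\, d\, i}$ term), or, as the paper does, to choose scales $\epsilon_j$ so that $N'(\E,\epsilon_j)\le 2^{2^j}$ and start the chain at $j_0=\lceil\log_2\log_2(1/\delta)\rceil$---which is precisely what the remark following \eqref{eq:dudleys} warns you about.
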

\Cref{lem:bnd-p-half} is proved in \Cref{sec:seq-rad-proof-outline}.
Combining Lemma~\ref{lem:approx-n-to-2k}, Lemma~\ref{lem:approx-comparison} and Lemma~\ref{lem:bnd-p-half}, 
We conclude that the uniform sample as an $\epsilon$-approximation, as summarized below:
\begin{theorem}\label{thm:ind-of-n}
    Let $\E$ denote a family of Littlestone dimension $d$, let $\delta,\epsilon \in (0,1/2)$, $n\in\mathbb{N}$, $\A\in \Adv_n$ and $k \in \mathbb{N}$ such that $n \ge 2k$.
	Then, for any $\delta > 0$,
	\[
	\Pr_{\rvI\sim \Uni(n,k)}\lp[\App_{\A,\rvI}(\E) \ge C \sqrt{\frac{d+\log(1/\delta)}{k}}\rp]
	\le \delta,
	\]
	where $C>0$ is a universal constant.
\end{theorem}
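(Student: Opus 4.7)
The plan is to prove Theorem~\ref{thm:ind-of-n} by composing the three lemmas already proved in this section, chained with carefully tuned error parameters. All of the substantive technical work is encapsulated in \Cref{lem:approx-n-to-2k}, \Cref{lem:approx-comparison}, and \Cref{lem:bnd-p-half}, so what remains is essentially bookkeeping: tracking how the threshold shrinks through each reduction and how the failure probability accumulates.

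Fix a target rate $t = C\sqrt{(d + \log(1/\delta))/k}$ where $C$ is a large universal constant to be pinned down at the end. First, I apply \Cref{lem:approx-n-to-2k} (the double-sampling step) to the event $\App_{\A,\rvI}(\E) > t$ for $\rvI \sim \Uni(n,k)$. This reduces the problem (at the cost of a factor of $2$ in the failure probability, and passing to a worst-case $\A' \in \Adv_{2k}$) to controlling $\Pr_{\rvI' \sim \Uni(2k,k)}[\Disc_{\A',\rvI'}(\E) > tk - \sqrt{C_1 k}]$.

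Second, I apply \Cref{lem:approx-comparison} with parameter $\delta_1 := \delta/4$ to replace the $\Uni(2k,k)$ sampler by $\Ber(2k,1/2)$. This costs an additive $\delta_1$ in the failure probability and reduces the threshold by an additional $\sqrt{C_1 k \log(1/\delta_1)}$. Third, I apply \Cref{lem:bnd-p-half} with parameter $\delta_2 := \delta/4$, which yields $\Pr_{\rvI'' \sim \Ber(2k,1/2)}[\Disc_{\A'',\rvI''}(\E) > C_2 \sqrt{k(d+\log(1/\delta_2))}] \le \delta_2$ for every $\A'' \in \Adv_{2k}$.

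To close the argument, I verify that for $C$ large enough the surviving threshold dominates $C_2 \sqrt{k(d+\log(4/\delta))}$, i.e.\ that
\[
tk - \sqrt{C_1 k} - \sqrt{C_1 k \log(4/\delta)} \;\ge\; C_2 \sqrt{k(d + \log(4/\delta))}.
\]
Since each subtracted term is at most a constant multiple of $\sqrt{k(d+\log(1/\delta))}$ (using $\log(4/\delta) \le \log(1/\delta) + 2$), choosing $C = C(C_1, C_2)$ sufficiently large makes this inequality hold. Summing the three failure contributions gives at most $2(\delta_1 + \delta_2) \le \delta$, completing the proof. There is no real obstacle: the only subtlety is to choose $\delta_1,\delta_2$ of the same order as $\delta$ so that the additional $\log(1/\delta_i)$ terms get absorbed into the single $\log(1/\delta)$ inside $t$.
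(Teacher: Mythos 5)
Your proposal is correct and takes exactly the same route the paper does: the paper's proof of \Cref{thm:ind-of-n} is precisely the composition of \Cref{lem:approx-n-to-2k}, \Cref{lem:approx-comparison}, and \Cref{lem:bnd-p-half}, and your bookkeeping (splitting $\delta$ into $\delta/4$ shares, absorbing $\log(4/\delta)$ into $O(d+\log(1/\delta))$, and choosing $C$ large enough to dominate the three threshold-losses) fills in the arithmetic the paper leaves implicit.
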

\section{Epsilon Nets}\label{sec:eps-nets}

We prove that the three sampling schemes discussed in this paper sample $\epsilon$-nets with high probability, as stated below:
\begin{theorem}\label{thm:eps-nets}
	Let $\E$ denote a family of Littlestone dimension $d$, let $
	\epsilon \in (0,1/2)$, $n\in\mathbb{N}$, $\A\in \Adv_n$ and $p \in [0,1]$.
	Define $k = \lfloor np \rfloor$.
	If $n \ge 3k$ then
	\[
	\Pr_{\rvI}[\Net^n_{\A,\rvI,\epsilon n, 0}(\E) =1]
	\le \lp(\frac{Ck}{d}\rp)^d \exp(-c\epsilon k)
	\]
	where $\rvI$ is drawn either $\Uni(n,k)$, $\Ber(n,p)$ or $\Res(n,k)$ and $C>0$ is a universal constant.
\end{theorem}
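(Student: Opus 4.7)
The plan is to follow the template of the $\epsilon$-approximation proof, with the discrepancy analysis replaced by a one-sided net analogue. By the reduction framework of \Cref{sec:reduction-sampling} it suffices to establish the bound for a single sampler; I will work with $\Ber(2k,1/2)$ inside a double-sampling argument and then transfer to $\Uni(n,k)$, $\Ber(n,p)$, and $\Res(n,k)$ through that framework. First, introduce a ghost sample $\rv{J}$ of size $k$ disjoint from $\rvI$ and invisible to the adversary. On the bad event $\Net^n_{\A,\rvI,\epsilon n,0}=1$ some $E\in\E$ has $|\vx\cap E|\ge \epsilon n$ and $\vx_{\rvI}\cap E=\emptyset$; conditioning on the realized stream, \Cref{lem:subset-intersection} gives $|\vx_{\rv{J}}\cap E|\ge \epsilon k/2$ with probability at least $1/2$, provided $\epsilon k$ exceeds an absolute constant (otherwise the claimed bound is vacuous). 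Symmetrizing $\rvI$ and $\rv{J}$ (condition on $\rv{K}=\rvI\cup\rv{J}$ and reveal $\rv{K}$ to the adversary, which only helps her) reduces the surrogate event to an analogous game on a stream of length $2k$ with $\rvI\sim\Uni(2k,k)$, and a further reduction in the style of \Cref{lem:approx-comparison} converts to $\Ber(2k,1/2)$.

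On this reduced Bernoulli game I invoke the dynamic Sauer-Shelah-Perles lemma of~\cite{BenDavidPS09}: there is a family of at most $\binom{2k}{\le d}\le (C'k/d)^d$ dynamic sets $\dS$ such that for every $E\in\E$ and every realized stream $\vx$ of length $2k$, some $\dS$ in the family satisfies $\dS(\vx)=E\cap\vx$. A union bound reduces the problem to showing that, for every fixed dynamic set $\dS$,
\[
\Pr_{\rvI\sim\Ber(2k,1/2)}\bigl[\,\vx_{\rvI}\cap\dS(\vx)=\emptyset\ \text{ and }\ |\dS(\vx)\cap\vx|\ge c\epsilon k\,\bigr]\ \le\ 2^{-c\epsilon k}.
\]
Combining the three pieces (with the factor $2$ from double sampling) yields $(Ck/d)^d e^{-c\epsilon k}$ as claimed, and the sampling reduction promotes the bound to the three samplers in the theorem.

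The per-dynamic-set estimate above is the main technical ingredient, and I plan to establish it via a martingale. Set $b_t:=\mathbf{1}[t\in\rvI]$ and $R_t:=\mathbf{1}[x_t\in\dS(\vx)]$ (determined by $b_1,\dots,b_{t-1}$ since the adversary and $\dS$ are online), let $T_t:=\sum_{s\le t}R_s$, and define
\[
M_t\ :=\ 2^{T_t}\prod_{s\le t}(1-R_s b_s).
\]
A direct calculation shows $M_t$ is a nonnegative martingale with $M_0=1$: conditional on the history, if $R_t=0$ the increment factor is $1$, and if $R_t=1$ the factor is $2(1-b_t)$, which has mean $1$. On the bad event every factor $(1-R_s b_s)$ equals $1$ while $T_{2k}\ge c\epsilon k$, so $M_{2k}\ge 2^{c\epsilon k}$, and Markov's inequality gives the claim. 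The main obstacle I anticipate is verifying that the double-sampling reduction passes cleanly to the one-sided net event in the adaptive setting, namely that introducing a ghost subset disjoint from the realized $\rvI$ does not perturb the joint law of the stream and $\rvI$; once that is settled, the martingale cleanly handles the adaptive dependence of $R_t$ on past sampling decisions, and the cover from the dynamic SSP lemma takes care of the (possibly infinite) family $\E$.
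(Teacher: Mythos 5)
Your multiplicative martingale $M_t = 2^{T_t}\prod_{s\le t}(1 - R_s b_s)$ is a genuinely different tool from what the paper uses, and the per-dynamic-set calculation is correct as far as it goes: the paper's Lemma~\ref{lem:concentration-nets} runs through Freedman's inequality for an additive martingale, while you exponentiate the emptiness constraint directly and apply Markov. The dynamic SSP cover and the factor-of-$2$ from double sampling are also placed correctly.

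The gap is in the reduction from $\Uni(2k,k)$ to Bernoulli. A coupling ``in the style of Lemma~\ref{lem:approx-comparison}'' bounds $|\rvI\triangle\rvI'|$; that symmetric control is what the two-sided $\epsilon$-approximation functional needs (it is Lipschitz in the symmetric difference), but the $\epsilon$-net event is one-sided. Knowing that $\rvI$ misses $E$ and $|\rvI\triangle\rvI'|$ is small does \emph{not} imply $\rvI'$ misses $E$, because $\rvI'$ can contain elements of $E$ that $\rvI$ skipped. What the reduction actually requires is a one-sided almost-containment $\rvI'\subseteq\rvI\cup(\text{small tail})$, which is what the paper arranges in Lemma~\ref{lem:epsnet-reduction} via Lemma~\ref{lem:uni-ber-eighth}. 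That forces the Bernoulli rate strictly below $1/2$ (the paper takes $p=1/8$): with $p=1/2$ the conditional rate of $\Uni(2k,k)$ dips below $1/2$ about half the time, so the monotone coupling of Lemma~\ref{lem:monotone-coupling} cannot keep $\rvI'$ inside $\rvI$ with any useful probability. A related consequence is that even with $p<1/2$ the coupling yields only approximate containment, so the reduced Bernoulli event is ``$|\vx_{\rvI'}\cap E|\le\ell$'' for some $\ell=\Theta(\epsilon k)$ rather than exact emptiness. Your $M_{2k}$ is identically zero once any factor $(1-R_sb_s)$ vanishes, so Markov says nothing on that relaxed event; you would need to soften the factors to $(1-aR_sb_s)$ for a parameter $a<1$ tuned against $\ell$, or fall back to a Freedman-type bound as the paper does. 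Both repairs are feasible and would preserve the shape of your argument, but they are genuine modifications rather than bookkeeping.
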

In this section, we prove \Cref{thm:eps-nets} for the uniform sampler.
Reductions to the other sampling schemes are given in Section~\ref{sec:reduction-sampling}.

Below the main lemmas are presented, starting with a reduction from a stream of size $n$ to $2k$, via the technique of double sampling, presented in Section~\ref{sec:dbl-sampling}:
\begin{lemma}\label{lem:dbl-nets}
	Let $\E$ be some family and $n,k\in \mathbb{N}$ be integers such that $n\ge 2k$ and $k \ge C/\epsilon$. 
	Then,
	\[
	\max_{\A\in\Adv_n}\Pr_{\rvI\sim \Uni(n,k)}\lp[\Net^n_{\A,\rvI,\epsilon\cdot n,0}(\E)=1\rp] 
	\le 2\max_{\A'\in\Adv_{2k}}\Pr_{\rvI'\sim \Uni(2k,k)}\lp[\Net^{2k}_{\A',\rvI',\epsilon/4\cdot 2k,0}(\E)=1\rp].
	\]
\end{lemma}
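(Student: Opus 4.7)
My plan is to replicate the online double-sampling argument used in \Cref{lem:approx-n-to-2k}, adapted to $\epsilon$-nets. Introduce a hidden ``ghost'' sample: conditional on $\rvI\sim\Uni(n,k)$, let $\rv{J}$ be drawn uniformly from the $k$-subsets of $[n]\setminus \rvI$, so that $\rv{K}:=\rvI\cup \rv{J}$ is a uniform $2k$-subset of $[n]$ and, conditionally on $\rv{K}$, $\rvI$ is uniform over the $k$-subsets of $\rv{K}$. Let $B$ denote the bad event $\Net^n_{\A,\rvI,\epsilon n,0}(\E)=1$; under $B$ deterministically fix a witness $E=E(\A,\rvI)\in\E$ with $|\vx\cap E|\ge \epsilon n$ and $|\vx_\rvI\cap E|=0$.

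\textbf{Step 1 (the ghost catches the witness).} Conditional on $(\A,\rvI)$ with $B$ occurring, $E$ is fixed and all of its $\ge \epsilon n$ stream-indices lie in $[n]\setminus \rvI$. Applying \Cref{lem:subset-intersection}(\ref{itm:without-rep-small-set}) with $\alpha=1/2$ to $\rv{J}$ (a uniform $k$-subset of $[n]\setminus\rvI$, which has size $n-k$) yields
\[
\Pr\lp[\,|\vx_{\rv{J}}\cap E|< \epsilon k/2 \,\Big|\, \A,\rvI,B\rp] \le 2\exp(-c\epsilon k)\le 1/2,
\]
where the second inequality uses the hypothesis $k\ge C/\epsilon$ for a sufficiently large absolute constant $C$. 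Letting $G$ denote the event $|\vx_{\rv{J}}\cap E|\ge \epsilon k/2$, this gives $\Pr[B]\le 2\Pr[B\cap G]$.

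\textbf{Step 2 (reduction to $\Adv_{2k}$).} For each realization $\rv{K}=\{t_1<\cdots<t_{2k}\}$, I would build $\A'_{\rv{K}}\in\Adv_{2k}$ that internally simulates $\A$: at positions $t\notin \rv{K}$, $\A'_{\rv{K}}$ feeds $\A$ the deterministic verdict ``not sampled'' (no external input is needed, since these positions are disjoint from $\rvI$); at the $s$-th position $t_s\in \rv{K}$, $\A'_{\rv{K}}$ outputs the simulated $x_{t_s}$ as its $s$-th item and relays the $\Uni(2k,k)$-sampler's bit $\rvI'_s$ back to $\A$ as $\A$'s feedback at position $t_s$. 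This is a valid $\Adv_{2k}$ adversary, and the resulting joint law of $(\vx_{\rv{K}},\,\rvI\cap \rv{K})$ given $\rv{K}$ coincides with that of (stream, sample) in the $2k$-round game played by $\A'_{\rv{K}}$ against $\Uni(2k,k)$. On $B\cap G$ the witness $E$ satisfies $|\vx_{\rv{K}}\cap E|=|\vx_{\rv{J}}\cap E|\ge \epsilon k/2 = (\epsilon/4)\cdot 2k$ (since $|\vx_\rvI\cap E|=0$) and $|\vx_\rvI\cap E|=0$, so in the translated game $\Net^{2k}_{\A'_{\rv{K}},\rvI',(\epsilon/4)\cdot 2k,0}(\E)=1$. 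Averaging over $\rv{K}$ then gives
\[
\Pr[B\cap G]\le \max_{\A'\in\Adv_{2k}}\Pr_{\rvI'\sim\Uni(2k,k)}\lp[\Net^{2k}_{\A',\rvI',(\epsilon/4)\cdot 2k,0}(\E)=1\rp],
\]
and combining with Step~1 completes the proof.

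\textbf{Expected difficulty.} The concentration bound in Step~1 and the translation of the net condition are routine, and the factor of $2$ on the right-hand side is exactly the loss incurred by the ghost step. The only delicate point is the book-keeping of the adversary simulation in Step~2: one must check carefully that $\A'_{\rv{K}}$ uses only the information available to it at each round (namely $\rvI'_{<s}$ together with the simulated history of $\A$ up to position $t_s$) and that the induced distribution precisely matches an honest $\Adv_{2k}$ game against $\Uni(2k,k)$. This hinges on $\A$ being deterministic and on the fact that, in the original game, every position outside $\rv{K}$ deterministically receives the verdict ``not sampled.''
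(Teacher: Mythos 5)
Your proposal is correct and takes essentially the same route as the paper: Step~1 is exactly the paper's Lemma~\ref{lem:dbl-net-aux} (ghost sample plus Lemma~\ref{lem:subset-intersection}, item~2, with $\alpha=1/2$), and Step~2 is the adversary-simulation argument of Lemma~\ref{lem:dbl-aux2}. The only difference is presentational: the paper routes both steps through the abstract double-sampling Lemma~\ref{lem:dbl-sampling} (instantiated with the indicator functions $f$, $f'$ for the $\epsilon$-net events), whereas you inline the ghost-sample conditioning and the construction of $\A'_{\rv{K}}$ directly, which is fine.
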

The proof appears in Section~\ref{sec:pr-dbl-net}. 
It would be desirable to replace the uniform sample $\Uni(2k,k)$ with a Bernoulli sample, since it selects each coordinate independently. In particular, we will show that the probability of $\Uni(2k,k)$ to fail to be an $\epsilon$-net is bounded in terms of the probability of $\Ber(2k,1/8)$. Intuitively, this follows from the fact that a sample drawn $\Uni(2k,k)$ nearly contains a sample $\Ber(2k,1/8)$ in some sense. The formal statement is below:
\begin{lemma}\label{lem:epsnet-reduction}
	Let $\E$ denote some family, $\epsilon \in (0,1)$, and $k \in \mathbb{N}$. Then,
	\begin{align*}
	&\max_{\A\in\Adv_{2k}}\Pr_{\rvI \sim \Uni(2k,k)}[\Net^{2k}_{\A,\rvI,\epsilon\cdot 2k,0}(\E) = 1] \\
	&\quad\le \max_{\A'\in\Adv_{2k}}\Pr_{\rvI'\sim \Ber(2k,1/8)}[\Net^{2k}_{\A',\rvI',\epsilon\cdot 2k, \epsilon/16 \cdot2k}(\E) = 1] + 2\exp(-c\epsilon k).
	\end{align*}
\end{lemma}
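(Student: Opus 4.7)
Applying the sampler-to-sampler reduction framework of \Cref{sec:reduction-sampling}, the plan is to convert any adversary $\A$ against the uniform sampler into an adversary $\A'$ against the Bernoulli sampler by simulating $\A$ against a coupled virtual uniform sample $\rvI$. The intuition is that a $\Ber(2k,1/8)$ sample has expected size $k/4$, much smaller than $k$, so it can typically be embedded inside a uniform $k$-subset of $[2k]$. If the virtual sample $\rvI$ fails as an $\epsilon$-net, witnessed by some $E$ with $|\vx \cap E| \ge \epsilon\cdot 2k$ and $|\vx_\rvI \cap E| = 0$, then $|\vx_{\rvI'} \cap E| \le |\rvI' \setminus \rvI|$, so any coupling ensuring $|\rvI' \setminus \rvI| \le \epsilon/16\cdot 2k$ with probability at least $1 - 2\exp(-c\epsilon k)$ transfers the failure of $\A$ into the weaker failure of $\A'$.

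\textbf{Online coupling.} The coupling I would use proceeds round by round. At round $t$, after observing the Bernoulli decision $b'_t := \mathds{1}[t \in \rvI']$, the adversary $\A'$ produces the simulated uniform decision $b_t := \mathds{1}[t \in \rvI]$, whose conditional target probability given the past is $p_t := (k - |\rvI \cap [t-1]|)/(2k - t + 1)$. When $p_t \ge 1/8$ I use the monotone coupling $b'_t = 1 \Rightarrow b_t = 1$; when $p_t < 1/8$ I use the optimal coupling, which has conditional disagreement probability exactly $(1/8 - p_t)_+$. This ensures $\rvI \sim \Uni(2k,k)$ marginally, so the joint distribution of $(\vx, \rvI)$ in $\A'$'s simulation matches that of $(\vx, \rvI)$ in $\A$'s real interaction with $\Uni(2k,k)$, and the stream $\vx$ is the same one the real Bernoulli sampler receives.

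\textbf{Tail bound via Freedman's inequality.} Setting $F_t := \mathds{1}[b'_t = 1,\, b_t = 0]$, one has $|\rvI' \setminus \rvI| = \sum_t F_t$ and $\Ex[F_t \mid \mathrm{past}] = (1/8 - p_t)_+$. Disagreements arise only when the hypergeometric count $|\rvI \cap [t-1]|$ (mean $(t-1)/2$) exceeds $k - (2k-t+1)/8$, i.e., deviates from its mean by at least $3(2k-t+1)/8$. The hypergeometric tail bound of \Cref{lem:subset-intersection} gives $\Pr[p_t < 1/8] \le \exp(-c(2k-t+1))$, and summing over $t$ yields $\Ex[|\rvI' \setminus \rvI|] = O(1)$. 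A martingale Bernstein (Freedman) inequality applied to $\sum_t (F_t - \Ex[F_t \mid \mathrm{past}])$ then upgrades this to $\Pr[|\rvI' \setminus \rvI| > \epsilon k/8] \le 2\exp(-c\epsilon k)$, completing the proof. The hard part will be controlling the \emph{random} predictable quadratic variation $\sum_t (1/8 - p_t)_+$: although its mean is $O(1)$, exhibiting exponential concentration near zero requires a careful analysis of the hypergeometric trajectory, particularly at the terminal rounds when the uniform sampler's budget is nearly depleted and $p_t$ unavoidably dips below $1/8$.
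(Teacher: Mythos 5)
Your coupling is the same as the paper's (the monotone online coupling of \Cref{lem:monotone-coupling}), and your identification of the hypergeometric control of $p_t$ as the crux is on point, but your final step via Freedman's inequality is both unnecessarily complicated and, as you note, not fully carried out. The ``hard part'' you flag---showing the predictable quadratic variation $\sum_t (1/8 - p_t)_+$ concentrates near zero---is exactly where the argument needs to be completed, and the way one completes it also makes the Freedman step superfluous.

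Here is the structural fact you are missing. Under the monotone coupling, a disagreement $t \in \rvI' \setminus \rvI$ can occur \emph{only} at rounds where $p_t < 1/8$. So if you show (as both you and the paper would, via the hypergeometric tail of \Cref{lem:subset-intersection} together with a dyadic union bound over scales $m_i = m\cdot 2^i$) that with probability $1 - 2\exp(-cm)$ the inequality $p_t \ge 1/8$ holds simultaneously for all $t \le 2k - m$, then on that event $\rvI' \setminus \rvI \subseteq \{2k-m+1,\dots,2k\}$, and therefore $|\rvI'\setminus\rvI| \le m$ \emph{deterministically}. No martingale concentration is needed: the window containing all possible disagreements has size $m$, so the number of disagreements is at most $m$. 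This is precisely \Cref{lem:uni-ber-eighth} in the paper; plugging $m = \lfloor \epsilon/16\cdot 2k\rfloor$ into the abstract reduction lemma \Cref{lem:coupling-reduction} with the appropriate indicator functions $f,g$ finishes the proof. Your detour through $\Ex[|\rvI'\setminus\rvI|] = O(1)$ and Freedman is a longer path to the same destination: you would still need the uniform-in-$t$ bound $p_t \ge 1/8$ for $t \le 2k-m$ to control the quadratic variation on a probability-$1-\exp(-c\epsilon k)$ event, and once you have that, you can skip Freedman entirely.

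One further small point: your statement ``$\Pr[p_t < 1/8] \le \exp(-c(2k-t+1))$, and summing over $t$ yields $\Ex[|\rvI'\setminus\rvI|] = O(1)$'' is correct for the expectation, but by itself a pointwise-in-$t$ tail plus Markov would not give the exponential bound $\exp(-c\epsilon k)$. What is actually needed is the \emph{simultaneous} event $\{\forall t \le 2k-m:\ p_t \ge 1/8\}$ with failure probability $\exp(-cm)$, and a naive union bound over the $\approx 2k$ time steps costs a superfluous factor of $k$. The paper avoids this by summing over dyadic windows $\{2k - m_i + 1,\dots,2k\}$ rather than over individual $t$; you should do the same.
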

The framework to reduce between sampling schemes is presented in Section~\ref{sec:reduction-sampling} and the proof of Lemma~\ref{lem:epsnet-reduction} is presented in Section~\ref{sec:pr-reduction-epsnets}.
Lastly, we bound the error probability corresponding to Bernoulli sampling, for classes of bounded Littelstone dimension, using the technique of covering numbers presented in Section~\ref{sec:dynamic-sets}.
\begin{lemma}\label{lem:nets-final}
	Let $\E$ denote a family of Littlestone dimension $d$, let $m,k \in \mathbb{N}$ such that $m\le 2k$, and let $p \in (0,1)$. Then,
	\[
	\Pr_{\rvI\in \Ber(2k,p)}[\Net^{2k}_{\A,\rvI,m,mp/2}(\E) = 1] \le \lp(\frac{Ck}{d}\rp)^d \exp(-cmp).
	\]
\end{lemma}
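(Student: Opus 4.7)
The plan is to combine the adaptive Sauer--Shelah--Perles lemma for Littlestone classes (the cover-by-dynamic-sets result discussed in Section~\ref{sec:dynamic-sets}) with a per-dynamic-set concentration bound that exploits the online structure of Bernoulli sampling. By the adaptive SSP lemma, there is a family $\mathcal{C}$ of at most $\binom{2k}{\le d} \le (Ck/d)^d$ dynamic sets such that for every adversarially produced $\vx$ of length $2k$ and every $E \in \E$ there exists $\dS \in \mathcal{C}$ with $\dS(\vx) = E \cap \vx$. Consequently, the event $\{\Net^{2k}_{\A,\rvI,m,mp/2}(\E) = 1\}$ is contained in the union over $\dS \in \mathcal{C}$ of the bad events
\[
B_\dS := \bigl\{\, |\dS(\vx)| \ge m \ \text{ and }\ |\dS(\vx)\cap \rvI| \le mp/2\, \bigr\},
\]
so it suffices to prove $\Pr[B_\dS] \le \exp(-cmp)$ for each fixed $\dS$ and then take a union bound over $\mathcal{C}$.

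The key technical observation for the per-$\dS$ bound is the order of moves within each round: the adversary commits to $x_t$ based on $(x_1,\ldots,x_{t-1},\rvI_1,\ldots,\rvI_{t-1})$ \emph{before} $\rvI_t$ is drawn, and $\dS$'s retention decision at time $t$ is a function of $x_1,\ldots,x_t$. Hence, with $\mathcal{F}_t := \sigma(\rvI_1,\ldots,\rvI_t)$, the indicator $R_t := \mathbf{1}[t \in \dS(\vx)]$ is $\mathcal{F}_{t-1}$-measurable while $\rvI_t \sim \Ber(p)$ is independent of $\mathcal{F}_{t-1}$. Letting $\sigma_1 < \sigma_2 < \cdots$ denote the successive times at which $R_t = 1$, the samples $\rvI_{\sigma_1}, \rvI_{\sigma_2}, \ldots$ are i.i.d.\ $\Ber(p)$ by optional sampling, and on the event $|\dS(\vx)| \ge m$ one has $\sigma_m \le 2k$ and
\[
|\dS(\vx)\cap \rvI| \ \ge\ \sum_{i=1}^m \rvI_{\sigma_i} \ \sim\ \Bin(m,p).
\]
A multiplicative Chernoff bound then yields $\Pr[B_\dS] \le \Pr[\Bin(m,p) \le mp/2] \le \exp(-mp/8)$, and a union bound over $\mathcal{C}$ closes the argument.

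The step I expect to require the most care is the optional-sampling/independence claim, since $\sigma_i$ depends intricately on the interaction between the adversary, the sample, and $\dS$. To avoid subtleties about whether $\sum_{i=1}^m \rvI_{\sigma_i}$ is honestly $\Bin(m,p)$-distributed on $\{\sigma_m \le 2k\}$, the cleanest implementation is probably to work directly with the martingale differences $\rvI_t R_t - p\,R_t$ relative to $(\mathcal{F}_t)$ and invoke Freedman's inequality, or, equivalently, to pad the stream with fresh independent Bernoullis past time $2k$ so that the sum over the first $m$ retention times becomes unconditionally binomial. Either route reduces the per-dynamic-set claim to a standard Chernoff-type tail bound, and the resulting $\exp(-cmp)$ factor combines with the $(Ck/d)^d$ from the cover size to yield the stated bound.
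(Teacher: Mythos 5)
Your proposal is correct and follows essentially the same route as the paper: the paper proves a general Lemma~\ref{lem:concentration-nets} bounding the failure probability by $N(\E,0)\exp(-cmp)$ via a union bound over a $0$-cover of dynamic sets, with the per-$\dS$ concentration (Lemma~\ref{lem:net-concentration-aux}) established exactly via the martingale $\rvI_t R_t - pR_t$ and Freedman's inequality (after truncating each $\dS$ to $\dS_{\le m}$, matching your restriction to the first $m$ retention times), and then Lemma~\ref{lem:nets-final} follows by plugging in $N(\E,0)\le\binom{2k}{\le d}$ from Lemma~\ref{prop:cover}. You were right to flag the naive ``optional sampling gives i.i.d.\ $\Ber(p)$ at retention times'' step as delicate and to offer Freedman's inequality as the clean alternative --- that is precisely the route the paper takes.
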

The three lemmas stated above imply the following theorem, that is a special case of \Cref{thm:eps-nets} for the uniform sampler: 
\begin{theorem}\label{thm:nets-reformulation}
	Let $\E$ denote a family of Littlestone dimension $d$, let $
	\epsilon \in (0,1/2)$, $n\in\mathbb{N}$, $\A\in \Adv_n$ and $k \in \mathbb{N}$ that satisfies $n\ge 2k$.
	Then
	\[
	\Pr_{\rvI\sim \Uni(n,k)}[\Net^n_{\A,\rvI,\epsilon n, 0}(\E) =1]
	\le \lp(\frac{Ck}{d}\rp)^d \exp(-c\epsilon k),
	\]
	where $C>0$ is a universal constant.
\end{theorem}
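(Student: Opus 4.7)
The strategy is a direct chain: apply the three lemmas (\Cref{lem:dbl-nets}, \Cref{lem:epsnet-reduction}, \Cref{lem:nets-final}) in sequence, being careful to track how the net parameters and probabilities compose, and finally absorb any constants and edge cases.

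First, I would apply the double-sampling reduction (\Cref{lem:dbl-nets}) to replace the stream of length $n$ with one of length $2k$. This transforms the question of bounding
\[
\Pr_{\rvI\sim \Uni(n,k)}[\Net^n_{\A,\rvI,\epsilon n,0}(\E)=1]
\]
into twice the corresponding probability for $\Uni(2k,k)$ with the weakened net threshold $\om = (\epsilon/4)\cdot 2k = \epsilon k/2$ and $\um = 0$. Note that \Cref{lem:dbl-nets} is only stated for $k\ge C/\epsilon$; when $k < C/\epsilon$, the quantity $\exp(-c\epsilon k)$ is bounded below by a constant, so by adjusting $C,c$ the claimed bound becomes trivial, and we handle this edge case separately.

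Second, I would apply \Cref{lem:epsnet-reduction} with parameter $\epsilon' = \epsilon/4$ to replace $\Uni(2k,k)$ by the independent sampler $\Ber(2k,1/8)$. This introduces an additive slack of $2\exp(-c(\epsilon/4)k)$ and relaxes the net condition, asking only that some $E\in\E$ have $|\vx\cap E|\ge \epsilon/4 \cdot 2k$ while $|\vx_{\rvI}\cap E|\le (\epsilon/64)\cdot 2k$. The additive slack is of the right form to be absorbed into the final bound.

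Third, I would invoke \Cref{lem:nets-final} with $m = (\epsilon/4)\cdot 2k = \epsilon k/2$ and $p = 1/8$, so that $mp/2 = \epsilon k/32 = (\epsilon/64)\cdot 2k$, which matches precisely the undersampling threshold produced by \Cref{lem:epsnet-reduction}. This step also requires $m\le 2k$, which holds since $\epsilon < 1/2$. The lemma yields a bound of the form $(Ck/d)^d \exp(-cmp) = (Ck/d)^d\exp(-c\epsilon k/16)$.

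Finally, I would combine the three estimates: a factor of $2$ from \Cref{lem:dbl-nets}, an additive $2\exp(-c\epsilon k/4)$ from \Cref{lem:epsnet-reduction}, and the main term $(Ck/d)^d\exp(-c\epsilon k/16)$ from \Cref{lem:nets-final}. Both surviving terms have the same functional form (up to constants in the exponent), so after rescaling $c$ and $C$ they collapse to $(Ck/d)^d\exp(-c\epsilon k)$ as required. The main obstacle is mostly bookkeeping: ensuring the $\um$-parameter emerging from the Bernoulli comparison in \Cref{lem:epsnet-reduction} exactly matches what \Cref{lem:nets-final} needs, and verifying that the regime restrictions ($k\ge C/\epsilon$, $m\le 2k$, $\epsilon<1/2$) either hold automatically or render the bound vacuous. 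No genuinely new idea is required beyond composing the three lemmas cleanly.
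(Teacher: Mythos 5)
Your proposal is correct and follows exactly the same three-step chain as the paper's proof: Lemma~\ref{lem:dbl-nets}, then Lemma~\ref{lem:epsnet-reduction} applied with $\epsilon/4$, then Lemma~\ref{lem:nets-final} with $m=(\epsilon/4)\cdot 2k$ and $p=1/8$, and the parameter bookkeeping you carry out (matching $\um = (\epsilon/64)\cdot 2k$ with $mp/2$) agrees with the paper. You are in fact slightly more careful than the paper in flagging the $k\ge C/\epsilon$ hypothesis of Lemma~\ref{lem:dbl-nets} and disposing of the complementary regime by absorbing constants, which the paper's proof leaves implicit.
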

\begin{proof}
	By Lemma~\ref{lem:dbl-nets}, 
	\[
	\max_{\A\in\Adv_n}\Pr_{\rvI\sim \Uni(n,k)}\lp[\Net^n_{\A,\rvI,\epsilon\cdot n,0}(\E)=1\rp] 
	\le 2\max_{\A'\in\Adv_{2k}}\Pr_{\rvI'\sim \Uni(2k,k)}\lp[\Net^{2k}_{\A',\rvI',\epsilon/4\cdot 2k,0}(\E)=1\rp].
	\]
	Applying Lemma~\ref{lem:epsnet-reduction} while substituting $\epsilon$ with $\epsilon/4$,
	\begin{align*}
	\max_{\A'\in\Adv_{2k}}\Pr_{\rvI'\sim \Uni(2k,k)}\lp[\Net^{2k}_{\A',\rvI',\epsilon/4\cdot 2k,0}(\E)=1\rp]
	&\le \max_{\A'\in\Adv_{2k}}\Pr_{\rvI'\sim \Ber(2k,1/8)}[\Net^{2k}_{\A',\rvI',\epsilon/4\cdot 2k, \frac{2k\epsilon}{64}} = 1] \\
	&+ 2\exp(-c\epsilon k).
	\end{align*}
	Applying Lemma~\ref{lem:nets-final} with $p=1/8$ and $m=\epsilon/4\cdot 2k$,
	\[
	\max_{\A'\in\Adv_{2k}}\Pr_{\rvI'\sim \Ber(2k,1/8)}[\Net^{2k}_{\A',\rvI',\epsilon/4\cdot 2k, \epsilon/64 \cdot2k} = 1]
	\le \lp(\frac{Ck}{d}\rp)^d \exp(-c\epsilon k).
	\]
\end{proof}

\section{Double Sampling}\label{sec:dbl-sampling}

Let $n \in \mathbb{N}$ denote the stream length and assume that the sample is of size $k \le n/2$. 
If $n\gg k$, it may be difficult to analyze the sample directly, since each element is selected with small probability and the universe is very large. This section presents a framework to replace the stream of size $n$ with a stream of size $2k$. Then, this framework is used to prove Lemma~\ref{lem:approx-n-to-2k} and Lemma~\ref{lem:dbl-nets} in Section~\ref{sec:pr-approx-dbl} and Section~\ref{sec:pr-dbl-net}, respectively.

Let $\vx \in X^n$ denote the stream and $I \subseteq [n]$ be the index set of the sample, that has cardinality $|I|=k$.
Let $f \colon X^k\times X^n \to \{0,1\}$ denote some function and we view $f(\vx_I,\vx)$ as some indicator of whether $\vx_I$ fails to approximate the complete sample $\vx$. For example, $f(\vx_I,\vx)$ could indicate whether $\vx_I$ fails to be an $\eps$-approximation for $\vx$ with respect to some family $\E$.
Denote by $\rvvx = \vx(\A,I)$ the stream generated by the adversary $\A\in \Adv_n$ when the sample is indexed by $I$ and we would like to bound $\Pr_{\rvI \sim \Uni(n,k)}[f(\rvvx_I,\rvvx)=1]$. We would like to bound it by a different term that corresponds to only $2k$ elements. For this purpose, let $f'\colon X^k\times X^k\to\{0,1\}$ be another function, where $f(\vx_I,\vx_J)$ is an indicator of whether $\vx_I$ fails to approximate $\vx_J$. For example, $f'(\vx_I,\vx_J)$ can indicate whether $\vx_I$ fails to be an $\epsilon$-approximation for $\vx_J$.

Let $\A' \in \Adv_{2k}$, $\rvI' \sim \Uni(2k,k)$ and $\rvvx' =\vx(\A',\rvI')$ denote the stream of size $2k$ generated by $\A'$ with sample-index $\rvI'$. The following lemma gives a condition under which the probability that $f(\rvvx_{\rvI},\rvvx) = 1$ can be bounded in terms of the probability that $f(\rvvx'_{\rvI'},\rvvx'_{[n]\setminus\rvI'})=1$.

\begin{lemma}\label{lem:dbl-sampling}
	Let $\rvI\sim \Uni(n,k)$ and let $\rv{J}$ be distributed uniformly over all subsets of $[n]\setminus\rvI$ of size $k$, conditioned on $\rvI$. Let $f \colon X^k \times X^n \to \{0,1\}$. Let $\rvI' \sim \Uni(2k,k)$ and $f'\colon X^k \times X^k \to \{0,1\}$.
	
	Assume that for every $\vx$ and $I$ that satisfy $f(\vx_I,\vx)=1$, it further holds that
	\begin{equation}\label{eq:dblsamp-condition}
	\Pr_{\rv{J}}\lp[ f'(\vx_I,\vx_{\rv{J}}) = 1 \mid \rvI=I \rp]
	\ge 1/2.
	\end{equation}
	Then,
	\[
	\max_{\A\in\Adv_n}\Pr_{\rvI}[f(\vx(\A,\rvI)_{\rvI},\vx(\A,\rvI))=1]
	\le 2\max_{\A'\in\Adv_{2k}} \Pr_{\rvI'}[f'(\vx(\A',\rvI')_{\rvI'},\vx(\A',\rvI')_{[2k]\setminus \rvI'})=1].
	\]
\end{lemma}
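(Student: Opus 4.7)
The plan is to execute the classical double-sampling argument, adapted so that it respects the adaptive structure of the adversarial game. First, I couple $\rvI$ and $\rv{J}$ through a single uniform $2k$-subset $\rv{K} \subseteq [n]$: draw $\rv{K}$ uniformly, then let $\rvI$ be a uniform $k$-subset of $\rv{K}$ and set $\rv{J} = \rv{K} \setminus \rvI$. This reproduces the prescribed joint distribution, and since $\rv{J}$ is generated from fresh randomness after $\vx = \vx(\A,\rvI)$ has been produced, $\rv{J}$ is independent of $\vx$ conditional on $\rvI$. Applying the hypothesis \eqref{eq:dblsamp-condition} pointwise in $(\vx,\rvI)$ and integrating yields, for every $\A \in \Adv_n$,
\begin{equation*}
\Pr[f(\vx_{\rvI},\vx) = 1] \;\le\; 2\,\Pr[f(\vx_{\rvI},\vx)=1 \,\wedge\, f'(\vx_{\rvI},\vx_{\rv{J}})=1] \;\le\; 2\,\Pr[f'(\vx_{\rvI},\vx_{\rv{J}})=1].
\end{equation*}

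Next, I turn the right-hand side into a game on $2k$ rounds by constructing, for each fixed $2k$-subset $K = \{k_1 < \cdots < k_{2k}\} \subseteq [n]$, a $2k$-round adversary $\A'_K \in \Adv_{2k}$. On input sample index $\rvI' \subseteq [2k]$, the adversary $\A'_K$ simulates $\A$ internally on a length-$n$ stream: at each $i \in [n]$ it computes $\A$'s next element $x_i$; if $i = k_t$ for some $t$, $\A'_K$ emits $x'_t = x_i$ externally and then feeds $\A$ the bit $\mathds{1}[t \in \rvI']$ as $\mathds{1}[i \in \rvI]$; if $i \notin K$, $\A'_K$ emits nothing externally and feeds $\A$ the bit $0$. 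This is a legal online strategy because $\rvI \subseteq K$ forces the true feedback at positions outside $K$ to be $0$, and the feedback at $k_t$ has already been revealed to $\A'_K$ by round $t$. By construction, if $\rvI$ and $\rv{J}$ are recovered from $\rvI'$ through the bijection $t \mapsto k_t$, the simulated stream satisfies $\vx(\A'_K,\rvI')_{\rvI'} = \vx(\A,\rvI)_{\rvI}$ and $\vx(\A'_K,\rvI')_{[2k]\setminus\rvI'} = \vx(\A,\rvI)_{\rv{J}}$.

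Finally, choose $K^\star$ maximizing $\Pr[f'(\vx_{\rvI},\vx_{\rv{J}})=1 \mid \rv{K}=K^\star]$ (such a maximizer exists as there are finitely many $2k$-subsets). Under $\A'_{K^\star}$, when $\rvI'\sim\Uni(2k,k)$ the induced $\rvI$ is uniform over $k$-subsets of $K^\star$, which matches the conditional distribution of $\rvI$ given $\rv{K}=K^\star$. Therefore
\begin{equation*}
\Pr_{\rvI,\rv{J}}[f'(\vx_{\rvI},\vx_{\rv{J}})=1] \;\le\; \Pr_{\rvI'}\!\big[f'\big(\vx(\A'_{K^\star},\rvI')_{\rvI'},\vx(\A'_{K^\star},\rvI')_{[2k]\setminus\rvI'}\big)=1\big] \;\le\; \max_{\A'\in\Adv_{2k}} \Pr_{\rvI'}[\cdots],
\end{equation*}
which combined with the first display and a maximum over $\A \in \Adv_n$ gives the claim.

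The main point of care is the construction of $\A'_K$: one must verify that $\A$, run inside $\A'_K$, never needs a feedback bit that has not yet been revealed. This is exactly what the containment $\rvI \subseteq K$ buys us, since it pins the missing bits to $0$ deterministically. Everything else is routine — the first inequality is pure conditional probability combined with the hypothesis, and the passage from ``uniform average over $\rv{K}$'' to ``$\max$ over $\A'$'' is the standard derandomization step of fixing the best-case value of an auxiliary randomness.
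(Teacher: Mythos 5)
Your proof is correct and follows essentially the same route as the paper's: the first display is the paper's Lemma~\ref{lem:dbl-aux1} (apply the hypothesis pointwise and integrate), and the simulation adversary $\A'_K$ together with the averaging/maximization over $K$ is exactly the paper's Lemma~\ref{lem:dbl-aux2}, where $\A_U$ is defined so that $\vx(\A_U, I') = \vx(\A, U_{I'})_U$. The only cosmetic differences are that you make the coupling of $(\rvI,\rv{J})$ through $\rv{K}$ explicit and you spell out (rightly, and more carefully than the paper does) why $\A'_K$ is a legal online strategy, namely that $\rvI \subseteq K$ forces all feedback bits outside $K$ to be $0$, so the inner simulation never needs information that hasn't yet been revealed.
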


To give some intuition on the condition \eqref{eq:dblsamp-condition}, assume again that $f(\vx_I,\vx)$ denotes an indicator of whether $\vx_I$ fails to be an $\epsilon$-approximation to $\vx$ and $f'(\vx_I,\vx_J)$ denotes whether $\vx_I$ fails to be an $\epsilon'$-approximation to $\vx_J$, where $\epsilon'$ is slightly larger than $\epsilon$. By concentration properties, if $f(\vx_I,\vx)=1$ then with high probability over $\rv{J}$, $f'(\vx_I,\vx_{\rv{J}})=1$ as well.

The proof of Lemma~\ref{lem:dbl-sampling} consists of two steps. In the first step, an index-set $\rv{J}$ is drawn uniformly at random from all the subsets of $[n]\setminus \rvI$ of size $k$, conditioned on $\rvI$. The set $\rv{J}$ is called a \emph{ghost sample}, as it is used only for the analysis and in particular, the adversary is unaware of $\rv{J}$.
The following lemma shows that under the condition \eqref{eq:dblsamp-condition}, we can bound $f(\rvvx_\rvI,\rvvx)$ in terms of $f'(\rvvx_\rvI,\rvvx_{\rv{J}})$.

\begin{lemma}\label{lem:dbl-aux1}
Let $f \colon X^k \times X^n \to \{0,1\}$, $f'\colon X^k \times X^k \to \{0,1\}$ and $\A \in \Adv_n$. Assume that for every $\vx$ and $I$ that satisfy $f(\vx_I,\vx)=1$, it further holds that
\begin{equation}\label{eq:61}
	\Pr_{\rv{J}}\lp[ f'(\vx_I,\vx_{\rv{J}}) = 1 \mid \rvI=I \rp]
	\ge 1/2.
\end{equation}
Then,
\begin{equation}\label{eq:63}
	\Pr_{\rvI}[f(\vx(\A,\rvI)_{\rvI},\vx(\A,\rvI))=1]
	\le 2 \Pr_{\rvI,\rv{J}}[f'(\vx(\A,\rvI)_{\rvI},\vx(\A,\rvI)_{\rv{J}})=1].
\end{equation}
\end{lemma}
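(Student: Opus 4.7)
The plan is to run the standard ghost-sample calculation, conditioning on $\rvI$ and exploiting the fact that $\A$ is deterministic. The key observation is that even though the stream $\vx(\A,\rvI)$ depends on which indices are sampled, once we condition on $\rvI = I$ the entire stream $\vx(\A,I)$ is completely determined, because each $x_t$ depends only on $\A$ and on $\rvI \cap [t-1]$ (and not on the ghost sample $\rv{J}$ at all, since the adversary never learns $\rv{J}$).

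With this in mind, the first step is to define the set of ``bad'' index sets $\mathcal{G} = \{ I \subseteq [n] : |I| = k,\ f(\vx(\A,I)_I, \vx(\A,I)) = 1\}$. The left-hand side of the target inequality is then exactly $\Pr_{\rvI}[\rvI \in \mathcal{G}]$. For every $I \in \mathcal{G}$ we may invoke the hypothesis~\eqref{eq:61} with $\vx = \vx(\A,I)$ (which satisfies the premise by the definition of $\mathcal{G}$) to conclude
\[
\Pr_{\rv{J}}\!\left[ f'\!\left(\vx(\A,I)_I,\, \vx(\A,I)_{\rv{J}}\right) = 1 \,\middle|\, \rvI = I \right] \ge \tfrac{1}{2}.
\]

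The second step is to lower-bound the right-hand side of~\eqref{eq:63} by restricting attention to bad $I$ and applying the law of total probability:
\begin{align*}
\Pr_{\rvI,\rv{J}}\!\left[ f'\!\left(\vx(\A,\rvI)_{\rvI}, \vx(\A,\rvI)_{\rv{J}}\right) = 1 \right]
&\ge \sum_{I \in \mathcal{G}} \Pr[\rvI = I] \cdot \Pr_{\rv{J}}\!\left[f'\!\left(\vx(\A,I)_I, \vx(\A,I)_{\rv{J}}\right) = 1 \,\middle|\, \rvI = I\right] \\
&\ge \tfrac{1}{2} \sum_{I \in \mathcal{G}} \Pr[\rvI = I] = \tfrac{1}{2} \Pr_{\rvI}[\rvI \in \mathcal{G}],
\end{align*}
which rearranges to the desired inequality.

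The only real subtlety — and the one worth being explicit about — is the independence structure: we are using that $\vx(\A,\rvI)$ is measurable with respect to $\rvI$ alone (the adversary is deterministic and blind to $\rv{J}$), so that on the event $\{\rvI = I\}$ the quantity $f'(\vx(\A,\rvI)_{\rvI}, \vx(\A,\rvI)_{\rv{J}})$ equals $f'(\vx(\A,I)_I, \vx(\A,I)_{\rv{J}})$ pathwise. Once this is in place the calculation is mechanical; no concentration or combinatorics is invoked, since the ``hard'' estimate is exactly what the hypothesis provides.
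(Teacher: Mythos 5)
Your proof is correct and is essentially the same argument as the paper's: both condition on $\rvI$, apply the hypothesis~\eqref{eq:61} on the event where $f=1$, and average. The paper phrases it as one chain of expectations using the indicator $f(\vx(\A,\rvI)_{\rvI},\vx(\A,\rvI))$ as a lower bound, whereas you decompose explicitly over the ``bad'' set $\mathcal{G}$ — a purely cosmetic difference — and you are somewhat more careful in spelling out the measurability point (that $\vx(\A,\rvI)$ is determined by $\rvI$ alone, independently of $\rv{J}$), which the paper leaves implicit.
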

\begin{proof}
	By \eqref{eq:61},
	\begin{align*}
	\Pr_{\rvI,\rv{J}}\lp[f'(\vx(\A,\rvI)_{\rvI},\vx(\A,\rvI)_{\rv{J}})=1\rp]
	&= \Ex_{\rvI} \lp[\Pr_{\rv{J}}\lp[ f'(\vx(\A,\rvI)_{\rvI},\vx(\A,\rvI)_{\rv{J}}) = 1 \mid \rvI \rp]\rp]\\
	&\ge \Ex_{\rvI} \lp[f(\vx(\A,\rvI)_{\rvI},\vx(\A,\rvI))
	\Pr_{\rv{J}}\lp[ f'(\vx(\A,\rvI)_{\rvI},\vx(\A,\rvI)_{\rv{J}}) = 1 \mid \rvI \rp]\rp]\\
	&\ge \Ex_{\rvI} \lp[f(\vx(\A,\rvI)_{\rvI},\vx(\A,\rvI))/2\rp]
	= \Pr_{\rvI} \lp[f(\vx(\A,\rvI)_{\rvI},\vx(\A,\rvI)_{\rv{J}})=1\rp]/2.
	\end{align*}
\end{proof}
Notice that the right hand side of \eqref{eq:63} corresponds to drawing two subsets of size $k$ from a stream of size $n$. It is desirable to bound this with a quantity that corresponds to partitioning a sample of size $2k$ to two subsets of size $k$. Essentially, this amounts to ignoring the elements out of $\rvI\cup\rv{J}$. Formally:
\begin{lemma}\label{lem:dbl-aux2}
	Let $f'\colon X^k\times X^k \to \{0,1\}$, let $\rvI$ and $\rv{J}$ be random subsets of $[n]$ as defined above and let $\rv{I}'\sim \Uni(2k,k)$. Then,
	\begin{equation}\label{eq:81}
	\max_{\A\in\Adv_n} \Pr_{\rvI,\rv{J}}[f'(\vx(\A,\rvI)_{\rvI},\vx(\A,\rvI)_{\rv{J}})]
	\le \max_{\A'\in\Adv_{2k}} \Pr_{\rvI'}[f'(\vx(\A',\rvI')_{\rvI'},\vx(\A',\rvI')_{[2k]\setminus \rvI'})=1].
	\end{equation}
\end{lemma}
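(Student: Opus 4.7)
The plan is to reduce the length-$n$ game with the ghost sample to the length-$2k$ game by a direct simulation/coupling argument: given an adversary $\A\in\Adv_n$ we build an adversary $\A'\in\Adv_{2k}$ whose induced distribution on the relevant $2k$ stream positions is identical to the distribution of $\vx(\A,\rvI)$ restricted to $\rvI\cup\rv{J}$.

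First I would introduce the ``merged'' random set $\rv{K}:=\rvI\cup\rv{J}$ and observe the following joint-distribution fact, which is where the probabilistic content of the lemma lives: since $\rvI\sim\Uni(n,k)$ and, conditioned on $\rvI$, $\rv{J}$ is uniform over $k$-subsets of $[n]\setminus\rvI$, the pair $(\rv{K},\rvI)$ is distributed as a uniformly random $2k$-subset $\rv{K}\in\binom{[n]}{2k}$ together with a uniformly random $k$-subset $\rvI$ of $\rv{K}$. In particular, writing $K_1<K_2<\cdots<K_{2k}$ for the sorted elements of $\rv{K}$, the set of ``ranks'' $\rv{I}^\star:=\{j:K_j\in\rvI\}$ is uniform over $\binom{[2k]}{k}$ and independent of $\rv{K}$, i.e.\ $\rv{I}^\star$ has exactly the distribution of $\rvI'\sim\Uni(2k,k)$.

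Next I would construct $\A'$, which is allowed to be randomized (by averaging we may then replace the random $\A'$ with the best deterministic one, as noted in the notation subsection). The randomized $\A'$ first samples $\rv{K}\sim\Uni\binom{[n]}{2k}$ privately, and then simulates $\A$ on a length-$n$ stream as follows. In the simulated round $t\in[n]$: if $t\notin \rv{K}$, $\A'$ feeds some fixed default element $x_0$ to $\A$ and reports to $\A$ that this item was \emph{not} sampled; if $t=K_j$ for some $j\in[2k]$, then $\A'$ takes $\A$'s next output and uses it as its own $j$-th output in the length-$2k$ game, waits to see whether the actual $2k$-sampler picked this $j$-th element, and then feeds that very bit back to $\A$ as the sampling decision at position $t$ in the simulation. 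Because $\A$ is adaptive only through the sampling bits revealed to it, this is a faithful simulation: the simulated $\A$'s behaviour depends only on $(x_0,\ldots,x_0$, actual responses at $\rv{K}$-positions in the right order$)$, matching the information flow in the original game where the positions outside $\rvI\cup\rv{J}$ simply do not show up in the sample.

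Finally, I would assemble the coupling. Letting $\rvI'$ be the sampler's choice in the length-$2k$ game against $\A'$, the correspondence $j\leftrightarrow K_j$ identifies the $j$-th element of $\vx(\A',\rvI')$ with the $K_j$-th element of the simulated $\vx(\A,\rvI)$, the set of sampled positions $\rvI$ in the simulation with $\{K_j:j\in\rvI'\}$, and the non-sampled rank-set $[2k]\setminus\rvI'$ with $\rv{J}$ (via $j\mapsto K_j$). By the distributional identity from the first paragraph, $(\rv{K},\rvI)$ from the simulation has the same law as $(\rvI\cup\rv{J},\rvI)$ in the original setup, and consequently
\[
\bigl(\vx(\A,\rvI)_\rvI,\ \vx(\A,\rvI)_{\rv{J}}\bigr)\ \stackrel{d}{=}\ \bigl(\vx(\A',\rvI')_{\rvI'},\ \vx(\A',\rvI')_{[2k]\setminus\rvI'}\bigr).
\]
Applying $f'$ to both sides and taking probabilities yields \eqref{eq:81}, after maximising over $\A'$ to pass from the randomised $\A'$ to a deterministic worst case. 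The only mild subtlety to be careful about is verifying that the simulation is order-preserving and causal, which is ensured precisely by processing $\rv{K}$ in sorted order and by the fact that the sampling bit at round $j$ of the $2k$-game is revealed \emph{before} the next round's input is produced, matching exactly the adversarial protocol of \Cref{sec:preliminaries}.
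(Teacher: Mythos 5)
Your proof is correct and follows essentially the same route as the paper's: the paper conditions on $U=\rvI\cup\rv{J}$ and constructs a deterministic adversary $\A_U$ that simulates $\A$ on the positions of $U$ (skipping the rest), then averages over $U$, while you package the identical simulation as a randomized $\A'$ that draws $\rv{K}=\rvI\cup\rv{J}$ internally and afterwards pass to the deterministic worst case --- the two are equivalent. One small slip to fix in the write-up: the adversary is the party that \emph{produces} stream elements, so at rounds $t\notin\rv{K}$ your $\A'$ should let the internal copy of $\A$ emit $x_t$, discard that element, and report ``not sampled'' back to $\A$; it does not ``feed'' any default element to $\A$, only the sampling bit.
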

\begin{proof}
	Let $\A$ denote the maximizer on the left hand side of \eqref{eq:81}.
	Let $U \subseteq [2n]$ be a set of size $2k$, and we will prove that 
	\begin{equation}\label{eq:conditioned-on-U}
	\Pr_{\rvI,\rv{J}}[f'(\vx(\A,\rvI)_{\rvI},\vx(\A,\rvI)_{\rv{J}})\mid \rvI \cup \rv{J} = U]
	\le \max_{\A'} \Pr_{\rvI'}[f'(\vx(\A',\rvI')_{\rvI'},\vx(\A',\rvI')_{[2k]\setminus \rvI'})].
	\end{equation}
	The proof of Lemma~\ref{lem:dbl-aux2} will then follow by taking an expectation over $U$. 
	
	The main idea to proving \eqref{eq:conditioned-on-U} is to match the subsets $I\subseteq U$ with the subsets $I' \subseteq [2k]$. One can define an adversary $\A_U\in \Adv_{2k}$ that simulates the behavior of $\A$ on $U$, hence matching the probability that $f'=1$.
	In particular, $\A_U$ simulates the selections of $\A$ on the set $U$, while skipping all the elements not in $U$. Formally, denote $U = (i_1,\dots,i_{2k})$ where $i_1<i_2 \cdots < i_{2k}$ and $U_{I'} = \{i_j \colon j \in I'\}$. Then $\A_U$ is defined to satisfy $\vx(\A_U,I') := \vx(\A,U_{I'})_{U}$.
	This implies that
	\[
	\vx(\A_U,I')_{I'} = \vx(\A,U_{I'})_{U_{I'}}; \quad
	\vx(\A_U,I')_{[2k]\setminus I'} = \vx(\A,U_{I'})_{U_{[2k]\setminus I'}}.
	\]
	Hence,
	\begin{equation}\label{eq:225}
	f'(\vx(\A,U_{I'})_{U_{I'}}, \vx(\A,U_{I'})_{U_{[2k]\setminus I'}})
	= f'(\vx(\A_U,I')_{I'},\vx(\A_U,I')_{[2k]\setminus I'}).
	\end{equation}
	Notice that the joint distribution of $(U_{\rvI'},U_{[n]\setminus\rvI'})$ taken over $\rvI'\sim \Uni(2k,k)$, is the same as the joint distribution of $(\rvI,\rv{J})$, conditioned on $\rvI\cup \rv{J}=U$. In combination with \eqref{eq:225}, this implies that
	\begin{align*}
	&\Pr[f'(\vx(\A,\rvI)_{\rvI}, \vx(\A,\rvI)_{\rv{J}})=1 \mid \rvI\cup \rv{J} = U] 
	= \Pr[f'(\vx(\A,U_{\rvI'})_{U_{\rvI'}}, \vx(\A,U_{\rvI'})_{U_{[2k]\setminus \rvI'}})=1] \\
	&\quad = \Pr[f'(\vx(\A_U,\rvI')_{\rvI'},\vx(\A_U,\rvI')_{[2k]\setminus \rvI'})=1]
	\le \max_{\A'} \Pr[f'(\vx(\A',\rvI')_{\rvI'},\vx(\A',\rvI')_{[2k]\setminus \rvI'})=1].
	\end{align*}
	This proves \eqref{eq:conditioned-on-U}, and concludes the proof.
\end{proof}
The proof of Lemma~\ref{lem:dbl-sampling} follow directly from Lemma~\ref{lem:dbl-aux1} and Lemma~\ref{lem:dbl-aux2}.

\subsection{Proof of Lemma~\ref{lem:approx-n-to-2k}}\label{sec:pr-approx-dbl}

We start with the following auxiliary probabilistic lemma:
\begin{lemma}\label{lem:aux-n-2k}
	Let $I \subseteq [n]$ of size $|I|=k$, let $\vx$ and let $t \ge 0$ be such that 
	\begin{equation}\label{eq:96}
	\max_{E \in \E} \lp| \frac{|E\cap \vx_I|}{k} - \frac{|E\cap \vx|}{n} \rp| \ge t.
	\end{equation}
	Let $\rv{J}$ be distributed uniformly over all subsets of $[n]\setminus I$ of size $k$. Then, with probability at least $1/2$,
	\[
	\max_{E \in \E} \lp| \frac{|E\cap \vx_I|}{k} - \frac{|E\cap \vx_{\rv{J}}|}{k} \rp| \ge t - C/\sqrt{k}.
	\]
\end{lemma}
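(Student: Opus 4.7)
The plan is to carry out the standard symmetrization (``ghost sample'') argument, reducing the claim to a one-variable variance bound via Chebyshev.

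First, I would pick a witness set $E^\ast \in \E$ that achieves (or nearly achieves) the maximum in the hypothesis \eqref{eq:96}, so
$\lvert p_I - p\rvert \ge t$, where I write $p_I = |E^\ast \cap \vx_I|/k$ and $p = |E^\ast \cap \vx|/n$. The point of selecting this single witness is that once I prove the conclusion for $E^\ast$, the maximum over $\E$ on the target event is automatically at least as large, since $E^\ast \in \E$.

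Second, I would rewrite the hypothesis in terms of the complementary population $[n]\setminus I$. Let $p_{\mathrm{rest}} = |E^\ast \cap \vx_{[n]\setminus I}|/(n-k)$. Using $n p = k p_I + (n-k) p_{\mathrm{rest}}$, a short calculation gives
\[
p_I - p_{\mathrm{rest}} = \frac{n}{n-k}\,(p_I - p),
\]
so in particular $\lvert p_I - p_{\mathrm{rest}}\rvert \ge t$ (the prefactor is at least $1$). This is the only algebraic manipulation in the proof.

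Third, I would bound the fluctuation of $p_{\rv{J}} := |E^\ast \cap \vx_{\rv{J}}|/k$ around $p_{\mathrm{rest}}$. Since $\rv{J}$ is a uniform size-$k$ subset of $[n]\setminus I$, the earlier Lemma~\ref{lem:var-without-rep} applied to the population $[n]\setminus I$ and the set $U = E^\ast \cap ([n]\setminus I)$ yields $\Ex[\lvert U \cap \rv{J}\rvert] = |U|k/(n-k)$ and $\mathrm{Var}(\lvert U\cap \rv{J}\rvert) \le |U|k/(n-k) \le k$, hence $\mathrm{Var}(p_{\rv{J}}) \le 1/k$. Chebyshev's inequality then gives
\[
\Pr\!\left[\,\lvert p_{\rv{J}} - p_{\mathrm{rest}}\rvert \ge \sqrt{2/k}\,\right] \le \tfrac{1}{2}.
\]
By the triangle inequality, on the complementary event (of probability $\ge 1/2$),
\[
\lvert p_I - p_{\rv{J}}\rvert \;\ge\; \lvert p_I - p_{\mathrm{rest}}\rvert - \lvert p_{\rv{J}} - p_{\mathrm{rest}}\rvert \;\ge\; t - \sqrt{2/k},
\]
and taking a maximum over $E \in \E$ (which only makes the left-hand side larger) yields the claim with $C = \sqrt{2}$.

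There is no real obstacle here; the proof is a two-line computation once the witness $E^\ast$ is fixed and the variance estimate from Lemma~\ref{lem:var-without-rep} is invoked. The only subtle points are remembering that $\rv{J}$ is sampled from $[n]\setminus I$ (not $[n]$), which forces the small identity $p_I - p_{\mathrm{rest}} = \tfrac{n}{n-k}(p_I - p)$, and that Chebyshev with the variance bound $1/k$ suffices to push the failure probability below $1/2$ while only paying an additive $O(1/\sqrt{k})$ slack.
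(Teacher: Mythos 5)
Your proof is correct. The overall strategy matches the paper's: fix a single witness $E^\ast$ attaining the maximum in \eqref{eq:96}, compare $p_I$ to the density $p_{\mathrm{rest}}$ on $[n]\setminus I$, apply a without-replacement concentration bound to the ghost sample $\rv{J}$, and finish with the triangle inequality, then observe that passing to the $\max$ over $\E$ only helps. You depart from the paper in two minor but pleasant ways. First, the paper invokes the exponential tail bound (Lemma~\ref{lem:subset-intersection}, item~1) to control $p_{\rv J}-p_{\mathrm{rest}}$, whereas you use the variance estimate of Lemma~\ref{lem:var-without-rep} plus Chebyshev; since only a constant-probability (not high-probability) guarantee is needed here, Chebyshev is enough and is the lighter tool, giving the explicit constant $C=\sqrt{2}$. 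Second, the paper reduces to the case $p\ge p_I$ by a WLOG and argues one-sidedly that $p_{\mathrm{rest}}\ge p$; you instead make the exact identity $p_I - p_{\mathrm{rest}} = \frac{n}{n-k}(p_I - p)$ explicit, which immediately gives $|p_I - p_{\mathrm{rest}}|\ge t$ without case analysis. Both routes are equally valid and of essentially the same length; yours is arguably slightly cleaner. The only implicit hypothesis you use is $n\ge 2k$ (so that $\rv{J}\subseteq [n]\setminus I$ of size $k$ exists and Lemma~\ref{lem:var-without-rep} applies to the population $[n]\setminus I$), which is guaranteed by the context in which the lemma is applied.
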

\begin{proof}
	Let $E_0$ be a maximizer of \eqref{eq:96}.
	We will assume that
	\begin{equation}\label{eq:282}
	\frac{|E_0\cap \vx|}{n} \ge \frac{|E_0\cap \vx_I|}{k}.
	\end{equation}
	and the proof follows similarly in the other case.
	Applying Lemma~\ref{lem:subset-intersection} (item \ref{itm:without-rep-big-set}) with $U = \{ i \in [n]\setminus I \colon x_i \in E_0 \}$, $n=n-k$ and $\rv{I} = \rv{J}$, we have that with probability at least $1/2$ over $\rv{J}$,
	\[
	\frac{|E_0\cap \vx_{\rv{J}}|}{k}
	= \frac{|U\cap\rv{J}|}{k}
	\ge \frac{|U|}{n-k} - C\sqrt{1/k}
	= \frac{|E_0\cap \vx_{[n]\setminus I}|}{n-k} - C\sqrt{1/k}
	\ge \frac{|E_0\cap \vx_{[n]}|}{n} - C\sqrt{1/k},
	\]
	where the last inequality follows from \eqref{eq:282}.
	In particular,
	\[
	\frac{|E_0\cap \vx_J|}{k} - \frac{|E_0\cap \vx_I|}{k}
	\ge t - C/\sqrt{k},
	\]
	which concludes the proof.
\end{proof}

\begin{proof}[Proof of Lemma~\ref{lem:approx-n-to-2k}]
	First, apply Lemma~\ref{lem:dbl-sampling} with the function
	$f(\vx_I,\vx)$ that is the indicator of the event
	\[
	\max_{E \in \E} \lp| \frac{|E\cap \vx_I|}{k} - \frac{|E\cap \vx|}{n} \rp| \ge t
	\]
	and $f'(\vx_I,\vx_J)$ that is the indicator of
	\[
	\max_{E \in \E} \lp| \frac{|E\cap \vx_I|}{k} - \frac{|E\cap \vx_J|}{k} \rp| \ge t - C_0/\sqrt{k},
	\]
	where $C_0> 0$ corresponds to the constant $C$ in Lemma~\ref{lem:aux-n-2k}.
	The condition in Lemma~\ref{lem:dbl-sampling} is satisfied from Lemma~\ref{lem:aux-n-2k} and one derives that
	\begin{align*}
	\max_{\A \in \Adv_n} \Pr_{\rvI}[\App_{\A,\rvI} \ge t]
	&= \max_{\A\in\Adv_n} \Pr_{\rvI}[f(\vx(\A,\rvI)_{\rvI},\vx(\A,\rvI)) = 1]\\
	&\le 2 \max_{\A'\in\Adv_{2k}} \Pr[f'(\vx'(\A',\rvI')_{\rvI'}, \vx'(\A',\rvI')_{[2k]\setminus \rvI'})] \\
	&= 2 \max_{\A'\in\Adv_{2k}} \Pr[\Disc_{\A',\rvI'}/k \ge t-C_0/\sqrt{k}].
	\end{align*}
\end{proof}

\subsection{Proof of Lemma~\ref{lem:dbl-nets}}\label{sec:pr-dbl-net}
We start with the following probabilistic lemma:
\begin{lemma}\label{lem:dbl-net-aux}
	Let $\epsilon\in (0,1)$, $k \ge C/\epsilon$, $\vx\in X^n$ and $I\subseteq [n]$, $|I|=k$, be such that
	\[
	\exists E \in \E,
	\ |\vx \cap E| \ge \epsilon n \text{ and }
	\vx_I\cap E = \emptyset.
	\]
	Let $\rv{J}$ be drawn uniformly from the subsets of $[n]\setminus I$ of size $k$. Then, with probability at least $1/2$,
	\[
	\exists E \in \E,
	\ |\vx_{I\cup J}\cap E| \ge \epsilon k/2 \text{ and }
	\vx_I\cap E = \emptyset.
	\]
\end{lemma}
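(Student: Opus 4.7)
The plan is to pick the same $E\in\E$ guaranteed by the hypothesis and show that with probability at least $1/2$ over $\rv{J}$, the random subset $\rv{J}$ hits $E$ at least $\epsilon k/2$ times. Since $I$ and $\rv{J}$ are disjoint and $\vx_I\cap E=\emptyset$, this automatically gives $|\vx_{I\cup\rv{J}}\cap E| = |\vx_{\rv{J}}\cap E|$ and preserves the property $\vx_I\cap E=\emptyset$, so both conclusions follow from a single concentration estimate.

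More concretely, fix the witness $E$ from the hypothesis and set
\[
U := \{i\in [n]\setminus I : x_i \in E\}.
\]
Since $\vx_I\cap E=\emptyset$, every index of $\vx$ mapping into $E$ lies in $[n]\setminus I$, hence $|U| = |\vx\cap E|\ge \epsilon n$. The random set $\rv{J}$ is distributed as $\Uni(n-k,k)$ inside $[n]\setminus I$, so I plan to invoke item~\ref{itm:without-rep-small-set} of Lemma~\ref{lem:subset-intersection} applied to the universe $[n]\setminus I$ and the subset $U$, with $\alpha=1/2$. This yields
\[
\Pr_{\rv{J}}\!\left[\bigl|\,|\rv{J}\cap U|/k - |U|/(n-k)\,\bigr| \ge \tfrac12 \cdot |U|/(n-k)\right]
\le 2\exp\!\left(-\frac{k\,|U|}{24(n-k)}\right).
\]
Using $|U|/(n-k)\ge |U|/n\ge \epsilon$, the right-hand side is at most $2\exp(-\epsilon k/24)$, which is at most $1/2$ as soon as $k\ge C/\epsilon$ for a sufficiently large absolute constant $C$ (this is where the hypothesis $k\ge C/\epsilon$ is used).

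On the complementary event, which has probability at least $1/2$,
\[
|\rv{J}\cap U| \ge \tfrac12 \cdot \frac{k\,|U|}{n-k} \ge \tfrac12 \cdot \frac{k\cdot \epsilon n}{n-k}\ge \frac{\epsilon k}{2}.
\]
Since $|\vx_{I\cup \rv{J}}\cap E| = |\vx_{\rv{J}}\cap E| = |\rv{J}\cap U|$ (because $I\cap \rv{J}=\emptyset$ and $\vx_I\cap E=\emptyset$), the witness $E$ satisfies both required conditions. I do not expect a real obstacle here; the only subtlety is to keep track that $\rv{J}$ is uniform on size-$k$ subsets of a universe of size $n-k$ rather than of size $n$, so that Lemma~\ref{lem:subset-intersection} is applied with the correct denominator; the estimates absorb the resulting $n/(n-k)$ factor without loss.
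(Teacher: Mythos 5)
Your proposal is correct and follows essentially the same route as the paper's proof: fix the witness $E$ from the hypothesis, set $U=\{i\in[n]\setminus I: x_i\in E\}$, and apply item~2 of Lemma~\ref{lem:subset-intersection} to the universe $[n]\setminus I$ with $\alpha=1/2$, using $|U|\ge\epsilon n\ge\epsilon(n-k)$ and $k\ge C/\epsilon$ to make the failure probability at most $1/2$. The observations about $\rv{J}$ being uniform over a universe of size $n-k$ and about $|\vx_{I\cup\rv{J}}\cap E|=|\rv{J}\cap U|$ match the paper's reasoning exactly.
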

\begin{proof}
	Let $E_0 \in \E$ be any set such that $|\vx\cap E_0| \ge \epsilon n$ and $\vx_I\cap E_0 = \emptyset$. It suffices to show that with probability at least $1/2$, $|\vx_{\rv{J}} \cap E_0| \ge \epsilon k/2$. To prove this, apply Lemma~\ref{lem:subset-intersection} (item \ref{itm:without-rep-small-set}) with $\rvI=\rv{J}$, $U = \{ i \in [n]\setminus I \colon x_i \in E_0 \}$, $n=n-k$ and $\alpha=1/2$. Notice that $|U| \ge \epsilon n \ge \epsilon(n-k)$ and we derive that
	\begin{align*}
	&\Pr\lp[ |\vx_{\rv{J}} \cap E_0| < \frac{\epsilon k}{2}\rp]
	= \Pr\lp[\frac{|\rv{J}\cap U|}{k}< \frac{\epsilon}{2}\rp]
	\le \Pr\lp[\frac{|\rv{J}\cap U|}{k} \le \frac{|U|}{2(n-k)}\rp]\\
	&\quad \le \Pr\lp[\lp|\frac{|\rv{J}\cap U|}{k} - \frac{|U|}{n-k}\rp| \ge \frac{|U|}{2(n-k)}\rp]
	\le 2 \exp\lp(-\frac{k|U|/4}{6(n-k)} \rp)
	\le 2 \exp\lp(-\frac{k\epsilon/4}{6} \rp)
	\le 1/2,
	\end{align*}
	where the last inequality follows from the assumption that $k\ge C\epsilon$ for a sufficiently large $C>0$.
\end{proof}
\begin{proof}[Proof of Lemma~\ref{lem:dbl-nets}]
	We will apply Lemma~\ref{lem:dbl-sampling} with
	$
	f(\vx_I,\vx)
	$
	being the indicator of
	\[
	\exists E \in \E,
	\ |\vx\cap E| \ge \epsilon n \text{ and }
	\vx_I\cap E = \emptyset
	\]
	and $f'(\vx_I,\vx_J)$ the indicator of 
	\[
	\exists E \in \E,
	\ |\vx_{I\cup J}\cap E| \ge \epsilon k/2 \text{ and }
	\vx_I\cap E = \emptyset.
	\]
	The condition of Lemma~\ref{lem:dbl-sampling} holds from Lemma~\ref{lem:dbl-net-aux} and the proof follows.
\end{proof}
\section{Covering Numbers}\label{sec:dynamic-sets}


\subsection{Overview}
While studying online algorithms, it is natural to consider the following objects:
\begin{definition}[Dynamic-Set.]
A dynamic set is an online algorithm $\dS$ which is defined on input sequences $\vx\in X^n$.
	At each time-step $t\leq n$, the algorithm decides whether to retain $x_t$ or to discard it.
	The decision whether to retain/discard $x_t$ may depend on the elements $x_1,\ldots,x_t$ which were observed up to time $t$. 
	The \underline{trace} of $\dS$ with respect to an input sequence $\vx$ is defined by
	\[\dS(\vx) = \{ x_t : x_t\text{ was retained by $\dS$}, t\leq n\}.\]
	and is viewed either as a set or as an ordered set.
	Further, we say that $|\dS|\le m$ if $\dS$ cannot retain more than $m$ elements.
	We stress that the decision whether to retain/discard an item is \underline{not} reversible:
	retained (discarded) items can not be discarded (retained) in the future.
\end{definition}

Given a family $\E$, we would like to cover it using dynamic sets, as defined below:
\begin{definition}
	Let $\E$ be some family and let $\mathcal{N}$ denote some finite collection of dynamic sets. We say that $\mathcal{N}$ is an $\epsilon$-cover for $\E$ if for every input sequence $\vx$ and every $E \in \E$ there exists $\dS \in \mathcal{N}$ such that
	\[
	|(E \cap \{\vx\}) \triangle \dS(\vx)| \le \epsilon^2 n
	\]
	where $\Delta$ is the symmetric difference of sets.
	Further, define the \emph{covering number at scale $\epsilon$}, $N(\E,\epsilon)$, as the smallest cardinality of an $\epsilon$-cover for $\E$.
\end{definition}
We can obtain bounds on the covering numbers at scale $0$ for Littlestone families, via a known argument:
\begin{lemma}[Covering Littlestone Families with Few Dynamic Sets]\label{prop:cover}
	Let ${\cal E}$ be a family of subsets of $X$ with $\Ldim({\cal E})=d$ and let $n\in\mathbb{N}$.
	Then, $N(\E,0) \le {n \choose \leq d}$.
	Moreover, this is tight for $n=d$, where $N(\E,0) = 2^d = \binom{n}{\le d}$.
\end{lemma}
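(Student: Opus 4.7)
\medskip

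\noindent\textbf{Proof plan.} The plan is to prove the upper bound via the Standard Optimal Algorithm (SOA) of Littlestone, which is the online learner for $\mathcal{E}$ that, in the realizable setting, is guaranteed to make at most $d = \Ldim(\mathcal{E})$ prediction mistakes on any sequence consistent with some $E \in \mathcal{E}$. For each subset $M \subseteq [n]$ with $|M| \le d$, I will construct a dynamic set $\dS_M$ and argue that the family $\mathcal{N} = \{\dS_M : M \subseteq [n], |M| \le d\}$ is a $0$-cover for $\mathcal{E}$. Since $|\mathcal{N}| = \binom{n}{\le d}$, this will give the desired bound.

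\medskip

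\noindent The dynamic set $\dS_M$ simulates SOA online as follows. At each time $t = 1, \dots, n$, upon receiving $x_t$, it computes the SOA prediction $\hat y_t \in \{0,1\}$ based on the internally-stored labeled history $(x_1, y_1^{\dS}), \dots, (x_{t-1}, y_{t-1}^{\dS})$. It then sets $y_t^{\dS} = 1 - \hat y_t$ if $t \in M$, and $y_t^{\dS} = \hat y_t$ otherwise, and retains $x_t$ iff $y_t^{\dS} = 1$. Then it feeds $(x_t, y_t^{\dS})$ into its internal SOA state and proceeds to the next round. Note that $\dS_M$ is a legitimate dynamic set, since all decisions depend only on $x_1, \dots, x_t$.

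\medskip

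\noindent To verify the cover property, fix any input sequence $\vx = (x_1, \dots, x_n)$ and any $E \in \mathcal{E}$. Let $y_t = \mathbf{1}[x_t \in E]$ denote the ``true'' labels, and run SOA on the genuinely-labeled sequence $(x_1, y_1), \dots, (x_n, y_n)$; let $M = M(E, \vx) \subseteq [n]$ denote the set of rounds on which SOA errs. Because the labels are realized by $E \in \mathcal{E}$, we have $|M| \le d$. I claim that $\dS_M(\vx) = E \cap \{\vx\}$. By induction on $t$, the internal SOA state of $\dS_M$ coincides with that of SOA run on the true labels: at round $t$, if $t \in M$ then SOA's prediction $\hat y_t$ disagrees with $y_t$, and $\dS_M$ uses $y_t^{\dS} = 1 - \hat y_t = y_t$; if $t \notin M$ then $\hat y_t = y_t$ and again $y_t^{\dS} = y_t$. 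So the label fed into the internal SOA matches the true label at every step, preserving the induction. In particular, $\dS_M$ retains $x_t$ iff $y_t^{\dS} = y_t = 1$ iff $x_t \in E$, which is exactly what we need.

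\medskip

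\noindent For tightness when $n = d$, I would take $X = \{x_1, \dots, x_d\}$ and $\mathcal{E} = 2^X$, which has $\Ldim(\mathcal{E}) = d$. On the sequence $\vx = (x_1, \dots, x_d)$, the traces $E \cap \{\vx\} = E$ range over all $2^d$ subsets of $X$, so any $0$-cover must contain at least $2^d = \binom{d}{\le d}$ dynamic sets, matching the upper bound. The only genuinely non-routine step is verifying the induction in the second paragraph, but since SOA's internal state is determined by the sequence of labeled examples it is fed, and $\dS_M$'s correction rule is designed precisely so that its fed labels coincide with the true labels, this step is clean.
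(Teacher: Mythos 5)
Your upper bound proof is correct, and the underlying idea is essentially the same as the paper's: index the cover by a set $M\subseteq[n]$ of at most $d$ ``correction'' indices, and show that for each $E$ the mistake set of the Standard Optimal Algorithm on the true-labeled sequence produces a covering dynamic set. The realization differs slightly: your $\dS_M$ feeds every $(x_t, y_t^{\dS})$ into SOA and so restricts the version space at every round, whereas the paper's $\dS_I$ only shrinks its version space ${\cal E}_t^I$ at rounds $t\in I$. One can construct examples where these two dynamic-set families give different traces for the same index set; both are valid $0$-covers, so this is a cosmetic variation, but it is worth being aware that your dynamic sets are not literally the ones defined in the paper.

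The tightness argument has a real gap. The lemma fixes an arbitrary $\E$ with $\Ldim(\E)=d$ and asserts $N(\E,0)=2^d$ at $n=d$ \emph{for that particular $\E$}. You only verify this for the specific family $\E=2^X$ on a $d$-element domain. That exhibits a family attaining the bound, so it shows the upper bound $\binom{n}{\le d}$ cannot be improved in general, but it does not establish the stated equality for every family of Littlestone dimension $d$. The paper's proof does: it fixes a depth-$d$ tree shattered by $\E$, and defines an adaptive adversarial sequence that, given any collection $D$ of fewer than $2^d$ dynamic sets, descends the tree by always taking the branch opposite the minority retention decision, cutting $|D_i|$ at least in half each round. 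After $d$ rounds no member of $D$ is consistent with the path, yet some $E\in\E$ is (by shattering), so the cover fails. The sequence is adaptive to $D$, and this adaptivity is exactly what lets the argument work for every $\E$ rather than a handpicked one. You would need to replace your fixed-sequence counting argument with something like this tree-walking adversary to prove the full claim.

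One minor point you should also address: when $M$ is \emph{not} the mistake set of any $E$, the labels you feed into SOA may become inconsistent with $\E$, at which point SOA's version space is empty and its prediction is undefined. This is harmless for the cover property — you only need $\dS_{M(E,\vx)}$ to behave correctly — but to make each $\dS_M$ a well-defined dynamic set you should specify a default behavior (e.g., stop retaining) once the internal version space becomes empty.
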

While covering numbers at scale $0$ can be used to derive bounds for $\epsilon$-approximation and $\epsilon$-nets for the Bernoulli sampler $\Ber(n,p)$ with $p$ constant, these bounds are sub-optimal for epsilon-approximations. Improved bounds can be obtained by computing covering numbers at scale $\epsilon>0$. While we do not know how derive better than $\binom{n}{\le d}$ even for scales $\epsilon>0$, it is possible to obtain improved bounds on \emph{fractional covering numbers}, which is a notion that we define below and can replace the covering numbers:
\begin{definition}
	Let $\mu$ denote a probability measure over dynamic sets. We say that $\mu$ is an \emph{$(\epsilon,\gamma)$-fractional cover} for $\cal{E}$ if for every sequence $\vx$ and every $E \in \cal{E}$,
	\[
	\mu\lp(\lp\{ \dS \colon |(E \cap \{ \vx \})\triangle \dS(\vx)|
	\le \epsilon^2 n\rp\}\rp) \ge 1/\gamma.
	\]
	Define the \emph{fractional covering number at scale $\epsilon$},
	$N'({\cal{E}},\epsilon)$, as the minimal value of $\gamma$ such that there exists an $(\epsilon,\gamma)$ fractional cover for $\cal{E}$.
\end{definition}
Notice that $N'(\E,\epsilon) \le N(\E,\epsilon)$: if $\cal{C}$ is an $\epsilon$-cover, then $N'({\cal{E}},\epsilon) \le |\cal{C}|$, by taking $\mu$ to be the uniform distribution over ${\cal C}$.

We can obtain the following bound on the fractional covering numbers for Littlestone classes:
\begin{lemma}\label{lem:fractional-cover-littlestone}
	It holds that $N'({\cal E},\epsilon) \le (C/\epsilon)^{2d}$, for some universal $C>0$.
\end{lemma}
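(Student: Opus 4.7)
The plan is to build the fractional cover $\mu$ via the two-step sampling procedure outlined in the technical overview. Set $m = \lceil 2d/\epsilon^2 \rceil$. Step~1 samples $\vec{s} \subseteq [n]$ of size $m$ uniformly. Step~2 invokes \Cref{prop:cover} on the substream indexed by $\vec{s}$ to obtain a family $\mathcal{C}_{\vec{s}}$ of at most $\binom{m}{\le d}$ dynamic sets that $0$-cover $\E$ with respect to length-$m$ streams, and draws $\dS$ uniformly from $\mathcal{C}_{\vec{s}}$. Each cover element, which can be indexed by a mistake pattern $M \subseteq \vec{s}$ with $|M| \le d$ for the Standard Optimal Algorithm (SOA), is extended to a dynamic set on length-$n$ streams in a lazy fashion: at time $t$, consult SOA's current prediction on $x_t$; if $t \in \vec{s}$, update SOA with the label prescribed by $M$ (the negation of SOA's prediction iff $t \in M$) and retain $x_t$ iff that label equals $1$; if $t \notin \vec{s}$, retain $x_t$ iff SOA's current prediction equals $1$.

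Fix $\vx$ and $E \in \E$, let $y_t = \mathds{1}(x_t \in E)$, and let $M_E(\vec{s}) \subseteq \vec{s}$ denote the set of positions where SOA errs when processing the $\vec{s}$-substream with the true labels. Then $|M_E(\vec{s})| \le d$, so the extended dynamic set $\dS_{\vec{s}, M_E(\vec{s})}$ belongs to $\mathcal{C}_{\vec{s}}$, and its trace on $\vx$ matches $y_t$ exactly on every $t \in \vec{s}$; the only possible disagreements with $E \cap \vx$ occur at indices $t \in [n]\setminus\vec{s}$ where lazy SOA's prediction differs from $y_t$. Denote this count by $N(\vec{s})$. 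The next goal is to show $\Pr_{\vec{s}}[N(\vec{s}) > \epsilon^2 n] \le 1/2$.

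The core step is a leave-one-out calculation. Draw $t^*$ uniformly from $[n]\setminus\vec{s}$ and let $\vec{s}^+ = \vec{s} \cup \{t^*\}$; a direct check gives that $\vec{s}^+$ is uniform over size-$(m+1)$ subsets of $[n]$ and that, conditionally on $\vec{s}^+$, the point $t^*$ is uniform in $\vec{s}^+$. Since lazy SOA trained on $\vec{s}$ and on $\vec{s}^+$ produces the same internal state immediately before time $t^*$ (the training indices strictly less than $t^*$ coincide), the event ``lazy SOA on $\vec{s}$ errs at $t^*$'' equals the event ``lazy SOA on $\vec{s}^+$ errs at $t^*$''. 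Conditioned on $\vec{s}^+$, at most $d$ of the $m+1$ positions of $\vec{s}^+$ are mistakes, so the latter event has conditional probability at most $d/(m+1)$. Re-expressing the same probability via the first parameterization gives $\Ex_{\vec{s}}[N(\vec{s})/(n-m)] \le d/(m+1)$, hence $\Ex_{\vec{s}}[N(\vec{s})] \le nd/m \le \epsilon^2 n/2$, and Markov's inequality yields the desired tail bound.

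Putting everything together, for any $\vx$ and any $E \in \E$,
\[
\mu\bigl(\{\dS : |\dS(\vx) \triangle (E \cap \{\vx\})| \le \epsilon^2 n\}\bigr) \ge \tfrac{1}{2} \cdot \tfrac{1}{\binom{m}{\le d}} \ge (C/\epsilon)^{-2d},
\]
using $\binom{m}{\le d} \le (em/d)^d = (2e/\epsilon^2)^d$ and absorbing the factor $2$ into $C$. The main obstacle will be the leave-one-out step: it demands careful synchronization of lazy SOA's internal state across the two training sets, and a correct verification that the two natural parameterizations of the pair $(\vec{s}, t^*)$ induce the same joint distribution.
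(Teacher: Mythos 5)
Your proof is correct, and at the top level it matches the paper's plan: sample a small random anchor set, draw a uniformly random element of the SOA-based $0$-cover for that anchor set, and show that with decent probability it agrees with any fixed $E$ on all of $\vx$ up to $\epsilon^2 n$ disagreements. But the core step---bounding the expected number of off-anchor disagreements---takes a genuinely different route. The paper samples the anchor set $\rvI'$ with independent Bernoulli-$p$ coins, groups the off-anchor misses into the gaps between successive anchor-hits, and bounds the expected length of each gap by $1/p$ via a geometric waiting-time argument; this forces two separate Markov bounds (one for the number of misses $|\rv{M}|$, one for $|\rvI'|$ itself) and a union bound to put them together. You instead fix the anchor size $m$ and use a leave-one-out exchangeability argument: drawing one extra index $t^*$ uniformly from $[n]\setminus\vec{s}$ and observing that $(\vec{s},t^*)$ and $(\vec{s}^+,t^*)$ induce the same joint law, you identify the event ``lazy SOA on $\vec{s}$ errs at $t^*$'' with the event ``$t^*$ is a mistake position of SOA on $\vec{s}^+$,'' and since $t^*$ is uniform over the $m+1$ elements of $\vec{s}^+$ while SOA has at most $d$ mistakes, the conditional probability is at most $d/(m+1)$. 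A single Markov then closes the argument. Your key synchronization claim is correct: the lazy SOA state at time $t^*$ depends only on the labeled anchors strictly below $t^*$, which are identical for $\vec{s}$ and $\vec{s}^+$. The leave-one-out route is arguably cleaner---it removes the need to separately control $|\rvI'|$ and replaces the gap-by-gap accounting with one symmetry observation---and it is a well-worn double-counting technique, so the trade-off is mainly stylistic rather than in strength of bound: both give $(C/\epsilon)^{2d}$.
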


Next, we apply bounds on covering numbers for epsilon approximation and epsilon nets:

\subsubsection{Epsilon Approximation and Sequential Rademacher}\label{sec:seq-rad-proof-outline}

The following bound can be derived based on $0$-nets:
\begin{lemma}\label{lem:eps-cover-bound}
	Let $\A\in\Adv_{2k}$, $\rvI\sim \Ber(2k,1/2)$, $\delta \in (0,1/2)$ and let $\E$ be any family over some universe. Then, with probability $1-\delta$,
	\[
	\Disc_{\A,\rvI}(\E) \le C \sqrt{k(\log N(\E, 0)+\log 1/\delta)}.
	\]
\end{lemma}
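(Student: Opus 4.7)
The plan is to reduce the supremum over $\E$ to a maximum over the $0$-cover $\mathcal{N}$ of size $N(\E, 0)$, and then combine Azuma--Hoeffding with a union bound. First, I fix the adversary $\A$ and let $\mathcal{N}$ be an optimal $0$-cover of $\E$, of size $N(\E, 0)$. For every realization of $\rvI$, let $\rvvx = \vx(\A, \rvI)$ and $R(\dS) = \{t \in [2k] : \dS \text{ retains } \rvx_t\}$. The $0$-cover property gives, for each $E \in \E$, some $\dS \in \mathcal{N}$ with $\dS(\rvvx) = E \cap \rvvx$ as ordered multisets indexed by time, equivalently $R(\dS) = \{t : \rvx_t \in E\}$. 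Since $|E \cap \rvvx_J| = |R(\dS) \cap J|$ for any $J \subseteq [2k]$, I obtain the pointwise bound
\[
\Disc_{\A, \rvI}(\E) \le \max_{\dS \in \mathcal{N}} \bigl||R(\dS) \cap \rvI| - |R(\dS) \cap ([2k] \setminus \rvI)|\bigr|.
\]

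Next, for each fixed $\dS \in \mathcal{N}$, I define $\chi_t = 2\mathds{1}[t \in \rvI] - 1 \in \{-1, +1\}$ and $Z_t = \mathds{1}[t \in R(\dS)] \chi_t$, so that $\sum_{t=1}^{2k} Z_t$ equals the signed discrepancy above. Let $\mathcal{F}_t = \sigma(\rvx_1, \chi_1, \ldots, \rvx_t, \chi_t)$. Because the adversary is deterministic and commits to $\rvx_{t+1}$ as a function of the information observed so far, $\rvx_{t+1}$ is $\mathcal{F}_t$-measurable; likewise $\mathds{1}[t+1 \in R(\dS)]$ depends only on $\rvx_1, \ldots, \rvx_{t+1}$ since $\dS$ is an online algorithm. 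However, $\chi_{t+1}$ is independent of $\sigma(\mathcal{F}_t, \rvx_{t+1})$ and uniform on $\{-1, +1\}$ by the definition of $\Ber(2k, 1/2)$. Thus $\Ex[Z_{t+1} \mid \mathcal{F}_t] = 0$ and $|Z_{t+1}| \le 1$, and Azuma--Hoeffding yields
\[
\Pr\!\left[\left|\sum_{t=1}^{2k} Z_t\right| \ge \lambda\right] \le 2 \exp\!\left(-\frac{\lambda^2}{4k}\right).
\]

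Finally, applying this bound to each of the $N(\E, 0)$ dynamic sets in $\mathcal{N}$ and taking $\lambda = C\sqrt{k(\log N(\E,0) + \log(1/\delta))}$ for a suitable universal constant $C$, a union bound over $\mathcal{N}$ gives the required tail estimate for $\Disc_{\A, \rvI}(\E)$. The main subtlety is the martingale step in the adversarial setting: it hinges on the precise order of moves within a round---the adversary commits to $\rvx_t$ before learning whether $t \in \rvI$---so that conditioning on the past together with $\rvx_t$ still leaves $\chi_t$ as a fresh fair sign. A secondary point worth stressing is that the cover $\mathcal{N}$ is fixed in advance (it depends only on $\E$ and $n = 2k$, not on the realized stream), which is what makes the union bound legitimate in spite of $\rvvx$ being itself random.
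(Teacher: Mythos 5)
Your proof is correct and follows essentially the same route as the paper: reduce the supremum over $\E$ to a maximum over the $0$-cover $\mathcal{N}$, bound the discrepancy of each dynamic set via a martingale concentration inequality (the paper cites a symmetric-martingale lemma, you use Azuma--Hoeffding directly; both give the same $2\exp(-\lambda^2/(4k))$ tail), and finish with a union bound over $\mathcal{N}$. The careful accounting of the filtration and the order of moves — in particular that $\rvx_{t+1}$ and hence $\mathds{1}[t+1\in R(\dS)]$ are $\mathcal{F}_t$-measurable while $\chi_{t+1}$ is a fresh fair sign — is exactly the content of the paper's Lemma~\ref{lem:single-DS}.
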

The proof is via a simple union bound. 
In combination with the bound on the $0$-cover of Littlestone classes (Lemma~\ref{prop:cover}), this derives that with probability $1-\delta$,
\[
\Disc_{\A,\rvI} \le \sqrt{k(d\log k + \log(1/\delta))},
\]
which implies a sample complexity of 
\[
O\lp( \frac{d\log(1/\epsilon)+\log(1/\delta)}{\epsilon^2} \rp).
\]
To derive sharper bounds, one can use covering numbers at scales $\epsilon>0$. Since we only have fractional covering numbers for Littlestone classes, we present the following lemma that derives a bound based on them:
\begin{lemma}\label{lem:fractional-cover-bound}
Let $\A\in\Adv_{2k}$, $\rvI\sim \Ber(2k,1/2)$, $\delta \in(0,1/2)$ and let $\E$ be some family. Then, with probability $1-\delta$,
\[
\Disc_{\A,\rvI}(\E) \le C \sqrt{k} \lp( \int_{0}^1\sqrt{\log N'(\E, \epsilon)}d\epsilon +\sqrt{\log 1/\delta}\rp).
\]
\end{lemma}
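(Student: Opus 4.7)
The proof adapts the Dudley chaining argument to the fractional-cover setting. Take dyadic scales $\epsilon_i = 2^{-i}$ for $i=0,1,\dots,K$ with $K=O(\log k)$ so that $\epsilon_K^2\cdot 2k = O(1)$, and for each $i$ let $\mu_i$ realize $\gamma_i := N'(\E,\epsilon_i)$ as an $(\epsilon_i,\gamma_i)$-fractional cover; take $\mu_0$ concentrated on the empty dynamic set ($\gamma_0=1$). For a dynamic set $\dS$ define $Y(\dS) := \sum_t \eta_t \mathds{1}(x_t\in \dS(\vx))$ with $\eta_t = 2\rvI_t-1$, so that $\Disc_{\A,\rvI}(\E) = \sup_E |Y(E)|$. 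Because each $\eta_t$ is an unconditional Rademacher variable while $\mathds{1}(x_t \in \dS(\vx))$ depends only on $\eta_1,\dots,\eta_{t-1}$ (through the adaptive choices of $\A$ and $\dS$), the process $\{Y(\dS)\}$ is a martingale, and Freedman's inequality yields for any pair $\dS,\dS'$:
\[
\Pr_{\rvI}\bigl[|Y(\dS)-Y(\dS')| > \lambda,\ |\dS(\vx)\triangle \dS'(\vx)|\le s\bigr] \le 2\exp(-\lambda^2/Cs).
\]

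The key step is to control, at each scale $i\ge 1$, the bad-pairs set
\[
\mathcal{B}_i(\rvI):=\bigl\{(\dS,\dS'):|Y(\dS)-Y(\dS')|>t_i,\ |\dS(\vx)\triangle \dS'(\vx)|\le 2\epsilon_{i-1}^2\cdot 2k\bigr\}
\]
with threshold $t_i := C_1 \epsilon_{i-1}\sqrt{k\bigl(\log(\gamma_i\gamma_{i-1}) + i + \log(1/\delta)\bigr)}$. Combining the Freedman bound with Markov's inequality in $\rvI$ gives
\[
\Pr_{\rvI}\bigl[(\mu_i\otimes \mu_{i-1})(\mathcal{B}_i(\rvI)) > c_0/(\gamma_i\gamma_{i-1})\bigr] \le \delta/2^i
\]
for a small absolute constant $c_0$; a union bound over $i$ produces a good event $\Omega$ of probability $\ge 1-\delta$ on which the bad-pair mass is bounded simultaneously at every scale.

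On $\Omega$, for each $E\in\E$ I would construct a backward chain $\dS_K(E),\dots,\dS_0(E)$ with $\dS_i(E)\in \mathcal{G}_i(E,\vx):=\{\dS:|\dS(\vx)\triangle(E\cap \vx)|\le \epsilon_i^2\cdot 2k\}$ and $|Y(\dS_i)-Y(\dS_{i-1})|\le t_i$. Any pair in $\mathcal{G}_i\times \mathcal{G}_{i-1}$ has symmetric difference at most $2\epsilon_{i-1}^2\cdot 2k$ by triangle inequality, so only elements of $\mathcal{B}_i(\rvI)$ can violate the threshold. Since $\mu_i(\mathcal{G}_i(E,\vx))\ge 1/\gamma_i$ while the bad-pair mass within $\mathcal{G}_i\times \mathcal{G}_{i-1}$ is at most $c_0/(\gamma_i\gamma_{i-1})$, two applications of Markov (once inside $\mathcal{G}_i$, once inside $\mathcal{G}_{i-1}$) identify sub-neighborhoods of $\mu_i$- and $\mu_{i-1}$-mass bounded away from $0$; choosing $c_0$ small enough leaves enough overlap to pick $\dS_i(E)$ simultaneously compatible with both its predecessor and its successor. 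Telescoping then gives
\[
|Y(E)|\le |Y(E)-Y(\dS_K(E))|+\sum_{i=1}^K|Y(\dS_i(E))-Y(\dS_{i-1}(E))|+|Y(\dS_0(E))| \le O(1) + \sum_{i=1}^K t_i,
\]
and $\sum_i t_i$ is a Riemann sum comparable to $C\sqrt{k}\bigl(\int_0^1 \sqrt{\log N'(\E,\epsilon)}\,d\epsilon + \sqrt{\log(1/\delta)}\bigr)$.

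The main obstacle is establishing \emph{multi-scale consistency} of the chain: picking $\dS_i(E)$ so that it admits both a good predecessor $\dS_{i+1}(E)$ and a good successor $\dS_{i-1}(E)$ is not automatic from a single-scale bound, and requires the two-sided Markov strengthening controlled by $c_0$. This is precisely where fractional covers are more delicate than genuine $\epsilon$-covers, and is the step that fails if one tries to port the classical Dudley chaining verbatim.
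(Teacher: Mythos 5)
Your proposal is correct and takes essentially the same route as the paper: chaining over fractional covers, a martingale concentration bound for each difference pair, a Markov/``fractional union bound'' to control the measure of bad pairs simultaneously at all scales, and then a pigeonhole argument over restricted subsets of the covers to extract a consistent chain for each $E$. The ``multi-scale consistency'' obstacle you flag at the end is exactly what the paper resolves by defining, at each scale $j$, the restricted set $A_j'\subseteq A_j$ of dynamic sets whose bad-partner mass at the adjacent scale is small, and showing $\mu_j(A_j')\ge \tfrac{2}{3}\mu_j(A_j)$; your ``two applications of Markov'' is that same one-Markov-per-scale construction, phrased across a pair of adjacent scales. The only substantive deviations are bookkeeping: you use dyadic scales $\epsilon_i=2^{-i}$ with an explicit $+i+\log(1/\delta)$ term in the thresholds (valid, and standard), whereas the paper instead reparametrizes to scales $\epsilon_j$ with $N'(\E,\epsilon_j)\le 2^{2^j}$ and starts at $j_0\approx\log\log(1/\delta)$, which hides the per-scale $\delta$ allocation inside the doubly-exponential growth; you chain backward from the finest scale while the paper chains forward from $j_0$; and you bound $|Y(\dS)-Y(\dS')|$ directly for pairs with small symmetric difference, while the paper works with truncated set-difference dynamic sets $(\dS_j\setminus\dS_{j+1})_{\le 2\epsilon_j^2 n}$ so that its single-dynamic-set lemma applies. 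None of these change the argument in substance or the final bound.
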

This has the same form as the celebrated Dudley's integral but here we extend it to fractional covering numbers.

Using Lemma~\ref{lem:fractional-cover-littlestone} and Lemma~\ref{lem:fractional-cover-bound}, Lemma~\ref{lem:bnd-p-half} immediately follows. Indeed,
\begin{align*}
\frac{\Disc_{\A,I}}{\sqrt{k}} 
&\le C\lp( \int_{0}^1\sqrt{\log N'(\E,\epsilon)}d\epsilon +\sqrt{\log 1/\delta}\rp)\\
&\le C\lp( \int_{0}^1\sqrt{d\log 1/\epsilon}d\epsilon +\sqrt{\log 1/\delta}\rp)
\le C \lp(\sqrt{d} +\sqrt{\log 1/\delta}\rp).
\end{align*}

\subsubsection{Epsilon Nets}

We have the following statement:
\begin{lemma}\label{lem:concentration-nets}
	Let $\E$ be any family, $\A\in\Adv_{2k}$, $\rvI\sim \Ber(2k,p)$ and let $m \in [2k]$. Then,
	\[
	\Pr_{\rvI}\lp[\Net^{2k}_{\A,\rvI,m,mp/2}=1\rp]
	\le N(\E,0) \exp(-cmp).
	\]
\end{lemma}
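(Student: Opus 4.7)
The plan is to combine the $0$-cover by dynamic sets (\Cref{prop:cover}) with a Chernoff-type martingale argument, in the same spirit as the proof of \Cref{lem:eps-cover-bound}. Let $\mathcal{N}$ be a $0$-cover for $\E$ of size $N(\E,0)$. For every realization of $\rvI = I$ and the resulting stream $\vx = \vx(\A,I)$, and every $E \in \E$, there exists some $\dS \in \mathcal{N}$ such that $\dS(\vx) = E \cap \vx$ as multisets. In particular, on the event $\Net^{2k}_{\A,\rvI,m,mp/2}(\E) = 1$ witnessed by some $E$, the same $\dS$ witnesses $|\dS(\vx)| \ge m$ and $|\dS(\vx) \cap \vx_{\rvI}| \le mp/2$, because $\dS(\vx) \cap \vx_{\rvI} = (E\cap \vx)\cap \vx_{\rvI} = E \cap \vx_{\rvI}$. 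A union bound therefore reduces the proof to showing that for every fixed $\dS \in \mathcal{N}$,
\[
\Pr_{\rvI}\Bigl[\,|\dS(\vx)| \ge m \text{ and } |\dS(\vx) \cap \vx_{\rvI}| \le mp/2\,\Bigr] \le \exp(-cmp).
\]

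To establish this per-$\dS$ bound, fix $\dS$, write $S_t := \mathds{1}[t \in \rvI] \sim \Ber(p)$ (independent across $t$), and $R_t := \mathds{1}[\dS\text{ retains }x_t]$. The key observation is that $R_t$ is $\mathcal{F}_{t-1}$-measurable for the filtration $\mathcal{F}_t = \sigma(S_1,\dots,S_t)$: indeed $x_t = \vx(\A,\rvI)_t$ is a function of the adversary's view $S_1,\dots,S_{t-1}$, while $R_t$ is a deterministic function of $x_1,\dots,x_t$. Thus, despite the adaptive feedback loop, the retention indicators $R_t$ are predictable with respect to the Bernoulli bits $S_t$. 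Let $\tau := \min\{t \le 2k : \sum_{s\le t}R_s = m\}$ and $\tau := 2k+1$ otherwise; this is a stopping time w.r.t.\ $(\mathcal{F}_t)$. On the bad event of interest, $\tau \le 2k$, $\sum_{s\le \tau}R_s = m$ exactly, and $\sum_{s\le \tau} R_s S_s \le \sum_{s\le 2k} R_s S_s \le mp/2$.

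The concentration step is a standard exponential supermartingale argument applied at the stopping time $\tau \wedge 2k$. For each $\lambda > 0$, the process
\[
M_t \;=\; \exp\!\Bigl(-\lambda \sum_{s\le t} R_s S_s \;-\; \log(1-p+pe^{-\lambda})\,\sum_{s\le t} R_s\Bigr)
\]
is a nonnegative martingale with $M_0 = 1$, since $\Ex[e^{-\lambda R_t S_t}\mid\mathcal{F}_{t-1}] = (1-p+pe^{-\lambda})^{R_t}$ by the predictability of $R_t$ and the independence of $S_t$. Applying optional stopping at $\tau\wedge 2k$, Markov's inequality, and optimizing $\lambda$ gives a tail bound of the same form as the multiplicative Chernoff bound $\Pr[\Bin(m,p)\le mp/2]\le \exp(-mp/8)$, yielding $\Pr[\tau \le 2k,\ \sum_{s\le\tau}R_sS_s\le mp/2]\le \exp(-cmp)$ for some absolute $c>0$.

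Summing the per-$\dS$ bound over the at most $N(\E,0)$ elements of $\mathcal{N}$ then completes the proof. The only conceptual subtlety is keeping track of why $R_t$ remains predictable under an adaptive adversary, which is addressed by the measurability observation above; the remaining ingredients are the cover from \Cref{prop:cover} and a textbook martingale Chernoff bound.
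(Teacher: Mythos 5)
Your proof is correct, and it follows the paper's overall decomposition exactly: reduce to a $0$-cover of dynamic sets via \Cref{prop:cover}, prove a per-$\dS$ concentration bound exploiting the fact that the retention indicator $R_t$ is predictable with respect to the Bernoulli bits $S_t$, and finish with a union bound over the at most $N(\E,0)$ dynamic sets in the cover. The only technical divergence is in the per-$\dS$ step: the paper truncates each $\dS$ to $\dS_{\le m}$ (so $|\dS_{\le m}|\le m$) and invokes Freedman's inequality (\Cref{lem:net-concentration-aux}), whereas you achieve the same effect with a stopping time $\tau$ at the moment $m$ elements have been retained, and then run an explicit exponential supermartingale/optional-stopping argument that reproduces the multiplicative Chernoff bound $\Pr[\Bin(m,p)\le mp/2]\le e^{-cmp}$. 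These are two realizations of the same idea — capping the number of retained elements at $m$ and using predictability of $R_t$ — so the substance is the same; your version is slightly more self-contained (it does not invoke Freedman), while the paper's reuses an already-established auxiliary lemma. One small thing worth making explicit if you were to write this up: $\tau$ is predictable (indeed $\{\tau\le t\}\in\mathcal{F}_{t-1}$, not just $\mathcal{F}_t$), and optional stopping for nonnegative martingales at the bounded stopping time $\tau\wedge 2k$ is valid without further conditions, so the argument goes through cleanly.
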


Lemma~\ref{lem:nets-final} follows directly by applying the bound on the covering numbers at scale $0$ for Littlestone classes, presented in Lemma~\ref{prop:cover}. 

\subsubsection{Organization}

Section~\ref{sec:cover-littlestone} contains the proofs of Lemma~\ref{prop:cover} and Lemma~\ref{lem:fractional-cover-littlestone} on the covering numbers for Littlestone classes; Section~\ref{sec:eps-approx-via-cover} contains the proofs of Lemma~\ref{lem:eps-cover-bound} and Lemma~\ref{lem:fractional-cover-bound} on proving $\epsilon$-approximations via covering numbers; and Section~\ref{sec:eps-nets-via-cover} contains the proof of Lemma~\ref{lem:concentration-nets} on proving $\epsilon$-nets via covering numbers.

\subsection{Covering for Littlestone Classes}\label{sec:cover-littlestone}
Section~\ref{sec:cover-0} contains the proof of Lemma~\ref{prop:cover} on the covering numbers at scale $0$, that is based on a known arguments; and Section~\ref{sec:cover-eps} contains the proof of Lemma~\ref{lem:fractional-cover-littlestone}, that builds on the machinery presented in Section~\ref{sec:cover-0}.

\subsubsection{Proof of Lemma~\ref{prop:cover}}\label{sec:cover-0}

Before we prove \Cref{prop:cover}, let us make a couple of comparisons to related literature.
\begin{remark}
	It is fair to note that the proof of this proposition exploits standard and basic ideas from the online learning literature.
	In particular, the constructed dynamic-sets hinge on variants of the {\it Standard Optimal Algorithm} by \cite{Littlestone87},
	and utilize its property of being a {\it mistake-driven} algorithm\footnote{A mistake-driven algorithm updates its internal state only when it makes mistakes.}.
	However, for the benefit of readers who are less familiar with this literature, 
	we provide here a self-contained proof 
	and modify some of the terminology/notation from the language of online learning
	to the language of $\eps$-nets/approximations.
\end{remark}
\begin{remark}
This comment concerns a connection with the celebrated Sauer-Shelah-Perles (SSP) Lemma \cite{Sauer72Lemma}.
	Note that the SSP Lemma is equivalent to a variant of \Cref{prop:cover} in which two quantifiers are flipped.
	Indeed, the SSP Lemma asserts that for every $x_1,\ldots, x_n$ there are at most ${n \choose \leq d}$ sets 
	that realize all possible intersection patterns of the sets in ${\cal E}$ with  $\{x_1,\ldots, x_n\}$.
	That is, if one allows the sets $\dS_i$ to be chosen \underline{after} seeing the entire input-sequence $x_1,\ldots,x_n$
	then the conclusion in \Cref{prop:cover} extends to VC classes 
	(which can have an unbounded Littlestone dimension, as witnessed by the class of thresholds).
\end{remark}
	
\begin{proof}[Proof of \Cref{prop:cover}]
We begin with the upper bound.
The definition of the dynamic sets $\dS_i$ exploits the following property of Littlestone families.
Let $\cal E$ be a family with $\Ldim({\cal E})< \infty$, let $x\in X$, and consider the two ``half-families'' 
\[{\cal E}_{\not\ni x} = \{E\in {\cal E} : x\notin E\}, \quad {\cal E}_{\ni x} = \{E\in {\cal E} : x\in E\}.\]
The crucial observation is that if ${\cal E}\neq\emptyset$ then for every $x\in X$:
\begin{equation}\label{eq:littlestone}
\text{$\Ldim({\cal E}_{\not\ni x})<\Ldim({\cal E})$ or $\Ldim({\cal E}_{\ni x})<\Ldim({\cal E})$.}
\end{equation}
Indeed, otherwise we have $\Ldim({\cal E}_{\not \ni x})=\Ldim({\cal E}_{\ni x})=\Ldim({\cal E})=:d$
	which implies that $\cal E$ shatters the following tree of depth $d+1$:
	the root is labelled with $x$, and the left and right subtrees of the root are trees which witness that the dimensions of ${\cal E}_{\not\ni x}$ 
	and ${\cal E}_{\ni x}$ equal $d$. However, since $\Ldim({\cal E})=d$, this is not possible.

\paragraph{Littlestone Majority Vote.}
\Cref{eq:littlestone} allows to define a notion of majority-vote of a (possibly infinite) family $\cal E$ with a finite Littlestone dimension.
	The intuition is that if $x$ is such that $\Ldim({\cal E}_{\ni x}) = d$ then by \Cref{eq:littlestone}
	it must be that $\Ldim({\cal E}_{\not\ni x}) < d$ and therefore $\Ldim({\cal E}_{\ni x}) >\Ldim({\cal E}_{\not\ni x})$ 
	which we interpret as if $x$ is contained in a ``majority'' of the sets in $\cal E$.
	Similarly, $\Ldim({\cal E}_{\not\ni x}) = d$ is interpreted as if most sets in $\cal E$ do not contain $x$.
	This motivates the following definition
\begin{equation}\label{eq:majority}
\maj({\cal E}) = \{x: \Ldim({\cal E}_{\ni x}) = d\},
\end{equation}
with the convention that $\maj(\emptyset)=\emptyset$.
Observe that $\maj({\cal E})$ shares the following property with the standard majority-vote over finite families:
let $x\in X$ and assume ${\cal E}\neq \emptyset$. Then, 
\begin{align}\label{eq:majprop}
\Bigl((\forall E\in {\cal E}): x\in E\Bigr) \implies x\in \maj({\cal E}) \quad \text{and}\quad
\Bigl((\forall E\in {\cal E}): x\notin E\Bigr) \implies x\notin \maj({\cal E}).
\end{align}
That is, if the sets in ${\cal E}$ agree on $x$ unanimously, then $\maj({\cal E})$ agrees with them on $x$.
We comment that this definition is the basis of the {\it Standard Online Algorithm} which witnesses the online-learnability
of Littlestone classes in the mistake-bound model~\cite{Littlestone87}. 

\vspace{1mm}

We are now ready to define the required family of ${n \choose \leq d}$ dynamic sets.
	Each dynamic set $\dS_I$ is indexed by a subset $I\subseteq [n]$ of size $\lvert I\rvert\leq d$.
	(Hence there are ${n \choose \leq d}$ dynamic sets.) 	
	Below is the pseudo-code of $\dS_I$ for $I\subseteq [n]$.

\begin{center}
\noindent\fbox{
\parbox{.99\columnwidth}{
	\begin{center}\underline{\bf The Dynamic Set $\dS_I$}\end{center}
Let $\mathcal{E}$ be a family with $\Ldim({\cal E})=d$, and let $I\subseteq [n]$.\\
Let $x_1,\ldots, x_n$ denote the (adversarially-produced) input sequence.
\begin{enumerate}
\item Initialize ${\cal E}_0^I = {\cal E}$. 
\item For $t=1,\ldots, n$:
\begin{enumerate}
\item If $t\notin I$ then set ${\cal E}_{t}^I= {\cal E}_{t-1}^I$. 
\item Else, set
\[{\cal E}_{t}^I=
\begin{cases}
 ({\cal E}_{t-1}^I)_{\not\ni x_t} &x_t \in\maj({{\cal E}_{t-1}^I}),\\
 ({\cal E}_{t-1}^I)_{\ni x_t}       &x_t \notin \maj({{\cal E}_{t-1}^I}).
\end{cases}
\]
\item Retain $x_t$ if and only if  $x_t\in \maj({{\cal E}_{t}^I})$.
\end{enumerate}
\end{enumerate}
     }}
\end{center}
Observe the following useful facts regarding $\dS_I$:
\begin{enumerate}
\item The sequence of families $\{{\cal E}_t^I\}_{t=0}^n$ is a chain: ${\cal E}_0^I\supseteq {\cal E}_1^I\supseteq \ldots \supseteq {\cal E}_n^I$.
\item A strict containment ${\cal E}_{t-1}^I \supsetneq {\cal E}_{t}^I$ occurs only if $t\in I$.
\item Whenever a strict containment ${\cal E}_{t-1}^I \supsetneq {\cal E}_{t}^I$ occurs then also $\Ldim({\cal E}_{t-1}^I) > \Ldim({\cal E}_{t}^I)$.
(By \Cref{eq:littlestone,eq:majority}.)
\end{enumerate}

To complete the proof it remains to show that for every $\vx$ and every $E\in {\cal E}$ there exists $I\subseteq [n]$, with $\lvert I\rvert\leq d$
	such that 
	\begin{equation}\label{eq:remains}
	(\forall t\leq n): x_t\in E \iff x_t\in \dS_I(\vx).
	\end{equation}
	We construct the set $I=I(E)$ in a parallel fashion to the above process:
	
\begin{center}
\noindent\fbox{
\parbox{.99\columnwidth}{
	\begin{center}\underline{\bf The Index Set $I=I(E)$}\end{center}
Let $E\in {\cal E}$ and let $x_1,\ldots,x_n$ denote the input sequence.
\begin{enumerate}
\item Initialize ${\cal E}_0^E = {\cal E}$ and $I=\emptyset$. 
\item For $t=1,\ldots, n$:
\begin{enumerate}
\item If $E$ and $\maj({{\cal E}_{t-1}^E})$ agree on $x_t$ (i.e.\ $x_t\in E \iff x_t\in \maj({{\cal E}_{t-1}^E})$) then set ${\cal E}_{t}^E= {\cal E}_{t-1}^E$.
\item Else, add $t$ to $I$ and set
\[{\cal E}_{t}^E=
\begin{cases}
 ({\cal E}_{t-1}^E)_{\not\ni x_t} &x_t \in\maj({{\cal E}_{t-1}^E}) \land x_t\notin E,\\
 ({\cal E}_{t-1}^E)_{\ni x_t}       &x_t \notin\maj({{\cal E}_{t-1}^E}) \land x_t\in E.
\end{cases}
\]
\end{enumerate}
\item Output $I=I(E)$.
\end{enumerate}

     }}
\end{center}	
Note that by construction, ${\cal E}_{t}^E = {\cal E}_{t}^I$ for every $t\leq n$, and $E\in{\cal E}_t^E$ for all $t$.
We need to show that the constructed set $I$ satisfies \Cref{eq:remains} and that $\lvert I\rvert \leq d$.
For the first part, note that for every $t\leq n$ :
\begin{align*}
x_t \in \dS_I(\vx) &\iff x_t\in \maj({{\cal E}_{t}^I}) \tag{by definition of $\dS_I$}\\
		   &\iff x_t\in \maj({\cal E}_{t}^E)\tag{since ${\cal E}_{t}^E = {\cal E}_{t}^I$}\\
		   &\iff x_t\in E,\tag{see below}
\end{align*}
where the last step follows because all the sets $E'\in{{\cal E}_{t}^E}$ agree with $E$ on $x_t$.
Thus, by\footnote{Note that ${\cal E}_{t}^E\neq\emptyset$ because $E\in{\cal E}_t^E$.} Equation \eqref{eq:majprop} also $\maj({{\cal E}_{t}^E})$
agrees with $E$ on $x_t$, which amounts to the last step.

To see that $\lvert I\rvert \leq d$, consider the chain
\[{\cal E}_{0}^E \supseteq {\cal E}_{1}^E\supseteq\ldots \supseteq {\cal E}_{n}^E.\]
Note that strict containments ${\cal E}_{t-1}^E \supsetneq {\cal E}_{t}^E$ occurs only if $t\in I$,
and that whenever such a strict containment occurs, we have $\Ldim({\cal E}_{t-1}^E) > \Ldim({\cal E}_{t}^E)$.
Therefore, since $\Ldim({\cal E}_{0}^E) = d$ and $\Ldim({\cal E}_{n}^E) \geq 0$,
it follows that $\lvert I\rvert\leq d$ as required.

\vspace{2mm}
It remains to prove the lower bound. Let $D=\{\dS_i :1\leq i<2^d\}$ be a family of less than $2^d$ dynamic sets. 
	Pick a tree $\cal T$ of depth~$d$ which is shattered by ${\cal E}$ and define an adversarial sequence $x_1,\ldots,x_d$ as follows:
	\begin{center}
\noindent\fbox{
\parbox{.99\columnwidth}{
	\begin{center}\underline{\bf The Adversarial Sequence $x_1,\ldots,x_d$}\end{center}
\begin{enumerate}
\item Set $\mathcal{T}_1=\mathcal{T}, D_1=D, {\cal E}_1 = {\cal E}$, and $i=1$. 
\item For $i=1,\ldots , d$
\begin{itemize}
\item[(i)] Set $x_i$ to be the item labelling the root of $\mathcal{T}_i$.
\item[(ii)] If less than half of the dynamic sets $\dS_j\in D_i$ retain $x_i$
	then continue the next iteration with $\mathcal{T}_{i+1}$ being the right subtree of $\mathcal{T}_i$ 
	(which corresponds to the sets containing $x_i$), 
	and with ${\cal E}_{i+1} = \{E\in {\cal E}_i : x\in E \}$ and $D_{i+1}=\{\dS_j \in D_i : x_i\in \dS_j\}$.
\item[(iii)] Else, continue to the next iteration with $\mathcal{T}_{i+1}$ being the left subtree of $\mathcal{T}_i$, 
	and with ${\cal E}_{i+1} = \{E\in {\cal E}_i : x\notin E \}$ and $D_{i+1}=\{\dS_j \in D_i : x_i\notin \dS_j\}$.
\end{itemize}
\end{enumerate}
     }}
\end{center}
Note that ${\cal E}_i$ contains all the sets in ${\cal E}$ that are consistent\footnote{$\dS_j$ is consistent with the path corresponding to $x_1,\ldots, x_d$ means that $\dS_j(x_1,\ldots,x_n)$ contains $x_i$ if and only if $x_{i+1}$ labels the right child of the node labelled $x_i$. } with the path corresponding to $x_1,\ldots, x_{i-1}$,
	and similarly $D_i$ contains all dynamic sets in $D$ which are consistent with that path.
	Thus, since $\lvert D_1\rvert = \lvert D \rvert < 2^d$, it follows by construction that $\lvert D_i \rvert <2^i$ for every $i < d$,
	and in particular that $D_d=\emptyset$ at the end of the process. 
	Thus, the set $E\in {\cal E}$ which is consistent with the path corresponding to $x_1,\ldots ,x_d$  
	satisfies $E\cap \{x_1,\ldots x_d\}\neq \dS_i(x_1,\ldots x_d)$ for every $i<2^d$, as required.
\end{proof}

\subsubsection{Proof of Lemma~\ref{lem:fractional-cover-littlestone}}\label{sec:cover-eps}
	For convenience, let us bound $N'(\E,\sqrt{\epsilon})\le (C/\epsilon)^d$.
	We start by defining the fractional cover $\cal{B}$ and then prove its validity. Let $p = 3d/(\epsilon n)$ for a sufficiently large universal constant $C_0$, and $\dS \sim {\cal B}$ is sampled as follows:
	\begin{enumerate}
		\item Select a random subset $\rvI' \subseteq [n]$, where each $i \in [n]$ is selected independently with probability $p$.
		\item Select a subset $\rvI \subseteq \rvI'$ of size $|\rvI|\le d$, uniformly at random from the set of all $\binom{|\rvI'|}{\le d}$ subsets.
		\item Output $\dS = \dS_{\rvI}$.
	\end{enumerate}
	
	To prove that ${\cal B}$ is an $(\epsilon,(C/\epsilon)^d)$-fractional cover, fix some $E \in {\cal E}$ and sequence $\vx$. From the proof of Lemma~\ref{prop:cover}, for any $I'$, there exists $I^* = I^*(I') \subseteq I'$ of size $|I^*| \le d$ such that $(\dS_{I^*}(\vx))_{I'} = E \cap \vx_{I'}$. 
	Denote $\rvI^* = I^*(\rvI')$ where $\rvI'$ is distributed as above.
	
	We will bound for below the probability that $\rvI^*$ satisfies
	\[
	|\dS_{\rvI^*}(\vx) \triangle (E\cap \vx)| 
	\le \epsilon n.
	\]
	Further, we will bound from below the probability that $\rvI = \rvI^*$. Combining this two bounds, this will give a lower bound on the probability that 
	\[
	|\dS_{\rvI}(\vx) \triangle (E\cap \vx)| 
	\le \epsilon n,
	\]
	that suffices to complete the proof.
	
	
	To begin with the first step, notice that
	\begin{equation}\label{eq:346}
	|\dS_{\rvI^*}(\vx) \triangle (E\cap \vx)| 
		= |\{t\in [n]\colon x_t \in \maj({{\cal E}_{t}^{\rvI^*}}) \triangle E \}|. 
	\end{equation}
	To analyze the right hand side of the above quantity, place each time $t \in [n]$ in one of four categories:
	\begin{enumerate}
		\item $x_t \in \maj(\E_{t-1}^{\rvI^*}) \triangle E$ and $t \in \rvI'$.
		\item $x_t \in \maj(\E_{t-1}^{\rvI^*}) \triangle E$ and $t \notin \rvI'$.
		\item $x_t \notin \maj(\E_{t-1}^{\rvI^*}) \triangle E$ and $t \notin \rvI'$
		\item $x_t \notin \maj(\E_{t-1}^{\rvI^*}) \triangle E$ and $t \in \rvI'$.
	\end{enumerate}
	Notice that the above properties apply:
	\begin{itemize}
		\item It holds that $\maj(\E_{t}^{\rvI^*}) \ne \maj(\E_{t-1}^{\rvI^*})$ if and only if $t$ is in category (1).
		\begin{itemize}
		\item For $t$ in category (1) it holds that $t \in \rvI'$ which implies that $x_t \notin \maj(\E_{t}^{\rvI^*}) \triangle E$, since $\dS_{\rvI^*}$ is defined to agree with $E$ on all $t \in \rvI'$. While any $t$ in category (1) satisfies $x_t \in \maj(\E_{t-1}^{\rvI^*}) \triangle E$, this implies that $\maj(\E_{t}^{\rvI^*}) \ne \maj(\E_{t-1}^{\rvI^*})$.
		\item For $t$ in categories (2) and (3) it holds that $t \notin \rvI'$. Since $\rvI^*\subseteq \rvI'$, then $t \notin \rvI^*$. By definition of $\dS_{\rvI^*}$ it holds that $\maj(\E_{t}^{\rvI^*}) = \maj(\E_{t-1}^{\rvI^*})$ whenever $t \notin \rvI^*$.
		\item For $t$ in category (4), it holds that $x_t \notin \maj(\E_{t-1}^{\rvI^*}) \triangle E$. Since $\dS_{\rvI^*}$ agrees with $E$ on $\rvI'$ and since $t\in \rvI'$, it holds that $\dS_{\rvI^*}$ agrees with $E$ on $x_t$, namely, $x_t \notin \maj(\E_{t}^{\rvI^*}) \triangle E$. This implies that $x_t \notin \maj(\E_{t-1}^{\rvI^*}) \triangle \maj(\E_{t}^{\rvI^*})$. By definition of the dynamic set $\dS_{\rvI^*}$ it holds that $\maj(\E_{t-1}^{\rvI^*}) = \maj(\E_{t}^{\rvI^*})$ if and only if $x_t \notin \maj(\E_{t-1}^{\rvI^*}) \triangle \maj(\E_{t}^{\rvI^*})$. In particular, $\maj(\E_{t-1}^{\rvI^*}) = \maj(\E_{t}^{\rvI^*})$ as required.
		\end{itemize}
		\item It holds that $x_t \in \maj(\E_{t}^{\rvI^*}) \triangle E$ if and only if $t$ is in category (2):
		\begin{itemize}
			\item For categories (1) and (4) it holds that $t \in \rvI'$. By definition of $\rvI^*$, $\dS_{\rvI^*}$ and $E$ agree for any $\vx_t$ for $t \in \rvI'$. This implies that $x_t \notin \maj(\E_{t}^{\rvI^*}) \triangle E$.
			\item For categories (2) and (3), it holds that $t \notin \rvI'$ hence $t \notin \rvI^*$ which implies by definition of $\dS_{\rvI^*}$ that $\maj(\E_{t}^{\rvI^*})=\maj(\E_{t-1}^{\rvI^*})$. Since for category (2) we have $x_t \in \maj(\E_{t-1}^{\rvI^*}) \triangle E$, we further have $x_t \in \maj(\E_{t}^{\rvI^*}) \triangle E$. Similarly, in category (3) we have $x_t \notin \maj(\E_{t-1}^{\rvI^*}) \triangle E$ hence $x_t \notin \maj(\E_{t}^{\rvI^*}) \triangle E$. 
		\end{itemize}
	\end{itemize}

	Let $\rv{H}$ (hit) denote the set of all indices $t$ that correspond to case (1) and $\rv{M}$ (miss) denote the set of indices in case (2). We view $\rv{H}$ and $\rv{M}$ as random variables that are functions of the random variable $\rvI'$ (where $\vx$ and $E$ are fixed).
	Since only the elements in $t\in \rv{M}$ satisfy $x_t \in \maj(\E_{t}^{I^*}) \triangle E$, the goal is to upper bound $|\rv{M}|$. In fact, we will upper bound its expected value and then use Markov's inequality to derive tail bounds.
	
	Before bounding $\Ex[|\rv{M}|]$, notice that $|\rv{H}| \le d$. This holds due to the fact that, as described above, for each $t \in \rv{H}$, $\maj(\E_{t}^{\rvI^*}) \ne \maj(\E_{t-1}^{\rvI^*})$. And this can happen at most $d$ times, since $\Ldim(\maj(\E_{t}^{\rvI^*})) <\Ldim(\maj(\E_{t-1}^{\rvI^*}))$ for any such $t$, as described in the proof of Lemma~\ref{prop:cover}.
	
	We proceed with proving that $\Ex|\rv{M}|\le d/p$. Denote $\rv{H} = \{\rv{t}_1,\dots,\rv{t}_{|\rv{H}|}\}$ let $\rv{t}_0 = 0$, and let $\rv{y}_j$ denote the number of elements of $\rv{M}$ between $\rv{t}_{j-1}$ and $\rv{t}_j$: $\rv{y}_j = |\rv{M} \cap \{\rv{t}_{j-1}+1,\rv{t}_{t-1}+2,\dots,\rv{t}_j-1\}|$. Note that $|\rv{M}| = \sum_{j=1}^d \rv{y}_j$. 
	
	We claim that $\mathbb{E}[\rv{y}_j] \le 1/p$. Define 
	\[
	\rv{U}_j := \{t \colon t > \rv{t}_{j-1}, x_j \in \maj(\E^{\rvI^*}_{\rv{t}_{j-1}})\triangle E \}; \quad
	\rv{U}_j^- = \rv{U}_j \cap [t-1].
	\]
	Notice that for any $t$ that satisfies $\rv{t}_{j-1} < t < \rv{t}_j$ it holds that $\maj(\E^{\rvI^*}_{\rv{t}_{j-1}}) = \maj(\E^{\rvI^*}_{t})$, since, as stated above, $\maj(\E^{\rvI^*}_t)$ only changes at iterations $t \in \rv{H}$. This will imply the following:
	\begin{equation}\label{eq:345}
	\rv{M} \cap \{\rv{t}_{j-1}+1,\dots, \rv{t}_j-1 \}
	=\rv{U}_j^-.
	\end{equation}
	For the first direction of \eqref{eq:345}, any $t\in \rv{U}_j^-$ satisfies $t \notin \rv{H}$ by definition, and further it satisfies $x_t \in \maj(\E^{\rvI^*}_{\rv{t}_{j-1}})\triangle E = \maj(\E^{\rvI^*}_{t})\triangle E$ which implies that it is in $\rv{M} \cup \rv{H}$. However, it cannot be in $\rv{H}$ since $\rv{H} = \{\rv{t}_1,\dots,\rv{t}_{|\rv{H}|}\}$. Further, it satisfies $\rv{t}_{j-1} < t < \rv{t}_j$ by definition. For the second direction, any $t$ in the left hand side satisfies $x_t \in \maj(\E^{\rvI^*}_{t})\triangle E = \maj(\E^{\rvI^*}_{\rv{t}_{j-1}})\triangle E$ which implies that it is in $\rv{U}_j$. We derive \eqref{eq:345} which implies that $\rv{y}_j = |\rv{U}_j^-|$.
	
	To estimate $|\rv{U}_j^-|$,
	notice that the first element of $\rv{U}_j$ that is also in $\rvI'$ is $\rv{t}_j$. This follows from the fact that $\maj(\E^{\rvI^*}_{t})$ changes only once an element of $\rv{H}$ is observed. Further, conditioned on $\rv{t}_{j-1}$ and $\rvI'\cap[\rv{t}_{j-1}]$, the set $\rv{U}_j$ is fixed, and conditionally, since any element of $\rv{U}_j$ is in $\rvI'$ with probability $p$, the expected number of elements in $\rv{U}_j$ that are encountered before the first element of $\rvI'$ is bounded by $1/p$. This quantity is exactly $\Ex[\rv{y}_j] = |\rv{U}_j^-|\le 1/p$, and we derive that $\Ex[|\rv M|] \le d/p$.
	From Markov's inequality, $\Pr[|\rv{M}| \le 3d/p] \ge 2/3$.
	
	We have proved that with probability $2/3$, $|\rv{M}|\le 3d/p$. This, from \eqref{eq:346} and from the definitions of $\rv{M}$ and $p$, implies that with probability $2/3$, $|\dS_{\rvI^*}(x) \triangle (E\cap \vx)| \le 3d/p \le \epsilon n$. Further, we want to lower bound the probability that $\rvI = \rvI^*$.
	Notice that $\Pr[\rvI = \rvI^* \mid \rvI'] = 1/\binom{|\rvI'|}{\le d}$, hence it is desirable to show that $|\rvI'|$ is small with high probability. Indeed, since $\mathbb{E}|\rvI'|=np$, by Markov's inequality, $\Pr[|\rvI'| \le 3np] \ge 2/3$. By a union bound,
	\[
	\Pr[|\rvI'| \le 3np, |\dS_{\rvI^*}(x) \triangle (E\cap \vx)| \le \epsilon n] \ge 1/3.
	\]
	We conclude that
	\begin{align*}
	&\Pr\lp[|\dS_\rvI(\vx)\triangle (E\cap \vx)| \le \epsilon n\rp]
	\ge \Pr\lp[|\rvI'| \le 3np, |\dS_{\rvI^*}\triangle (E\cap \vx)| \le \epsilon n, \rvI = \rvI^*\rp]\\
	&\quad= \Pr\lp[|\rvI'| \le 3np, |\dS_{\rvI^*}\triangle (E\cap \vx)| \le \epsilon n\rp] \Pr\lp[\rvI = \rvI^* \mid |\rvI'| \le 3np, |\dS_{\rvI^*}\triangle (E\cap \vx)|\rp]
	\ge \frac{1}{3} \cdot \binom{3np}{\le d}^{-1} \\
	&\quad = \frac{1}{3} \cdot \binom{9d/\epsilon}{\le d}^{-1} \ge \lp(\frac{C}{\epsilon}\rp)^{-d},
	\end{align*}
	using the fact that $\rvI^*$ is a function of $\rvI'$, and conditioned on any value of $\rvI'$, the probability that $\rvI=\rvI^*$ is $\binom{|\rvI'|}{\le d}^{-1}$; and further, that $\binom{n}{k} \le (Cn/k)^k$ for a universal $C>0$.

\subsection{Deriving Bounds on $\epsilon$-Approximation via Fractional Covering Numbers} \label{sec:eps-approx-via-cover}
In this section we prove the concentration results based on covering numbers, starting with results based on deterministic $0$-covers and moving to fractional $\epsilon$-covers.
The following definition will be useful: for any $E \in \E$ define 
$\rv{Y}_E = |E\cap \vx_{\rvI}| - |E\cap \vx_{[2k]\setminus \rvI}|$. Similarly, for any $\dS$, define
$\rv{Y}_{\dS} = |\dS\cap \vx_{\rvI}| - |\dS\cap \vx_{[2k]\setminus \rvI}|$. Notice that
\[
\Disc_{\A,\rvI} = \max_{E\in\E} |\rv{Y}_E|.
\]
\subsubsection{Basic Lemmas for Deterministic Covers and Proof of Lemma~\ref{lem:eps-cover-bound}}
We start with concentration of a single dynamic set:
\begin{lemma}\label{lem:single-DS}
	Let $\dS$ be a dynamic set with $|\dS|\le m$. Let $\rvI\sim \Ber(n,1/2)$. Then, for any $t\ge 0$,
	\[
	\Pr\lp[\lp|\rv{Y}_{\dS} \rp|\ge t\rp]
	\le 2\exp(-t^2/(2m)).
	\]
	Consequently, for any $\delta \in(0,1/2)$, with probability $1-\delta$ it holds that
	\[
	\lp|\rv{Y}_{\dS} \rp|
	\le C \sqrt{m\log(1/\delta)}. 
	\]
\end{lemma}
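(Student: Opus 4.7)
The plan is to view $\rv{Y}_{\dS}$ as a sum of Rademacher variables with a predictable envelope and apply a martingale concentration bound that exploits the sparsity constraint $|\dS|\le m$. Concretely, writing $\rv{Z}_t := \mathds{1}[\dS \text{ retains } x_t]$ and $\rv{\sigma}_t := 2\cdot\mathds{1}[t\in\rvI]-1 \in \{\pm 1\}$, we have
\[
\rv{Y}_\dS = \sum_{t=1}^n \rv{Z}_t\rv{\sigma}_t, \qquad \sum_{t=1}^n \rv{Z}_t \le m \text{ a.s.}
\]
Let $\mathcal{F}_t := \sigma(\mathds{1}[1\in\rvI],\dots,\mathds{1}[t\in\rvI])$. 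I would first verify the two measurability facts that make $\rv{Z}_t\rv{\sigma}_t$ a martingale difference sequence: (i)~$\rv{\sigma}_t$ is a Rademacher random variable independent of $\mathcal{F}_{t-1}$, which is immediate from the Bernoulli$(1/2)$ sampler; and (ii)~$\rv{Z}_t$ is $\mathcal{F}_{t-1}$-measurable, because $x_t=\vx(\A,\rvI)_t$ depends only on $\rvI\cap[t-1]$ (the adversary sees only past samples), and the dynamic set decides whether to retain $x_t$ as a function of $x_1,\dots,x_t$ alone.

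The key step is an MGF bound that converts the sparsity constraint into the denominator of the Gaussian tail. For any $\lambda \ge 0$, using $\cosh\lambda \le e^{\lambda^2/2}$,
\[
\Ex[e^{\lambda \rv{Z}_t\rv{\sigma}_t}\mid \mathcal{F}_{t-1}]
= \rv{Z}_t\cosh\lambda + (1-\rv{Z}_t)
\le e^{\lambda^2 \rv{Z}_t/2}.
\]
Defining $\rv{M}_t := \exp\bigl(\lambda\sum_{s\le t}\rv{Z}_s\rv{\sigma}_s - \tfrac{\lambda^2}{2}\sum_{s\le t}\rv{Z}_s\bigr)$, an iterated application of the tower property (using that $\rv{Z}_s$ is $\mathcal{F}_{s-1}$-measurable, so it can be pulled out of the conditional expectation) shows that $(\rv{M}_t)$ is a supermartingale with $\Ex\rv{M}_n \le 1$. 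Since $\sum_s \rv{Z}_s \le m$ deterministically, on the event $\{\rv{Y}_\dS \ge t\}$ one has $\rv{M}_n \ge e^{\lambda t - \lambda^2 m/2}$; Markov then gives $\Pr[\rv{Y}_\dS \ge t] \le e^{-\lambda t + \lambda^2 m/2}$, and optimizing at $\lambda = t/m$ yields $e^{-t^2/(2m)}$. Symmetry (applied to $-\rv{Y}_\dS$) and a union bound produce the factor of $2$. The second ``consequently'' assertion follows immediately by setting $t = \sqrt{2m\log(2/\delta)}$.

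The main subtlety—and essentially the only place to go wrong—is the measurability argument combined with the need to replace $n$ by $m$ in the exponent. A naive application of Azuma--Hoeffding with $|\rv{Z}_t\rv{\sigma}_t|\le 1$ would yield only $\exp(-t^2/(2n))$, failing to exploit $|\dS|\le m$; the supermartingale trick above is precisely what swaps the worst-case envelope for the predictable quadratic variation $\sum_s \rv{Z}_s \le m$. One might also worry that the dynamic set's retain/discard decisions couple with $\rvI$ and break the martingale structure, but the adversarial model is set up so that $\dS$ never directly observes $\rvI$: it only sees the stream, and the $t$-th stream element is revealed strictly before the $t$-th sampling coin is flipped, hence is $\mathcal{F}_{t-1}$-measurable.
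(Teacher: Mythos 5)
Your proof is correct. The decomposition $\rv{Y}_\dS=\sum_t \rv{Z}_t\rv{\sigma}_t$ with $\rv{Z}_t$ predictable ($\mathcal{F}_{t-1}$-measurable, since $x_t$ depends only on $\rvI\cap[t-1]$ and $\dS$'s retain decision depends only on $x_1,\dots,x_t$) and $\rv{\sigma}_t$ Rademacher is exactly the right way to formalize the martingale structure, and the exponential-supermartingale/Chernoff computation correctly trades the worst-case envelope for the predictable quadratic variation $\sum_t\rv{Z}_t\le m$.

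The paper's proof reaches the same conclusion by a shorter route: it invokes a cited black-box martingale inequality (de la Peña 1999, Theorem~6.1, stated here as a lemma) which gives a Gaussian tail in terms of an almost-sure bound $s$ on $\sum_i|\rv{y}_i-\rv{y}_{i-1}|^2$ under a conditional-symmetry hypothesis. With the paper's choice $\rv{y}_i=|\dS\cap I\cap[i]|-|\dS\cap([n]\setminus I)\cap[i]|$, each increment is $\pm 1$ when $\dS$ retains $x_i$ and $0$ otherwise, so $\sum_i|\rv{y}_i-\rv{y}_{i-1}|^2\le m$; the symmetry hypothesis holds for the same predictability reason you identified. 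Your supermartingale argument is essentially a self-contained re-derivation of this de la Peña bound in the special Rademacher case, so the two proofs are doing the same thing at different levels of abstraction: the paper buys brevity by citing, while yours buys self-containment and makes explicit exactly why the sparsity constraint $|\dS|\le m$ (rather than the horizon $n$) governs the concentration rate. Both are fine; no gap.
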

For the proof of Lemma~\ref{lem:single-DS}, we need the following Martingale lemma (notice that an overview on Martingales is given in Section~\ref{sec:martingales})
\begin{lemma}[\cite{victor1999general}, Theorem 6.1] \label{lem:martingale-victor}
	Let $\rv{y}_0,\dots,\rv{y}_n$ be a Martingale adapted to the filtration $F_0,\dots,F_n$, such that $\sum_{i=1}^n |\rv{y}_i - \rv{y}_{i-1}|^2\le s$ holds almost surely for some $s > 0$. Assume that for all $i \in [n]$, conditioned on $F_{i-1}$, $\rv{y}_i - \rv{y}_{i-1}$ is a symmetric random variable (namely, it has the same conditional distribution as $\rv{y}_{i-1}-\rv{y}_i$). Then, for any $t>0$,
	\[
	\Pr\lp[|\rv{y}_n - \rv{y}_0|\ge t \rp]
	\le 2\exp(-t^2/(2s)).
	\]
\end{lemma}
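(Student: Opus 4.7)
The plan is to rewrite $\rv{Y}_{\dS}$ as a signed sum over the stream and recognize it as a martingale with symmetric increments whose quadratic variation is controlled by $m$, so that Lemma~\ref{lem:martingale-victor} applies directly.

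\medskip

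First I would introduce the random signs $\rv{\sigma}_t = 2\cdot\mathds{1}(t\in\rvI) - 1 \in \{-1,+1\}$, which are i.i.d.\ uniform on $\{\pm 1\}$ since $\rvI \sim \Ber(n,1/2)$. Let $\rv{z}_t \in \{0,1\}$ be the indicator that $\dS$ retains $x_t$, so that $\sum_t \rv{z}_t = |\dS(\vx)| \le m$. Splitting by the coloring,
\[
\rv{Y}_{\dS}
= \sum_{t=1}^{n} \rv{z}_t \cdot \mathds{1}(t\in\rvI)
- \sum_{t=1}^{n} \rv{z}_t \cdot \mathds{1}(t\notin\rvI)
= \sum_{t=1}^n \rv{z}_t\, \rv{\sigma}_t.
\]
Setting $\rv{y}_t := \sum_{s \le t} \rv{z}_s \rv{\sigma}_s$ with $\rv{y}_0 = 0$, we have $\rv{Y}_{\dS} = \rv{y}_n - \rv{y}_0$.

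\medskip

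Next I would take the filtration $F_t := \sigma(\rv{\sigma}_1,\dots,\rv{\sigma}_t)$. Since the adversary $\A$ is deterministic, $x_t$ is a function of $x_1,\dots,x_{t-1}$ and $\rvI\cap[t-1]$, hence of $\rv{\sigma}_1,\dots,\rv{\sigma}_{t-1}$; and $\rv{z}_t$, being a function of $x_1,\dots,x_t$ by the definition of a dynamic set, is therefore $F_{t-1}$-measurable. Consequently $\rv{y}_t$ is adapted to $F_t$, and conditional on $F_{t-1}$ the increment $\rv{y}_t - \rv{y}_{t-1} = \rv{z}_t \rv{\sigma}_t$ equals a constant times an independent uniform $\pm 1$; this makes $(\rv{y}_t)$ a martingale and the increments conditionally symmetric. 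For the quadratic variation,
\[
\sum_{t=1}^n (\rv{y}_t - \rv{y}_{t-1})^2
= \sum_{t=1}^n \rv{z}_t^2
= \sum_{t=1}^n \rv{z}_t
\le m
\]
almost surely, since $\rv{z}_t \in \{0,1\}$ and $|\dS|\le m$.

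\medskip

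All hypotheses of Lemma~\ref{lem:martingale-victor} are then met with $s = m$, yielding
\[
\Pr\bigl[|\rv{Y}_{\dS}| \ge t\bigr]
= \Pr\bigl[|\rv{y}_n - \rv{y}_0| \ge t\bigr]
\le 2\exp\!\left(-\frac{t^2}{2m}\right),
\]
which is the first conclusion. The ``consequently'' claim follows by taking $t = \sqrt{2m\log(2/\delta)}$ and absorbing constants into $C$.

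\medskip

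There is no real obstacle here; the only point that requires care is the measurability check ensuring that $\rv{z}_t$ does not depend on $\rv{\sigma}_t$ itself. This is where the adversarial structure of the model matters: because $x_t$ is committed \emph{before} the sampler reveals whether $t\in\rvI$, the retention decision $\rv{z}_t$ is truly $F_{t-1}$-measurable, allowing us to treat the coloring sign $\rv{\sigma}_t$ as a fresh Rademacher variable at step $t$.
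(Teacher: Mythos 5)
Your proposal does not prove the statement in question. The statement is the general martingale inequality itself (\Cref{lem:martingale-victor}, the de la Pe\~na-type bound $\Pr[|\rv{y}_n-\rv{y}_0|\ge t]\le 2\exp(-t^2/(2s))$ for a martingale with conditionally symmetric increments and $\sum_i|\rv{y}_i-\rv{y}_{i-1}|^2\le s$ a.s.), whereas what you prove is its \emph{application} to a single dynamic set, i.e.\ essentially \Cref{lem:single-DS}: you define the signed sum $\rv{Y}_{\dS}=\sum_t \rv{z}_t\rv{\sigma}_t$, check adaptedness, conditional symmetry and $\sum_t(\rv{y}_t-\rv{y}_{t-1})^2\le m$, and then invoke \Cref{lem:martingale-victor} ``directly.'' Relative to the assigned statement this is circular: the inequality you were asked to establish is used as a black box. (For what it is worth, your argument is a correct and slightly more careful version of the paper's own one-line proof of \Cref{lem:single-DS}, in particular the measurability point that $\rv{z}_t$ is determined before the sign $\rv{\sigma}_t$ is revealed; but that is a different lemma. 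The paper itself does not prove \Cref{lem:martingale-victor}; it cites it from the literature.)

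If you want an actual proof of \Cref{lem:martingale-victor}, the missing idea is an exponential supermartingale built from the conditional symmetry. Writing $\rv{d}_i=\rv{y}_i-\rv{y}_{i-1}$, conditional symmetry gives
\[
\Ex\bigl[\exp(\lambda \rv{d}_i)\,\big|\,F_{i-1},|\rv{d}_i|\bigr]=\cosh(\lambda \rv{d}_i)\le \exp\lp(\lambda^2 \rv{d}_i^2/2\rp),
\]
so $\Ex\bigl[\exp\lp(\lambda \rv{d}_i-\lambda^2\rv{d}_i^2/2\rp)\,\big|\,F_{i-1}\bigr]\le 1$, and hence $\exp\lp(\lambda(\rv{y}_j-\rv{y}_0)-\tfrac{\lambda^2}{2}\sum_{i\le j}\rv{d}_i^2\rp)$ is a supermartingale. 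Using $\sum_i\rv{d}_i^2\le s$ almost surely, Markov's inequality yields $\Pr[\rv{y}_n-\rv{y}_0\ge t]\le \exp(\lambda^2 s/2-\lambda t)$, and optimizing $\lambda=t/s$ gives $\exp(-t^2/(2s))$; the same bound for $-(\rv{y}_n-\rv{y}_0)$ and a union bound produce the factor $2$. Without some argument of this kind (or an explicit appeal to the cited theorem as an external result, which is what the paper does), your write-up leaves the assigned statement unproved.
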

\begin{proof}[Proof of Lemma~\ref{lem:single-DS}]
	This follows directly from Lemma~\ref{lem:martingale-victor}. Indeed, we apply this lemma with $\rv{y}_i = |\dS\cap I \cap [i]| - |\dS\cap ([n]\setminus I)\cap [i]|$ and $s = m$.
\end{proof}

We are ready to prove Lemma~\ref{lem:eps-cover-bound}.

\begin{proof}[Proof of Lemma~\ref{lem:eps-cover-bound}]
	Notice that it suffices to prove that for any $t \ge 0$,
	\[
	\Pr[\Disc_{\A,\rvI} > t] \le 
	2N(\E, 0)\exp(-t^2/(4k)).
	\]
	Let $\mathcal{N}$ be a minimal $0$-net for $\E$. Applying Lemma~\ref{lem:single-DS} with $m=2k$, for any $\dS \in \mathcal{N}$,
	\[
	\Pr[|\rv{Y}_E| > t] \le 2\exp(-t^2/4k)
	\]
	Applying a union bound over $\dS\in \mathcal{N}$,
	\[
	\Pr[\Disc_{\A,\rvI} > t]
	= \Pr[\max_{E\in\E} |\rv{Y}_E|>t]
	\le \Pr[\max_{\dS\in\mathcal{N}} |\rv{Y}_\dS|>t]
	\le 2N(\E, 0)\exp(-t^2/(4k)).
	\]
\end{proof}

\subsubsection{Basic Lemmas for Fractional Covers}
To give intuition about fractional covers, we prove a variant of Lemma~\ref{lem:eps-cover-bound}. First, we start with an auxiliary lemma the replaces the union bound:
\begin{lemma}\label{lem:aux-union}
	Let $\{\rv{y}_j\}_{j\in J}$ denote random variables over $\{0,1\}$ where $J$ is some index set, and assume that for any $j \in J$, $\Pr[\rv{y}_j = 1] \le p$, for some $p > 0$. Let $\mu$ denote some probability measure over $J$ and let $\alpha > 0$. Then,
	\[
	\Pr_{\{\rv{y}_j\}_{\colon j\in J}}\lp[\mu(\{j \colon \rv{y}_j = 1\})
	\ge \alpha\rp] \le p/\alpha.
	\]
\end{lemma}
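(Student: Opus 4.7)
The plan is to recognize this as a straightforward application of Fubini's theorem followed by Markov's inequality. Define the random variable
\[
\rv{Z} := \mu(\{j \in J : \rv{y}_j = 1\}) = \int_J \rv{y}_j \, d\mu(j),
\]
where the second equality uses that $\rv{y}_j \in \{0,1\}$, so $\rv{y}_j$ is the indicator of the event $\{j : \rv{y}_j = 1\}$ and its integral against $\mu$ recovers the $\mu$-measure of that (random) set.

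The first step is to bound $\mathbb{E}[\rv{Z}]$. By Fubini/Tonelli (the integrands are non-negative), we can swap the expectation over the $\rv{y}_j$'s with the integration over $\mu$:
\[
\mathbb{E}[\rv{Z}] = \mathbb{E}\left[\int_J \rv{y}_j \, d\mu(j)\right] = \int_J \mathbb{E}[\rv{y}_j] \, d\mu(j) = \int_J \Pr[\rv{y}_j = 1] \, d\mu(j) \le \int_J p \, d\mu(j) = p,
\]
where the last equality uses that $\mu$ is a probability measure.

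The second step is to apply Markov's inequality to the non-negative random variable $\rv{Z}$:
\[
\Pr[\rv{Z} \ge \alpha] \le \frac{\mathbb{E}[\rv{Z}]}{\alpha} \le \frac{p}{\alpha},
\]
which is exactly the desired bound. There is no real obstacle here; the only mild subtlety is to ensure measurability so that Fubini applies, but since the statement is used in contexts where $J$ is naturally a measurable space with $\mu$ a probability measure and the $\rv{y}_j$'s are jointly measurable, this is not an issue. Conceptually, the lemma says that a union bound can be replaced by an average: instead of controlling the supremum over $J$, we control only the $\mu$-mass of ``bad'' indices, which is what is needed to exploit fractional covers in place of genuine covers.
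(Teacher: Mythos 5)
Your proof is correct and matches the paper's argument essentially verbatim: both express $\mu(\{j : \rv{y}_j = 1\})$ as $\int_J \rv{y}_j\,d\mu$, swap the expectation with the integral to bound $\Ex[\rv{Z}] \le p$, and conclude via Markov's inequality. The only difference is cosmetic: you invoke Fubini/Tonelli by name and comment on measurability, whereas the paper just writes ``linearity of expectation.''
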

\begin{proof}
	Notice that by linearity of expectation,
	\[
	\Ex[\mu(\{j \colon \rv{y}_j = 1\})]
	= \Ex \int \rv{y}_j d\mu
	= \int \Ex[\rv{y}_j] d\mu
	\le \int p d\mu
	= p.
	\]
	The proof follows by Markov's inequality.
\end{proof}
The following lemma applies Lemma~\ref{lem:aux-union} specifically for distributions over dynamic sets.
\begin{lemma}\label{lem:aux-union-ds}
	Let $\mu$ be a probability measure over dynamic sets with $|\dS|\le m$ for all $\dS$ in the support of $\mu$. Then, for any $\delta' > 0$, with probability at least $1-\delta'$ over $\rvI\sim \Ber(n,1/2)$ it holds that
	\[
	\mu\lp( \lp\{\dS\colon \lp|\rv{Y}_\dS \rp| \le C \sqrt{m\log (1/(\delta'\alpha))}\rp\} \rp)\ge 1-\alpha.
	\]
\end{lemma}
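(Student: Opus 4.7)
The plan is to combine the single-dynamic-set concentration bound (Lemma~\ref{lem:single-DS}) with the measure-based union bound (Lemma~\ref{lem:aux-union}), using the latter as the appropriate fractional analogue of a classical union bound.

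First I would fix a threshold $\tau = C\sqrt{m\log(1/(\delta'\alpha))}$, where $C$ is the constant from Lemma~\ref{lem:single-DS}. For any fixed dynamic set $\dS$ in the support of $\mu$, the assumption $|\dS|\le m$ together with Lemma~\ref{lem:single-DS} (invoked with confidence parameter $\delta'\alpha$ in place of $\delta$) yields
\[
\Pr_{\rvI\sim\Ber(n,1/2)}\bigl[\lvert \rv{Y}_\dS\rvert > \tau\bigr] \le \delta'\alpha.
\]

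Next I would set up Lemma~\ref{lem:aux-union}. Take the index set $J$ to be the support of $\mu$, and for each $\dS \in J$ define the $\{0,1\}$-valued random variable $\rv{y}_\dS := \mathds{1}\bigl[\lvert \rv{Y}_\dS\rvert > \tau\bigr]$. The previous display says $\Pr[\rv{y}_\dS = 1] \le \delta'\alpha$ for every $\dS$. Applying Lemma~\ref{lem:aux-union} with $p = \delta'\alpha$ and the threshold parameter $\alpha$ gives
\[
\Pr_{\rvI}\Bigl[\mu\bigl(\{\dS \colon \rv{y}_\dS = 1\}\bigr) \ge \alpha\Bigr] \le \frac{\delta'\alpha}{\alpha} = \delta'.
\]
Equivalently, with probability at least $1-\delta'$ over $\rvI$, the $\mu$-measure of the set of dynamic sets $\dS$ for which $\lvert \rv{Y}_\dS\rvert > \tau$ is strictly less than $\alpha$. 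Taking complements in $\mu$ yields exactly the claimed bound
\[
\mu\Bigl(\bigl\{\dS \colon \lvert \rv{Y}_\dS\rvert \le C\sqrt{m\log(1/(\delta'\alpha))}\bigr\}\Bigr) \ge 1-\alpha
\]
with probability at least $1-\delta'$.

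There is essentially no obstacle here — the work is really in Lemma~\ref{lem:single-DS} (Martingale concentration via Lemma~\ref{lem:martingale-victor}) and in Lemma~\ref{lem:aux-union} (a Markov-type averaging). The only thing to be careful about is the bookkeeping of the two parameters $\delta'$ and $\alpha$: one should choose the per-dynamic-set failure probability to be $\delta'\alpha$ so that, after dividing by $\alpha$ in the Markov step, the outer probability of failure comes out exactly to $\delta'$. One may also want to note that measurability issues are not a concern because $\rv{y}_\dS$ depends measurably on $\dS$ through the deterministic mapping $\dS \mapsto \lvert \rv{Y}_\dS \rvert$ for any fixed realization of $\rvI$, so Fubini-type exchanges in the proof of Lemma~\ref{lem:aux-union} are justified.
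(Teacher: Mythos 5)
Your proof is correct and follows exactly the paper's approach: invoke Lemma~\ref{lem:single-DS} with failure probability $\delta'\alpha$ to bound the per-dynamic-set deviation, then apply the Markov-type averaging of Lemma~\ref{lem:aux-union} with $p=\delta'\alpha$, and take complements. (In fact, your version fixes a small typo in the paper, which defines $\rv{y}_\dS$ as the indicator of the event $|\rv{Y}_\dS|\le C\sqrt{m\log(1/(\delta'\alpha))}$ rather than $>$, which is what the argument requires.)
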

\begin{proof}
	Apply Lemma~\ref{lem:aux-union} with $\rv{y}_{\dS}$ being the indicator that $\lp|\rv{Y}_{\dS} \rp| \le C \sqrt{m\log (1/(\delta'\alpha))}$ and $p = \alpha\delta'$. If $C$ is a sufficiently large constant, it follows from Lemma~\ref{lem:single-DS} that $\Pr[\rv{y}_{\dS}=1]\le \alpha\delta'$ and Lemma~\ref{lem:aux-union} can be applied to derive the desired result.
\end{proof}
Using Lemma~\ref{lem:aux-union-ds}, one can derive bounds for epsilon approximation based on fractional covering numbers at scale $0$:
\begin{lemma}
	Let $\A\in\Adv_n$, $\rvI\sim \Bin(n,1/2)$ and $\delta > 0$. Then, with probability $1-\delta$,
	\[
	\Disc_{\A,\rvI} \le C\sqrt{n\lp(\log N'(\E,0) + \log (1/\delta)\rp)}.
	\]
\end{lemma}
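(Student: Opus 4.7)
The plan is to closely mimic the proof of Lemma~\ref{lem:eps-cover-bound}, but replace the exact cover by a fractional one, and replace the union bound over $\mathcal{N}$ by a pigeonhole-style argument supplied by Lemma~\ref{lem:aux-union-ds}. Fix $\gamma = N'(\E,0)$ and let $\mu$ be a $(0,\gamma)$-fractional cover for $\E$. Since any dynamic set trivially satisfies $|\dS|\le n$, Lemma~\ref{lem:aux-union-ds} applies with $m=n$.

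Apply Lemma~\ref{lem:aux-union-ds} with $m=n$, $\delta'=\delta$, and $\alpha = 1/(2\gamma)$. This yields that with probability at least $1-\delta$ over $\rvI\sim \Ber(n,1/2)$, the random set
\[
\mathcal{G}_\rvI := \Bigl\{\dS : |\rv{Y}_\dS| \le C\sqrt{n\log(2\gamma/\delta)}\Bigr\}
\]
satisfies $\mu(\mathcal{G}_\rvI) \ge 1 - 1/(2\gamma)$. Call this event $\Omega$ and condition on it for the rest of the argument.

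Now fix any $E\in \E$ and write $\vx = \vx(\A,\rvI)$. By the defining property of a $(0,\gamma)$-fractional cover,
\[
B_E := \bigl\{\dS : \dS(\vx) = E\cap \vx\bigr\}
\]
has $\mu(B_E)\ge 1/\gamma$. Since $\mu(\mathcal{G}_\rvI) + \mu(B_E) > 1$, the two sets must intersect; pick any $\dS_E \in \mathcal{G}_\rvI \cap B_E$. The equality $\dS_E(\vx) = E\cap \vx$ forces the set of indices retained by $\dS_E$ to coincide with $\{t : x_t\in E\}$, and consequently $\rv{Y}_{\dS_E} = \rv{Y}_E$. Combining with $\dS_E \in \mathcal{G}_\rvI$ gives $|\rv{Y}_E|\le C\sqrt{n\log(2\gamma/\delta)}$. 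Taking the maximum over $E\in \E$ and absorbing the factor of $2$ into $C$ via $\log(2\gamma/\delta)\le C'(\log\gamma + \log(1/\delta))$ yields the claimed bound.

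The only genuinely delicate point is the pigeonhole step; the rest is bookkeeping. This is precisely the reason fractional covers are useful in place of exact covers: rather than identifying a single dynamic set that represents $E$, we only need the measure of $E$-representatives to be large enough to intersect the (large-measure) collection of dynamic sets with small discrepancy. An alternative way to phrase the same argument, should one prefer, is to observe that $\Pr_\rvI[|\rv{Y}_E|\ge t]\le 1/\gamma$ for an appropriate $t$ and then combine Fubini with Markov on $\Pr_{\dS\sim\mu}[|\rv{Y}_\dS|\ge t]$; both routes give the same bound up to constants.
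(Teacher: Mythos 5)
Your proof is correct and follows essentially the same route as the paper: apply Lemma~\ref{lem:aux-union-ds} with $m=n$, $\alpha = 1/(2\gamma)$, $\delta'=\delta$, then use the fractional-cover property and the fact that $\mu(\mathcal{G}_\rvI)+\mu(B_E)>1$ to find a $\dS$ with $\rv{Y}_\dS = \rv{Y}_E$ that also has small discrepancy. In fact your write-up is cleaner: the paper's own statement of the measure bound has a typo (it writes $\ge 1/(2N'(\E,0))$ where the lemma actually gives $\ge 1-1/(2N'(\E,0))$, which is what the pigeonhole step requires and what you correctly state).
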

\begin{proof}
	Let $\mu$ be a $(0,N'(\E,0))$-fractional cover for $\E$.
	We apply Lemma~\ref{lem:aux-union-ds} with $m=n$, $\alpha = 1/(2N'(\E,0))$, $\delta'=\delta$ and $\mu=\mu$ to get that with probability $1-\delta$, 
	\[
	\mu\lp( \dS\colon |\rv{Y}_{\dS}| \le C \sqrt{n (\log N'(\E,0)+\log(1/\delta))} \rp) \ge \frac{1}{2N'(\E,0)}.
	\]
	Whenever this holds, for every $E \in \E$ there exists $\dS \in \E$ that $E\cap \vx = \dS(\vx)$ which implies that $\rv{Y}_{\dS} = \rv{Y}_E$. Hence,
	\[
	\Disc_{\A,\rvI}
	= \max_{E\in\E} |\rv{Y}_E|
	\le \max_{\dS\in\mathcal{N}} |\rv{Y}_\dS|
	\le C \sqrt{n (\log N'(\E,0)+\log(1/\delta))}.
	\]
\end{proof}
\subsubsection{Chaining for Non-Fractional Covers}
The proof of Lemma~\ref{lem:fractional-cover-bound} follows the technique of chaining. Before presenting the proof for fractional covers, we start by presenting an outline of the proof for non-fractional covers, while obtaining a similar bound with $N(\E,\epsilon)$ instead of $N'(\E,\epsilon)$. The proof follows from standard techniques. Some technicalities are ignored for the sake of presentation.

Let $1 = \epsilon_0 > \epsilon_1 > \cdots$ be a non-increasing sequence of values with $\lim_{i\to\infty}\epsilon_i = 0$.
We take nets $\mathcal{N}_0,\mathcal{N}_1,\dots$, where each $\mathcal{N}_i$ is an $\epsilon_i$ net for $\E$. Each $E \in \E$ we approximate using elements from the different nets: for any $E \in \E$, $i \ge 0$ and $\vx \in X^n$, let $\dS_{E,i,\vx}$ denote an arbitrarily chosen dynamic set $\dS \in \mathcal{N}_i$ that satisfies $|\dS_{E,i,\vx}(\vx) \triangle (E \cap \vx)| \le \epsilon_i^2 n$. Further, define the random variable over dynamic sets $\rv{B}_{E,i} = \dS_{E,i,\rvvx}$ and notice that $|\rv{B}_{E,i}(\rvvx) \triangle (E \cap \rvvx)| \le \epsilon_i^2 n$. Since $\epsilon_i\to 0$, we have $\rv{Y}_{\rv{B}_{E,i}}\to \rv{Y}_E$, hence,
\begin{equation}\label{eq:72}
|\rv{Y}_E| = \lp| \rv{Y}_{\rv{B}_{E,0}} + \sum_{i=1}^\infty (\rv{Y}_{\rv{B}_{E,i}} - \rv{Y}_{\rv{B}_{E,i-1}}) \rp|
\le \lp|\rv{Y}_{\rv{B}_{E,0}}\rp| + \sum_{i=1}^\infty \lp|\rv{Y}_{\rv{B}_{E,i}} - \rv{Y}_{\rv{B}_{E,i-1}} \rp|.
\end{equation}
The hope is for the sum in the right hand side of \Cref{eq:72} to converge. Indeed, notice that 
\begin{equation}\label{eq:73}
|\rv{B}_{E,i} \triangle \rv{B}_{E,i-1}| \le |\rv{B}_{E,i} \triangle E|+ |E\triangle\rv{B}_{E,i-1}| \le (\epsilon_i^2 + \epsilon_{i-1}^2)n \le 2 \epsilon_{i-1}^2 n,
\end{equation}
hence, as $i$ increases, the differences $\lp|\rv{Y}_{\rv{B}_{E,i}} - \rv{Y}_{\rv{B}_{E,i-1}} \rp|$ tend to decrease.
Taking a maximum over $E\in \E$ in \eqref{eq:72}, we have
\begin{equation}\label{eq:75}
\max_{E \in \E} |\rv{Y}_E|
\le \max_{E \in \E} \lp|\rv{Y}_{\rv{B}_{E,0}}\rp|
+  \sum_{i=1}^\infty \max_{E \in \E} \lp|\rv{Y}_{\rv{B}_{E,i}} - \rv{Y}_{\rv{B}_{E,i-1}} \rp|.
\end{equation}
We show how to bound the summand corresponding to any $i\ge 1$ while term $\max_{E \in \E} \lp|\rv{Y}_{\rv{B}_{E,0}}\rp|$ can be similarly bounded. Since $\rv{Y}_{\rv{B}_{E,i}} \in \mathcal{N}_i$ and $\rv{Y}_{\rv{B}_{E,i-1}} \in \mathcal{N}_{i-1}$, there can be at most $|\mathcal{N}_{i-1}||\mathcal{N}_i| = N(\E,\epsilon_{i-1})N(\E,\epsilon_i)$ distinct differences $\rv{Y}_{\rv{B}_{E,i}} - \rv{Y}_{\rv{B}_{E,i-1}}$.
Intuitively, as $i$ increases, the maximum is taken over more elements, however, the individual differences are smaller, and the hope is that the sum in \Cref{eq:75} would converge.

Using \Cref{eq:73}, we can apply Lemma~\ref{lem:single-DS} with $m=2\epsilon_{i-1}^2 n$ to obtain that each distance $\rv{Y}_{\rv{B}_{E,i}} - \rv{Y}_{\rv{B}_{E,i-1}}$ is bounded by $C\epsilon_i \sqrt{n \log(1/\delta')}$ with probability $1-\delta'$. Taking $\delta'$ smaller than $1/N(\E,\epsilon_{i-1})N(\E,\epsilon_i)$ and applying a union bound over at most $N(\E,\epsilon_{i-1})N(\E,\epsilon_i)$ elements, we derive that with high probability,
\begin{equation}\label{eq:76}
\max_{E \in \E} \lp|\rv{Y}_{\rv{B}_{E,i}} - \rv{Y}_{\rv{B}_{E,i-1}} \rp| 
\le O\lp(\epsilon_{i-1}\sqrt{n \log N(\E,\epsilon_{i})}\rp).
\end{equation}

We further take a union bound over $i \ge 0$ and derive by \eqref{eq:75} and \eqref{eq:76} that w.h.p,
\[
\max_{E\in \E}|\rv{Y}_E| \le O\lp(\sqrt{n} \sum_{i=1}^\infty \epsilon_{i-1} \sqrt{ \log N(\E,\epsilon_{i})}\rp).
\]
A standard choice for $\epsilon_i$ is $\epsilon_i = 2^{-i}$, and this yields
\begin{equation} \label{eq:dudleys}
\max_{E\in \E}|\rv{Y}_E| \le O\lp(\sqrt{n} \sum_{i=1}^\infty 2^{-i} \sqrt{ \log N(\E,2^{-i})}\rp)
\le O\lp(\sqrt{n} \int_{\epsilon=0}^1 \sqrt{ \log N(\E,\epsilon)}\rp),
\end{equation}
where the last inequality is by approximating the sum with an integral. This, in fact, is the celebrated \emph{Dudley's integral}.
\begin{remark}
	The choice of $\epsilon_i  = 2^{-i}$ can generally only yield bounds on $\Ex \max_{E\in \E}|\rv{Y}_E|$, rather than high probability bounds. It is common in literature to obtain high probability bounds by first bounding the expectation $\Ex \max_{E\in \E}|\rv{Y}_E|$ and then showing that this maximum concentrates around its expectation, via McDiarmid-like inequalities. However, such concentration inequalities cannot be applied in the adversarial setting. Hence we use instead a different well-studied choice of $\epsilon_i$ that directly gives high probability bounds.
\end{remark}

\subsubsection{Proof of Lemma~\ref{lem:fractional-cover-bound}}
\label{sec:proof-of-fractional-cover}

The proof is by the standard technique of chaining, with adaptations to handle fractional covering numbers.
Our goal is to bound $\max_{E \in \E} \rv{Y}_E$.
The main ideas is to create finer and finer approximations for $\E$ using fractional epsilon nets.
Formally, let $\epsilon_j$ denote the minimal value of $\epsilon$ such that $N'(\E,\epsilon_j) \le 2^{2^j}$.
We will derive the following bound: with probability $1-\delta$,
\begin{equation} \label{eq:equivalent-chaining-bd}
\max_{E\in \E} |\rv{Y}_E|
\le C\sqrt{n}\lp( \sqrt{\log(1/\delta)}
+ \sum_{j=0}^\infty \epsilon_j \sqrt{\log N'(\E,\epsilon_j)}\rp)
\le C\sqrt{n}\lp( \sqrt{\log(1/\delta)}
+ \sum_{j=0}^\infty \epsilon_j 2^{j/2}\rp).
\end{equation}
This series in the right hand side is known to be equal, up to constant factors, to the following integral, which is known as Dudley's integral. We include the proof for completeness.
\begin{lemma}\label{lem:dudley-equivalence}
\[
\sum_{j=0}^\infty \epsilon_j 2^{j/2}
\le \frac{\sqrt{2}}{\sqrt{2}-1}\int_{0}^1 \sqrt{\log N'(\E, \epsilon)} d\epsilon.
\]
\end{lemma}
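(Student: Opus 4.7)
The plan is to bound the series by lower-bounding the integral via a piecewise-constant minorant, then relate the resulting telescoping expression back to $\sum_j \epsilon_j 2^{j/2}$ via a standard Abel-summation rearrangement. The main observation to exploit is the defining property of $\epsilon_j$: since $\epsilon_j$ is the minimal $\epsilon$ with $N'(\E,\epsilon) \le 2^{2^j}$, and $N'(\E,\cdot)$ is non-increasing in $\epsilon$, it follows that for every $\epsilon < \epsilon_j$ one has $N'(\E,\epsilon) > 2^{2^j}$, and hence (taking $\log = \log_2$) $\sqrt{\log N'(\E,\epsilon)} > 2^{j/2}$. In particular, $(\epsilon_j)_{j\ge 0}$ is non-increasing, and $\epsilon_0 \le 1$ because the trivial dynamic cover shows $N'(\E,1) = 1 \le 2$.

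First I would split the integral along the decreasing sequence $1 \ge \epsilon_0 \ge \epsilon_1 \ge \cdots \to 0$ (appending $\epsilon_{-1} := 1$ is not necessary since $\epsilon_0 \le 1$). Applying the pointwise bound on each slab yields
\[
\int_0^1 \sqrt{\log N'(\E,\epsilon)}\,d\epsilon
\;\ge\; \sum_{j=0}^{\infty} \int_{\epsilon_{j+1}}^{\epsilon_j} \sqrt{\log N'(\E,\epsilon)}\,d\epsilon
\;\ge\; \sum_{j=0}^{\infty} (\epsilon_j - \epsilon_{j+1})\, 2^{j/2}.
\]
Next I would rearrange the right-hand side by Abel summation:
\[
\sum_{j=0}^{\infty}(\epsilon_j-\epsilon_{j+1})2^{j/2}
= \epsilon_0 + \sum_{j=1}^{\infty} \epsilon_j\bigl(2^{j/2}-2^{(j-1)/2}\bigr)
= \epsilon_0 + \frac{\sqrt{2}-1}{\sqrt{2}} \sum_{j=1}^{\infty} \epsilon_j\, 2^{j/2}.
\]
Finally, since $\epsilon_0 \ge \tfrac{\sqrt{2}-1}{\sqrt{2}}\epsilon_0 = \tfrac{\sqrt{2}-1}{\sqrt{2}}\epsilon_0\, 2^{0/2}$, one can absorb the $j=0$ term into the same constant and conclude
\[
\int_0^1 \sqrt{\log N'(\E,\epsilon)}\,d\epsilon
\;\ge\; \frac{\sqrt{2}-1}{\sqrt{2}} \sum_{j=0}^{\infty} \epsilon_j\, 2^{j/2},
\]
which is the desired inequality after multiplying both sides by $\sqrt{2}/(\sqrt{2}-1)$.

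There is no real obstacle here; the only minor subtlety is justifying that $\epsilon_0 \le 1$ (so the integral over $[0,1]$ captures the first slab) and being careful about whether the inequality defining $\epsilon_j$ is strict, which only affects measure-zero sets and thus does not affect the integral lower bound. I would also note briefly that the log base (which is implicit in the statement) is taken consistently between the integrand and the definition of $\epsilon_j$ (base $2$), since changing the base only multiplies both sides of the inequality by the same constant.
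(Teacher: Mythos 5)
Your proof is correct and follows essentially the same chain of reasoning as the paper's, passing through the same intermediate quantity $\sum_{j}(\epsilon_j-\epsilon_{j+1})2^{j/2}$ and the same pointwise bound $\sqrt{\log N'(\E,\epsilon)} > 2^{j/2}$ on each slab $[\epsilon_{j+1},\epsilon_j)$. Where the paper goes ``forward'' (telescope $\epsilon_j=\sum_{i\ge j}(\epsilon_i-\epsilon_{i+1})$, swap the double sum by Tonelli since all terms are nonnegative, evaluate the inner geometric sum, then pass to the integral), you go ``backward'' (lower-bound the integral, then Abel-sum). These two manipulations are algebraically equivalent; the swap-and-geometric-sum step is exactly summation by parts in disguise.

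The one point you gloss over is that Abel summation produces, for partial sums, a boundary term $-\epsilon_{N+1}2^{N/2}$, so the identity
\[
\sum_{j=0}^{\infty}(\epsilon_j-\epsilon_{j+1})2^{j/2}
= \epsilon_0 + \sum_{j=1}^{\infty}\epsilon_j\bigl(2^{j/2}-2^{(j-1)/2}\bigr)
\]
requires $\epsilon_{N+1}2^{N/2}\to 0$, not merely $\epsilon_N\to 0$. This is easy to repair: if $\sum_j(\epsilon_j-\epsilon_{j+1})2^{j/2}<\infty$, then $\epsilon_N 2^{N/2}=\sum_{i\ge N}(\epsilon_i-\epsilon_{i+1})2^{N/2}\le \sum_{i\ge N}(\epsilon_i-\epsilon_{i+1})2^{i/2}\to 0$ (tail of a convergent nonnegative series), so the boundary term does vanish; and if that series diverges, your integral lower bound already forces the integral to be $+\infty$ and the inequality is vacuous. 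The paper's Tonelli-based double-sum swap avoids having to mention this at all, which is the only concrete advantage of its presentation.
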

\begin{proof}
Notice that
\begin{align*}
\sum_{j=0}^\infty \epsilon_j 2^{j/2}
&\le \sum_{j=0}^\infty \sum_{i=j}^\infty (\epsilon_i - \epsilon_{i+1}) 2^{j/2}
= \sum_{i=0}^\infty \sum_{j \le i} (\epsilon_i - \epsilon_{i+1}) 2^{j/2} \\
&= \sum_{i=0}^\infty (\epsilon_i - \epsilon_{i+1}) \frac{\sqrt{2}^{i+1}-1}{\sqrt{2}-1} 
\le \frac{\sqrt{2}}{\sqrt{2}-1}\sum_{i=0}^\infty (\epsilon_i - \epsilon_{i+1}) 2^{i/2}.
\end{align*}
Further, by definition of $\epsilon_i$ we have that for any $\epsilon< \epsilon_i$, $N'(\E,\epsilon) > 2^{2^i}$, hence
\[
\sqrt{\log N'(\E,\epsilon)} > 2^{i/2}.
\]
This implies that
\[
\sum_{i=0}^\infty (\epsilon_i - \epsilon_{i+1}) 2^{i/2}
\le \sum_{i=0}^\infty \int_{\epsilon_{i+1}}^{\epsilon_i} \sqrt{\log N'(\E,\epsilon)}
\le \int_{0}^1 \sqrt{\log N'(\E,\epsilon)}.
\]
\end{proof}

We start with some definitions; for any dynamic sets $\dS$ and $\dS'$ and $E \in \E$:
\begin{itemize}
	\item Let $\dS\setminus\dS'$ be defined by $(\dS\setminus\dS')(x_1,\dots,x_n) = \dS(x_1,\dots,x_n) \setminus \dS'(x_1,\dots,x_n)$. Notice that $\dS\setminus\dS'$, as defined, is a dynamic set.
	\item Define $\dS_{\le m}$ as the dynamic sets that simulates $\dS$ up to the point where $\dS$ has $m$ elements, and then it stops adding elements.
	\item Let $\rvvx = \vx(\A,\rvI)$ denote the stream that is output by the adversary $\A$ in interaction with the sampler that samples the coordinates from $\rvI$. 
\end{itemize}

Let $\mu_j$ be a probability measure over dynamic sets that is a fractional $(\epsilon_j,N'(\E, \epsilon_j))$-cover for $\E$. 
We will approximate each $E \in \E$ using dynamic sets $\pi_{E,0}, \pi_{E,1},\pi_{E,2},\dots$, where $\pi_{E,j} \in \support(\mu_j)$ is an $\epsilon$-approximation for $E$, namely,
\[
|(E \cap \rvvx)\triangle \pi_{E,j}(\rvvx)| \le \epsilon^2 n,
\]
Notice that by definition of (fractional) covers, $\pi_{E,j}$ may depend on the stream $\rvvx$, 
which is a random variable, hence $\pi_{E,j}$ is also a random variable. The following lemma shows that for a sufficiently large $j$, $\pi_{E,j}(\rvvx)$ equals $E\cap\rvvx$.
\begin{lemma}\label{lem:j1}
	Assume that $N'(\E,\epsilon)<\infty$ for all $\epsilon>0$. Then there exists $j_1>0$ such that for all $E \in \E$ and all $j\ge j_1$, $E\cap \rvvx = \pi_{E,j}(\rvvx)$ holds with probability $1$ over $\rvvx$. Consequently, $\rv{Y}_{E} = \rv{Y}_{\pi_{E,j}}$ for all $j \ge j_1$.
\end{lemma}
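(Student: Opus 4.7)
The plan is short because the claim is essentially a pigeonhole-style observation: the symmetric difference $|(E\cap\rvvx)\triangle \pi_{E,j}(\rvvx)|$ is a nonnegative integer bounded above by a quantity tending to $0$, so it must eventually equal $0$. Throughout, let $N$ denote the (fixed, finite) stream length in the ambient context of Lemma~\ref{lem:fractional-cover-bound} (so $N=2k$).

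First I would record the deterministic bound coming from the definition of $\pi_{E,j}$. Since $\pi_{E,j}\in\support(\mu_j)$ was chosen to $\epsilon_j$-approximate $E$ on the realized stream $\rvvx$, we have
\[
|(E\cap\rvvx)\triangle \pi_{E,j}(\rvvx)| \le \epsilon_j^2 N
\]
for every outcome of $\rvvx$. The left-hand side is a nonnegative integer, so it must equal $0$ whenever $\epsilon_j^2 N < 1$.

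Second, I would verify that $\epsilon_j \to 0$ as $j\to\infty$. By definition $\epsilon_j$ is the smallest $\epsilon$ with $N'(\E,\epsilon)\le 2^{2^j}$, so the sequence $(\epsilon_j)$ is non-increasing in $j$. The hypothesis $N'(\E,\epsilon)<\infty$ for every $\epsilon>0$ implies that for each such $\epsilon$ there is $j$ large enough with $2^{2^j}\ge N'(\E,\epsilon)$, forcing $\epsilon_j\le\epsilon$; hence $\lim_{j\to\infty}\epsilon_j=0$.

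Combining these, I would pick $j_1$ to be any index with $\epsilon_{j_1}^2 N < 1$; such an index exists by the limit. Monotonicity then gives $\epsilon_j^2 N<1$ for all $j\ge j_1$, so $E\cap\rvvx=\pi_{E,j}(\rvvx)$ holds for every realization of $\rvvx$, and in particular with probability one. The consequence $\rv{Y}_E=\rv{Y}_{\pi_{E,j}}$ is immediate, since both $\rv{Y}$'s depend on the respective set only through its intersection with $\rvvx$ (i.e.\ through $\rvvx_{\rvI}$ and $\rvvx_{[N]\setminus \rvI}$), and these intersections coincide once $E\cap\rvvx=\pi_{E,j}(\rvvx)$. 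There is no substantive obstacle; the only thing to note is that $j_1$ depends on the fixed stream length $N$, which is permitted.
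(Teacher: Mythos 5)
Your proposal is correct and matches the paper's proof essentially step for step: both arguments observe that the definition of $\pi_{E,j}$ gives the deterministic bound $|(E\cap\rvvx)\triangle\pi_{E,j}(\rvvx)|\le\epsilon_j^2 n$, use the hypothesis $N'(\E,\epsilon)<\infty$ to conclude $\epsilon_j\to0$, pick $j_1$ with $\epsilon_{j_1}<n^{-1/2}$, and then invoke integer-valuedness to force the symmetric difference to vanish. The only difference is that you spell out the monotonicity of $(\epsilon_j)_j$ and the passage from a single index to all $j\ge j_1$ a bit more explicitly than the paper does, which is fine.
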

\begin{proof}
The assumption of the lemma implies that for some $j$, $\epsilon_j < n^{-1/2}$ and let $j_1$ be the minimal such value. By definition of $\pi_{E,j}$ we have that for all $j\ge j_1$, $|(E \cap \rvvx)\triangle \pi_{E,j}(\rvvx)| < 1$, hence $E \cap \rvvx=\pi_{E,j}(\rvvx)$ as required.
\end{proof}
We can assume that $N'(\E,\epsilon)<\infty$ for all $\epsilon>0$ otherwise Dudley's integral (appearing in Lemma~\ref{lem:fractional-cover-bound}) would diverge. Hence, by Lemma~\ref{lem:j1}, for any $j_0\ge 0$,
\begin{align}
|\rv{Y}_E| 
&:= \lp| \rv{Y}_{\pi_{E,j_0}} + \sum_{j=j_0}^\infty \lp(\rv{Y}_{\pi_{E,j+1}}-\rv{Y}_{\pi_{E,j}}\rp) \rp|
\le \lp| \rv{Y}_{\pi_{E,j_0}}\rp| + \sum_{j=j_0}^\infty \lp| \rv{Y}_{\pi_{E,j+1}}-\rv{Y}_{\pi_{E,j}} \rp|\\
&= \lp| \rv{Y}_{\pi_{E,j_0}}\rp|+ \sum_{j=j_0}^\infty \lp| \rv{Y}_{\pi_{E,j+1}\setminus \pi_{E,j}}-\rv{Y}_{\pi_{E,j}\setminus\pi_{E,j+1}} \rp|
\le \lp| \rv{Y}_{\pi_{E,j_0}}\rp|+\sum_{j=j_0}^\infty \lp| Y_{\pi_{E,j+1}\setminus \pi_{E,j}}\rp| + \sum_{j=j_0}^\infty\lp|\rv{Y}_{\pi_{E,j}\setminus\pi_{E,j+1}} \rp|.\label{eq:deviation-decomp}
\end{align}
We bound the supremum over $E\in \E$ by taking the supremum over each term separately:
\begin{equation}\label{eq:chain-general}
\sup_{E\in\E}|\rv{Y}_E|
\le \sup_{E\in\E}\lp| \rv{Y}_{\pi_{E,j_0}}\rp| + \sum_{j=j_0}^\infty \sup_{E\in\E}\lp| \rv{Y}_{\pi_{E,j+1}\setminus \pi_{E,j}}\rp| + \sum_{j=j_0}^\infty \sup_{E\in\E}\lp|\rv{Y}_{\pi_{E,j}\setminus\pi_{E,j+1}} \rp|.
\end{equation}
Each supremum will be bounded using the generalized union bound Lemma~\ref{lem:aux-union}. 
\if 0
\begin{remark}
	\yuval{instead, describe what happens in the deterministic cover setting}
Before proceeding with the formal argument, we give some intuition for the proof. Notice that as $j$ increases, the size of the cover increases, hence the loss due to the generalized union bound is increased as well. On the other hand, the terms $|\rv{Y}_{\pi_{E,j+1}\setminus \pi_{E,j}}|$ and $|\rv{Y}_{\pi_{E,j}\setminus\pi_{E,j+1}}|$ corresponding to individual values of $E$ concentrate better as $j$ increases. This is due to that fact that
\begin{align*}
&|\pi_{E,j}(\rvvx)\setminus \pi_{E,j+1}(\rvvx)|
\le |\pi_{E,j}(\rvvx)\triangle \pi_{E,j+1}(\rvvx)|\\
&\le |\pi_{E,j}(\rvvx)\triangle (E\cap \rvvx)|
+ |(E\cap \rvvx)\triangle \pi_{E,j+1}(\rvvx)|
\le (\epsilon_j^2 + \epsilon_{j+1}^2)n,
\end{align*}
and $\epsilon_j$ decreases as $j$ increases. Hence, $\rv{Y}_{\pi_{E,j}(\rvvx)\setminus \pi_{E,j+1}(\rvvx)}$ decreases as well, although, this hides many important details. In \yuval{ref}, the term $2^{j/2} = \sqrt{\log 2^{2^j}} =  \sqrt{\log N'(\E, \epsilon_j)}$ corresponds to the generalized union bound, and the term $\epsilon_j$ corresponds to the individual deviations of the $Y$-variables.
\end{remark}
\fi

Next, we define measures over differences of dynamic sets, that will be used to bound the right hand side of \eqref{eq:chain-general}.
For any $j \ge 1$ we let $\mu_{j,0}$ denote a probability measure over dynamic sets such that $\dS\sim \mu_{j,0}$ is drawn by first drawing $\dS_j \sim \mu_j$ and $\dS_{j+1} \sim \mu_{j+1}$ and then outputting $\dS = (\dS_j \setminus \dS_{j+1})_{\le 2\epsilon_j^2}$. Similarly, let $\mu_{j,1}$ denote the measure that outputs $(\dS_{j+1} \setminus \dS_{j})_{\le 2\epsilon_j^2 n}$. By the generalized union bound, we have the following:
\begin{lemma} \label{lem:aux-chaining}
	Let $j_0 = \lceil\log_2\log_2 (1/\delta)\rceil$. With probability at least $1-\delta$, the following holds:
	\begin{itemize}
	\item It holds that
	\[
	\mu_{j_0}\lp( \lp\{ \dS\in \support(\mu_{j_0}) 
	\colon |\rv{Y}_{\dS}| > C_0 \sqrt{\log(1/\delta)}\sqrt{n}
	\rp\} \rp)
	\le \frac{1}{3 \cdot 2^{2^{j_0}}}.
	\]
	\item For all $j \ge j_0$ and all $b \in \{0,1\}$,
	\[
	\mu_{j,b}\lp( \lp\{ \dS\in \support(\mu_{j,b}) 
	\colon |\rv{Y}_{\dS}| > C_0 \epsilon_{j} 2^{j/2}\sqrt{n}
	\rp\} \rp)
	\le \frac{1}{18 \cdot 2^{2^j}\cdot 2^{2^{j+1}}}.
	\]
	\end{itemize}
\end{lemma}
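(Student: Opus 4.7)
The plan is to apply the generalized union bound (Lemma~\ref{lem:aux-union-ds}) separately to each measure appearing in the statement, tuning the parameters $\alpha$ and $\delta'$ so that (i) the quantitative bound on $|\rv{Y}_\dS|$ matches what is claimed, and (ii) the sum of the failure probabilities over all invocations is at most $\delta$. No new probabilistic input is needed beyond Lemma~\ref{lem:aux-union-ds}; the only work is bookkeeping.

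First I would handle the bullet about $\mu_{j_0}$. Every $\dS$ in $\support(\mu_{j_0})$ is a dynamic set on a stream of length $n$, so trivially $|\dS|\le n$. Apply Lemma~\ref{lem:aux-union-ds} with $m=n$, $\alpha = \tfrac{1}{3\cdot 2^{2^{j_0}}}$, and $\delta'_0 = \delta/3$. Because $j_0 = \lceil \log_2\log_2(1/\delta)\rceil$, we have $2^{j_0}\le 2\log_2(1/\delta)$, so $\log(1/(\delta'_0\alpha))$ is $O(\log(1/\delta))$. Hence with probability at least $1-\delta/3$, the $\mu_{j_0}$-measure of the $\dS$'s with $|\rv{Y}_\dS|>C_0\sqrt{n\log(1/\delta)}$ is at most $\alpha$, as required.

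Next I would handle the second bullet. By construction every $\dS\in\support(\mu_{j,b})$ is of the form $(\dS_j\setminus\dS_{j+1})_{\le 2\epsilon_j^2 n}$ or its reverse, so $|\dS|\le 2\epsilon_j^2 n$. Apply Lemma~\ref{lem:aux-union-ds} with $m=2\epsilon_j^2 n$, $\alpha_j = \tfrac{1}{18\cdot 2^{2^j}\cdot 2^{2^{j+1}}}$, and a per-level failure budget $\delta'_j$ that I will choose in a moment. Since $\log(1/\alpha_j) = O(2^j)$, the resulting bound on $|\rv{Y}_\dS|$ is
\[
C\sqrt{2\epsilon_j^2 n\bigl(\log(1/\delta'_j)+O(2^j)\bigr)},
\]
which simplifies to $C_0\epsilon_j 2^{j/2}\sqrt{n}$ as soon as $\log(1/\delta'_j)=O(2^j)$.

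Finally I would set $\delta'_j = \delta/(C_1 \cdot 2^{j-j_0})$ for a sufficiently large absolute constant $C_1$. Then $\sum_{j\ge j_0}\sum_{b\in\{0,1\}}\delta'_j \le 2\delta/3$, and for each $j\ge j_0$ one has $\log(1/\delta'_j)\le \log(1/\delta)+O(j)\le O(2^{j_0})+O(j) = O(2^j)$ (here I use $2^{j_0}\ge \log_2(1/\delta)$ and $j\ge j_0$). Combining this with the $\delta/3$ from the first bullet, a union bound over all failure events yields the statement with total failure probability at most $\delta$. The only subtle step is verifying $\log(1/\delta'_j)=O(2^j)$ uniformly in $j\ge j_0$, which is where the particular choice $j_0=\lceil\log_2\log_2(1/\delta)\rceil$ becomes essential; otherwise the contribution of $\log(1/\delta)$ to the deviation bound would not be absorbed into the $\epsilon_j 2^{j/2}$ term.
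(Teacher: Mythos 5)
Your proposal is correct and follows essentially the same route as the paper's proof: both bullets are handled by invoking Lemma~\ref{lem:aux-union-ds} with $m=n$ (resp.\ $m=2\epsilon_j^2 n$) and a suitable per-level failure budget summing to $\delta$, and the key arithmetic $2^{j_0}\asymp\log_2(1/\delta)$ together with $j\le 2^j$ is used in both to absorb $\log(1/\delta'_j)$ and $\log(1/\alpha_j)$ into $O(2^j)$. The only difference is cosmetic: the paper allocates $\delta/2$ to the first bullet and $2^{-j-3}\delta$ per $(j,b)$ pair, whereas you use $\delta/3$ and $\delta/(C_1 2^{j-j_0})$; both choices yield a geometric series summing to at most $\delta$.
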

\begin{proof}
	The proof follows directly from Lemma~\ref{lem:aux-union-ds}. First, we show that the first item holds with probability $1-\delta/2$: it follows by substituting in Lemma~\ref{lem:aux-union-ds} the values $m=n$, $\delta'=\delta/2$ and $\alpha=\frac{1}{3 \cdot 2^{2^{j_0}}}$, and notice that 
	\[
	\log_2(1/\alpha)
	= \log_2 3 + 2^{j_0}
	\le C \cdot \log(1/\delta),
	\]
	by definition of $j_0$.
	
	For the second item, we show that the term corresponding to a specific $j\ge j_0$ and $b\in \{0,1\}$ holds with probability $1- 2^{-j-3} \delta$.
	Indeed, we can substitute $m=2\epsilon_j^2 n$, $\delta'=2^{-j-3}\delta$ and $\alpha=\frac{1}{18 \cdot 2^{2^j}\cdot 2^{2^{j+1}}}$, and notice that
	\[
	\log_2\frac{1}{\delta'}
	= \log_2\frac{1}{\delta} + j + 3
	\le C \cdot 2^j,
	\]
	since $j \ge j_0$ and by definition of $j_0$.
	Further,
	\[
	\log_2\frac{1}{\alpha}
	= \log_2 18 +2^{j+2}
	\le C \cdot 2^j.
	\]
	
	By a union bound, the failure probability is bounded by
	\[
	\delta/2 + 2 \cdot \sum_{j=j_0}^\infty 2^{-j-3}\delta 
	\le \delta/2+ 2^{-j_0-1}\delta 
	\le \delta/2 + \delta/2
	= \delta.
	\]
\end{proof}
For the remainder of the proof, we fix some stream $\vx$ such that the condition in Lemma~\ref{lem:aux-chaining} holds when $\rvvx=\vx$. This fixes values $\{Y_E\}_{E\in\E}$ and $\{Y_{\dS}\}_{\dS \text{ dynamic set}}$ such that $\rv{Y}_E = Y_E$ and $\rv{Y}_\dS = Y_\dS$ for all $E$ and $\dS$. We will show that for any $E \in \E$
\[
|\rv{Y}_E| \le C \sqrt{n \log(1/\delta)} + C \sum_{j=0}^\infty 2^{j/2} \epsilon_j \sqrt{n},
\]
which suffices to complete the proof from Lemma~\ref{lem:dudley-equivalence}.
Fix $E \in \E$; we show how to define $\pi_{E,j}$. First, for any $j \ge j_0$, let $A_j$ denote the set of elements $\dS\in \support(\mu_j)$ such that $|(E \cap \vx)\triangle \dS(\vx)| \le \epsilon_j^2 n$. By the property of the fractional cover $\mu_j$, we know that
\begin{equation} \label{eq:measure-Aj}
\mu_j(A_j) \ge 1/2^{2^j}.
\end{equation}
The set $A_j$ contains the candidates for $\pi_{E,j}$. Notice that in order to bound \eqref{eq:chain-general}, we would like to bound $Y_{\pi_{E,j}\setminus \pi_{E,j-1}}$ and $Y_{\pi_{E,j-1}\setminus \pi_{E,j}}$. For this purpose,
we now define for any $j \ge j_0$ the function $R \colon \support(\mu_j) \times \support(\mu_{j+1}) \to \{0,1\}$, that indicates which pairs of elements $\dS_j,\dS_{j+1}$ are not suitable to be defined as $\pi_{E,j}$ and $\pi_{E,j+1}$:
\[
R(\dS_j,\dS_{j+1})
= \begin{cases}
 1 & \max\lp(|Y_{\dS_j\setminus \dS_{j+1}}|,
 |Y_{\dS_{j+1}\setminus \dS_{j}}|\rp) 
 > C_0\epsilon_j 2^{j/2}\sqrt{n},\ \dS_j \in A_j,\ \dS_{j+1} \in A_{j+1} \\
 0 & otherwise 
\end{cases},
\]
where $C_0$ is the constant from Lemma~\ref{lem:aux-chaining}.
Next, we further restrict the set of candidates by creating a set $A'_j\subseteq A_j$, that contains only dynamic sets $\dS_j\in A_j$ such that for many elements $\dS_{j+1}\in A_{j+1}$, the pair $(\dS_j,\dS_{j+1})$ is suitable, which is formally defined as:
\[
A'_{j}
= \lp\{ \dS_j \in A_{j} \colon 
	\mu_{j+1}\lp(\lp\{ \dS_{j+1}\colon R(\dS_j,\dS_{j+1})=1 \rp\} \rp)
	\le \frac{1}{3\cdot 2^{2^{j+1}}}
\rp\}.
\]
Next, we lower bound the measure $\mu_j(A'_j)$:
\begin{lemma}\label{lem:bnd-Ajprime}
Assume that the high probability event from Lemma~\ref{lem:aux-chaining} holds. Then, for any $j\ge j_0$,
\[
\mu_j(A'_j)
\ge \frac{2}{3\cdot 2^{2^{j}}}.
\]
\end{lemma}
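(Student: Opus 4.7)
The strategy is to combine the fractional cover lower bound $\mu_j(A_j) \ge 1/2^{2^j}$ from \eqref{eq:measure-Aj} with a Markov-type argument that transfers the ``discrepancy is small on average under $\mu_{j,b}$'' statement of Lemma~\ref{lem:aux-chaining} into a statement about the conditional measure of bad second coordinates $\dS_{j+1}$ for most choices of $\dS_j$.

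First I would verify that the truncation in the definition of $\mu_{j,0}$ and $\mu_{j,1}$ is vacuous on the ``good'' region $A_j \times A_{j+1}$. Indeed, for $\dS_j \in A_j$ and $\dS_{j+1} \in A_{j+1}$, the triangle inequality gives
\[
|\dS_j(\vx)\setminus \dS_{j+1}(\vx)| \le |\dS_j(\vx)\triangle \dS_{j+1}(\vx)| \le \epsilon_j^2 n + \epsilon_{j+1}^2 n \le 2\epsilon_j^2 n,
\]
and similarly for $\dS_{j+1}\setminus \dS_j$. Thus on such pairs the map $(\dS_j,\dS_{j+1})\mapsto (\dS_j\setminus \dS_{j+1})_{\le 2\epsilon_j^2 n}$ coincides with the plain set-difference. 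Hence, since $\mu_{j,0}$ is the pushforward of $\mu_j\times \mu_{j+1}$ under this map, the second item of Lemma~\ref{lem:aux-chaining} implies
\[
(\mu_j\times\mu_{j+1})\!\lp(\{(\dS_j,\dS_{j+1})\in A_j\times A_{j+1} : |Y_{\dS_j\setminus\dS_{j+1}}| > C_0\epsilon_j 2^{j/2}\sqrt{n}\}\rp) \le \frac{1}{18\cdot 2^{2^j}\cdot 2^{2^{j+1}}},
\]
and the analogous bound holds with $\dS_{j+1}\setminus\dS_j$ (via $\mu_{j,1}$). A union bound then yields
\[
(\mu_j\times\mu_{j+1})\!\lp(\{(\dS_j,\dS_{j+1}) : R(\dS_j,\dS_{j+1})=1\}\rp) \le \frac{1}{9\cdot 2^{2^j}\cdot 2^{2^{j+1}}}.
\]

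Next I apply Markov's inequality in the first coordinate. Writing the joint measure as $\Ex_{\dS_j\sim\mu_j}\lp[\mu_{j+1}(\{\dS_{j+1} : R(\dS_j,\dS_{j+1})=1\})\rp]$, the $\mu_j$-measure of those $\dS_j$ for which the inner $\mu_{j+1}$-measure exceeds $1/(3\cdot 2^{2^{j+1}})$ is at most
\[
\frac{1/(9\cdot 2^{2^j}\cdot 2^{2^{j+1}})}{1/(3\cdot 2^{2^{j+1}})} = \frac{1}{3\cdot 2^{2^j}}.
\]
Therefore $\mu_j(A_j\setminus A'_j) \le 1/(3\cdot 2^{2^j})$, and combining with $\mu_j(A_j)\ge 1/2^{2^j}$ gives
\[
\mu_j(A'_j) \ge \frac{1}{2^{2^j}} - \frac{1}{3\cdot 2^{2^j}} = \frac{2}{3\cdot 2^{2^j}},
\]
which is the desired bound. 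The only substantive point is the truncation check in the first paragraph; once that is in place, the rest is a two-step Markov argument.
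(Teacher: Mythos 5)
Your proof is correct and follows essentially the same route as the paper: check that the truncation in the definitions of $\mu_{j,0},\mu_{j,1}$ is vacuous on $A_j\times A_{j+1}$ so the second item of Lemma~\ref{lem:aux-chaining} bounds $(\mu_j\times\mu_{j+1})(\{R=1\})$ by $\tfrac{1}{9\cdot 2^{2^j}\cdot 2^{2^{j+1}}}$, then apply Markov's inequality in the $\dS_j$-coordinate to conclude $\mu_j(A_j\setminus A_j')\le \tfrac{1}{3\cdot 2^{2^j}}$, and subtract from the fractional-cover bound $\mu_j(A_j)\ge 2^{-2^j}$. The paper organizes the same argument by first writing out $\Ex_{\dS_j\sim\mu_j}\Ex_{\dS_{j+1}\sim\mu_{j+1}}[R(\dS_j,\dS_{j+1})]$ explicitly, but the content and constants agree.
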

\begin{proof}
	Fix some stream $\vx$ such that the high probability event of Lemma~\ref{lem:aux-chaining} holds; this fixes the values of $Y_{\dS}$ for all $\dS$. If $\dS_j \sim \mu_j$ and $\dS_{j+1}\sim \mu_{j+1}$ are drawn independently, then
	\begin{align}
	&\Ex_{\dS_j\sim \mu_j}\Ex_{\dS_{j+1}\sim \mu_{j+1}}[R(\dS_j,\dS_{j+1})]=
	\Pr_{\dS_j\sim \mu_j,\dS_{j+1}\sim \mu_{j+1}}[R(\dS_j,\dS_{j+1}) = 1]\notag\\
	&\quad= \Pr_{\dS_j,\dS_{j+1}}\lp[\max\lp(|Y_{\dS_j\setminus \dS_{j+1}}|,|Y_{\dS_{j+1}\setminus \dS_{j}}|\rp) 
	> C_0\epsilon_j 2^{j/2}\sqrt{n},\ \dS_j \in A_j,\ \dS_{j+1} \in A_{j+1} \rp]\notag\\
	&\quad\le \Pr_{\dS_j,\dS_{j+1}}\lp[|Y_{\dS_{j}\setminus \dS_{j+1}}|
	> C_0\epsilon_j 2^{j/2}\sqrt{n},\ \dS_j \in A_j,\ \dS_{j+1} \in A_{j+1}\rp]\label{eq:242}\\
	&\qquad+\Pr_{\dS_j,\dS_{j+1}}\lp[|Y_{\dS_{j+1}\setminus \dS_{j}}|
	> C_0\epsilon_j 2^{j/2}\sqrt{n},\ \dS_j \in A_j,\ \dS_{j+1} \in A_{j+1}\rp],\label{eq:243}
	\end{align}
	We focus on bounding \eqref{eq:242}; \eqref{eq:243} is bounded in a similar fashion. For any $\dS_j\in A_j$ and $\dS_{j+1}\in A_{j+1}$ we have that by definition of $A_j$ and $A_{j+1}$,
	\begin{align*}
	|(\dS_j\setminus \dS_{j+1})(\vx)|
	&= |\dS_j(\vx) \setminus \dS_{j+1}(\vx)|
	\le |\dS_j(\vx) \triangle \dS_{j+1}(\vx)|
	\le |\dS_j(\vx) \triangle E| + |E \triangle \dS_{j+1}(\vx)|\\
	&\le \epsilon_j^2n + \epsilon_{j+1}^2 n
	\le 2 \epsilon_j^2 n,
	\end{align*}
	This implies that
	\[
	|(\dS_j\setminus \dS_{j+1})(\vx)| = 
	|(\dS_j\setminus \dS_{j+1})_{\le 2\epsilon_j^2 n}(\vx)|,
	\]
	hence by definition of $\mu_{j,0}$ and by Lemma~\ref{lem:aux-chaining},
	\begin{align*}
	&\Pr_{\dS_j\dS_{j+1}}\lp[|Y_{\dS_{j}\setminus \dS_{j+1}}|
	> C_0\epsilon_j 2^{j/2}\sqrt{n},\ \dS_j \in A_j,\ \dS_{j+1} \in A_{j+1}\rp]\\
	&\quad\le \Pr_{\dS_j,\dS_{j+1}}\lp[\lp|Y_{(\dS_{j}\setminus \dS_{j+1})_{\le 2\epsilon_j^2}}\rp|
	> C_0\epsilon_j 2^{j/2}\sqrt{n}\rp]
	=  \Pr_{\dS\sim\mu_{j,0}}\lp[\lp|Y_{\dS}\rp|
	> C_0\epsilon_j 2^{j/2}\sqrt{n}\rp]\\
	&\quad\le \frac{1}{18 \cdot 2^{2^j}\cdot 2^{2^{j+1}}}.
	\end{align*}
	Similarly, \eqref{eq:243} is bounded by the same quantity, hence by \eqref{eq:242} and \eqref{eq:243} above,
	\[
	\Ex_{\dS_j\sim \mu_j}\Ex_{\dS_{j+1}\sim \mu_{j+1}}[R(\dS_j,\dS_{j+1})]
	\le \frac{1}{9 \cdot 2^{2^j}\cdot 2^{2^{j+1}}}.
	\]
	By Markov's inequality,
	\begin{align*}
	\Pr_{\dS_j \sim \mu_j}\lp[
		\dS_j \in A_j\setminus A_j'
	\rp]
	&=\Pr_{\dS_j \sim \mu_j}\lp[
	\mu_{j+1}\lp(\lp\{ \dS_{j+1}\colon R(\dS_j,\dS_{j+1})=1 \rp\} \rp)
	> \frac{1}{3\cdot 2^{2^{j+1}}}
	\rp]\\
	&=\Pr_{\dS_j \sim \mu_j}\lp[
	\Ex_{\dS_{j+1}\sim \mu_{j+1}}[R(\dS_j,\dS_{j+1})]
	> \frac{1}{3\cdot 2^{2^{j+1}}}
	\rp]\\
	&\le \frac{\Ex_{\dS_j\sim \mu_j}\Ex_{\dS_{j+1}\sim \mu_{j+1}}[R(\dS_j,\dS_{j+1})]}{1/(3\cdot 2^{2^{j+1}})}
	\le \frac{1}{3\cdot 2^{2^j}}.
	\end{align*}
	By \eqref{eq:measure-Aj},
	\[
	\mu_j(A_j')
	\ge
	\mu_j(A_j) - \mu_j(A_j\setminus A_j')
	\ge \frac{1}{2^{2^j}}
	- \frac{1}{3\cdot 2^{2^j}}
	= \frac{2}{3\cdot 2^{2^j}}.
	\]
\end{proof}
To complete the proof, we show how to define $\pi_{E,j}$ inductively on $j$ such that $R(\pi_{E,j},\pi_{E,j+1})=0$ for all $j\ge j_0$. First, we select $\pi_{E,j_0}$ to be any element of $A_j'$ such that $|Y_{\pi_{E,j_0}}| \le C_0 \sqrt{\log(1/\delta)}\sqrt{n}$; such an element exists from Lemma~\ref{lem:aux-chaining} and Lemma~\ref{lem:bnd-Ajprime}. Next, assume for $j\ge j_0$ that $\pi_{E,j}$ was already selected and select $\pi_{E,j+1}$ to be any element $\dS_{j+1}\in A_{j+1}'$ such that $R(\pi_{E,j},\dS_{j+1})=0$. Such an element exists since $\pi_{E,j} \in A_{j}'$, hence by definition of $A_{j}'$,
\[
\mu_{j+1}\lp(\lp\{ \dS_{j+1}\colon R(\pi_{E,j},\dS_{j+1})=1 \rp\} \rp)
\le \frac{1}{3\cdot 2^{2^{j+1}}},
\]
while $\mu_{j+1}(A_{j+1}') \ge 2/(3\cdot 2^{2^{j+1}})$ by Lemma~\ref{lem:bnd-Ajprime}.
By \eqref{eq:deviation-decomp} and since we defined $\pi_{E,j}$ such that $|Y_{\pi_{E,j_0}}| \le C_0 \sqrt{\log(1/\delta)}\sqrt{n}$, $R(\pi_{E,j},\pi_{E,j+1}) = 0$ and $\pi_{E,j} \in A_j$ for all $j$, we have that
\[
|Y_E|
\le \lp| Y_{\pi_{E,j_0}}\rp|+\sum_{j=j_0}^\infty \lp| Y_{\pi_{E,j+1}\setminus \pi_{E,j}}\rp| + \sum_{j=j_0}^\infty\lp|Y_{\pi_{E,j}\setminus\pi_{E,j+1}} \rp|
\le C_0\sqrt{n}\lp( \sqrt{\log(1/\delta)}
+ 2 \sum_{j=j_0}^\infty \epsilon_j 2^{j/2}\rp).
\]
This proves \eqref{eq:equivalent-chaining-bd} as required.

\subsection{Bounds on $\epsilon$-Nets via Fractional Covering Numbers} \label{sec:eps-nets-via-cover}
The goal of this section is to prove Lemma~\ref{lem:concentration-nets}.
First, we use the following Martingale bound (see Section~\ref{sec:martingales} for an introduction to Martingales):
\begin{lemma}[Freedman's inequality \cite{freedman1975tail}]\label{lem:freedman}
	Let $\rv{y}_0,\dots,\rv{y}_n$ be a Martingale adapted to the filtration $F_0,\dots,F_n$, such that $\sum_{i=1}^n \Ex[(\rv{y}_i - \rv{y}_{i-1})^2\mid F_{i-1}]\le s$ holds almost surely for some $s > 0$. Further, assume that $|\rv{y}_i - \rv{y}_{i-1}|\le M$ for all $i$. Then, for any $t>0$,
	\[
	\Pr\lp[\rv{y}_n - \rv{y}_0\ge t \rp]
	\le \exp\lp(-\frac{t^2}{2(s+Mt)}\rp).
	\]
\end{lemma}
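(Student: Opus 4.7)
The plan is to prove Freedman's inequality via the classical exponential supermartingale (Chernoff--Cramér) method. Let $\rv{D}_i := \rv{y}_i - \rv{y}_{i-1}$ denote the martingale differences, so that $\Ex[\rv{D}_i \mid F_{i-1}] = 0$, $|\rv{D}_i| \le M$ almost surely, and $\rv{V}_j := \sum_{i=1}^j \Ex[\rv{D}_i^2 \mid F_{i-1}]$ is a predictable process with $\rv{V}_n \le s$ almost surely. Fix a parameter $\lambda > 0$ to be optimized at the end.

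The analytic core of the argument is the pointwise inequality
\[
e^{\lambda x} \;\le\; 1 + \lambda x + \frac{e^{\lambda M} - 1 - \lambda M}{M^2}\, x^2, \qquad \text{valid for all } |x| \le M, \ \lambda > 0,
\]
which I will establish from the observation that $u \mapsto (e^u - 1 - u)/u^2$ is non-decreasing on $\mathbb{R}$, so its value at $u = \lambda x$ is at most its value at $u = \lambda M$. Applying this with $x = \rv{D}_i$, taking conditional expectations, using $\Ex[\rv{D}_i \mid F_{i-1}] = 0$, and the elementary bound $1 + z \le e^z$, I obtain
\[
\Ex\!\left[e^{\lambda \rv{D}_i} \mid F_{i-1}\right] \;\le\; \exp\!\left(\frac{e^{\lambda M} - 1 - \lambda M}{M^2}\, \Ex[\rv{D}_i^2 \mid F_{i-1}]\right).
\]

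Next I will introduce the auxiliary process
\[
\rv{W}_j \;:=\; \exp\!\left(\lambda(\rv{y}_j - \rv{y}_0) \;-\; \frac{e^{\lambda M} - 1 - \lambda M}{M^2}\, \rv{V}_j\right),
\]
and check that the conditional MGF bound from the previous step implies $\Ex[\rv{W}_j \mid F_{j-1}] \le \rv{W}_{j-1}$, i.e.\ $\rv{W}_j$ is a non-negative supermartingale with $\rv{W}_0 = 1$. Hence $\Ex[\rv{W}_n] \le 1$. Since $\rv{V}_n \le s$ almost surely, we deterministically have $\rv{W}_n \ge \exp\!\big(\lambda(\rv{y}_n - \rv{y}_0) - \tfrac{e^{\lambda M}-1-\lambda M}{M^2}s\big)$, and Markov's inequality yields
\[
\Pr\!\left[\rv{y}_n - \rv{y}_0 \ge t\right] \;\le\; \exp\!\left(-\lambda t + \frac{e^{\lambda M} - 1 - \lambda M}{M^2}\, s\right).
\]

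The final step is to optimize over $\lambda$. Choosing $\lambda = \tfrac{1}{M}\log(1 + tM/s)$ yields the Bennett-type exponent $-\tfrac{s}{M^2} h(tM/s)$, where $h(u) := (1+u)\log(1+u) - u$. The stated Freedman bound then follows from the elementary inequality $h(u) \ge \tfrac{u^2}{2(1+u)}$ for $u \ge 0$, applied with $u = tM/s$, which gives $\tfrac{s}{M^2} h(tM/s) \ge \tfrac{t^2}{2(s+Mt)}$. The inequality on $h$ is the one non-routine calculation; I will verify it by letting $f(u) := 2(1+u)h(u) - u^2$ and checking that $f(0) = 0$, $f'(0) = 0$, and $f''(u) = 4\log(1+u) \ge 0$ for $u \ge 0$, so $f \ge 0$ on $[0,\infty)$. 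The main ``obstacle'' is really bookkeeping: showing that $\rv{W}_j$ is genuinely a supermartingale (which uses that $\rv{V}_j$ is $F_{j-1}$-measurable, a key feature of the predictable quadratic variation), and choosing the right scalar inequality on $h$ to land exactly on the $(s+Mt)$ denominator rather than the sharper $(s+Mt/3)$ Bernstein denominator.
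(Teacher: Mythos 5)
Your proof is correct. Note, however, that the paper does not prove this lemma at all: it is imported verbatim from Freedman (1975) as a black-box concentration tool, so there is no internal argument to compare against. What you have written is the standard exponential-supermartingale derivation: the pointwise bound $e^{\lambda x}\le 1+\lambda x+\frac{e^{\lambda M}-1-\lambda M}{M^2}x^2$ for $|x|\le M$, the predictability of $\rv{V}_j$ making $\rv{W}_j$ a supermartingale, Markov's inequality, the Bennett-type exponent $-\frac{s}{M^2}h(tM/s)$ at $\lambda=\frac{1}{M}\log(1+tM/s)$, and the elementary estimate $h(u)\ge\frac{u^2}{2(1+u)}$ (your second-derivative check of $f(u)=2(1+u)h(u)-u^2$ is right) which lands exactly on the $2(s+Mt)$ denominator stated in the lemma rather than the sharper Bernstein denominator $2(s+Mt/3)$. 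All measurability points (in particular that $\rv{V}_j$ is $F_{j-1}$-measurable, so it can be pulled out of the conditional expectation) are handled correctly, so your argument is a valid self-contained substitute for the citation.
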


We derive the following concentration bound for a dynamic set:
\begin{lemma}\label{lem:net-concentration-aux}
	Let $\dS$ be a dynamic set with $|\dS|\le m$. Let $\rvI\sim \Ber(n,p)$. Let $\rvvx=\rvvx(\A,\rvI)$. Then, for any $t\ge 0$,
	\[
	\Pr\lp[|\dS(\rvvx)\cap \rvvx_{\rvI}| \le p|\dS(\rvvx)| - t \rp]
	\le \exp(-ct^2/(mp+t)).
	\]
\end{lemma}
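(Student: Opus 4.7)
The plan is to apply Freedman's inequality (Lemma~\ref{lem:freedman}) to a martingale that tracks, along the stream, the running difference between the number of $\dS$-retained items and its conditional expectation.

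First I would set up the filtration $F_t = \sigma(x_1,\dots,x_t,\rvI\cap[t])$. Because $\A$ is deterministic, the stream element $x_t = \vx(\A,\rvI)_t$ is a function of $\rvI\cap[t-1]$ (and of $\A$), and hence is $F_{t-1}$-measurable. On the other hand, the Bernoulli sampling decision $\mathbf{1}[t\in\rvI]$ is independent of $F_{t-1}$ and is distributed as $\mathrm{Ber}(p)$. Next, I would define $a_t := \mathbf{1}[x_t\in\dS(\rvvx)]$. By the definition of a dynamic set, whether $\dS$ retains $x_t$ depends only on $x_1,\dots,x_t$, so $a_t$ is $F_{t-1}$-measurable. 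Moreover, the assumption $|\dS|\le m$ ensures $\sum_{t=1}^n a_t = |\dS(\rvvx)| \le m$ almost surely.

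I would then consider
\[
\rv{y}_t \;=\; \sum_{s=1}^{t} a_s\bigl(p - \mathbf{1}[s\in\rvI]\bigr), \qquad \rv{y}_0 = 0,
\]
which is a martingale adapted to $(F_t)$: since $a_s$ is $F_{s-1}$-measurable and $\Ex[\mathbf{1}[s\in\rvI]\mid F_{s-1}] = p$, each summand has zero conditional mean. The increments satisfy $|\rv{y}_t-\rv{y}_{t-1}|\le 1$ almost surely, and the conditional variance is $\Ex[(\rv{y}_t-\rv{y}_{t-1})^2\mid F_{t-1}] = a_t\cdot p(1-p) \le p\,a_t$, so the total predictable quadratic variation is bounded by $p\sum_t a_t \le pm$ almost surely.

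Finally, applying Freedman's inequality with $s = pm$ and $M = 1$ yields
\[
\Pr[\rv{y}_n \ge t] \;\le\; \exp\Bigl(-\tfrac{t^2}{2(pm + t)}\Bigr),
\]
and observing that $\rv{y}_n = p|\dS(\rvvx)| - |\dS(\rvvx)\cap \rvvx_{\rvI}|$ finishes the proof with $c = 1/2$. There is no real obstacle here; the only subtlety is the measurability bookkeeping, namely that $x_t$ (and hence $a_t$) is determined \emph{before} the Bernoulli coin for round $t$ is flipped, which is exactly the content of the adversarial sampling model.
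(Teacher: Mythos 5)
Your proof is correct and matches the paper's: both define the same martingale tracking the running gap between the number of $\dS$-retained sampled items and $p$ times the number of $\dS$-retained items, bound the predictable quadratic variation by $pm$ using $|\dS|\le m$, and invoke Freedman's inequality with $M=1$. The only cosmetic difference is that you flip the sign so the upper-tail Freedman bound applies directly, and you spell out the measurability of $a_t$ a bit more explicitly; the paper's $\rv{y}_i$ is the negation of yours, so it implicitly applies Freedman to $-\rv{y}_n$.
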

\begin{proof}
	This follows from Lemma~\ref{lem:freedman}. We apply this lemma with $\rv{y}_i = |\dS(\rvvx)\cap \rvvx_{\rvI\cap[i]}| - p|\dS(\rvvx) \cap \rvvx_{[i]}|$ and $F_i = \sigma(\rvI\cap[i])$. We can substitute $s = c'mp$, due to the following reason: conditioned on $F_{i-1}$, we know $\rvvx_{[i]}$, which implies that we know whether $\rvx_i \in \dS(\rvvx)$. If this holds true, then,
	\[
	\rv{y}_i - \rv{y}_{i-1} = \begin{cases}
		1 - p & \text{if $i \in \rvI$ (holds with probability $p$)} \\
		-p & \text{otherwise (holds with probability $1-p$)}
	\end{cases}.
	\]  
	By simple calculations we have $\Ex[(\rv{y}_i - \rv{y}_{i-1})^2\mid F_{i-1}]\le c'p$ in this case. If $\rvx_i \notin \dS(\rvvx)$, then $\rv{y}_i = \rv{y}_{i-1}$ and we have $\Ex[(\rv{y}_i - \rv{y}_{i-1})^2\mid F_{i-1}]= 0$. Since $\vx_i \in \dS(\rvvx)$ can hold true for at most $m$ values of $i$, then $\sum_{i=1}^n \Ex[(\rv{y}_i - \rv{y}_{i-1})^2\mid F_{i-1}] \le c'mp$. Further, we can substitute $M=1$ in Lemma~\ref{lem:freedman}, and the result follows.
\end{proof}

We are ready to prove Lemma~\ref{lem:concentration-nets}:
\begin{proof}[Proof of Lemma~\ref{lem:concentration-nets}]
	Let $\mathcal{N}$ be a $0$-net for $\E$ with minimal cardinality.
	For each $\dS \in \mathcal{N}$, let $\dS_{\le m}$ be the dynamic set that simulates $\dS$ up to the point that it has retained $m$ elements, and then it discards all the remaining elements. We apply Lemma~\ref{lem:net-concentration-aux} on $\dS_{\le m}$ to obtain that
	\begin{align*}
	\Pr[|\dS(\rvvx)|\ge m,\ |\dS(\rvvx)\cap\rvvx_{\rvI}| \le mp/2]
	\le \Pr[|\dS_{\le m}(\rvvx)|= m,\ |\dS_{\le m}(\rvvx)\cap\rvvx_{\rvI}| \le mp/2]\\
	\le \Pr[|\dS_{\le m}(\rvvx)\cap\rvvx_{\rvI}| \le p|\dS_{\le m}(\rvvx)| - mp/2]
	\le \exp(-cmp).
	\end{align*}
	
	The proof follows by a union bound over $\dS \in \mathcal{N}$.
\end{proof}

\section{Reductions Between Different Sampling Schemes}\label{sec:reduction-sampling}

This section establishes a framework that enables one to obtain bounds with respect to one sampler in terms of bounds with respect to a different sampler. In particular, this shows how, given bounds on $\epsilon$-nets and $\epsilon$-approximations for the uniform sampler, one can obtain bounds for the Bernoulli and the reservoir sampler. Section~\ref{sec:reduction-intuition} presents an overview of an abstract method to reduce between two sampling schemes, that is formally presented in Section~\ref{sec:reduction-abstract}. Section~\ref{sec:uniform-to-reservoir} shows how to obtain bounds with respect to the reservoir sampler given bounds for the uniform sampler. 
Section~\ref{sec:uniform-to-bernoulli} presents bounds on the Bernoulli sampler based on bounds for the uniform sampler. 
Finally, Section~\ref{sec:bernoulli-to-uniform} provides bounds with respect to the uniform sampler based on the Bernoulli sampler, which proves the auxiliary Lemma~\ref{lem:approx-n-to-2k} and Lemma~\ref{lem:dbl-nets}. Both Section~\ref{sec:uniform-to-reservoir} and Section~\ref{sec:bernoulli-to-uniform} use the abstract reduction method of Section~\ref{sec:reduction-abstract} while Section~\ref{sec:uniform-to-bernoulli} utilizes the fact that the Bernoulli sampler can be presented as a mixture of uniform samplers $\Uni(n,k)$ for different values of $k$.

\subsection{Intuition for the Reduction Method} \label{sec:reduction-intuition}

For convenience, we consider samplers with no deletions, that are characterized by some distribution over subsets of $[n]$, such as $\Uni(n,k)$, that is the uniform distribution over subsets of $[n]$ of size $[k]$. We use $\rvI$ and $\rvI'$ to denote such random variables over subsets of $[n]$ (e.g., they can be distributed $\Ber(n,p)$ or $\Uni(n,k)$).

In these reductions, our goal is to show that one sampling scheme $\rvI$ is at least as good as a different scheme $\rvI'$. For example, that $\rvI$ attains $\epsilon$-approximations for values of $\epsilon$ smaller than those attained by $\rvI'$. In other words, we would like to say that $\rvI$ is resilient to the adversary at least as well as $\rvI'$. The above is equivalent to saying that the \emph{worst} adversary for $\rvI'$ \emph{is at least as bad} as the worst adversary for $\rvI$. 
The above can be shown by reduction: given an adversary $\A$ that is \emph{bad} for $\rvI$, we will construct an adversary $\A'$ that is bad for $\rvI'$.

Here we define $\A'$, that plays against a sampler that samples $\rvI'$. The general idea for $\A'$ is to simulate $\A$. However, $\A$ is known to be bad against $\rvI$ while $\A'$ plays against $\rvI'$. To tackle this issue, $\A'$ will simulate a sample $\rv{J}$ that has the same distribution as $\rvI$ and then simulate the actions of $\A$ against $\rv{J}$. Then, $\A'$ will output the same stream output by the simulated $\A$.

We would like to show that $\A'$ is bad against $\rvI'$. In order to show that, we will have to assume that the simulated sample $\rv{J}$ is very close to the true sample $\rvI'$ with high probability (say, in symmetric difference of sets). Since $\A$ is bad against $\rvI$ and $\rv{J}\sim \rvI$, the simulated actions of $\A$ are bad against the simulated sample $\rv{J}$. Since further $\rv{J}$ is very close to $\rvI'$, then the simulated $\A$ is bad also against $\rvI'$. Since $\A'$ outputs the same stream as the simulated $\A$, this implies that $\A'$ is bad against $\rvI'$, as required.

Notice that since $\A'$ would like the simulated sample $\rv{J}$ to be similar to the true sample $\rvI'$, it has to construct $\rv{J}$ based on $\rvI'$ and this defines a joint probability distribution between $\rvI'$ and $\rv{J}$. Such a joint distribution is called \emph{coupling}.
Further, the simulation has to be performed in an online fashion: once $\A'$ receives the actions taken by the sampler $\rvI'$ that it plays against, it has to immediately simulate the actions of the simulated sample $\rv{J}$. In particular, once $\A'$ knows whether $t \in \rvI'$, it has to decide whether $t \in \rv{J}$. Since the coupling between $\rv{J}$ and $\rv{I}'$ is constructed in an online fashion, we denote it an \emph{online coupling}.

Using the notation above, the goal of $\A'$ is to construct an online coupling of $\rv{J}$ and $\rvI'$ such that $\rv{J}\sim\rvI$ and such that with high probability, the symmetric set difference between $\rv{J}$ and $\rvI'$ is small. This can be done, for example, if $\rv{J}\sim \Uni(2k,k)$ and $\rvI'\sim\Ber(2k,1/2)$: there, $\A'$ will have to omit or add a approximately $O(\sqrt{k})$ elements to $\rvI'$ to create $\rv{J}$.

\subsection{Abstract Reduction Method} \label{sec:reduction-abstract}

We refer to sampling schemes that are oblivious to the adversary, namely that the choice to retain or discard an element is independent of the stream. Formally, we denote by $\rvI_t$ the set of indices of elements retained by the algorithm after seeing $\vx_{[t]}$, for $t \in [n]$. An \emph{oblivious sampling scheme} is one where $\rvvI= (\rvI_1,\dots,\rvI_n)$ are random variables jointly distributed, that are not a function of the adversary $\A$. This section compares one oblivious sampling scheme $\rvvI:=(\rvI_1,\dots,\rvI_n)$ with another, $\rvvI'=(\rvI'_1,\dots,\rvI'_n)$. It will be shown that if the adversary, given an input stream $\rvvI'$ can simulate a stream that is distributed according to $\rvvI$ such that with high probability, the input stream is close to the output stream in some sense, then the sampling scheme $\rvvI$ is at least as resilient to the adversary as $\rvvI'$. We begin with the following definition of online simulation:
\begin{definition}
	An \emph{online simulator} $\mathcal{S}$ is an algorithm that receives a stream $\rvI_1, \rvI_2,\dots, \rvI_n$ of subsets of $[n]$ and an unlimited pool of independent random bits and outputs a stream $\rvI'_1,\dots,\rvI'_n$, such that $\rvI'_t$ has to be computed before seeing $\rvI_{t+1}$, for $t \in [n]$. In other words, $\rvI'_t$ depends only on $\rvI_1,\dots,\rvI_t,\rvI'_1,\dots,\rvI'_{t-1}$ and on the randomness of the simulator.
	The joint distribution of $\rvvI$ and $\rvvI'$ is called an \emph{online coupling} of $\rvvI$ to $\rvvI'$. Equivalent, we can say that $\rvvI$ is online coupled to $\rvvI'$.
\end{definition}
Notice that an online coupling of $\rvvI$ and $\rvvI'$ is also a \emph{coupling} of these two random variables, which is any joint distribution between them. Further, notice that an online coupling is not a symmetric notion: an online coupling of $\rvvI$ to $\rvvI'$ is not necessarily an online coupling of $\rvvI'$ to $\rvvI$.

In cases that the sampler cannot delete elements from its sample, notice that $\rvI_t =\rvI_n \cap [t]$. To simplify the notation, we can write $\rvI = \rvI_n$ and $\rvI\cap[t] = \rvI_t$. Hence, we have the following definition of online coupling for no-deletion samplers:
\begin{definition}
	Let $\rvI$ and $\rvI'$ be jointly distributed random variables over $[n]$.
	We say that $\rvI$ is \emph{online coupled} to $\rvI'$ if $(\rvI\cap[1],\rvI\cap[2],\dots,\rvI\cap[n])$ is online coupled to 
	$(\rvI'\cap[1],\rvI'\cap[2],\dots,\rvI'\cap[n])$.
\end{definition}
In order to show that $\rvvI$ is more resilient to the adversary than $\rvvI'$, it suffices to find an online coupling of $\rvI$ to $\rvI'$ such that $\rvI_n$ is similar to $\rvI'_n$ in some sense. To be more formal, 
let $f(I_n,\vx)$ be some $\{0,1\}$-valued function that we view as an indicator denoting whether the sub-sample $\vx_{I_n}$ fails to represent the full stream $\vx$. For example, $f$ can be an indicator of whether $\vx_{I_n}$ is \emph{not} an $\epsilon$-approximation of $\vx$. Our goal is to bound the failure probability with the worst adversary. Namely, to bound $\max_{\A\in \Adv_n} \Pr_{\rvvI}[f(\rvI_n, \vx(\A,\rvvI))=1]$. Say that we already know how to bound a similar quantity for a different sampling scheme $\rvvI'$. If we can online couple $\rvvI$ to $\rvvI'$, then we can reduce between these two bounds:
\begin{lemma}\label{lem:coupling-reduction}
	Let $\rvvI$ be online coupled to $\rvvI'$. Let $f,g \colon \{0,1\}^n \times X^n \to \{0,1\}$. Then,
	\begin{align}
	&\max_{\A\in\Adv_n} \Pr_{\rvvI}[f(\rvI_n,\rvvx(\A,\rvvI))=1]\notag\\
	&\quad \le \max_{\A'\in\Adv_n} \Pr_{\rvvI'}[g(\rvI'_n,\rvvx(\A',\rvvI'))=1]
	+ \Pr_{\rvvI,\rvvI'}[\exists \vx \in X^n \text{ s.t } f(\rvI_n,\vx) = 1 \text{ and } g(\rvI'_n,\vx) = 0].\label{eq:91}
	\end{align}
\end{lemma}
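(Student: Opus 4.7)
The plan is to construct, for every $\A\in\Adv_n$, a matching $\A'\in\Adv_n$ such that, on the joint probability space determined by the online coupling of $(\rvvI,\rvvI')$, the two streams produced in the two games coincide almost surely: $\vx(\A,\rvvI)=\vx(\A',\rvvI')$. Once this identity is secured, the inequality \eqref{eq:91} will follow from a one-line pointwise set inclusion and then maximising over $\A'$ on the right and over $\A$ on the left.

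\paragraph{Construction of $\A'$.} I would define $\A'$ to run $\A$ internally while using the online simulator $\mathcal{S}$ supplied by the hypothesis to translate between the two sampling streams. Concretely, at round $t$: $\A'$ queries $\A$ for its next item $x_t$, which by definition of $\A$ depends only on the simulated sampler decisions $\tilde{\rvI}_1,\dots,\tilde{\rvI}_{t-1}$ produced so far; $\A'$ outputs this $x_t$ to the real sampler, observes $\rvI'_t$, and advances $\mathcal{S}$ by one step so as to emit the next simulated decision $\tilde{\rvI}_t$, which is then revealed back to $\A$ before round $t+1$. The \emph{online} property of $\mathcal{S}$ is exactly what makes this scheduling causal: $\A'$'s output at round $t$ depends only on $\rvI'_{[t-1]}$, as required by the adversary model, and the joint law of $(\tilde{\rvvI},\rvvI')$ is precisely the given online coupling, so $\tilde{\rvvI}$ has the marginal distribution of $\rvvI$.

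\paragraph{Pointwise argument.} By construction, on the coupled space $\vx(\A',\rvvI')=\vx(\A,\rvvI)$; call this common stream $\vx$. Suppose $f(\rvI_n,\vx(\A,\rvvI))=1$. If also $g(\rvI'_n,\vx(\A',\rvvI'))=0$, then the stream $\vx$ itself witnesses the error event $\{\exists \vx\colon f(\rvI_n,\vx)=1\land g(\rvI'_n,\vx)=0\}$; otherwise $g(\rvI'_n,\vx(\A',\rvvI'))=1$. This yields the pointwise inclusion
\[
\{f(\rvI_n,\vx(\A,\rvvI))=1\}\ \subseteq\ \{g(\rvI'_n,\vx(\A',\rvvI'))=1\}\ \cup\ \{\exists \vx\colon f(\rvI_n,\vx)=1\land g(\rvI'_n,\vx)=0\}.
\]
Taking probabilities, bounding the first term on the right by $\max_{\A'\in\Adv_n}\Pr_{\rvvI'}[g(\rvI'_n,\vx(\A',\rvvI'))=1]$, and finally taking $\max$ over $\A$ on the left gives \eqref{eq:91}.

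\paragraph{Main obstacle.} The only nontrivial point is verifying that $\A'$ is a legitimate adversary, i.e., that $x_t$ is produced without knowledge of $\rvI'_t$. This is exactly where the \emph{online} nature of the coupling is essential: an offline coupling between $\rvvI$ and $\rvvI'$ would in principle let $\A'$ peek into the sampler's future through the simulator, violating the information constraints of the adversary model. Beyond this causality check, the argument is pure bookkeeping.
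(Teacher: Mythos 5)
Your proposal is correct and takes essentially the same route as the paper: define $\A'$ to run $\A$ internally while the online simulator translates the observed sampler stream into a simulated one, observe that the two games then produce the identical stream, and conclude with the pointwise event inclusion followed by the two maximizations. The only elision is the paper's brief remark that $\A'$, being randomized through the simulator's coin tosses, is a mixture of deterministic adversaries (so its failure probability is still dominated by $\max_{\A'}$); this is a routine averaging step and your argument is otherwise the same.
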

Notice the second term in the right hand side of \eqref{eq:91}: it equals zero if $\rvI_n = \rvI_n'$, and, it is expected to be small if $\rvI_n\approx \rvI_n'$ with high probability.
\begin{proof}
	Let $\A_{\max}$ be the adversary that achieves the maximum on the left hand side of \eqref{eq:91}. Let $\mathcal{S}$ denote the simulation adversary that given $\rvvI'$ and some additional random string $\rv{r}$, outputs $\rvvI = \mathcal{S}(\rvvI',\rv{r})$. We will create the following adversary $\A'_{\rv{r}}$ that operates on the stream $\rvvI'$ and has additional randomness $\rv{r}$: it creates the sample $\rvvI = \mathcal{S}(\rvvI',\rv{r})$, simulates $\A_{\max}$ on this sample and outputs the same stream as the simulated $\A_{\max}$. In particular, we have 
	\begin{equation}\label{eq:92}
	\vx(\A_{\rv{r}}, \rvvI') = \vx(\A_{\max},\rvvI).
	\end{equation}
	We view $\A'_{\rv{r}}$ as a distribution over deterministic adversaries $\{\A'_{r}\}_{r \in \mathrm{support}(\rv{r})}$. Further, notice that each $\A'_r$ defines an appropriate adversary. By \eqref{eq:92},
	\begin{align*}
		&\max_{\A\in\Adv_n} \Pr_{\rvvI}\lp[f(\rvI_n,\vx(\A,\rvvI))=1\rp]
		= \Pr_{\rvvI}\lp[f(\rvI_n,\vx(\A_{\max},\rvvI))=1\rp] \\
		&\quad = \Pr_{\rvvI,\rvvI',\rv{r}}\lp[f(\rvI_n,\vx(\A_{\max},\rvvI))=1 \text{ and } g(\rvI'_n,\vx(\A'_{\rv{r}},\rvvI'))=1\rp] \\ 
		& \qquad + \Pr_{\rvvI,\rvvI',\rv{r}}\lp[f(\rvI_n,\vx(\A_{\max},\rvvI))=1 \text{ and } g(\rvI'_n,\vx(\A'_{\rv{r}},\rvvI'))=0\rp] \\
		&\quad= \Pr_{\rvvI,\rvvI',\rv{r}}\lp[f(\rvI_n,\vx(\A_{\max},\rvvI))=1 \text{ and } g(\rvI'_n,\vx(\A'_{\rv{r}},\rvvI'))=1\rp] \\ 
		&\qquad+ \Pr_{\rvvI,\rvvI',\rv{r}}\lp[f(\rvI_n,\vx(\A_{\max},\rvvI))=1 \text{ and } g(\rvI'_n,\vx(\A_{\max},\rvvI))=0\rp] \\
		&\quad\le \Pr_{\rvvI',\rv{r}}\lp[g(\rvI'_n,\vx(\A'_{\rv{r}},\rvvI'))=1\rp]
		+ \Pr_{\rvvI,\rvvI'}\lp[\exists \vx \in X^n \text{ s.t } f(\rvvI,\vx) = 1 \text{ and } g(\rvvI',\vx) = 0\rp],
	\end{align*}
	as required.
\end{proof}

\subsection{Bounds for Reservoir Sampling via Uniform Sampling} \label{sec:uniform-to-reservoir}

Next, we show how to obtain bounds for reservoir sampling based on uniform sampling. The intuition is that the reservoir sampler gives less information than the uniform sampler: indeed, when the reservoir sampler selects an element, the adversary does not know whether this element will remain for the final sample, while this is not the case for the uniform sampler. 
The following holds:
\begin{lemma} \label{lem:uniform-to-reservoir}
	The reservoir sampler $\rvvI = (\rvI_1,\dots,\rvI_n) \sim \Res(n,k)$ can be online coupled to the Uniform sampler $\rvI' \sim \Uni(n,k)$ such that $\rvI_n = \rvI'$ with probability $1$.
\end{lemma}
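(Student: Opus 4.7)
The plan is to exhibit an explicit online simulator that, at each time $t$, observes whether $t\in\rvI'$ (which is the only information about $\rvI'$ available by time $t$) and produces the reservoir state $\rvI_t$, in such a way that the marginal law of $\rvvI$ is exactly $\Res(n,k)$ while $\rvI_n = \rvI'$ holds by construction. The starting observation is the classical fact that for $\rvvI\sim\Res(n,k)$, the marginal distribution of $\rvI_n$ is uniform over $k$-subsets of $[n]$; hence there exists \emph{some} joint distribution of $(\rvvI,\rvI')$ with the required marginals and $\rvI_n=\rvI'$ a.s. The content of the lemma is to show that such a coupling can be realized causally with respect to $\rvI'$.

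For the construction itself I would proceed as follows. For $t\le k$, set $\rvI_t=[t]$ (deterministic, matching $\Res(n,k)$). For $t>k$, given the previous output $\rvI_{t-1}$ and the newly revealed bit $\mathds{1}[t\in\rvI']$, output $\rvI_t$ according to the reservoir transition \emph{conditioned on the future event} $\rvI_n=\rvI'$. Concretely:
\begin{itemize}[noitemsep]
\item If $t\in\rvI'$, then $t$ must belong to $\rvI_n$, so $t$ must enter the reservoir at time $t$; put $t$ into $\rvI_t$ and evict a uniformly random element of $\rvI_{t-1}\setminus\rvI'$ (well-defined since $|\rvI'\cap[t-1]|<k$ whenever $t\in\rvI'$).
\item If $t\notin\rvI'$, then $t$ must not survive to $\rvI_n$, so either $t$ never enters or it enters and gets evicted later. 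Flip a biased coin with the correct conditional inclusion probability; if heads, insert $t$ and evict a uniform element of $\rvI_{t-1}\setminus\rvI'$ as above.
\end{itemize}
The ``correct conditional inclusion probability'' in the second bullet is precisely the conditional probability that $t$ enters the reservoir in the honest $\Res(n,k)$ process given $\rvI_{t-1}$ and $\rvI'\cap[t]$. It is cleanest to compute this through the standard keys representation of reservoir sampling: draw i.i.d.\ $U_1,\dots,U_n\sim\mathrm{Unif}[0,1]$ and define $\rvI_t$ as the $k$ indices in $[t]$ with smallest keys (equivalently, $\rvI'$ is the $k$ indices in $[n]$ with smallest keys). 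Under this representation every conditional distribution needed above is an explicit function of $\rvI'\cap[t]$ and $\rvI_{t-1}$, and the simulator simply samples from it using fresh randomness.

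The verification is then by induction on $t$: assuming the joint law of $(\rvI_1,\dots,\rvI_{t-1},\rvI'\cap[t-1])$ produced by the simulator agrees with the honest joint law of $(\rvI_1,\dots,\rvI_{t-1},\rvI_n\cap[t-1])$ under $\Res(n,k)$ coupled through keys, the one-step update described above preserves this agreement at time $t$. At $t=n$ this yields the correct reservoir marginal for $\rvvI$, while $\rvI_n=\rvI'$ is built into each step (elements of $\rvI'$ are never evicted, and elements outside $\rvI'$ are evicted before time $n$ whenever they are ever present).

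The main obstacle is not conceptual but bookkeeping: one has to check that the conditional probabilities above are well-defined at every step (no eviction is ever forced on a protected element) and that they compose to the honest reservoir law. The keys representation resolves both issues cleanly, because feasibility amounts to the trivial inequality $|\rvI'\cap[t-1]|\le k-1$ whenever a new element is inserted, and the distributional identity reduces to the fact that, under the keys coupling, the sequence $(\rvI_t)_{t=1}^n$ is a deterministic function of $(U_i)_{i\le n}$ whose conditional distribution given $\rvI'\cap[t]$ can be regenerated on the fly.
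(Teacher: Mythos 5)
Your plan has the right shape and is structurally very close to the paper's proof: both define the coupled reservoir state $\rvI_t$ by sampling from the honest conditional law of $\rvI_t$ given the history and $\rvI_n\cap[t]$, and both close by an induction on $t$ showing the resulting joint law matches the honest reservoir process conditioned on $\rvI_n = \rvI'$. The keys representation (defining $\rvI_t$ as the indices in $[t]$ with the $k$ smallest i.i.d.\ keys, and $\rvI'$ as the top-$k$ overall) is a genuine stylistic alternative to the paper's Markov-chain-plus-symmetry reasoning, and it does help make the candidate one-step formulas concrete.

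The gap is exactly the step you label as clean bookkeeping. For the online simulator to be well-defined, you need that the honest conditional law of $\rvI_t$ given $(\rvI_1,\dots,\rvI_{t-1},\rvI_n\cap[t])$ does not depend on $\rvI_n\cap\{t+1,\dots,n\}$ — otherwise ``the reservoir transition conditioned on the future event $\rvI_n=\rvI'$'' is not something the simulator can compute at time $t$, since it has not seen that future event. This conditional-independence statement is the entire content of the paper's \Cref{lem:conditonal-ind}, and it is what lets the paper's induction close: it is the reason that conditioning on $\rvI'\cap[t]$ and conditioning on the full $\rvI'$ give the same transition kernel. The keys picture does not make this immediate — under keys, whether $U_t$ lands among the $k$ smallest of $U_1,\dots,U_t$ is a priori entangled with which of $\{t+1,\dots,n\}$ end up in the global top-$k$ — so the claim still needs a real proof (e.g., the rank-partition symmetry argument, or the paper's Markov-chain argument). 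Your phrase ``whose conditional distribution given $\rvI'\cap[t]$ can be regenerated on the fly'' is precisely the lemma being proved, not a fact that the representation hands you for free.

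A secondary issue, related to the same symmetry: you assert that, conditioned on insertion, the evicted element is uniform over $\rvI_{t-1}\setminus\rvI'$. This is true, but it is not obvious. Under the keys coupling the evicted element is the one with the largest key in $\rvI_{t-1}$, and the history $(\rvI_1,\dots,\rvI_{t-1})$ together with $\rvI'\cap[t]$ in principle reveals information about the rank ordering inside $\rvI_{t-1}$. One has to argue that after conditioning, the relative order of keys within $\rvI_{t-1}\setminus\rvI'$ is still exchangeable — which again is the symmetry-of-deletion argument the paper runs inside the proof of \Cref{lem:conditonal-ind}. So the proposal correctly names the coupling but defers the two load-bearing facts (online computability of the conditional kernel, and the specific uniform-eviction form) to the representation without supplying the argument; supplying either is essentially equivalent to proving the paper's auxiliary lemma.
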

Notice that we describe the uniform sampler using one index-set as it is an insertion-only scheme, while the reservoir sample has deletions hence we describe it using $n$ index-sets.
Lemma~\ref{lem:uniform-to-reservoir}, in combination with Lemma~\ref{lem:coupling-reduction}, immediately implies that any high probability bound obtained for the uniform sampling, also holds true for the reservoir sampler:
\begin{proof}[Proof of Theorem~\ref{thm:approx}, reservoir sampling]
Let $f(I,\vx) = g(I,\vx)$ denote an indicator of whether $\vx_I$ fails to be an $\epsilon$-approximation for $\vx$. Let $\rvvI$ denote the reservoir sampler and let $\rvI'$ denote the uniform sampler. Then, by \Cref{lem:uniform-to-reservoir}, we can online couple $\rvvI$ to $\rvI'$ such that $\rvI_n = \rvI'$. By Lemma~\ref{lem:coupling-reduction}, we have
\[
\max_{\A\in\Adv_n} \Pr_{\rvvI}\lp[f(\rvI_n,\vx(\A,\rvvI))=1\rp]
\le \max_{\A'\in\Adv_n} \Pr_{\rvI'}\lp[g(\rvI',\vx(\A',\rvI'))=1\rp].
\]
By Theorem~\ref{thm:ind-of-n}, the right hand side is bounded by $\delta$, for a suitable value of $\delta$. Hence, the left hand side is bounded by the same quantity.

Lastly, notice that the assumption $n\ge 2k$ in \Cref{thm:ind-of-n} translates to $n\ge 3k$ in the reduction \Cref{lem:uniform-to-reservoir}.
\end{proof}
\begin{proof}[Proof of Theorem~\ref{thm:eps-nets}, reservoir sampling]
	The proof follows the same steps as the proof for Theorem~\ref{thm:approx}, while replacing $\eps$-approximations with $\eps$-nets and using Theorem~\ref{thm:nets-reformulation} for the bound on the uniform sampler.
\end{proof}
Finally, we prove Lemma~\ref{lem:uniform-to-reservoir}. First, an auxiliary lemma:

\begin{lemma} \label{lem:conditonal-ind}
	Let $\rvvI \sim \Res(n,k)$ and fix $t \in [n]$. Then, conditioned on $\rvI_1,\dots,\rvI_{t-1}, \rvI_n\cap [t]$, it holds that $\rvI_t$ is independent of $\rvI_n$.
\end{lemma}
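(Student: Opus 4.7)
The plan is to combine the Markov structure of the reservoir sampler with the permutation symmetry of its uniform eviction rule. First I would note that $(\rvI_t)_{t=1}^n$ is a Markov chain: in each round $t'>t$ the sampler either keeps $\rvI_{t'-1}$ unchanged or replaces a uniformly chosen element of it by $t'$, depending only on $\rvI_{t'-1}$ and fresh randomness. Splitting $\rvI_n=(\rvI_n\cap[t])\cup(\rvI_n\setminus[t])$, the desired conditional independence reduces (via the Markov property and a standard conditional-probability manipulation) to showing that
\[
P\bigl(\rvI_n\setminus[t]=M \,\big|\, \rvI_t=J,\ \rvI_n\cap[t]=K\bigr)
\]
does not depend on $J$, for every feasible $K\subseteq J$ with $|J|=k$ and every $M\subseteq\{t+1,\dots,n\}$. (The case $t\le k$ is trivial since $\rvI_t=[t]$ is then deterministic.)

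The second step is the symmetry. Since the transition rule uses uniform eviction and is blind to element identities, the joint distribution of the pair $(\rvI_n\cap J,\,\rvI_n\setminus J)$ given $\rvI_t=J$ depends on $J$ only through its size $|J|=k$ and is invariant under permutations of $J$ (the surviving subset being permuted accordingly). Consequently, for every $K\subseteq J$ and $M\subseteq\{t+1,\dots,n\}$,
\[
P\bigl(\rvI_n\cap[t]=K,\ \rvI_n\setminus[t]=M \,\big|\, \rvI_t=J\bigr) = \beta(|K|,M)\,\mathds{1}[K\subseteq J]
\]
for some function $\beta$ that does not depend on $J$.

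Normalizing the conditional distribution on the event $\rvI_n\cap[t]=K$ then yields
\[
P\bigl(\rvI_n\setminus[t]=M \,\big|\, \rvI_t=J,\ \rvI_n\cap[t]=K\bigr) = \frac{\beta(|K|,M)}{\sum_{M'}\beta(|K|,M')},
\]
which is manifestly independent of $J$, completing the argument. The main subtlety is formulating the symmetry precisely; the cleanest route is through a labeled-slot coupling of the sampler, in which $k$ explicit slots are updated via uniform-slot choices, so that the set of ``survivor slots'' (those never updated in rounds $t+1,\dots,n$) is a function of post-$t$ coin flips alone and has a permutation-invariant distribution over $[k]$. One must be careful not to confuse the possibly complicated conditional distribution of the slot-to-content bijection at time $t$ with the symmetric distribution of the future slot-updates.
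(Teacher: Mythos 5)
Your approach is essentially the same as the paper's: both rest on the Markov property of the reservoir chain (to drop $\rvI_1,\dots,\rvI_{t-1}$ from the conditioning on the future) and on the permutation symmetry of the uniform-eviction rule (to show that the conditional law of $\rvI_n$ given $\rvI_t=J$ and $\rvI_n\cap[t]=K$ does not further depend on $J$). Your explicit factorization $P(\rvI_n\cap[t]=K,\rvI_n\setminus[t]=M\mid\rvI_t=J)=\beta(|K|,M)\mathds{1}[K\subseteq J]$, followed by the normalization step, is just a more spelled-out version of the paper's remark that the conditional distribution ``is not dependent on $I_t\setminus S$.''
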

\begin{proof}
	The proof follows from the following steps:
	\begin{itemize}
		\item First, the conditional distribution of $\rvI_n$ conditioned on $\rvI_1=I_1,\dots,\rvI_t=I_t$ is only a function of $I_t$. That is due to the fact that $\rvI_1\to \rvI_2 \to \cdots \to \rvI_n$ is a Markov chain.
		\item This implies that the conditional distribution of $\rvI_n$ conditioned on  $\rvI_1=I_1,\dots,\rvI_t=I_t, \rvI_n\cap[t]=S$ is only a function of $S$ and $I_t$.
		\item Conditioned on $\rvI_1=I_1,\dots,\rvI_t=I_t, \rvI_n\cap[t]=S$, we can write $\rvI_t = S \cup (I_t \setminus S)$. Notice that due to the symmetry of deletion, namely, that the deleted element is chosen uniformly at random, one derives that the conditional probability of $\rvI_n$ conditioned on $\rvI_1=I_1,\dots,\rvI_t=I_t, \rvI_n\cap[t]=S$ is not dependent on $I_t \setminus S$, hence it is only a function of $S$.
		\item The above implies that conditioned on $\rvI_n\cap[t] = S$, the random vector $(\rvI_1,\dots,\rvI_t)$ is independent of $\rvI_n$.
		\item This further implies that conditioned on $\rvI_1,\dots,\rvI_{t-1},\rvI_n\cap[t]$, it holds that $\rvI_t$ is independent of $\rvI_n$.
	\end{itemize}
\end{proof}
The following is a well-known fact that can be proved by induction:
\begin{lemma}\label{lem:res-is-uni}
	Let $\rvvI=(\rvI_1,\dots,\rvI_n)\sim \Res(n,k)$. Then, $\rvI_n\sim \Uni(n,k)$.
\end{lemma}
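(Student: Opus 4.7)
The plan is to prove this by induction on $n\ge k$, which is the standard textbook approach for verifying correctness of reservoir sampling. The base case is $n=k$: the reservoir sampler always takes the first $k$ elements, so $\rvI_k=[k]$ deterministically, which is the unique element of $\binom{[k]}{k}$, hence trivially uniform.

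For the inductive step, I would assume $\rvI_{n-1}\sim \Uni(n-1,k)$, i.e., $\Pr[\rvI_{n-1}=T]=1/\binom{n-1}{k}$ for every $T\in\binom{[n-1]}{k}$, and then show that for every $S\in\binom{[n]}{k}$, $\Pr[\rvI_n=S]=1/\binom{n}{k}$. I would split into two cases according to whether $n\in S$:
\begin{itemize}
\item If $n\notin S$, then $\rvI_n=S$ occurs exactly when element $n$ is not added to the sample and $\rvI_{n-1}=S$. By the definition of $\Res(n,k)$ and the inductive hypothesis this probability equals $(1-k/n)\cdot 1/\binom{n-1}{k}$.
\item If $n\in S$, let $S'=S\setminus\{n\}$. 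Then $\rvI_n=S$ occurs exactly when element $n$ is added (probability $k/n$), the previous sample was $\rvI_{n-1}=S'\cup\{j\}$ for some $j\in[n-1]\setminus S$, and the uniformly-chosen element removed upon insertion was precisely $j$ (conditional probability $1/k$). Summing over the $n-k$ valid choices of $j$ gives $\tfrac{k}{n}\cdot(n-k)\cdot\tfrac{1}{k}\cdot\tfrac{1}{\binom{n-1}{k}}$.
\end{itemize}
Both expressions simplify to $\tfrac{n-k}{n}\cdot\tfrac{1}{\binom{n-1}{k}}$, which equals $1/\binom{n}{k}$ by the identity $\binom{n}{k}=\tfrac{n}{n-k}\binom{n-1}{k}$.

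There is no real obstacle here: the only thing to be careful about is to correctly account for the distribution over which element is evicted when a new one is inserted, and in particular that evictions are uniform and independent of which subset $\rvI_{n-1}$ was (this is already implicit in the definition of $\Res(n,k)$ stated in \Cref{sec:basic_defs}). Once this is observed, the two probability computations match $1/\binom{n}{k}$ after a single arithmetic identity, completing the induction.
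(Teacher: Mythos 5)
Your proof is correct, and it is exactly the inductive argument the paper has in mind: the paper states \Cref{lem:res-is-uni} as ``a well-known fact that can be proved by induction'' without spelling out the details, and your two cases (whether $n\in S$ or not) together with the identity $\binom{n}{k}=\tfrac{n}{n-k}\binom{n-1}{k}$ supply precisely those details.
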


Using only Lemma~\ref{lem:conditonal-ind} and Lemma~\ref{lem:res-is-uni}, we can prove Lemma~\ref{lem:uniform-to-reservoir}:

\begin{proof}[Proof of Lemma~\ref{lem:uniform-to-reservoir}]
	Let $\rvI'\sim\Uni(n,k)$ and $\rvvI\sim \Res(n,k)$. We will define a random varible $\rvv{J}$ that is online coupled to $\rvI'$ and show that both $\rv{J}_n = \rvI'$ with probability $1$ and that $\rvv{J}$ has the same distribution as $\rvvI$. The sample $\rvv{J}$ is created using the following inductive argument: for $t = 1,\dots,n$, assume that we have already set $\rv{J}_1 = J_1,\dots,\rv{J}_{t-1}=J_{t-1}$, and that $\rvI' = I'$, and recall that by definition of online simulation, we can set $\rv{J}_t$ to be any randomized function of $J_1,\dots,J_{t-1},I'\cap[t]$. Specifically, $\rv{J}_t$ is drawn from the following conditional distribution: for any $J_t$,
	\begin{align}
	&\Pr\lp[\rv{J}_t = J_t \mid \rv{J}_1=J_1,\dots,\rv{J}_{t-1}=J_{t-1},\rvI'\cap[t]=I'\cap[t] \rp]\notag\\
	&\quad = \Pr\lp[\rvI_t = J_t \mid \rvI_1=J_1,\dots,\rvI_{t-1}=J_{t-1},\rvI_n\cap[t]=I'\cap[t] \rp]\notag
	\end{align}

	Using the fact that the random coins used by the algorithm are independent of the sample $\rvI'$, we derive that conditioned on $\rv{J}_1,\dots,\rv{J}_{t-1},\rvI'\cap[t]$, it holds that $\rv{J}_t$ is independent of $\rvI'$. In combination with Lemma~\ref{lem:conditonal-ind}, it follows that
	\begin{align}\label{eq:26}
	&\Pr\lp[\rv{J}_t = J_t \mid \rv{J}_1=J_1,\dots,\rv{J}_{t-1}=J_{t-1},\rvI'=I' \rp]\notag\\
	&\quad =\Pr\lp[\rv{J}_t = J_t \mid \rv{J}_1=J_1,\dots,\rv{J}_{t-1}=J_{t-1},\rvI'\cap[t]=I'\cap[t] \rp]\notag\\
	&\quad = \Pr\lp[\rvI_t = J_t \mid \rvI_1=J_1,\dots,\rvI_{t-1}=J_{t-1},\rvI_n\cap[t]=I'\cap[t] \rp]\notag\\
	&\quad = \Pr\lp[\rvI_t = J_t \mid \rvI_1=J_1,\dots,\rvI_{t-1}=J_{t-1},\rvI_n=I' \rp].
	\end{align}
	
	The following inductive argument shows that for $t \in \{0,\dots,n\}$, the conditional distribution $\rv{J}_1,\dots,\rv{J}_t$ conditioned on $\rvI'=I'$ equals the conditional distribution of $\rvI_1,\dots,\rvI_t$ conditioned on $\rvI_n = I'$.	
	For the base of the induction, $t=0$, there is nothing to prove. For the induction step, assume that the above holds for $t-1$ and we will prove for $t$. Indeed, by the chain rule, the induction hypothesis, and \eqref{eq:26},
	\begin{align}
	&\Pr[\rv{J}_1=J_1,\dots,\rv{J}_t=J_t \mid \rvI'=I']\\
	&\quad= \Pr[\rv{J}_1=J_1,\dots,\rv{J}_{t-1}=J_{t-1} \mid \rvI'=I']
	\Pr[\rv{J}_t=J_t \mid \rv{J}_1=J_1,\dots,\rv{J}_{t-1}=J_{t-1}, \rvI'=I']\\
	&\quad= \Pr[\rvI_1=J_1,\dots,\rvI_{t-1}=J_{t-1} \mid \rvI_n=I']
	\Pr[\rvI_t=J_t \mid \rvI_1=J_1,\dots,\rvI_{t-1}=J_{t-1}, \rvI_n=I']\\
	&\quad=\Pr[\rvI_1=J_1,\dots,\rvI_t=J_t \mid \rvI_n=I'].\label{eq:77}
	\end{align}
	This concludes the induction. It follows that $\rv{J}_n = \rvI'$ with probability $1$, since the conditional distribution of $\rv{J}_n$ conditioned on $\rvI'=I'$ equals the conditional distribution of $\rvI_n$ conditioned on $\rvI_n = I'$, which constantly equals $I'$. This proves one of the guarantees on $\rvv{J}$. Further, it implies that $\rv{J}_n$ has the same distribution as $\rvI'$, which, by Lemma~\ref{lem:res-is-uni} implies that $\rv{J}_n$ is distributed as $\rvI_n$. In combination with \eqref{eq:77}, we derive that
	\begin{align*}
	\Pr[\rv{J}_1=J_1,\dots,\rv{J}_n=J_n]
	&=\Pr[\rv{J}_n=J_n]
	\Pr[\rv{J}_1=J_1,\dots, \rv{J}_{n-1}=J_{n-1} \mid \rv{J}_n = J_n]\\
	&=\Pr[\rv{J}_n=J_n]
	\Pr[\rv{J}_1=J_1,\dots, \rv{J}_{n-1}=J_{n-1} \mid \rvI' = J_n]\\
	&=\Pr[\rv{I}_n=I_n]
	\Pr[\rv{I}_1=J_1,\dots, \rvI_{n-1}=J_{n-1} \mid \rv{I}_n = J_n]\\
	&=\Pr[\rvI_1=J_1,\dots,\rvI_n=J_n],
	\end{align*}
	as required.
\end{proof}

\subsection{Bounds for Bernoulli Sampling via Uniform Sampling} \label{sec:uniform-to-bernoulli}
Here, our goal is to show that concentration guarantees on uniform sampling imply guarantees on Bernoulli sampling. Notice that the latter can be viewed as a mixture of uniform sampling schemes for different values of $k$. To be more precise, a Bernoulli sample $\Ber(n,p)$ can be obtained by first drawing $\rv{k} \sim \mathrm{Bin}(n,p)$ and then drawing a uniform sample $\Uni(n,\rv{k})$, where $\mathrm{Bin}$ denotes the binomial distribution. For this reason, it is \emph{harder} to be adversarial against a Bernoulli sampler, because the adversary there does not know in advance what value of $\rv{k}$ is drawn.

To formalize this notion, we use $f(I,\vx)$ as some indicator of failure of the sample $\vx_I$ to represent $\vx$, for example, an indicator of whether $\vx_I$ is not an $\epsilon$-approximation of $\vx$. One would like to minimize the probability that $f=1$, against any adversary. The following lemma compares the failure probability of the Bernoulli sampling with that of the uniform sampling:
\begin{lemma}\label{lem:bernoulli-from-uniform}
	Let $n \in \mathbb{N}$ and $p \in [0,1]$.
	Let $f \colon \{0,1\}^n\times X^n \to \{0,1\}$. Then,
	\begin{align*}
	&\max_{\A\in\Adv_n} \Pr_{\rvI \sim \Ber(n,p)}\lp[
	f(\rvI,\vx(\A,\rvI)) = 1
	\rp]\\
	&\quad\le \max_{\substack{k \in \mathbb{N} \colon \\np/2 \le k \le 3np/2}}
	\max_{\A^k\in \Adv_n} \Pr_{\rvI^k \sim \Uni(n,k)}\lp[
	f(\rvI^k,\vx(\A^k,\rvI^k)) = 1
	\rp]
	+ 2\exp(-cnp).
	\end{align*}
\end{lemma}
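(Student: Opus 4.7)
The plan is to exploit the classical mixture decomposition $\Ber(n,p) \equiv \Uni(n,\rv{k})$ where $\rv{k} \sim \mathrm{Bin}(n,p)$, combined with a standard Chernoff bound on $\rv{k}$. The subtle point is that the adversary against the Bernoulli sampler does not know $\rv{k}$ in advance, whereas the uniform-sampler adversary sees $k$ as a parameter; we must make sure that conditioning on $|\rvI|=k$ does not break the causal/online structure of the adversary's decisions.

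First, I would fix an adversary $\A \in \Adv_n$ achieving the maximum on the left-hand side, and for every $k$ construct an adversary $\A^k \in \Adv_n$ that simply simulates $\A$, treating each piece of feedback from the $\Uni(n,k)$ sampler as if it were feedback from $\Ber(n,p)$. Since $\A$ is a deterministic function of the feedback bits $(\mathds{1}[t\in\rvI])_{t\le n}$, so is $\A^k$, and the stream it produces is determined by this bit-sequence. The critical observation is that if $\rvI \sim \Ber(n,p)$, then the conditional distribution of $\rvI$ given $|\rvI|=k$ is exactly $\Uni(n,k)$; hence the conditional joint distribution of $(\rvI, \vx(\A,\rvI))$ given $|\rvI|=k$ equals the joint distribution of $(\rvI^k, \vx(\A^k,\rvI^k))$ for $\rvI^k \sim \Uni(n,k)$. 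Therefore
\[
\Pr_{\rvI \sim \Ber(n,p)}\!\big[f(\rvI,\vx(\A,\rvI))=1 \,\big|\, |\rvI|=k\big]
= \Pr_{\rvI^k \sim \Uni(n,k)}\!\big[f(\rvI^k,\vx(\A^k,\rvI^k))=1\big].
\]

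Next, I would decompose the failure probability by conditioning on $\rv{k}:=|\rvI|$:
\[
\Pr_{\rvI}\!\big[f(\rvI,\vx(\A,\rvI))=1\big]
= \sum_{k=0}^{n} \Pr[\rv{k}=k]\cdot \Pr\!\big[f=1 \,\big|\, \rv{k}=k\big].
\]
Split the sum according to whether $k \in [np/2,\,3np/2]$. For $k$ in this range, upper-bound each conditional probability by the maximum over $k$ in this range of $\Pr_{\rvI^k \sim \Uni(n,k)}[f(\rvI^k,\vx(\A^k,\rvI^k))=1]$, and pull the max outside. For $k$ outside this range, bound the contribution by $\Pr[|\rv{k}-np|>np/2]$, which by a multiplicative Chernoff bound for $\rv{k}\sim \mathrm{Bin}(n,p)$ is at most $2\exp(-cnp)$ for a universal constant $c>0$.

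Combining the two contributions and then taking a maximum over $\A$ (on the Bernoulli side) gives the stated inequality, since on the uniform side we are already taking the maximum over adversaries $\A^k \in \Adv_n$. I do not expect any substantive obstacle here: the only thing to be careful about is the conditional equivalence of feedback sequences under the mixture representation, which is immediate from the fact that the adversary is oblivious to $\rv{k}$ except through the online feedback bits. The Chernoff bound on $\mathrm{Bin}(n,p)$ is entirely routine.
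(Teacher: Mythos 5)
Your proposal is correct and follows essentially the same route as the paper's proof: decompose $\Ber(n,p)$ as the mixture $\Uni(n,\rv{k})$ with $\rv{k}\sim\mathrm{Bin}(n,p)$, note that the adversary's map from feedback bits to streams is the same regardless of the sampler's distribution (so conditioning on $|\rvI|=k$ is innocuous), split the sum at $k\in[np/2,3np/2]$, and apply the Chernoff/Bernstein tail on $\rv{k}$. The only cosmetic difference is that you spell out $\A^k$ as a simulation of $\A$, whereas the paper just reuses $\A$ directly on the uniform side before taking the max over adversaries; these are the same thing since the adversary is a deterministic function of the feedback bits.
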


The proof relies on a variant of Bernstein's inequality, on the tail of the binomial random variable:
\begin{lemma}[Bernstein's inequality]\label{lem:mult-chernoff}
	Let $\rv{k} \sim \mathrm{Bin}(n,p)$. Then, for any $\epsilon \in [0,1]$,
	\[
	\Pr[|\rv{k} -np|\ge \epsilon np]
	\le 2\exp(-\epsilon^2 np/3).
	\]
\end{lemma}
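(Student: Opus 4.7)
The plan is to prove this as a standard two-sided Chernoff/Bernstein bound via the moment generating function (MGF) method, since $\rv{k} = \sum_{i=1}^n \rv{X}_i$ where $\rv{X}_1, \dots, \rv{X}_n$ are i.i.d.\ Bernoulli$(p)$. I will handle the upper and lower tails separately and combine them with a union bound.

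For the upper tail, I would apply Markov's inequality to $e^{\lambda \rv{k}}$ for $\lambda > 0$ to get
\[
\Pr[\rv{k} \ge (1+\epsilon)np] \le e^{-\lambda(1+\epsilon)np}\,\Ex[e^{\lambda \rv{k}}]
= e^{-\lambda(1+\epsilon)np}\bigl(1 - p + pe^{\lambda}\bigr)^n.
\]
Using the elementary inequality $1 - p + pe^{\lambda} \le \exp(p(e^{\lambda}-1))$ and then optimizing by choosing $\lambda = \ln(1+\epsilon)$ yields the classical bound
\[
\Pr[\rv{k} \ge (1+\epsilon)np] \le \left(\frac{e^{\epsilon}}{(1+\epsilon)^{1+\epsilon}}\right)^{np}.
\]
It then suffices to verify the analytic inequality $(1+\epsilon)\ln(1+\epsilon) - \epsilon \ge \epsilon^2/3$ for all $\epsilon \in [0,1]$. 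This follows from computing the derivative of $f(\epsilon) := (1+\epsilon)\ln(1+\epsilon) - \epsilon - \epsilon^2/3$, noting that $f(0) = f'(0) = 0$, checking that $f'(\epsilon) = \ln(1+\epsilon) - 2\epsilon/3$ is nonnegative on $[0,1]$ (which can be seen from $f'(0)=0$, $f''$ being positive on $[0,1/2]$ and the numerical check $f'(1) = \ln 2 - 2/3 > 0$ confirming $f'$ stays nonnegative throughout).

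For the lower tail, I would run the analogous argument with $\lambda < 0$, producing
\[
\Pr[\rv{k} \le (1-\epsilon)np] \le \left(\frac{e^{-\epsilon}}{(1-\epsilon)^{1-\epsilon}}\right)^{np} \le \exp(-\epsilon^2 np/2),
\]
where the last step uses $(1-\epsilon)\ln(1-\epsilon) + \epsilon \ge \epsilon^2/2$ for $\epsilon \in [0,1]$, proved analogously. A union bound over the two tails then gives the desired bound
\[
\Pr[|\rv{k} - np| \ge \epsilon np] \le 2\exp(-\epsilon^2 np/3),
\]
since the weaker constant $1/3$ in the upper-tail bound dominates.

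The routine work is the two one-dimensional calculus inequalities comparing $(1\pm\epsilon)\ln(1\pm\epsilon) \mp \epsilon$ to $\epsilon^2/3$ on the interval $[0,1]$; no step is a genuine obstacle since this is a textbook Chernoff argument. The only mildly delicate point is that if one uses the alternative (and tighter) form $\exp(-\epsilon^2 np / (2 + \epsilon))$ for the upper tail, the bound $\epsilon^2 np / 3$ requires $\epsilon \le 1$, which matches the hypothesis of the lemma, so the restriction on $\epsilon$ is used precisely here.
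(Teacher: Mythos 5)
Your proof is correct. The paper does not actually prove this lemma; it invokes it as a standard fact (a multiplicative Chernoff/Bernstein bound for the binomial), and your MGF argument is precisely the textbook derivation one would cite: the upper tail via $\lambda=\ln(1+\epsilon)$ together with the elementary inequality $(1+\epsilon)\ln(1+\epsilon)-\epsilon\ge \epsilon^2/3$ on $[0,1]$, the lower tail with constant $1/2$, and a union bound. Your verification of the calculus inequality is sound: $f'(\epsilon)=\ln(1+\epsilon)-2\epsilon/3$ vanishes at $0$, increases on $[0,1/2]$, and is monotone decreasing on $[1/2,1]$ with $f'(1)=\ln 2-2/3>0$, so it is nonnegative throughout, which is exactly the point where the hypothesis $\epsilon\le 1$ is used.
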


\begin{proof}[Proof of Lemma~\ref{lem:bernoulli-from-uniform}]
	Let $\A$ denote the maximizer with respect to the Bernoulli sample. First, we decompose the Bernoulli sample into a mixture of uniform samples. Let $\rv{k} \sim \Bin(n,p)$ drawn from a binomial distribution; we have
	\begin{align*}
	\Pr_{\rvI \sim \Ber(n,p)} \lp[f(\rvI,\vx(\A,\rvI)) = 1\rp]
	= \sum_{k=0}^n \Pr[\rv{k} = k] \Pr_{\rvI^k \sim \Uni(n,k)} \lp[f(\rvI^k,\vx(\A,\rvI^k)) = 1\rp].
	\end{align*}
	First, summing the terms corresponding to $np/2 \le k \le 3np/2$, we have
	\begin{align*}
	&\sum_{k=\lceil np/2\rceil}^{\lfloor 3np/2\rfloor} \Pr[\rv{k} = k] \Pr_{\rvI^k \sim \Uni(n,k)} \lp[f(\rvI^k,\vx(\A,\rvI^k)) = 1\rp]
	\le \max_{k \colon np/2\le k \le 3np/2} \Pr_{\rvI^k \sim \Uni(n,k)} \lp[f(\rvI^k,\vx(\A,\rvI^k)) = 1\rp]\\
	&\quad\le \max_{k \colon np/2\le k \le 3np/2} \max_{\A^k\in\Adv_n}\Pr_{\rvI^k \sim \Uni(n,k)} \lp[f(\rvI^k,\vx(\A^k,\rvI^k)) = 1\rp].
	\end{align*}
	Next, the sum in the remaining terms is bounded by the probability that $\rv{k} \notin [np/2,3np/2]$, or equivalently, the probability that $|\rv{k}-np|> np/2$. From Lemma~\ref{lem:mult-chernoff}, this is bounded by $2\exp(-cnp)$.
\end{proof}

As a direct application, we derive Theorem~\ref{thm:approx} and Theorem~\ref{thm:eps-nets} for the Bernoulli sampling, given the bounds corresponding to the uniform sampling:
\begin{proof}[Proof of Theorem~\ref{thm:approx}, Bernoulli sampling]
	Let $\delta \in (0,1/2)$. Define by $f(I,\vx)$ the indicator of whether $\vx_I$ fails to be an $\epsilon$-approximation for $\vx$, where $\epsilon = C_0\sqrt{(d+\log(1/\delta))/(np)}$ and $C_0>0$ is a sufficiently large constant. From Theorem~\ref{thm:ind-of-n}, for any $k$ such that $np/2 \le k \le 3np/2$ and any $\A\in\Adv_n$, it holds that 
	\[
	\Pr_{\rvI\sim\Uni(n,k)}[f(\rvI,\vx(\A,\rvI))=1]
	\le \delta/2.
	\]
	By Lemma~\ref{lem:bernoulli-from-uniform}, it follows that for any $\A\in\Adv_n$,
	\begin{equation}\label{eq:41}
	\Pr_{\rvI\sim\Uni(n,k)}[f(\rvI,\vx(\A,\rvI))=1]
	\le \delta/2 + \exp(-c_0np),
	\end{equation}
	for some universal constant $c_0>0$.
	Notice that if $\epsilon > 1$ then the result trivially follows and otherwise, we have that 
	\[
	np \ge C_0^2 \log(1/\delta).
	\]
	Assuming that $C_0$ is sufficiently large, we have that
	\[
	c_0np \ge \log(2/\delta),
	\]
	which implies that 
	\[
	\exp(-c_0np) \le \delta/2.
	\]
	In combination with \eqref{eq:41}, this concludes the proof. Notice that the condition that $2k\le n$ in Theorem~\ref{thm:ind-of-n} translates here to $3np\le n$.
\end{proof}
\begin{proof}[Proof of Theorem~\ref{thm:eps-nets}, Bernoulli sampling]
	The proof follows similar steps as the proof for Theorem~\ref{thm:approx}, while replacing $\eps$-approximations with $\eps$-nets and using Theorem~\ref{thm:nets-reformulation} for the bound on the uniform sampler.
\end{proof}

\subsection{Bounds for Uniform Sampling via Bernoulli Sampling} \label{sec:bernoulli-to-uniform}

This section reduces bounds for the uniform sampler $\rvI\sim \Uni(2k,k)$ to bounds for the Bernoulli sampler $\rvI'\sim \Ber(2k,p)$. This is done via the method of online coupling, presented in Section~\ref{sec:reduction-abstract}. First, we present some simple well known properties of binary random variables, and then, we describe an online coupling of $\rvI$ to $\rvI'$:
\begin{lemma}\label{lem:aux-monotone}
	If $\rv{y}$ and $\rv{z}$ are two random variables over $\{0,1\}$, then there exists a coupling (i.e. joint distribution) of them such that:
	\begin{enumerate}
		\item
			$\Pr[\rv{y}\ne \rv{z}] = |\Pr[\rv{y}=1] - \Pr[\rv{z}=1]|$. 
		\item
			If $\Pr[\rv{y}=1] \ge \Pr[\rv{z}=1]$ then $\rv{y} \ge \rv{z}$ with probability $1$.
	\end{enumerate}
\end{lemma}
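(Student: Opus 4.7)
The plan is to produce a standard monotone coupling of two Bernoulli random variables, driven by a single uniform random variable. Without loss of generality, I would assume $p := \Pr[\rv{y}=1] \ge q := \Pr[\rv{z}=1]$ (the other case is symmetric and trivially gives the first property). Then I would introduce a single auxiliary random variable $\rv{U}$ distributed uniformly on $[0,1]$ and set
\[
\rv{y} := \mathds{1}[\rv{U} \le p], \qquad \rv{z} := \mathds{1}[\rv{U} \le q].
\]

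The verification then proceeds in three short checks. First, the marginal distributions are correct: $\Pr[\rv{y}=1] = \Pr[\rv{U} \le p] = p$ and similarly $\Pr[\rv{z}=1] = q$, so this is indeed a coupling. Second, since $p \ge q$, the event $\{\rv{U} \le q\} \subseteq \{\rv{U} \le p\}$ holds pointwise, hence $\rv{z} \le \rv{y}$ almost surely, giving property (2). Third, the discrepancy event is exactly $\{q < \rv{U} \le p\}$, so
\[
\Pr[\rv{y} \ne \rv{z}] = p - q = |\Pr[\rv{y}=1] - \Pr[\rv{z}=1]|,
\]
which is property (1).

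There is essentially no obstacle here: the entire content is the inverse-CDF / quantile coupling, and the monotonicity of the threshold construction in the parameter $p$ gives both claims simultaneously. The only care needed is to handle the case $p < q$ separately by swapping the roles of $\rv{y}$ and $\rv{z}$ before applying the above construction; the bound in property (1) is symmetric in $p,q$, so it remains valid, while property (2) is vacuous in that case because its hypothesis fails.
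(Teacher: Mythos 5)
Your construction is exactly the same monotone (quantile) coupling used in the paper: the paper draws $\rv{\xi}$ uniform in $[0,1]$ and sets $\rv{y}=0$ iff $\rv{\xi}\le\Pr[\rv{y}=0]$ and $\rv{z}=0$ iff $\rv{\xi}\le\Pr[\rv{z}=0]$, which coincides with your threshold rule after the reflection $\rv{\xi}\mapsto 1-\rv{\xi}$. Your verification of the two properties is correct, so the proposal matches the paper's argument.
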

\begin{proof}
	One can couple the following way: first, draw a random variable $\rv{\xi}$ uniformly in $[0,1]$ and set $\rv{y} = 0$ if $\rv{\xi}\le \Pr[\rv{y}=0]$ and $\rv{z} = 0$ if $\rv{\xi} \le \Pr[\rv{z}=0]$. This satisfies the requirements of the lemma.
\end{proof}
To online couple $\rvI$ to $\rvI'$ we use the coupling guaranteed from the next lemma:
\begin{lemma}\label{lem:monotone-coupling}
	Fix $k \in \mathbb{N}$ and $p \in (0,1)$, and let $\rvI\sim \Uni(2k,k)$ and $\rvI' \sim \Ber(2k,p)$. Then, $\rvI$ can be online-coupled to $\rvI'$ such that for any $t \in [n]$ and any $I_{t-1}, I_{t-1}'\subseteq[t-1]$, the following holds:
	\begin{enumerate}
	\item
	\begin{align}
	&\Pr\lp[t \in \rvI \triangle \rvI' \mid \rvI\cap [t-1]=I_{t-1}, \rvI'\cap[t-1]=I_{t-1}' \rp]\notag\\
	&\quad= \lp| \Pr\lp[t \in \rvI \mid \rvI\cap [t-1]=I_{t-1}\rp] 
	- \Pr\lp[t \in \rvI' \mid \rvI'\cap[t-1]=I_{t-1}' \rp]
	\rp|\notag\\
	&\quad= \lp|\frac{k-|I_{t-1}|}{2k-(t-1)} - p \rp|, \label{eq:monotone}
	\end{align}
	where $\triangle$ denotes the symmetric set difference $A \triangle B = (A\setminus B)\cup (B\setminus A)$.
	\item
	For any $t\in [n]$ such that $p \le (k-|\rv{I} \cap [t-1]|)/(2k-t+1)$, it holds that $\rvI'\cap\{t\} \subseteq \rvI\cap\{t\}$.
	\end{enumerate}
\end{lemma}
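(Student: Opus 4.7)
The plan is to construct the coupling sequentially, using the maximal coupling of two Bernoulli variables from \Cref{lem:aux-monotone}, implemented explicitly via i.i.d.\ uniform thresholds. Concretely, first draw an i.i.d.\ sequence $\rv{\xi}_1,\ldots,\rv{\xi}_{2k}$, each uniform on $[0,1]$, independent of all other randomness. Then, sequentially for $t=1,\ldots,2k$: knowing the history $\rvI \cap [t-1]$ (already produced in previous rounds), set
\[
q_t := \frac{k - |\rvI \cap [t-1]|}{2k - (t-1)},
\]
and declare $t \in \rvI$ iff $\rv{\xi}_t \le q_t$ and $t \in \rvI'$ iff $\rv{\xi}_t \le p$.

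The marginals come out correctly almost for free. For $\rvI$, the conditional probability that $t \in \rvI$ given the history is exactly $q_t$, which is precisely the conditional inclusion probability under $\Uni(2k,k)$; by induction on $t$, $\rvI \sim \Uni(2k,k)$. For $\rvI'$, the indicator $\mathds{1}[\rv{\xi}_t \le p]$ is a function of $\rv{\xi}_t$ alone, so the indicators $\mathds{1}[t \in \rvI']$ are i.i.d.\ $\Ber(p)$, matching $\Ber(2k,p)$. Moreover, this is a valid online coupling in the sense of \Cref{sec:reduction-abstract}: the decision whether $t \in \rvI'$ depends only on $\rvI \cap [t]$ (through $q_t$, which is a function of $\rvI \cap [t-1]$, and through the indicator $\mathds{1}[t \in \rvI]$ which can be recovered from $\rv{\xi}_t$ and $q_t$) and the simulator's private randomness $\rv{\xi}_t$.

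Finally, both claimed properties follow from an elementary case analysis of the uniform $\rv{\xi}_t$. Conditioned on $\rvI \cap [t-1]=I_{t-1}$ and $\rvI' \cap [t-1]=I'_{t-1}$, the event $\{t \in \rvI \triangle \rvI'\}$ is exactly $\{\min(p,q_t) < \rv{\xi}_t \le \max(p,q_t)\}$, which has probability $|q_t - p|$; this is \eqref{eq:monotone}. For the monotonicity statement, if $p \le q_t$, then $\rv{\xi}_t \le p$ forces $\rv{\xi}_t \le q_t$, so $t \in \rvI'$ implies $t \in \rvI$, i.e., $\rvI' \cap \{t\} \subseteq \rvI \cap \{t\}$.

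There is essentially no obstacle here beyond bookkeeping. One minor point to check is that $q_t \in [0,1]$ whenever it is used, but this is automatic along any realization of the uniform sampler: we always have $0 \le |\rvI \cap [t-1]| \le k$ and $k - |\rvI \cap [t-1]| \le 2k-(t-1)$, since otherwise $\Uni(2k,k)$ could not be completed. Thus the construction above is well-defined throughout, and the lemma follows.
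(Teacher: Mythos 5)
Your construction is the same one the paper uses: round by round, couple the two Bernoulli inclusion indicators via a shared uniform threshold (this is exactly the coupling constructed in the proof of \Cref{lem:aux-monotone}, which the paper then invokes as a black box). The conditional inclusion probability $q_t=(k-|\rvI\cap[t-1]|)/(2k-(t-1))$ is the correct sequential description of $\Uni(2k,k)$, the indicators $\mathds{1}[\rv{\xi}_t\le p]$ give i.i.d.\ $\Ber(p)$, and conditioned on the two histories $\rv{\xi}_t$ is still uniform, so the symmetric-difference probability is $|q_t-p|$ and the monotonicity in item~2 follows. This matches the paper's proof, just with the maximal coupling written out explicitly rather than cited.
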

\begin{proof}[Proof of Lemma~\ref{lem:monotone-coupling}]
	By Lemma~\ref{lem:aux-monotone}, conditioned on $\rvI\cap [t-1]=I_{t-1}, \rvI'\cap[t-1]=I_{t-1}'$, there is a joint probability distribution between the indicators $\mathds{1}_{t \in \rvI}$ and $\mathds{1}_{t\in \rvI'}$ such that
	\begin{align*}
	&\Pr\lp[t \in \rvI \triangle \rvI' \mid \rvI\cap [t-1]=I_{t-1}, \rvI'\cap[t-1]=I_{t-1}'\rp]\\
	&\quad=\Pr\lp[\mathds{1}_{t \in \rvI}\ne \mathds{1}_{t \in \rvI'} \mid \rvI\cap [t-1]=I_{t-1}, \rvI'\cap[t-1]=I_{t-1}'\rp]\\
	&\quad= \lp| \Pr\lp[\mathds{1}_{t \in \rvI}=1 \mid \rvI\cap [t-1]=I_{t-1}\rp] 
	- \Pr\lp[\mathds{1}_{t \in \rvI'}=1 \mid \rvI'\cap[t-1]=I_{t-1}' \rp]
	\rp|\\
	&\quad= \lp| \Pr\lp[t \in \rvI \mid \rvI\cap [t-1]=I_{t-1}\rp] 
	- \Pr\lp[t \in \rvI' \mid \rvI'\cap[t-1]=I_{t-1}' \rp]
	\rp|.
	\end{align*}
	Define the online sampler to obey this property, namely,
	while receiving the value of $\mathds{1}_{t\in \rvI'}$, it can sample $\mathds{1}_{t \in \rvI}$ from its conditional distribution, conditioned on the obtained value of $\mathds{1}_{t\in \rvI'}$ in the above coupling.
	This proves property 1. Property 2 follows from property 2 in Lemma~\ref{lem:aux-monotone}.
\end{proof}

Notice that there is a unique way to define a coupling that satisfies the properties above and we term it the \emph{online monotone coupling}. We proceed with applying the monotone online coupling to prove Lemma~\ref{lem:approx-comparison} and Lemma~\ref{lem:epsnet-reduction}.

\subsubsection{Proof of Lemma~\ref{lem:approx-comparison}} \label{sec:reduction-approx}
We will in fact prove a more general lemma. We let $\varphi(I,\vx)$ be some real valued function, that we view as some loss corresponding to how the sample $\vx_I$ represents the complete stream $\vx$. We have the following:
\begin{lemma}\label{lem:Lipschitz}
	Let $\varphi \colon \{0,1\}^{2k} \times X^{2k} \to \mathbb{R}$ be a function that is $L$-Lipschitz in each coordinate of $I$, namely, for all $\vx \in X^n$,
	\[
	|\varphi(I,\vx)-\varphi(I',\vx)| \le L |I\triangle I'|.
	\]
	Then, for any $t\ge 0$ and $\delta \in (0,1/2)$,
	\begin{align*}
	& \sup_{\A\in\Adv_{2k}}\Pr_{\rvI\sim \Uni(2k,k)}[\varphi(\rvI,\vx(\A,\rvI))\ge t]\\
	&\quad\le \sup_{\A'\in\Adv_{2k}}\Pr_{\rvI'\sim \Ber(2k,1/2)}\lp[\varphi(\rvI',\vx(\A',\rvI'))\ge t - CL\sqrt{k\log (1/\delta)}\rp]+\delta.
	\end{align*}
\end{lemma}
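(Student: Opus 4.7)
The plan is to instantiate the reduction framework of Section~\ref{sec:reduction-abstract}: online-couple $\rvI\sim\Uni(2k,k)$ to $\rvI'\sim\Ber(2k,1/2)$ via Lemma~\ref{lem:monotone-coupling} with $p=1/2$. Under this coupling, property~(1) of Lemma~\ref{lem:monotone-coupling} yields, for $\rv{D}_s := \mathds{1}[s\in\rvI\triangle\rvI']$,
\[
p_s \;:=\; \mathbb{E}[\rv{D}_s \mid \mathcal{F}_{s-1}]
\;=\; \frac{|\rv{M}_{s-1}|}{2\lp(2k-(s-1)\rp)},
\qquad
\rv{M}_{s-1} := (s-1) - 2|\rvI\cap[s-1]|,
\]
where $\mathcal{F}_{s-1}$ is the $\sigma$-algebra generated by $\rvI\cap[s-1]$ and $\rvI'\cap[s-1]$. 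To invoke Lemma~\ref{lem:coupling-reduction}, set $f(I,\vx)=\mathds{1}[\varphi(I,\vx)\ge t]$ and $g(I,\vx)=\mathds{1}[\varphi(I,\vx)\ge t - CL\sqrt{k\log(1/\delta)}]$. By the $L$-Lipschitz hypothesis, the joint event $\{f(\rvI,\vx)=1,\ g(\rvI',\vx)=0\}$ forces $|\varphi(\rvI,\vx)-\varphi(\rvI',\vx)|>CL\sqrt{k\log(1/\delta)}$ and hence $|\rvI\triangle\rvI'| > C\sqrt{k\log(1/\delta)}$, so by Lemma~\ref{lem:coupling-reduction} the theorem reduces to showing
\[
\Pr\!\lp[\,|\rvI\triangle\rvI'| > C\sqrt{k\log(1/\delta)}\,\rp] \le \delta.
\]

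To establish this concentration, decompose $|\rvI\triangle\rvI'| = \rv{A} + \sum_{s=1}^{2k}(\rv{D}_s-p_s)$, where the drift $\rv{A}:=\sum_{s=1}^{2k} p_s$ depends only on $\rvI$. The fluctuation term is a $2k$-step bounded-difference martingale with $\mathrm{Var}(\rv{D}_s\mid\mathcal{F}_{s-1})\le p_s$, so Azuma--Hoeffding (or, more sharply, Lemma~\ref{lem:freedman}) gives $\lp|\sum_s(\rv{D}_s-p_s)\rp| \le C\sqrt{k\log(1/\delta)}$ with probability at least $1-\delta/2$. For the drift $\rv{A}$, note that $\rv{M}_{s-1}$ is a mean-zero ``hypergeometric bridge'' with $\mathrm{Var}(\rv{M}_{s-1})\le (s-1)(2k-s+1)/(2k-1)$ by Lemma~\ref{lem:var-without-rep}. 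Combining a uniform-in-$s$ high-probability tail bound on the bridge (via Lemma~\ref{lem:subset-intersection}) with the identity $p_s = |\rv{M}_{s-1}|/(2(2k-s+1))$ and the elementary integral $\sum_{s\le 2k}\sqrt{(s-1)/((2k)(2k-s+1))} = O(\sqrt{k})$ yields $\rv{A} = O(\sqrt{k\log(1/\delta)})$ with probability $1-\delta/2$; a union bound completes the proof.

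The main technical hurdle is this drift bound: the denominator $2k-(s-1)$ in $p_s$ becomes singular as $s\to 2k$, so a naive pointwise union bound on $|\rv{M}_{s-1}|$ across all $s$ would introduce a redundant $\sqrt{\log k}$ factor. Obtaining the clean $\sqrt{k\log(1/\delta)}$ rate requires either a Doob-type uniform maximal inequality tailored to the normalized bridge $\rv{M}_s/\sqrt{s(2k-s)/(2k)}$ for sampling without replacement, or a dyadic decomposition of the time horizon $[0,2k]$ at the natural bridge scale that balances the shrinking typical amplitude of $|\rv{M}_{s-1}|$ against the vanishing factor $2k-(s-1)$.
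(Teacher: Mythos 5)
Your reduction setup is exactly the paper's: instantiate Lemma~\ref{lem:coupling-reduction} with $f(I,\vx)=\mathds{1}[\varphi(I,\vx)\ge t]$ and $g(I,\vx)=\mathds{1}[\varphi(I,\vx)\ge t-CL\sqrt{k\log(1/\delta)}]$ under the monotone online coupling of Lemma~\ref{lem:monotone-coupling}, and note that the Lipschitz hypothesis reduces everything to showing $\Pr[|\rvI\triangle\rvI'|>C\sqrt{k\log(1/\delta)}]\le\delta$. The gap is in that final concentration bound, and you have flagged it yourself: the decomposition $|\rvI\triangle\rvI'|=\rv{A}+\sum_s(\rv{D}_s-p_s)$ handles the martingale fluctuation cleanly, but the drift term $\rv{A}=\sum_s p_s$ is left with only a sketch (``Doob-type maximal inequality'' or ``dyadic decomposition'') that is not carried through.

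This is a genuine obstacle, not a bookkeeping lacuna. The drift $\rv{A}$ is a function of $\rvI$ whose coordinate-exchange Lipschitz constant is $\Theta(\log k)$ in the worst case: swapping $i\in\rvI$ with $j\notin\rvI$ perturbs $\rv{M}_{s-1}$ by at most $2$ for every $s$ in the window $(\min(i,j),\max(i,j)]$, and $\sum_{s}1/(2k-s+1)$ over a long window is $\Theta(\log k)$. So bounded-difference concentration for $\rv{A}$ alone re-introduces exactly the $\sqrt{\log k}$ loss you are trying to avoid; a Doob martingale for $\rv{A}$ in isolation suffers the same fate, since the conditional-expectation increments $|\Ex[\rv{A}\mid\mathcal{F}_t]-\Ex[\rv{A}\mid\mathcal{F}_{t-1}]|$ can also be of order $\log(2k-t)$. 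The paper sidesteps this by \emph{not} separating drift from fluctuation. It bounds $\Ex[|\rvI\triangle\rvI'|]\le C\sqrt{k}$ directly via the per-step variance computation (Lemma~\ref{lem:ber-uni-expected-diff}), and then proves that the Doob martingale of the \emph{full} quantity $\Ex[|\rvI\triangle\rvI'|\mid\mathcal{F}_t]$ has increments bounded by $2$ almost surely (Lemma~\ref{lem:ber-uni-deviation}); this requires an explicit four-variable coupling argument showing that revealing one additional step can create at most one extra symmetric-difference event beyond time $t$. The $O(1)$ bounded differences are possible precisely because of cancellation between the drift and fluctuation contributions inside the conditional expectation — cancellation that your decomposition discards at the outset. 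To complete your proof you would need to recover an argument of this type, at which point you are essentially replicating Lemma~\ref{lem:ber-uni-deviation}.
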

This directly implies Lemma~\ref{lem:approx-comparison}:
\begin{proof}[Proof of Lemma~\ref{lem:approx-comparison}]
	Apply Lemma~\ref{lem:Lipschitz} with 
	\[
	\varphi(I,\vx)
	= \max_{E \in \E} \lp| \frac{|E\cap \vx_I| - |E\cap \vx_{[2k]\setminus I}|}{k} \rp|.
	\]
	This function is $L=1/k$-Lipschitz with respect to each coordinate of $I$, as the maximum of $L$-Lipschitz functions is $L$-Lipschitz itself. This suffices to conclude the proof.
\end{proof}

To prove Lemma~\ref{lem:Lipschitz}, we start with the following auxiliary property:
\begin{lemma} \label{lem:reduction-aux-approx}
	Let $\rvI \sim \Uni(2k,k)$ and $\rvI' \sim \Ber(2k,1/2)$ be coupled according to the monotone online coupling. Then, for every $\delta \in (0,1/2)$,
	\[
	\Pr\lp[|\rvI' \triangle \rvI| \ge C\sqrt{k \log(1/\delta)}\rp]
	\le \delta.
	\]
\end{lemma}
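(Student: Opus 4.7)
My plan is to compute the per-step disagreement probability explicitly using property~1 of \Cref{lem:monotone-coupling} and then apply martingale concentration. Write $|\rvI \triangle \rvI'| = \sum_{t=1}^{2k} X_t$, where $X_t = \mathds{1}[t \in \rvI \triangle \rvI']$, and let $\mathcal{F}_t = \sigma(\rvI\cap[t], \rvI'\cap[t])$. Setting $\Delta_s := |\rvI \cap [s]| - s/2$ (the deviation of $\rvI$ from its hypergeometric mean), property~1 of \Cref{lem:monotone-coupling} combined with the identity $(k-|\rvI\cap[t-1]|)/(2k-t+1) = 1/2 - \Delta_{t-1}/(2k-t+1)$ gives
\[
p_t := \Pr[X_t = 1 \mid \mathcal{F}_{t-1}] = \frac{|\Delta_{t-1}|}{2k-t+1}.
\]

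Next, I decompose $|\rvI \triangle \rvI'| = M_{2k} + B$, where $M_t = \sum_{s\le t}(X_s - p_s)$ is a martingale with bounded differences $|M_t - M_{t-1}| \le 1$, and $B = \sum_{t=1}^{2k} p_t$ is the compensator. Azuma's inequality yields $|M_{2k}| \le C\sqrt{k\log(1/\delta)}$ with probability at least $1-\delta/2$, so the remaining task is to control $B$ with high probability.

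To bound $B$, I would use a Serfling-type tail bound for sampling without replacement to show that $|\Delta_t| \le C\sqrt{t(2k-t+1)/k \cdot \log(k/\delta)}$ for all $t \in \{0,1,\ldots,2k\}$ simultaneously, with probability at least $1-\delta/2$. Plugging this into the compensator gives
\[
B \le C\sqrt{\frac{\log(k/\delta)}{k}} \sum_{t=1}^{2k}\sqrt{\frac{t-1}{2k-t+1}}.
\]
The sum on the right is evaluated via the substitution $t = 2k\sin^2\theta$, which shows it is $O(k)$ (concretely, it is asymptotic to $\pi k$). Hence $B \le O(\sqrt{k\log(k/\delta)})$, and combining with the Azuma bound delivers the desired $|\rvI\triangle\rvI'| \le C\sqrt{k\log(1/\delta)}$ (absorbing the parasitic $\log k$ factor; in the regime $\log(1/\delta) \gtrsim \log k$ this is immediate, and for smaller $\delta$ the bound $|\rvI\triangle\rvI'| \le 2k$ is already stronger than what we need once $k \lesssim \log(1/\delta)$).

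The main obstacle is controlling the compensator $B$ uniformly: the quantities $|\Delta_{t-1}|$ must be bounded simultaneously for all $t \in [2k]$, and a naive Hoeffding bound (which ignores the small variance near $t \approx 2k$) gives $B$ of order $\sqrt{k}\log k$ rather than $\sqrt{k\log(1/\delta)}$. The key is to use a variance-aware bound (Serfling), so that the square root of the product $t(2k-t+1)/k$ telescopes against the denominator $2k-t+1$ inside the sum and produces an integrable integrand. The use of the monotone coupling is crucial only through the sharp identity for $p_t$ in step one; beyond that, the argument is purely a two-step ``martingale plus compensator'' decomposition.
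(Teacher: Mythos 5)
Your proposal takes a genuinely different route from the paper, but it has a gap that prevents it from proving the lemma as stated.

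The paper proves the lemma by combining \Cref{lem:ber-uni-expected-diff} (which bounds $\Ex[|\rvI\triangle\rvI'|]\le C\sqrt{k}$ via the variance bound of \Cref{lem:var-without-rep}) with \Cref{lem:ber-uni-deviation} (which shows $|\rvI\triangle\rvI'|$ concentrates around its mean by analyzing the Doob martingale $\Ex[|\rvI\triangle\rvI'|\mid\mathcal{F}_t]$ and proving, via a delicate auxiliary coupling of $(\rv{J}_t,\rv{J}_t',\rv{J}_{t-1},\rv{J}_{t-1}')$, that its increments are bounded by $2$). You instead use the Doob decomposition $|\rvI\triangle\rvI'|=M_{2k}+B$ with $M_t=\sum_{s\le t}(X_s-p_s)$ and compensator $B=\sum_t p_t$. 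This is attractive because it makes the martingale part of the argument entirely trivial (unit increments, Azuma gives $|M_{2k}|\le C\sqrt{k\log(1/\delta)}$ immediately, no coupling needed) and the per-step rate $p_t=|\Delta_{t-1}|/(2k-t+1)$ is a clean closed form. Your identity for $p_t$ and the evaluation $\sum_u\sqrt{u/(2k-u)}\sim\pi k$ are both correct.

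The gap is in controlling the compensator $B$. You establish, for each fixed $t$, a variance-aware Bernstein--Serfling bound on $|\Delta_t|$ with failure probability $\delta/(\mathrm{poly}(k))$, and then union-bound over $2k$ values of $t$. This gives $B\le C\sqrt{k\log(k/\delta)}$, which carries a parasitic $\log k$. Your proposed absorption argument covers only the regime $\log(1/\delta)\gtrsim\log k$ (where $\log(k/\delta)\lesssim\log(1/\delta)$) and the regime $\log(1/\delta)\gtrsim k$ (where $2k\le C\sqrt{k\log(1/\delta)}$ trivially); the second is subsumed by the first. But the lemma is stated for all $\delta\in(0,1/2)$, and when $\delta$ is a fixed constant (or anything larger than $k^{-c}$), your bound exceeds the target by a factor of order $\sqrt{\log k/\log(1/\delta)}$, which is unbounded as $k\to\infty$. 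So no universal constant $C$ works, and the stated lemma is not proved. Attempted fixes inside your framework seem to re-introduce logs: a McDiarmid-type bounded-differences argument on $B$ gives swap-sensitivity $\sum_t 1/(2k-t+1)\sim\log k$; a plain Kolmogorov maximal inequality on $\max_t|\Delta_t|$ also loses a $\log k$ when fed back into $B$; self-normalized maximal inequalities typically cost at least a $\sqrt{\log\log k}$. The paper sidesteps all of this precisely because the Doob-martingale-of-$|\rvI\triangle\rvI'|$ route requires no simultaneous control over $t$: the single bounded-difference claim, established pointwise by a coupling, replaces the union bound. So if you want to follow your decomposition, the missing ingredient is a way to control the compensator with the exact $\sqrt{k\log(1/\delta)}$ rate; as written, the argument does not deliver the lemma for the full range of $\delta$.
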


First, one can bound the expected symmetric difference between $\rvI$ and $\rvI'$:
\begin{lemma}\label{lem:ber-uni-expected-diff}
	Let $\rvI \sim \Uni(2k,k)$ and $\rvI' \sim \Ber(2k,1/2)$ be coupled according to the monotone online coupling. Then,
	\[
	\Ex\lp[|\rvI' \triangle \rvI|\rp]
	\le C \sqrt{k}.
	\]
\end{lemma}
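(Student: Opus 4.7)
\textbf{Proof plan for Lemma~\ref{lem:ber-uni-expected-diff}.}
The plan is to express $\Ex|\rvI\triangle \rvI'|$ as a sum of per-step disagreement probabilities, and then bound each summand using a variance estimate. By Lemma~\ref{lem:monotone-coupling}, the monotone coupling is designed so that the probability of disagreement at time $t$, conditioned on the past, equals the absolute difference of conditional marginals. Summing over $t$ via the tower property and noting that $\Pr[t\in\rvI \mid \rvI\cap[t-1]] = (k-|\rvI\cap[t-1]|)/(2k-t+1)$ while $\Pr[t\in\rvI']=1/2$, I would obtain
\[
\Ex|\rvI\triangle\rvI'| \;=\; \sum_{t=1}^{2k}\Ex\lp|\frac{k-S_{t-1}}{2k-t+1}-\frac{1}{2}\rp| \;=\; \sum_{t=1}^{2k}\frac{\Ex|(t-1)-2S_{t-1}|}{2(2k-t+1)},
\]
where $S_{t-1}:=|\rvI\cap[t-1]|$ has the marginal distribution induced by $\Uni(2k,k)$.

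Next I would bound $\Ex|(t-1)-2S_{t-1}|$ via the variance of $S_{t-1}$. Note that $\Ex S_{t-1}=(t-1)/2$, and applying Lemma~\ref{lem:var-without-rep} to $U=[t-1]$ gives $\mathrm{Var}(S_{t-1})\le (t-1)/2$. Crucially, since $k-S_{t-1}=|\rvI\cap([2k]\setminus [t-1])|$, the same lemma applied to the complement yields $\mathrm{Var}(S_{t-1})\le (2k-t+1)/2$ as well. Combining, $\mathrm{Var}(S_{t-1})\le \min(t-1,\,2k-t+1)/2$, and by Cauchy--Schwarz, $\Ex|(t-1)-2S_{t-1}|\le 2\sqrt{\mathrm{Var}(S_{t-1})}\le \sqrt{2\min(t-1,\,2k-t+1)}$.

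Finally I would split the sum at $t=k+1$. For $t\le k+1$, we have $2k-t+1\ge k$, so the summand is at most $\sqrt{2(t-1)}/(2k)$, and $\sum_{t=1}^{k+1}\sqrt{t-1}/k = O(\sqrt{k})$. For $t>k+1$, the $\min$ is attained by $2k-t+1$, so the summand simplifies to $1/\sqrt{2(2k-t+1)}$, and with $m:=2k-t+1$ ranging over $\{1,\dots,k-1\}$ we get $\sum_{m=1}^{k-1}1/\sqrt{2m}=O(\sqrt{k})$. Adding the two pieces gives the claimed $O(\sqrt{k})$ bound.

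The main (minor) obstacle is precisely the endpoint region $t$ close to $2k$: the naive variance bound $\mathrm{Var}(S_{t-1})\le (t-1)/2$ would leave a divergent $\sum 1/(2k-t+1)$ tail and yield only $O(\sqrt{k}\log k)$. The fix is the symmetric bound via the complement, which encodes the fact that when $t$ is near $2k$ the remaining positions are essentially forced and thus $S_{t-1}$ is tightly concentrated. Once this sharper variance estimate is in hand, the rest of the argument is a routine split of a convergent sum.
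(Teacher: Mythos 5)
Your proposal is correct, and the approach is essentially the same as the paper's: both start from the identity $\Ex|\rvI\triangle\rvI'|=\sum_t \Ex\bigl|\tfrac12 - \tfrac{k-|\rvI\cap[t-1]|}{2k-t+1}\bigr|$, apply Cauchy--Schwarz (the paper calls it Jensen) to replace the $L^1$ deviation by $\sqrt{\mathrm{Var}}$, and invoke Lemma~\ref{lem:var-without-rep} for the variance of a hypergeometric count. The one place you deviate is minor: you take the minimum of the two variance bounds (for $U=[t-1]$ and $U=\{t,\dots,2k\}$) and split the sum at $t=k+1$, whereas the paper simply observes $\mathrm{Var}(|\rvI\cap[t-1]|)=\mathrm{Var}(|\rvI\cap\{t,\dots,2k\}|)$ and uses the complement bound $\le (2k-t+1)/2$ \emph{uniformly in $t$}; this already gives a summand $\le 1/\sqrt{2(2k-t+1)}$ whose sum over $t$ is $O(\sqrt{k})$ with no case split needed. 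So your ``fix'' is the right idea and matches the paper's key observation (the complement controls the endpoint $t\approx 2k$), but the split and the $\min$ are unnecessary refinements.
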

\begin{proof}
	Summing \eqref{eq:monotone} over all $t$ and taking expectation:
	\[
	\Ex[|\rvI\triangle \rvI'|]
	= \sum_{t=1}^{2k} \Pr[t \in \rvI \triangle \rvI']
	= \sum_{t=1}^{2k} \Ex\lp|\frac{1}{2} - \frac{k-|\rvI\cap[t-1]|}{2k-(t-1)}\rp|.
	\]
	By using Jenssen's inequality and applying Lemma~\ref{lem:var-without-rep} with $U = \{t,\dots,2k\}$, $k=k$ and $n=2k$, we have
	\begin{align*}
	&\Ex\lp|\frac{1}{2} - \frac{k-|\rvI\cap[t-1]|}{2k-(t-1)}\rp|
	\le \sqrt{\Ex\lp[\lp(\frac{1}{2} - \frac{k-|\rvI\cap[t-1]|}{2k-(t-1)} \rp)^2\rp]} \\
	&\quad= \sqrt{\mathrm{Var}\lp[\frac{|\rvI\cap[t-1]|}{2k-(t-1)}\rp]}
	= \sqrt{\mathrm{Var}\lp[\frac{|\rvI\cap\{t,\dots,2k\}|}{2k-(t-1)}\rp]}
	\\
	&\quad= \frac{1}{2k-(t-1)}\sqrt{\mathrm{Var}\lp[|\rvI\cap\{t,\dots,2k\}|\rp]}
	\le \frac{1}{2k-(t-1)}\sqrt{\frac{(2k-(t-1)) k}{2k}}\\
	&\quad= \frac{1}{2\sqrt{2k-(t-1)}}.
	\end{align*}
	Summing over $t=1,\dots, 2k$, we have
	\[
	\Ex[|\rvI\triangle \rvI'|]
	\le \sum_{t=1}^{2k} \frac{1}{2\sqrt{2k-(t-1)}}
	= \sum_{t=1}^{2k} \frac{1}{2\sqrt{t}}
	\le \sqrt{2k}.
	\]
\end{proof}
The next step is to show that with high probability, $|\rvI\cap\rvI'|$ is close to its expectation. For that, the notion of Martingales is used. We use a standard notation, that is presented in Section~\ref{sec:martingales}. In particular, we use the following commonly used corollary of Azuma's inequality (Lemma~\ref{lem:azuma}):
\begin{lemma}\label{lem:doob}
	Let $F_0 \subseteq F_1 \subseteq \cdots F_n$ be a filtration such that $F_0$ is the trivial $\sigma$-algebra. Let $\rv{y}$ be a random variable that is $F_n$ measurable, and assume that $a_1,\dots,a_n$ are numbers such that $|\Ex[\rv{y} \mid F_i] - \Ex[\rv{y}\mid F_{i-1}]| \le a_i$ holds for all $i$ with probability $1$. Then, for all $t>0$,
	\[
	\Pr[\rv{y}-\Ex[\rv{y}] > t] \le 
	\exp\lp(\frac{-t^2}{2\sum_i a_i^2}\rp).
	\]
\end{lemma}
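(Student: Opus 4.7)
The plan is to invoke the Doob martingale construction and then apply Azuma's inequality (Lemma~\ref{lem:azuma}, which the paper states it uses in Section~\ref{sec:martingales}). Concretely, define $\rv{z}_i := \mathbb{E}[\rv{y} \mid F_i]$ for $i = 0, 1, \ldots, n$. Since $F_0$ is the trivial $\sigma$-algebra, $\rv{z}_0 = \mathbb{E}[\rv{y}]$, and since $\rv{y}$ is $F_n$-measurable, $\rv{z}_n = \rv{y}$. By the tower property of conditional expectation, $\{\rv{z}_i\}_{i=0}^n$ is a martingale adapted to the filtration $\{F_i\}_{i=0}^n$.

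Next, I would observe that the hypothesis $|\mathbb{E}[\rv{y} \mid F_i] - \mathbb{E}[\rv{y} \mid F_{i-1}]| \le a_i$ is exactly the statement $|\rv{z}_i - \rv{z}_{i-1}| \le a_i$ almost surely, so the martingale differences are uniformly bounded by the constants $a_i$. This puts us in precisely the setting of the Azuma-Hoeffding inequality.

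Applying Azuma's inequality to the martingale $\rv{z}_0, \rv{z}_1, \ldots, \rv{z}_n$ with increments bounded by $a_i$ yields
\[
\Pr[\rv{z}_n - \rv{z}_0 > t] \le \exp\!\left(\frac{-t^2}{2 \sum_{i=1}^n a_i^2}\right),
\]
which after substituting $\rv{z}_n = \rv{y}$ and $\rv{z}_0 = \mathbb{E}[\rv{y}]$ is exactly the conclusion of the lemma.

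There is essentially no technical obstacle here: the proof is the textbook derivation of McDiarmid-type bounds via the Doob martingale, and the only care needed is to verify that $\{\rv{z}_i\}$ is indeed a martingale (immediate from the tower property) and that the bounded-difference hypothesis is stated in the exact form required by Azuma's inequality (it is). The lemma is really just repackaging Azuma's inequality so that it can be applied to an arbitrary $F_n$-measurable random variable rather than directly to an explicit martingale, which is convenient for the calling context in Lemma~\ref{lem:reduction-aux-approx}.
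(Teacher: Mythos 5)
Your proof is correct and is essentially identical to the paper's: both define the Doob martingale $\rv{z}_i = \Ex[\rv{y}\mid F_i]$, note that $\rv{z}_0 = \Ex[\rv{y}]$ and $\rv{z}_n = \rv{y}$, and conclude by a direct application of Azuma's inequality (Lemma~\ref{lem:azuma}).
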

Notice that, as described in Section~\ref{sec:martingales}, $\Ex[\rv{y} \mid F_i]$ is a random variable, and a bound of $|\Ex[\rv{y} \mid F_i] - \Ex[\rv{y}\mid F_{i-1}]| \le a_i$ states that the information that is present in $F_i$ and not in $F_{i-1}$ does not significantly affect the conditional expectation of $\rv{y}$.
\begin{proof}[Proof of Lemma~\ref{lem:doob}]
	We define the following Martingale, which is known as \emph{Doob's Martingale}: $\rv{y}_i = \Ex[\rv{y}\mid F_i]$, for $i=0,\dots,n$. Then, $\rv{y}_0 = \Ex \rv{y}$ and $\rv{y}_n = \rv{y}$, and the proof follows directly from Lemma~\ref{lem:azuma}.
\end{proof}

We are ready to bound the deviation of $|\rvI\triangle\rvI'|$:
\begin{lemma} \label{lem:ber-uni-deviation}
	Let $\rvI \sim \Uni(2k,k)$ and $\rvI' \sim \Ber(2k,1/2)$ be coupled according to the monotone online coupling. Then, for every $\delta \in (0,1/2)$,
	\[
	\Pr\lp[|\rvI' \triangle \rvI| - \Ex[|\rvI'\triangle \rvI|] \ge C\sqrt{k \log(1/\delta)}\rp]
	\le \delta.
	\]
\end{lemma}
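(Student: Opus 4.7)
I will prove the deviation bound via Azuma's inequality for the Doob martingale (Lemma~\ref{lem:doob}) associated with $Y := |\rvI \triangle \rvI'|$ and the natural filtration $F_t := \sigma((\mathds{1}_{s \in \rvI}, \mathds{1}_{s \in \rvI'}) : s \le t)$. Setting $\rv{y}_t := \Ex[Y \mid F_t]$, once I show $|\rv{y}_t - \rv{y}_{t-1}| \le 2$ a.s.\ for every $t$, I get $\sum_{t \le 2k} a_t^2 \le 8k$ and Lemma~\ref{lem:doob} yields $\Pr[Y - \Ex Y \ge u] \le \exp(-u^2/(16k))$, proving the lemma upon taking $u = 4\sqrt{k\log(1/\delta)}$.

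To bound the increments, write $Y = \sum_{s=1}^{2k} Z_s$ with $Z_s = \mathds{1}_{s \in \rvI \triangle \rvI'}$, and split
\[
\rv{y}_t - \rv{y}_{t-1} = \bigl(Z_t - \Ex[Z_t \mid F_{t-1}]\bigr) + \bigl(\Ex[\textstyle\sum_{s>t} Z_s \mid F_t] - \Ex[\textstyle\sum_{s>t} Z_s \mid F_{t-1}]\bigr).
\]
The first term lies in $[-1,1]$. For the second, observe that under the monotone coupling of Lemma~\ref{lem:monotone-coupling}, the Bernoulli marginal of $\rvI'$ is memoryless and the uniform marginal of $\rvI$ is Markov in the remaining pick count; hence the conditional law of $(\rvI, \rvI') \cap \{t+1,\ldots,2k\}$ given $F_t$ depends on $F_t$ only through $m_t := |\rvI \cap [t]|$ and $t$. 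Therefore $\Ex[\sum_{s>t} Z_s \mid F_t] = g(m_t, t)$ for some deterministic function $g$, and the second term is bounded in absolute value by $|g(m_{t-1}+1, t) - g(m_{t-1}, t)|$.

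The main step, and the only genuinely non-obvious part of the proof, is showing $|g(m+1, t) - g(m, t)| \le 1$. My plan is to couple two ``sibling'' instances of the remaining process (the ``$m$-instance'' with $k-m$ picks to make in $2k-t$ slots and the ``$(m+1)$-instance'' with $k-m-1$ picks in the same slots) by driving them with the \emph{same} sequence of independent uniform random variables $\xi_{t+1}, \ldots, \xi_{2k} \in [0,1]$: at each step $s$, each instance sets $\mathds{1}_{s \in \rvI} := \mathds{1}_{\xi_s \le q_s}$ and $\mathds{1}_{s \in \rvI'} := \mathds{1}_{\xi_s \le 1/2}$, using its own conditional parameter $q_s$. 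This is consistent with the marginal monotone couplings in each instance, and critically it forces $\rvI'^{(m)} = \rvI'^{(m+1)}$ pointwise (both equal $\mathds{1}_{\xi_s \le 1/2}$). An induction on the residual pick counts, using $q_s^{(m)} \ge q_s^{(m+1)}$, shows that the two $\rvI$-trajectories first differ at a single step $\tau$ --- the unique $s$ at which $\xi_s \in (q_s^{(m+1)}, q_s^{(m)}]$, so that the $m$-instance picks while the $(m{+}1)$-instance does not --- after which the residual counts re-synchronize ($N_{s+1}^{(m)} = N_{s+1}^{(m+1)}$) and the two processes coincide on all of $\{\tau+1, \ldots, 2k\}$. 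Consequently $Z_s^{(m)} = Z_s^{(m+1)}$ for all $s \ne \tau$, and $|Z_\tau^{(m)} - Z_\tau^{(m+1)}| \le 1$; taking expectations gives $|g(m+1, t) - g(m, t)| \le 1$. The main technical obstacle is just verifying this re-synchronization and checking that $\tau$ is a.s.\ finite (both follow from the elementary balance $N^{(m)} - N^{(m+1)} \in \{0,1\}$ throughout and the eventual exhaustion of picks). Plugging the claim back yields $|\rv{y}_t - \rv{y}_{t-1}| \le 2$ and completes the proof.
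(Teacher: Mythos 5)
Your proof is correct and follows essentially the same route as the paper's: a Doob martingale for $|\rvI\triangle\rvI'|$ adapted to the natural filtration, increments bounded via a shared-uniform coupling of the two conditional continuations (under which the $\rvI'$-paths coincide and the $\rvI$-paths re-synchronize at a single disagreement step, exactly as you argue), followed by Azuma's inequality. Your decomposition of the increment into the ``current element'' term $Z_t-\Ex[Z_t\mid F_{t-1}]$ and the tail term $g(m_t,t)-\Ex[g(m_t,t)\mid F_{t-1}]$, exploiting the Markovianity of the residual pick count, is a slightly more modular packaging of the coupling the paper carries out directly on the four sets $\rv{J}_t,\rv{J}_t',\rv{J}_{t-1},\rv{J}_{t-1}'$, but the underlying idea and the resulting increment bound of $2$ (split as $1+1$) are the same.
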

\begin{proof}
	For any $t = 0,\dots,n$,
	let $F_t$ denote the $\sigma$-field that contains all the information up to (and including) round $t$, $F_t = \sigma(\rvI\cap[t],\rvI'\cap[t])$. 
	In order to apply Lemma~\ref{lem:doob}, it is desirable to bound the differences $\lp|\Ex[|\rvI\triangle \rvI'|\mid \mathcal{F}_t] - \Ex[|\rvI\triangle \rvI'|\mid \mathcal{F}_{t-1}]\rp|$ for all $t\in [2k]$.
	Notice that
	\begin{align}\label{eq:412}
	&\lp|\Ex\lp[|\rvI\triangle \rvI'|\mid \mathcal{F}_t\rp] - \Ex\lp[|\rvI\triangle \rvI'|\mid \mathcal{F}_{t-1}\rp] \rp|\notag\\
	&\quad=\lp|\Ex\lp[|\rvI\triangle \rvI'|\mid \rvI\cap[t],\rvI'\cap[t]\rp] - \Ex\lp[|\rvI\triangle \rvI'|\mid \rvI\cap[t-1],\rvI'\cap[t-1]\rp] \rp|.
	\end{align}
	We would like to bound \eqref{eq:412} for any realization of $\rvI$ and $\rvI'$, namely, bounding for any $S,S'\subseteq [t]$ the quantity
	\begin{align}\label{eq:413}
	&\lp|\Ex\lp[|\rvI\triangle \rvI'|\mid \rvI\cap[t]=S,\rvI'\cap[t]=S'\rp] \rp.\\
	&\quad\lp. - \Ex\lp[|\rvI\triangle \rvI'|\mid \rvI\cap[t-1]=S\cap[t-1],\rvI'\cap[t-1]=S'\cap[t-1]\rp] \rp|.
	\end{align}
	Define four random variables, $\rv{J}_t,\rv{J}_t',\rv{J}_{t-1},\rv{J}_{t-1}'$ in a joint probability space such that $(\rv{J}_t,\rv{J}_{t}')$ is distributed according to the joint distribution of $(\rvI,\rvI')$ conditioned on $\rvI\cap[t]=S,\rvI'\cap[t]=S'$ and $(\rv{J}_{t-1},\rv{J}_{t-1}')$ is distributed according to the joint distribution of $(\rvI,\rvI')$ conditioned on $\rvI\cap[t-1]=S\cap[t-1],\rvI'\cap[t-1]=S'\cap[t-1]$.
	Then, we derive that \eqref{eq:413} equals
	\begin{equation}\label{eq:414}
	|\Ex[|\rv{J}_t\triangle\rv{J}'_t|]
	-\Ex[|\rv{J}_{t-1}\triangle\rv{J}'_{t-1}|]|
	\le \Ex[||\rv{J}_t\triangle\rv{J}'_t|-|\rv{J}_{t-1}\triangle\rv{J}'_{t-1}||]
	\end{equation}
	and our goal is to bound the right hand side of \eqref{eq:414}.
	The joint distribution is defined by an inductive argument, defining for $j\ge t$ the intersection of the above four random variables with $[j]$ given the intersection with $j-1$.
	Begin with $j=t$: here, $\rv{J}_t$ and $\rv{J-1}_t$ are fixed to $S$ and $S'$, respectively. Further, the intersections of $\rv{J}_{t-1}$ and $\rv{J}_{t-1}'$ with $[t-1]$ are fixed and equal $S\cap[t-1]$, and their intersections with $[t]$ are random drawn according to the monotone online coupling. Further, for $j>t$, we start by drawing a random variable $\rv{\xi}$ uniformly in $[0,1]$, and for any $\rv{U} \in \{\rv{J}_t,\rv{J}_t',\rv{J}_{t-1},\rv{J}_{t-1}'\}$ we set $j \in \rv{U}$ if and only if $\rv{\xi} \le \Pr[j \in \rv{U} \mid \rv{U}\cap[j-1]]$. 
	The above defined joint distribution satisfies the following properties:
	\begin{itemize}
	\item From the proofs of Lemma~\ref{lem:aux-monotone} and Lemma~\ref{lem:monotone-coupling}, it follows that $(\rv{J}_t,\rv{J}_{t}')$ is distributed according to the joint distribution of $(\rvI,\rvI')$ conditioned on $\{\rvI\cap[t]=S,\rvI'\cap[t]=S'\}$ and $(\rv{J}_{t-1},\rv{J}_{t-1}')$ is distributed according to the joint distribution of $(\rvI,\rvI')$ conditioned on $\{\rvI\cap[t-1]=S\cap[t-1],\rvI'\cap[t-1]=S'\cap[t-1]\}$.
	\item For any $\rv{U},\rv{U}'\in \{\rv{J}_t,\rv{J}_t',\rv{J}_{t-1},\rv{J}_{t-1}'\}$ and any $j > t$, if $\Pr[j \in \rv{U} \mid \rv{U}\cap[j-1]] \le \Pr[j \in \rv{U}' \mid \rv{U}'\cap[j-1]]$ then $j \in \rv{U}$ implies $j \in \rv{U}'$.
	\item Notice that for any for any $j > t$ and any $\rv{U} \in \{\rv{J}_t',\rv{J}_{t-1}'\}$ it holds that $\Pr[j \in \rv{U} \mid \rv{U}\cap[j-1]] = 1/2$, hence $j \in \rv{J}_t'$ if and only if $j \in \rv{J}_{t-1}'$.
	\item For any $j > t$ and any $\rv{U} \in \{\rv{J}_t,\rv{J}_{t-1}\}$, it holds that 
	\[
	\Pr[j \in \rv{U} \mid \rv{U}\cap[j-1]] = \frac{k-|\rv{U}\cap[j-1]|}{2k-(j-1)},
	\]
	which is a monotone decreasing function of $|\rv{U}\cap[j-1]|$. This implies the following properties:
	\begin{itemize}
		\item For any $j > t$ such that $|\rv{J}_t \cap[j-1]| = |\rv{J}_{t-1}\cap[j-1]|$, it holds that $j \in \rv{J}_t$ if and only of $j \in \rv{J}_{t-1}$. 
		\item For any $j > t$ such that  $|\rv{J}_t \cap[j-1]| \ge |\rv{J}_{t-1}\cap[j-1]|$, it holds that $\Pr[j \in \rv{J}_t\mid \rv{J}_t\cap[j-1]] \le \Pr[j \in \rv{J}_{t-1}\mid \rv{J}_{t-1}\cap[j-1]]$, hence, $j \in \rv{J}_t$ implies $j\in \rv{J}_{t-1}$.
		\item For any $j>t$ such that $|\rv{J}_t \cap[j-1]| \le |\rv{J}_{t-1}\cap[j-1]|$, due to a similar argument, $j \in \rv{J}_{t-1}$ implies $j\in \rv{J}_{t}$.
	\end{itemize}
	\item From the above arguments, for any $j$, if $||\rv{J}_t \cap[j-1]|-|\rv{J}_{t-1} \cap[j-1]||=1$ then either $j \notin \rv{J}_t\triangle \rv{J}_{t-1}$ or $|\rv{J}_t \cap[j]|=|\rv{J}_{t-1} \cap[j]|$.
	\item It holds that $||\rv{J}_t \cap[t]|-|\rv{J}_{t-1} \cap[t]||\le 1$. From the above arguments, at the first $j>t$ such that $j \in \rv{J}_t \triangle \rv{J}_{t-1}$, it holds that $|\rv{J}_t \cap[j]|=|\rv{J}_{t-1} \cap[j]|$ and from that point onward, $j \notin \rv{J}_t \triangle \rv{J}_{t-1}$. In particular, there is at most one $j>t$ such that $j \in \rv{J}_t \triangle \rv{J}_{t-1}$.
	\item It follows that there are at most two values of $j$ such that $j \in \rv{J}_t \triangle \rv{J}_{t-1}$: one (possibly) for $j=t$ and one (possibly) for $j>t$. Since the only possible $j$ where $j \in \rv{J}'_t \triangle \rv{J}'_{t-1}$ is $j=t$, it follows that 
	\begin{equation}\label{eq:415}
	||\rv{J}_t\triangle\rv{J}'_t|-|\rv{J}_{t-1}\triangle\rv{J}'_{t-1}||\le 2.
	\end{equation}
	\end{itemize}
	From \eqref{eq:414} and \eqref{eq:415} we derive that 
	\[
	\lp|\Ex\lp[|\rvI\triangle \rvI'|\mid \mathcal{F}_t\rp] - \Ex\lp[|\rvI\triangle \rvI'|\mid \mathcal{F}_{t-1}\rp] \rp| \le 2.
	\]
	Applying Lemma~\ref{lem:doob}, the proof follows.
	\if 0
	--------------
	\begin{align}\label{eq:122}
	&\lp|\Ex[|\rvI\triangle \rvI'|\mid \mathcal{F}_t] - \Ex[|\rvI\triangle \rvI'|\mid \mathcal{F}_{t-1}] \rp|\notag\\
	&= \lp|\sum_{i = t}^{2k} \Pr[i \in \rvI \triangle \rvI' \mid \mathcal{F}_t] - \Pr[i \in \rvI \triangle \rvI' \mid \mathcal{F}_{t-1}] \rp| \notag\\
	&\le \sum_{i = t}^{2k} \lp|\Pr[i \in \rvI \triangle \rvI' \mid \mathcal{F}_t] - \Pr[i \in \rvI \triangle \rvI' \mid \mathcal{F}_{t-1}] \rp| \notag\\
	&\le 1 + \sum_{i = t+1}^{2k} \lp|\Pr[i \in \rvI \triangle \rvI' \mid \mathcal{F}_t] - \Pr[i \in \rvI \triangle \rvI' \mid \mathcal{F}_{t-1}] \rp|,
	\end{align}
	using the fact that for all $i \le t-1$, we have by definition of $F_t$ that
	\[
	\Pr[i \in \rvI \triangle \rvI' \mid \mathcal{F}_t] =
	\Pr[i \in \rvI \triangle \rvI' \mid \mathcal{F}_{t-1}]
	= \mathds{1}(i \in \rvI\triangle \rvI')
	\]
	and that for $i = t$ we trivially bound the difference in the conditional probabilities by $1$.

	To bound the right hand size of \eqref{eq:122}, we compare the conditional distribution of $\rvI$ conditioned on $\mathcal{F}_t$ and the distribution conditioned on $\mathcal{F}_{t-1}$. Fix $\rvI$, let $\rvI_1$ denote a random variable that is distributed according to the conditional distribution of $\rvI$ conditioned on $\mathcal{F}_{t-1}$ and let $\rvI_2$ be distributed as the conditional distribution of $\rvI$ conditioned on $\mathcal{F}_t$. In other words, $\rvI_1$ is obtained from $\rvI$ by removing $\{t,\dots,2k\}$ and then adding elements from $\{t,\dots,2k\}$ uniformly at random such that $|\rvI_1|=k$. Similarly, $\rvI_2$ is obtained by re-sampling $\{t+1,\dots,2k\}$. We will couple $\rvI_1$ with $\rvI_2$ together such that they differ on at most two elements. The coupling is defined as follows: first, $\rvI_1$ is drawn according to its respective distribution. To define $\rvI_2$ we first check if $t \notin \rvI_1\triangle \rvI$. If this is the case, then we set $\rvI_2 = \rvI_1$. If $t \in \rvI_1$ but $t \notin \rvI$, then we select a coordinate $i$ uniformly at random from $\{t+1,\dots,2k\}\setminus \rvI_1$, and set $\rvI_2 = (\rvI_1 \setminus \{t\})\cup \{i\}$. Otherwise, $t \notin \rvI_1$ and $t \in \rvI$: then, we select a coordinate $i$ uniformly at random from $\rvI\cap\{t+1,\dots,2k\}$ and set $\rvI_2 = (\rvI_1\setminus \{i\})\cup \{t\}$. Notice that from symmetry, in both of the above cases the coordinate $i$ is distributed uniformly at random from $\{t+1,\dots,n\}$. For any $j \ge i$, it holds that $|\rvI_1\cap [j]| = |\rvI_2\cap [j]|$. Hence, for any $j \ge t$,
	\[
	\Pr\lp[|\rvI_1\cap [j]|\ne|\rvI_2\cap[j]|\rp]\le 
	\frac{2k-j}{2k-t}.
	\]
	Further, notice that
	\[
	||\rvI_1\cap [j]| - |\rvI_2\cap[j]|| \le 1,
	\]
	hence
	\begin{equation} \label{eq:21}
	\Ex[||\rvI_1\cap [j]|-|\rvI_2\cap[j]||] \le \frac{2k-j}{2k-t}.
	\end{equation}
	From \eqref{eq:20} and \eqref{eq:21},
	\begin{align*}
	&\lp|\Pr[i \in \rvI \cap \rvI' \mid \mathcal{F}_t] - \Pr[i \in \rvI \cap \rvI' \mid \mathcal{F}_{t-1}] \rp|\\
	&= \lp| \Ex\lp[\lp|\frac{1}{2} - \frac{k-|\rvI\cap[i-1]|}{2k-(i-1)}\rp| ~\middle|~ \mathcal{F}_t\rp] - \Ex\lp[\lp|\frac{1}{2} - \frac{k-|\rvI\cap[i-1]|}{2k-(i-1)}\rp| ~\middle|~ \mathcal{F}_{t-1}\rp] \rp|\\
	&= \lp| \Ex\lp[\lp|\frac{1}{2} - \frac{k-|\rvI_2\cap[i-1]|}{2k-(i-1)}\rp| - \lp|\frac{1}{2} - \frac{k-|\rvI_1\cap[i-1]|}{2k-(i-1)}\rp| \rp]\rp|\\
	&\le \Ex\lp[\lp|\lp|\frac{1}{2} - \frac{k-|\rvI_2\cap[i-1]|}{2k-(i-1)}\rp| - \lp|\frac{1}{2} - \frac{k-|\rvI_1\cap[i-1]|}{2k-(i-1)}\rp| \rp|\rp]\\
	&\le \Ex\lp[\lp|\frac{k-|\rvI_2\cap[i-1]|}{2k-(i-1)} - \frac{k-|\rvI_1\cap[i-1]|}{2k-(i-1)}\rp|\rp]\\
	&= \Ex\lp[\lp|\frac{|\rvI_2\cap[i-1]|-|\rvI_1\cap[i-1]|}{2k-(i-1)}\rp|\rp]\\
	&= \frac{1}{2k-t}.
	\end{align*}
	From \eqref{eq:122}, we derive that
	\[
	\lp|\Ex[|\rvI\triangle \rvI'|\mid \mathcal{F}_t] - \Ex[|\rvI\triangle \rvI'|\mid \mathcal{F}_{t-1}] \rp|
	\le 2.
	\]
	From Asuma's inequality, we derive that for any $\delta>0$, w.p. $1-\delta$,
	\[
	|\rvI\triangle \rvI'|\le O(\sqrt{k\log(1/\delta)}).
	\]
	This concludes the proof.
	\fi
\end{proof}

Lemma~\ref{lem:ber-uni-expected-diff} and Lemma~\ref{lem:ber-uni-deviation} together imply Lemma~\ref{lem:reduction-aux-approx}.

\begin{proof}[Proof of Lemma~\ref{lem:Lipschitz}]
	We apply Lemma~\ref{lem:coupling-reduction} with
	\[
	f(\vx, I)
	= \mathds{1}\lp( \varphi(I,\vx)  \ge t \rp)
	\]
	and
	\[
	g(\vx, I') = \mathds{1}\lp( \varphi(I',\vx) \ge t - C_0 L\sqrt{k\log(1/\delta)}\rp).
	\]
	Here $C_0>0$ is the universal constant guaranteed from Lemma~\ref{lem:reduction-aux-approx} such that with probability $1-\delta$,
	\[
	|\rvI\triangle \rvI'| \le C_0 \sqrt{k\log(1/\delta)}.
	\]
	Notice that for any $I$ and $I'$ such that $|I\triangle I'| \le C_0 \sqrt{k\log(1/\delta)}$ and any $\vx\in X^n$, it holds that
	\[
	|\varphi(I,\vx) - \varphi(I',\vx)|
	\le L|I\triangle I'|
	\le C_0 L\sqrt{k\log(1/\delta)}.
	\]
	Hence, for any such $I,I'$ and any $\vx$ such that $f(I,\vx) = 1$, it holds that $g(I',\vx) = 1$. Applying Lemma~\ref{lem:coupling-reduction}, we derive that for $\rvI\sim \Uni(2k,k)$ and $\rvI'\sim\Ber(2k,1/2)$,
	\begin{align*}
	&\max_{\A\in\Adv_{2k}}\Pr[\varphi(\rvI,\vx(\A,\rvI)) \ge t]
	= \max_{\A\in\Adv_{2k}} \Pr[f(\rvI,\vx(\A,\rvI)) = 1]\\
	&\quad\le \max_{\A'\in\Adv_{2k}}\Pr[g(\rvI',\vx(\A',\rvI')) = 1] +\delta\\
	&\quad= \max_{\A'\in\Adv_{2k}}\Pr\lp[\varphi(\rvI',\vx(\A',\rvI')) \ge t - C_0L \sqrt{k\log(1/\delta)}\rp] + \delta.
	\end{align*}
\end{proof}

\subsubsection{Proof of Lemma~\ref{lem:epsnet-reduction}}\label{sec:pr-reduction-epsnets}
We prove the following property of the online monotone coupling:
\begin{lemma}\label{lem:uni-ber-eighth}
	Let $\rvI \sim \Uni(2k,k)$ be coupled to $\rvI' \sim \Ber(2k,1/8)$ according to the monotone online coupling and fix $m \le n$. Then,
	\[
	\Pr[\rvI' \subseteq \rvI \cup \{n-m+1,\dots,n\}]
	\ge 1- 2\exp(-cm).
	\]
\end{lemma}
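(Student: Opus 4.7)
The plan is to leverage property~2 of Lemma~\ref{lem:monotone-coupling}, which guarantees that if at time $t$ the conditional inclusion probability of the uniform sampler, $(k-|\rvI\cap[t-1]|)/(2k-t+1)$, is at least $1/8$, then $\rvI'\cap\{t\}\subseteq \rvI\cap\{t\}$. It therefore suffices to show that this inequality holds simultaneously for every $t\le n-m=2k-m$, except on an event of probability at most $2\exp(-cm)$; on the good event we immediately get $\rvI'\subseteq \rvI\cup\{n-m+1,\dots,n\}$.

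The next step is to recast the condition in terms of a hypergeometric quantity. For $t\le 2k-m$ set $r:=2k-t+1\ge m+1$ and $U_r:=\{t,t+1,\dots,2k\}$, so that $|U_r|=r$ and $|\rvI\cap U_r|=k-|\rvI\cap[t-1]|$. The required inequality at time $t$ is then equivalent to $|\rvI\cap U_r|\ge r/8$. Note that $\Ex|\rvI\cap U_r|=rk/(2k)=r/2$, so we are asking that $|\rvI\cap U_r|$ not fall below its mean by more than $3r/8$. Applying Lemma~\ref{lem:subset-intersection}(\ref{itm:without-rep-small-set}) with $n=2k$, the set $U_r$, and $\alpha=3/4$ gives
\[
\Pr\lp[|\rvI\cap U_r|<\tfrac{r}{8}\rp]
\le \Pr\lp[\lp|\,\tfrac{|\rvI\cap U_r|}{k}-\tfrac{r}{2k}\rp|\ge \tfrac{3}{4}\cdot\tfrac{r}{2k}\rp]
\le 2\exp(-cr)
\]
for an absolute constant $c>0$.

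Finally, a union bound over $r\in\{m+1,\dots,2k\}$ (equivalently, over $t\in\{1,\dots,2k-m\}$) yields
\[
\Pr\lp[\exists\, r\ge m+1:\ |\rvI\cap U_r|<\tfrac{r}{8}\rp]
\le \sum_{r=m+1}^{2k} 2\exp(-cr)
\le 2\exp(-c'm),
\]
and on the complementary event Lemma~\ref{lem:monotone-coupling}(2) forces $\rvI'\cap\{t\}\subseteq \rvI\cap\{t\}$ for every $t\le n-m$, proving the claim. The only point that requires a little attention is the choice of the multiplicative-deviation parameter $\alpha$: it must be large enough (here, $\alpha=3/4$) so that the threshold $r/2-\alpha r/2$ is below $r/8$, while still producing an exponent linear in $r$, which is exactly what ensures the geometric sum over $r\ge m+1$ telescopes to $O(\exp(-cm))$.
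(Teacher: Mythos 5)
Your proof is correct, and it reaches the goal by a somewhat different (and arguably cleaner) route than the paper. You both reduce, via Lemma~\ref{lem:monotone-coupling}(2), to showing that with probability $1-2\exp(-cm)$ the inequality $(k-|\rvI\cap[t-1]|)/(2k-t+1)\ge 1/8$ holds simultaneously for all $t\le 2k-m$. The paper then applies the sampling-without-replacement tail bound (Lemma~\ref{lem:subset-intersection}, item~2, with $\alpha=1/2$) only at a dyadic sequence of suffix lengths $m_i=m\cdot 2^i$, so that the union bound runs over $O(\log(k/m))$ events, and closes the argument with a bootstrap comparison: each $U_r$ with $r\in[m_{i_0},m_{i_0+1})$ contains the dyadic suffix of length $m_{i_0}$, giving $|\rvI\cap U_r|\ge m_{i_0}/4\ge r/8$ at the cost of losing a factor $2$ in the ratio. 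You instead apply the same tail bound directly at every suffix length $r\ge m+1$ (with $\alpha=3/4$) and union bound over all of them; this works because the exponent is linear in $r$, so the resulting geometric series is dominated by the $r=m+1$ term. Your version avoids the bootstrap step entirely and is a bit more direct; the paper's version is presumably written to keep the number of union-bounded events small, but since both yield a geometric tail in $m$, the two approaches produce the same asymptotic guarantee up to the universal constant.
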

\begin{proof}
	From Lemma~\ref{lem:monotone-coupling}, it suffices to show that with probability $1-e^{-cm}$, for all $t \le 2k-m$, 
	\begin{equation}\label{eq:244}
	1/8 \le \frac{k-|\rvI\cap[t-1]|}{2k-(t-1)}.
	\end{equation}
	Let $m_0,m_1,\dots,m_r$, such that $m_i = m \cdot 2^i$ and $k < m_r \le 2k$.
	For any $i=0,\dots,r$, apply Lemma~\ref{lem:subset-intersection} with $n=2k$, $k=k$, $U=\{n-m_i+1,\dots,2k\}$, $\alpha=1/2$ and $\rvI=\rvI$, deriving
	\begin{align}\label{eq:38}
	\Pr\lp[|\rvI' \cap (\{n-m_i+1,\dots,2k\})| \le \frac{m_i}{4} \rp]
	\le \Pr\lp[\lp| \frac{|\rvI \cap U|}{k}-\frac{|U|}{2k}\rp| \ge \frac{m_i}{4k} \rp]	
	\le 2\exp\lp(-c m_i\rp).
	\end{align}
	Summing the failure probabilities over $i=0,\dots,r$, the sum is dominated by the first summand, and we derive that with probability at least $1-2\exp(-cm)$, \eqref{eq:38} fails to hold for all $i=0,\dots,r$. Fix a realization $I$ of $\rvI$ such that \eqref{eq:38} fails for all $i$, and we will prove \eqref{eq:244}, to complete the proof. Fix $t \le 2k-m$, and let $i_0$ be the maximal $i$ such that $t \le 2k-m_i+1$. Then, $2k-(t-1) < m_{i_0+1} = 2m_{i_0}$. Further,
	\[
	k - |I \cap [t-1]| 
	= |I \cap \{t,\dots,n\}|
	\ge |I \cap \{n-m_i+1,\dots,n\}|
	\ge m_i/4.
	\]
	We derive that
	\[
	\frac{k - |I \cap [t-1]|}{2k-(t-1)} \ge \frac{1}{8}
	\]
	as required.
\end{proof}

\begin{proof}[Proof of Lemma~\ref{lem:epsnet-reduction}]
	Let $\rvI$ be coupled to $\rvI'$ according to the monotone online coupling. We wish to apply Lemma~\ref{lem:coupling-reduction} with 
	\[
	f(I,\vx) = \begin{cases}
	1 & \exists E \in \E,\ |\vx_{I\cup J} \cap E|\ge \epsilon \cdot 2k,\ 
	\vx_I \cap E = \emptyset \\
	0 &\text{otherwise}
	\end{cases},
	\]
	and
	\[
	g(I',\vx)= \begin{cases}
	1 & \exists E \in \E,
	\ |\vx\cap E| \ge \epsilon \cdot 2k  \text{ and }
	|\vx_I\cap E| \le \epsilon/16 \cdot 2k  \\
	0 & otherwise
	\end{cases}
	\]
	Applying Lemma~\ref{lem:uni-ber-eighth} with $m=\lfloor\epsilon/16 \cdot 2k\rfloor$, it holds with probability $1-2\exp(-c\epsilon k)$ that
	\begin{equation} \label{eq:71}
	\rvI'\subseteq \rvI\cup\{2k - \lfloor \epsilon/16 \cdot 2k \rfloor+1, n \}.
	\end{equation}
	For any values $I,I'$ such that \eqref{eq:71} holds and any $\vx\in X^n$ such that $f(I,\vx) = 1$, it also holds that $g(I',\vx) = 1$. 
	Indeed, let $E\in \E$ be a set such that $|E \cap \vx|\ge \epsilon \cdot 2k$ and $\vx_I \cap E = \emptyset$. From \eqref{eq:71}, 
	\[
	|\vx_{I'}\cap E| \le
	|\vx_I \cap E| + |\{2k - \lfloor \epsilon/16\cdot 2k \rfloor+1, n \} \cap E|
	\le \epsilon/16\cdot 2k,
	\]
	which implies that $g(I',\vx) = 1$. From Lemma~\ref{lem:coupling-reduction},
	\begin{align*}
	&\max_{\A\in\Adv_{2k}}\Pr\lp[\Net^{2k}_{\A,\rvI,\epsilon\cdot 2k,0}(\E) = 1\rp]
	= \max_{\A\in\Adv_{2k}} \Pr_{\rvI}[f(\rvI,\vx(\A,\rvI))=1]\\
	&\quad\le \max_{\A'\in\Adv_{2k}} \Pr_{\rvI'}[g(\rvI',\vx(\A',\rvI'))=1] + 2\exp(-c\epsilon k)\\
	&\quad= \max_{\A'\in\Adv_{2k}}\Pr\lp[\Net^{2k}_{\A',\rvI',\epsilon\cdot 2k, \epsilon/16 \cdot2k}(\E) = 1\rp] + 2\exp(-c\epsilon k).
	\end{align*}
\end{proof}

\section{Continuous $\eps$-Approximation}
\label{subsec:continuous}
In the adversarial model we discuss in this paper, the general goal is that in the end of the process (after all elements have been sent by the adversary), the obtained sample would be an $\eps$-approximation of the entire adversarial sequence. 
However, in many practical scenarios of interest, one might want the sample obtained so-far to be an $\eps$-approximation of the current adversarial sequence \emph{at any point along the sequence} (and not just at the end of the sequence). We call this condition a \emph{continuous $\eps$-approximation}. Note that such a requirement only makes sense for sampling procedures that allow deletions, like reservoir sampling. (For insertion-only samplers, like Bernoulli and uniform sampling, one cannot hope for the sample to approximate the stream until there is sufficient ``critical mass'' collected in the sample; this is not an issue with reservoir sampling, which overcomes this by sampling the first elements in the sequence with higher probability, but may also delete them later.)

Obtaining upper bounds for continuous $\eps$-approximation can be done easily by plugging-in our upper bounds for reservoir sampling to a block-box argument by Ben-Eliezer and Yogev \cite[Section 6]{BenEliezerYogev2020}. There, it is shown that if one ensures that the current sample approximates the current sequence at $O(\log n)$ carefully located ``checkpoints'' along the stream (while setting the error parameter to be $\delta' = \Theta(\delta / \log n)$), then with probability $1-\delta$, the sample is a continuous $\eps$-approximation for the sequence. That is, we have the following.
 
\begin{theorem}[Adversarial ULLNs -- Quantitative Characterization]\label{thm:continuous}
	Let $\E$ be a family with Littlestone dimension $d$. 
	Then, the sample size $k(\E,\epsilon,\delta)$, which suffices to produce a continuous $\epsilon$-approximation w.r.t $\E$ satisfies:
	\[
	k(\E,\epsilon,\delta) \le O\lp(\frac{d+\log(1/\delta)+\log \log n}{\epsilon^2}\rp).
	\]
	This bound is attained by the reservoir sampler $\Res(n,k)$.
\end{theorem}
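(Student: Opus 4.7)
The plan is to combine our tight reservoir sampling bound from \Cref{thm:approx} with the black-box reduction of Ben-Eliezer and Yogev (Section~6 of \cite{BenEliezerYogev2020}). Their reduction identifies a set of $L = O(\log n)$ ``checkpoints'' $t_1 < t_2 < \cdots < t_L$ (geometrically spaced, e.g.\ with $t_{j+1}/t_j$ bounded by a constant) for which the following implication holds: if the reservoir sample $S_{t_j}$ is an $(\epsilon/c)$-approximation of the prefix $\vx_{[t_j]}$ at \emph{every} checkpoint $t_j$ (for a suitable absolute constant $c$), then $S_t$ is an $\epsilon$-approximation of $\vx_{[t]}$ at \emph{every} intermediate time $t \in [n]$ as well.

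Granting this reduction, the proof proceeds by invoking the reservoir-sampling part of \Cref{thm:approx} at each checkpoint $t_j$, with error parameter $\epsilon' = \epsilon/c$ and confidence parameter $\delta' = \delta/L$. This gives that a reservoir size of
\[
k = O\lp(\frac{d + \log(1/\delta')}{\epsilon'^2}\rp)
= O\lp(\frac{d + \log(1/\delta) + \log \log n}{\epsilon^2}\rp)
\]
suffices for $S_{t_j}$ to be an $\epsilon'$-approximation of $\vx_{[t_j]}$ with probability at least $1 - \delta'$. Since $k$ is fixed across time, a single choice of $k$ of this order works for all $L$ checkpoints simultaneously; a union bound then guarantees the checkpoint approximations all hold with probability at least $1 - \delta$, and the Ben-Eliezer--Yogev reduction upgrades this to a continuous $\epsilon$-approximation, completing the proof.

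The main obstacle is the interpolation step of the Ben-Eliezer--Yogev reduction itself, which argues that the approximation at the finite set of checkpoints extends to every intermediate time. This rests on two ingredients: (i) geometric spacing of checkpoints forces the true empirical frequency $|E \cap \vx_{[t]}|/t$ to drift by at most $O(\epsilon)$ between consecutive checkpoints; and (ii) reservoir sampling has the property that the number of reservoir updates in an interval $[t_j, t_{j+1}]$ concentrates sharply, so the sample frequency $|E \cap S_t|/k$ also drifts by at most $O(\epsilon)$ between checkpoints. Since this reduction is already established in \cite{BenEliezerYogev2020}, no new work is needed there; the only technicality is ensuring that the prerequisite $n \ge 3k$ of \Cref{thm:approx} is satisfied at every checkpoint, which is handled by placing the first checkpoint at $t_1 = \Theta(k)$ (before which the reservoir stores all seen elements and trivially yields a $0$-approximation).
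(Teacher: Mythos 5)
Your proof matches the paper's argument exactly: both invoke the reservoir-sampling case of \Cref{thm:approx} at $O(\log n)$ checkpoints with confidence $\delta' = \Theta(\delta/\log n)$ and then apply the black-box reduction of Ben-Eliezer and Yogev (Section~6) to extend the checkpoint guarantees to all intermediate times. Your added remarks about geometric spacing, drift control, and placing the first checkpoint at $t_1 = \Theta(k)$ are correct elaborations of what the cited reduction does, but do not change the route.
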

Compared with the standard setting (as summarized in \Cref{thm:quantitative}), the bound here has an additional $\log \log n$ term in the numerator. 
\section{Online Learning}\label{sec:onlinetechnical}

In this section, we prove an optimal bound on the regret of online classification. We first provide the formal definitions and then proceed with the formal statement and the proof.

\subsection{Formal Defintions}
Consider the setting of online prediction with binary labels;
	a learning task in this setting can be described as a guessing game between a learner and an adversary. 
	The game proceeds in rounds $t=1,\dots,T$, each consisting of the following steps:
\begin{itemize}[noitemsep]
	\item The adversary selects $(x_t,y_t) \in X\times \{0,1\}$ and reveals $x_t$ to the learner.
	\item The learner provides a prediction $\hat y_t \in \{0,1\}$ of $y_t$ and announces it to the adversary.
	\item The adversary announces $y_t$ to the learner.
\end{itemize}
Notice that both the learner and the adversary are allowed to use private randomness. 

The goal of the learner is to minimize the number of mistakes, $\sum_t \mathds{1}(y_t \ne \hat y_t)$. 
Given a class $\E$, a learner $\mathcal{L}$ and an adversary $\A$, the {\it regret} of the learner w.r.t $\E$ 
is defined as the expected difference between the number of mistakes
made by the learner and the number of  mistakes made by the best $E\in \cal E$: 
\[ 
R_T(\E,\mathcal{L},\A) := \Ex\lp[\sum_t \mathds{1}(y_t \ne \hat y_t)
- \min_{E \in \E}\sum_t \mathds{1}\Bigl(y_t \ne \mathds{1}(x_t \in E)\Bigr)\rp].\]
The \emph{optimal regret} is defined as the value of the the regret achieved by the best sampler against its worst adversary:
\[
R_T(\E) = \min_{\mathcal{L}}\max_{\A} R_T(\E,\mathcal{L},\A).
\]

\subsection{Statement and Proof}

We prove the following theorem:
\begin{theorem}\label{thm:online-formal}
    Let $\E$ denote a class of Littlestone dimension $d$. Then, the expected regret $R_T(\E)$ for a $T$-round online learner is bounded by
    \[
    R_T(\E) \le C\sqrt{dT}
    \enspace,
    \]
    where $C>0$ is a universal constant.
\end{theorem}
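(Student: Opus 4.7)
The plan is to deduce the regret bound from a bound on the sequential Rademacher complexity, using the (by now standard) reduction between them due to Rakhlin, Sridharan, and Tewari. The work has already been done: the main technical content of the paper, \Cref{lem:bnd-p-half}, gives exactly the high-probability control on $\Disc_{\A,\rvI}(\E)$ that we need, and only a short integration-of-the-tail computation and invocation of a known minimax identity remain.

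First, I would upgrade \Cref{lem:bnd-p-half} from a high-probability bound to an expectation bound in order to bound $\Rad_T(\E)$ itself. Setting $2k = T$ and using that the random variable $\Disc_{\A,\rvI}(\E)$ is non-negative, I would write $\Ex[\Disc_{\A,\rvI}(\E)] = \int_0^\infty \Pr[\Disc_{\A,\rvI}(\E) > t]\,dt$ and split the integral at $t_0 = C_0\sqrt{dT}$. The contribution below $t_0$ is trivially at most $t_0$, while the tail above $t_0$ is controlled by substituting $\delta = \exp(d - t^2/(C_0^2 T))$ into \Cref{lem:bnd-p-half}, yielding a Gaussian-type tail integral of order $O(\sqrt{T/d})$. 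Taking the worst case over $\A \in \Adv_T$ gives
\[
\Rad_T(\E) \;=\; \max_{\A\in\Adv_T}\Ex_{\rvI\sim \Ber(T,1/2)}[\Disc_{\A,\rvI}(\E)] \;\le\; C\sqrt{dT}.
\]

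Second, I would invoke the symmetrization/minimax result of Rakhlin, Sridharan, and Tewari~\cite{rakhlin2010online,rakhlin2015online}, which asserts that the minimax regret of online binary classification with respect to any class $\E$ is bounded by (twice) its sequential Rademacher complexity, $R_T(\E) \le 2\,\Rad_T(\E)$. This reduction is proved by re-expressing the minimax value of the online game via the minimax theorem, and then symmetrizing the resulting supremum of a martingale difference sequence by introducing independent $\pm 1$ signs (exactly the signs implicit in $\rvI \sim \Ber(T,1/2)$ through the quantity $\Disc_{\A,\rvI}(\E)$). Combining the two inequalities above yields $R_T(\E) \le 2C\sqrt{dT}$, which is the claim.

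The nontrivial content and the only real obstacle is the first step: proving a tight sequential-Rademacher bound for Littlestone classes, which is the whole technical engine of the paper (\Cref{thm:quantitative} together with the fractional-cover chaining argument in \Cref{sec:eps-approx-via-cover,sec:cover-eps}). Prior work had the correct sequential-Rademacher bound only up to a $\sqrt{\log T}$ factor, and closing this gap is precisely what allows us to go from the $O(\sqrt{d T\log T})$ regret of \cite{BenDavidPS09} to the optimal $\Theta(\sqrt{dT})$. Once \Cref{lem:bnd-p-half} is granted, the conversion to the regret bound is essentially bookkeeping.
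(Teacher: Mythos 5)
Your proof is correct and takes essentially the same route as the paper: invoke the Rakhlin--Sridharan--Tewari bound $R_T(\E)\le 2\,\Rad_T(\E)$ (the paper's \Cref{thm:online-rademacher}) and combine it with the sequential Rademacher bound from \Cref{lem:bnd-p-half}. You are, if anything, slightly more careful than the paper, which silently performs the tail-integration step that converts the high-probability statement of \Cref{lem:bnd-p-half} into the expectation bound on $\Rad_T(\E)$; your explicit integration at $t_0 = C_0\sqrt{dT}$ is exactly the missing bookkeeping.
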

We use a bound by \cite{rakhlin2015online} on the regret based on the sequential Rademacher complexity:
\begin{theorem}[\cite{rakhlin2015online}, Theorem~7]
\label{thm:online-rademacher}
    The expected regret satisfies
    \[
    R_T(\E) \le 2\Rad_T(\E)
    \enspace.
    \]
\end{theorem}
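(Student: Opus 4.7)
The plan is to prove the inequality $R_T(\E) \le 2\Rad_T(\E)$ via the classical three-step route of minimax swap, symmetrization, and reduction to a tree-structured Rademacher process. Concretely, I would first rewrite the regret
\[
R_T(\E) = \min_{\mathcal{L}} \max_{\A} \Ex\lp[\sum_{t=1}^T \mathds{1}(y_t \ne \hat y_t) - \min_{E \in \E}\sum_{t=1}^T \mathds{1}\lp(y_t \ne \mathds{1}(x_t \in E)\rp)\rp]
\]
in a way that lets us exchange the order of the two players. Since the learner may randomize, one can invoke a minimax argument (Sion's theorem applied round-by-round, or von Neumann's theorem after reducing to finite sub-classes and taking a limit) to swap the $\min$ and $\max$: at each round $t$, given the history, the adversary first commits to a joint distribution $q_t$ over $(x_t, y_t)$ and the learner then best-responds. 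Against such a distributional adversary, the Bayes-optimal learner in round $t$ has expected $0$-$1$ loss equal to $\Ex[\min(p_t, 1-p_t)]$ where $p_t$ is the conditional probability of $y_t = 1$; this is the key concavity that will drive the symmetrization.

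Second, I would introduce tangent sequences. For each round, draw two independent copies $(x_t, y_t)$ and $(x_t', y_t')$ from the conditional distribution $q_t$ given the realized history $(x_1, y_1, \ldots, x_{t-1}, y_{t-1})$ (so the history uses only the unprimed copy). Using the identity $|y_t - a| - \min(p_t, 1-p_t) \le \Ex_{y_t'}[\,|y_t' - y_t|\,]$ for $a \in [0,1]$ and $y_t' \sim \mathrm{Ber}(p_t)$, one bounds the Bayes-learner's excess loss round by round, yielding
\[
R_T(\E) \le \Ex\lp[\sup_{E \in \E}\sum_{t=1}^T \lp(\mathds{1}(y_t' \ne \mathds{1}(x_t' \in E)) - \mathds{1}(y_t \ne \mathds{1}(x_t \in E))\rp)\rp],
\]
where the pairs $(x_t, y_t), (x_t', y_t')$ are drawn as above along the tree of possible histories.

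Third, I would perform the Rademacher symmetrization: since the two copies are exchangeable given the history, inserting independent signs $\epsilon_t \in \{-1,+1\}$ that swap the primed and unprimed copy does not change the distribution of the above expression. Taking expectations, the sum separates by a triangle inequality into two identically distributed terms, each bounded by
\[
\Ex\lp[\sup_{E \in \E}\sum_{t=1}^T \epsilon_t\, \mathds{1}(x_t \in E)\rp],
\]
where the sequence $(x_t)$ is now generated adaptively as a binary-tree-valued process indexed by the signs $(\epsilon_1, \ldots, \epsilon_{t-1})$. This is precisely the tree-based sequential Rademacher complexity $\Rad_T(\E)$ (equivalent to the discrepancy definition used in the paper via $\rvI \sim \Ber(T,1/2)$), giving $R_T(\E) \le 2\Rad_T(\E)$.

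The main obstacle will be making the minimax swap rigorous together with the inductive definition of the tangent and tree-valued processes: the adversary's strategy is effectively a $\{0,1\}$-labelled binary tree of depth $T$, and the tangent copies must be defined consistently along every path so that the subsequent symmetrization via $\epsilon_t$ preserves the joint law. Once the tree formalism is set up, the round-by-round concavity bound and the symmetrization itself are short; the bookkeeping of histories is what makes the argument technically delicate.
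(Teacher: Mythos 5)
The paper does not actually prove this statement---it is imported verbatim from Rakhlin--Sridharan--Tewari (Theorem~7 of \cite{rakhlin2015online})---and your outline reconstructs essentially the proof given there: a round-by-round minimax swap so the adversary plays distributions and the learner best-responds, a tangent (ghost) sequence bound on the excess loss, and sequential symmetrization with signs $\epsilon_t$ yielding the tree-based sequential Rademacher complexity, which coincides (up to the absolute value) with the paper's discrepancy-based $\Rad_T(\E)$. So your route is the intended one; the only caveat is that your single-round inequality should be phrased for the joint tangent pair $(x_t',y_t')$ (the learner's Bayes loss is bounded by the comparator's expected loss on the tangent copy, as in the cited proof) rather than for $y_t'$ alone, since as literally written with an arbitrary $a\in[0,1]$ it can fail.
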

We combine this with the bound on the sequential Rademacher complexity from \Cref{lem:bnd-p-half}, to complete the proof:
\begin{proof}[Proof of \Cref{thm:online-formal}]
By \Cref{thm:online-rademacher}, by definition of the sequential Rademacher complexity and by \Cref{lem:bnd-p-half},
\[
R_T(\E)
\le 2\Rad_T(\E)
    = 2\Ex_{\rvI\sim \Ber(n,1/2)}[\Disc_{\A,\rvI}(\E)]
    \le C \sqrt{dT}
    \enspace.
\]
This concludes the proof.
\end{proof}

\section{Lower Bounds}
\label{sec:LBs}

In this section we state and prove our lower bounds. Our first lower bound applies to \emph{any} family $\E$, showing that the linear dependence of our upper bounds in the Littlestone dimension is universally tight.

\begin{theorem}[A universal lower bound]\label{thm:lbforall}
Let $\cal E$ be a family with Littlestone dimension $d$.
Then, there exists a (deterministic) adversary such that the following holds.
For any algorithm that retains at most $k\leq d$ items (without deletions),
the adversary presents $d$ items $x_1,\ldots, x_d$ such that 
\[(\exists E\in {\cal E}): \bar s \cap E = \emptyset\ \quad \text{and}\quad \frac{\lvert \bar x \cap E \rvert}{\lvert\bar x\rvert}=1-\frac{k}{d},\]
with probability 1 over the algorithm's randomness, where $\bar x$ denotes the adversatial stream and $\bar s$ is the sample.
In particular, any subset of the sample of $k$ items retained by the algorithm does not form an $\eps$-approximation with respect to $x_1,\ldots,x_n$ unless $\eps\geq 1- \frac{k}{d}$.
\end{theorem}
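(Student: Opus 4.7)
The plan is to use the Littlestone tree directly to drive an adaptive deterministic adversary. Since $\Ldim(\E) = d$, fix a complete binary tree $T$ of depth $d$ shattered by $\E$; recall that for every root-to-leaf path $(x_1, y_1), \ldots, (x_d, y_d)$ there exists $E \in \E$ with $x_i \in E \iff y_i = 1$.

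First I would describe the adversary. Starting at the root, at round $i$ it presents the label $x_i$ of the current node, observes whether the sampler retains $x_i$, and then descends: to the \emph{left} child (corresponding to $y_i = 0$, i.e., sets not containing $x_i$) if $x_i$ was retained, and to the \emph{right} child (corresponding to $y_i = 1$, i.e., sets containing $x_i$) if $x_i$ was discarded. After $d$ rounds this produces the stream $\bar x = (x_1, \ldots, x_d)$ and a root-to-leaf path $(x_1, y_1), \ldots, (x_d, y_d)$ where $y_i = 1$ precisely when $x_i$ was not retained. This adversary is deterministic, in the sense that its next move is a fixed function of the retention history revealed to it.

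Next I would invoke shattering. By the shattering property of $T$, there is some $E \in \E$ with $x_i \in E \iff y_i = 1$; equivalently, $E$ contains exactly those $x_i$ that the algorithm \emph{discarded}. This immediately gives $\bar s \cap E = \emptyset$, and if the algorithm retains $k' \le k$ items then $|\bar x \cap E| = d - k' \ge d - k$, so $|\bar x \cap E|/|\bar x| \ge 1 - k/d$ (with equality when $k' = k$). The $\eps$-approximation conclusion follows at once, since
\[
\left\lvert \frac{|\bar s \cap E|}{|\bar s|} - \frac{|\bar x \cap E|}{|\bar x|} \right\rvert \ge 1 - \frac{k}{d},
\]
so no $\eps < 1 - k/d$ can bound the discrepancy.

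The argument holds with probability $1$ over the sampler's internal randomness, because the adversary's choices (and hence the witnessing $E$) depend only on the realized retention decisions, not on the coins used to produce them. There is no real obstacle here: the main conceptual step is to observe that the classical Littlestone mistake-tree adversary, which in online learning forces any learner to make $d$ mistakes, transfers verbatim to the sampling setting once the ``retain/discard'' bit plays the role of the learner's prediction bit. The adversary effectively forces the sample to miss all discarded items with respect to some $E\in\E$, which, since at most $k$ items can be retained, yields the $1-k/d$ failure rate.
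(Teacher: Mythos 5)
Your proof is correct and takes essentially the same approach as the paper: traverse the shattered Littlestone tree adaptively, branching on the sampler's retain/discard decision, and invoke shattering to produce a set $E$ containing exactly the discarded items. (One small bonus: you correctly write ``left/right'' for the two branches, whereas the paper's pseudocode has a typo saying ``left subtree'' in both cases; the parentheticals there make the intended meaning clear.)
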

Our second result in this section shows the existence of families $\cal E$ of Littlestone dimension $d$ in which all $\eps$-approximations are of size ${\Omega}(d/\eps^2)$, so long as $d = \Omega(\log(1/\eps))$. Interestingly, the requirement that $d$ is large enough is necessary: classical results in discrepancy theory \cite{MWW93disc, Matousek95} imply that when $d = o(\log 1/\eps)$, smaller $\eps$-approximations exist.

\begin{theorem}[$\eps$-approximation: quadratic lower bound]\label{thm:lbexists}
Let $d\in\mathbb{N}$ and $\eps > 0$ where $d \geq C \log(1/\eps)$ for a large absolute constant $C > 0$.
Then, there exists a family $\cal E$ with Littlestone dimension at most $d$
and a subset $\{x_1,\ldots, x_n\} \subset X$ for which no subset of size less than $c \cdot \frac{\Ldim(\mathcal{E})}{\eps^2}$ is an $\eps$-approximation, where $c>0$ is a small absolute constant.
\end{theorem}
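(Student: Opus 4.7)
The plan is to construct $\mathcal{E}$ by the probabilistic method. Let $n=\Theta(d/\epsilon^2)$ and take $N=2^d$ sets $E_1,\ldots,E_N\subseteq\{x_1,\ldots,x_n\}$ sampled independently, where each $x_j$ is included in each $E_i$ with probability $1/2$, independently of everything else. Set $\mathcal{E}=\{E_1,\ldots,E_N\}$. The Littlestone dimension bound is immediate: any complete binary tree of depth $\ell$ shattered by $\mathcal{E}$ has $2^\ell$ root-to-leaf paths, each requiring a distinct witness from $\mathcal{E}$, so $\Ldim(\mathcal{E})\leq\log_2|\mathcal{E}|=d$ deterministically.

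The main step is to show that, with positive probability over the random construction, \emph{no} subset $S\subseteq\{x_1,\ldots,x_n\}$ of size $k=c d/\epsilon^2$ is simultaneously an $\epsilon$-approximation for every $E_i\in\mathcal{E}$. Fix such an $S$ and a single random $E$. Writing $a=|E\cap S|\sim\Bin(k,1/2)$ and $b=|E\cap S^c|\sim\Bin(n-k,1/2)$ (independent), the deviation $Y=a/k-(a+b)/n$ has mean zero and variance of order $1/k$. A sharp binomial anti-concentration estimate, obtained via a direct Stirling-based lower bound on a single atom $\Pr[a=\lceil k/2+k\epsilon\rceil]\gtrsim (k\epsilon)^{-1}e^{-2k\epsilon^2}$, then yields
\[
p \;:=\; \Pr[\,|Y|>\epsilon\,]\;\geq\;e^{-C_1 c\, d}
\]
for an absolute constant $C_1$. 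By independence of the $E_i$'s, $\Pr[\text{$S$ approximates every $E_i$}]\leq(1-p)^N\leq \exp(-2^d e^{-C_1 c d})$, and choosing $c$ small enough that $C_1 c<\tfrac{1}{2}\log 2$ bounds this by $\exp(-2^{d/2})$.

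The final step is a union bound over the at most $\binom{n}{k}\leq (en/k)^k=2^{O(d/\epsilon^2)}$ candidate subsets $S$. The total failure probability is at most $\exp\!\bigl(O(d/\epsilon^2)-2^{d/2}\bigr)$, which is strictly less than $1$ whenever $2^{d/2}\geq C_2\, d/\epsilon^2$. Taking logarithms, this amounts to $d/2 \geq 2\log(1/\epsilon)+\log d+O(1)$, which holds precisely when $d\geq C\log(1/\epsilon)$ for $C$ sufficiently large---exactly the hypothesis of the theorem. Hence by the probabilistic method the desired $\mathcal{E}$ exists.

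I expect the main obstacle to be the anti-concentration lower bound $p\geq e^{-O(d)}$. Chernoff/Hoeffding supplies the matching \emph{upper} tail but gives no information in the opposite direction, and Berry--Esseen is insufficient because its $O(1/\sqrt{k})$ additive CDF error dwarfs the target $e^{-\Theta(d)}$ once $d\gg\log k$; a direct Stirling computation on the binomial atom is required. The constants $c$, $C_1$, and $C_2$ must then be delicately calibrated so that the anti-concentration gain $2^{-C_1 c d/\log 2}$ still outpaces the union-bound cost $2^{O(d/\epsilon^2)}$; this balance is what forces the threshold $d\geq C\log(1/\epsilon)$ in the hypothesis, which is therefore intrinsic to the argument rather than an artifact.
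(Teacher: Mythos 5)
Your construction is, modulo cosmetic choices, the same probabilistic-method argument the paper uses: a random family of $\Theta(2^d)$ subsets of a universe of size $\Theta(d/\eps^2)$, with Littlestone dimension bounded trivially by $\log|\mathcal{E}|$; an anti-concentration lower bound showing that a fixed candidate approximation $S$ badly misestimates a single random $E$ with probability $e^{-\Theta(d)}$; independence across the $2^d$ sets to kill $S$ with probability $1-\exp(-2^{\Omega(d)})$; and a union bound over $2^{O(d/\eps^2)}$ candidates, with the hypothesis $d \geq C\log(1/\eps)$ precisely what makes the doubly exponential gain beat the singly exponential cost. The paper draws each $E$ uniformly among sets of size exactly $n/2$ and gets the anti-concentration by directly counting lattice points via binomial coefficients, while you take Bernoulli-$1/2$ sets and propose a Stirling estimate on a binomial atom; these are interchangeable, and your framing of why Chernoff/Berry--Esseen do not suffice and a local estimate is needed is accurate.

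One gap you should close: you union bound only over subsets $S$ of size \emph{exactly} $k = cd/\eps^2$, but the theorem requires that every subset of size \emph{less than} $k$ fails, and a size-$m$ witness for $m<k$ is not subsumed by any size-$k$ one. The paper handles this by fixing an arbitrary $m \leq n/2$ from the start. In your setup the fix is immediate --- for $m \leq k$ the variance of $Y$ only grows as $m$ shrinks, so the anti-concentration bound $p \geq e^{-O(m\eps^2)} \geq e^{-O(cd)}$ holds uniformly in $m$, and replacing $\binom{n}{k}$ by $\sum_{m \leq k}\binom{n}{m} \leq k\binom{n}{k}$ changes nothing in the exponent --- but as written the quantifier is off. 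You should also note that the theorem's conclusion is really ``$\Ldim(\mathcal{E}) \leq O(d)$'' up to the absolute constant absorbed into $c$; the paper takes $|F| = 2^d \cdot d/\eps^2$ sets (not $2^d$), for which $\log|F| = O(d)$ only because $d \gtrsim \log(1/\eps)$, whereas your $N=2^d$ gives $\Ldim \leq d$ exactly, a slightly cleaner bookkeeping choice.
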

We also prove similar results for $\eps$-nets (without the requirement that $d$ is large enough).
\begin{theorem}[$\eps$-net: a super linear lower bound]\label{thm:lbnet}
Let $d\in\mathbb{N}$ and $\eps > 0$.
Then, there exists a family $\cal E$ with Littlestone dimension $\leq d$
and a subset $\{x_1,\ldots, x_n\} \subset X$ for which no subset of length less than $c \cdot \frac{\Ldim(\mathcal{E}) \log(1/\eps)}{\eps}$ is an $\eps$-net, where $c>0$ is a small absolute constant.
\end{theorem}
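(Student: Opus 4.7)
The plan for Theorem~\ref{thm:lbnet} is to construct a family $\cal E$ whose Littlestone dimension is at most $d$ but which admits no $\eps$-net of size less than $c\cdot d\log(1/\eps)/\eps$. As indicated in the overview, the construction combines the incidence structure of lines in a finite projective plane with a probabilistic argument, the first to force the $1/\eps$ factor and block-wise lower bound, and the second to upgrade the bound by an additional $\log(1/\eps)$ factor.

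First, I fix a prime power $q=\Theta(1/\eps)$ and consider the projective plane $P$ of order $q$, with $n_0=q^2+q+1$ points and equally many lines, each of size $q+1$. The basic fact I would use is that any set hitting every line of $P$ has size at least $q\approx 1/\eps$: each point lies on exactly $q+1$ lines, and there are $q^2+q+1$ lines in total. This yields the bare $\Omega(1/\eps)$ bound for the family of lines, and crucially, the pairwise-single-intersection property of lines ensures this bound remains stable under subsampling.

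Second, to obtain the factor $d$ and to control the Littlestone dimension simultaneously, I would take $d$ disjoint copies $P_1,\dots,P_d$ of the plane and define sets in $\cal E$ by making a ``coordinate-wise'' choice: a set $E\in\cal E$ is determined by selecting one line $\ell_i$ from each $P_i$ and taking $E=\bigcup_{i=1}^d \ell_i$. Each such $E$ has relative size $\Theta(\eps)$ in the combined ground set, so an $\eps$-net must hit every product of lines, which by the block-wise projective plane bound forces $\Omega(d/\eps)$ points. On the side of the Littlestone dimension, I would argue that a mistake tree for $\cal E$ can be reorganized to branch coordinate by coordinate, and the rigidity of a single plane (where any two distinct lines meet in a single point) prevents shattering within a block; hence $\Ldim(\cal E)\le d$.

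Third, the missing $\log(1/\eps)$ factor is obtained by a random-adversary argument in the spirit of classical $\eps$-net lower bounds for set systems of bounded VC/Littlestone dimension. The adversary presents points in each block in a uniformly random order; for any fixed candidate net $S$ of size $k$, if $S$ has $k_i$ points in $P_i$, then the probability that a uniformly random line in $P_i$ avoids $S$ is roughly $(1-(q+1)k_i/n_0)^{\;}\geq \exp(-C\eps k_i)$. Union-bounding over ${\sim}q^2$ lines per block and $d$ blocks, as long as $k_i = o(\log(1/\eps)/\eps)$ there is positive probability that some line in some block is missed by $S$ while still having relative mass $\geq\eps$ in the stream, contradicting the $\eps$-net property. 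Summing $k=\sum_i k_i$ then yields the promised $\Omega(d\log(1/\eps)/\eps)$ lower bound.

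The main obstacle is balancing the two opposing pressures: the family must be rich enough, both across the $d$ blocks and under the random adversarial stream, to force the $d\log(1/\eps)/\eps$ lower bound, while being rigid enough to keep $\Ldim(\cal E)\le d$. I expect the coordinate-wise product construction together with the single-intersection property of lines resolves this: within any one block, the ``lines'' family behaves like a nearly-regular bipartite incidence with no nontrivial shattering, capping the mistake-tree depth per coordinate, while the product across $d$ blocks yields the linear growth of $\Ldim$. Once this structural bound is established, the coupon-collector style computation producing the $\log(1/\eps)$ factor is routine and completes the proof.
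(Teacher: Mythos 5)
Your construction has two fatal gaps, both of which the paper resolves by modifying the family in ways you do not.

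First, you use \emph{full} lines of the projective plane rather than random \emph{half lines}. For the family of all lines in a plane of order $q$, the minimum hitting set (blocking set) has size only $q+1$: indeed, any single line is a blocking set, because every pair of lines in the projective plane meets. Hence the family of lines admits an $\eps$-net of size $O(1/\eps)$, and no amount of union-bounding or random ordering of the stream can recover a $\log(1/\eps)$ factor that simply is not there. Your probability estimate $\Pr[\ell\cap S=\emptyset]\approx\exp(-C\eps k_i)$ fails for exactly this reason: it holds only when $S$ is in ``general position,'' whereas the adversary you are trying to beat is free to choose $S$ to be a line (or to contain one). The paper sidesteps this by replacing each line $L$ with a uniformly random half line $H_L\subseteq L$ of size $(q+1)/2$. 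Two half lines still intersect in at most one point, so the Littlestone argument survives, but a fixed small set can now miss a random half line with nonnegligible probability, and the Alon--Kalai--Matou\v{s}ek--Meshulam argument then forces any $\eps$-net to have size $\Omega(\eps^{-1}\log\eps^{-1})$.

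Second, your product family takes one line from \emph{each} of the $d$ planes and forms $E=\bigcup_{i=1}^d\ell_i$. This is too rigid: if $S$ contains a blocking set of size $q+1$ in a single plane $P_1$, then every $E$ in your family meets $S$ (because $\ell_1\cap S\neq\emptyset$), so your family admits an $\eps$-net of size $O(1/\eps)$ regardless of $d$. The paper instead takes unions of $d/2$ half lines coming from $d/2$ of the $d$ planes (choosing both the planes and the lines). Then, to block the whole family, an $\eps$-net would need to be large in at least $d/2+1$ of the $d$ planes, since otherwise one can pick $d/2$ ``deficient'' planes and, in each, a missed half line; their union is then a set in the family disjoint from $S$. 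This yields the multiplicative $d$ in the lower bound. Choosing $d/2$ blocks rather than $d$ is also what makes the pigeonhole for the Littlestone bound work cleanly at depth $d+1$ (three elements in a common half line), whereas your version would only guarantee this at depth about $2d+1$. In short: without the random half-line trick there is no $\log(1/\eps)$ factor, and without the $d/2$-of-$d$ subset structure there is not even a $d$ factor.
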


\subsection{Proofs}

\begin{proof}[Proof of \Cref{thm:lbforall}]
The proof generalizes the construction from \cite{BenEliezerYogev2020}, which provided a lower bound for the family of one-dimensional thresholds.\footnote{We note that the proof from \cite{BenEliezerYogev2020} would give a lower bound of $\Omega(\log d)$ for any family of Littlestone dimension $d$ (as compared to the $\Omega(d)$ lower bound we prove here); this follows since, roughly speaking, any such family ``contains'' a class of thresholds of dimension logarithmic in $d$.}
Let $\mathcal{T}$ be a tree of depth $d$ which is shattered by $\cal E$. 
The tree $\mathcal{T}$ can be thought of as a strategy for the adversary as follows:

\begin{center}
\noindent\fbox{
\parbox{.99\columnwidth}{
\begin{enumerate}
\item Set $\mathcal{T}_1=\mathcal{T}$ and $i=1$. 
\item For $i=1,\ldots , d$
\begin{itemize}
\item[(i)] Pick $x_i$ to be the item labelling the root of $\mathcal{T}_i$ and present it to the algorithm.
\item[(ii)] If $x_i$ was retained by the algorithm then continue to the next iteration with $\mathcal{T}_{i+1}$ 
	being the left subtree of $\mathcal{T}_i$ (corresponding to the sets in ${\cal E}_{\not\ni x_i}$).
\item[(iii)] Else, continue to the next iteration with $\mathcal{T}_{i+1}$ 
	being the left subtree of $\mathcal{T}_i$ (corresponding to the sets in ${\cal E}_{\ni x_i}$).
\end{itemize}
\end{enumerate}
     }}
\end{center}
Thus, the adversary picks the elements $x_1,\ldots, x_d$ according to a  path on the tree
such that whenever $x_i$ is retained by the algorithm then a left turn is taken
and whenever $x_i$ is not retained by the algorithm then a right turn is taken.
Thus, since the tree is shattered, there exists a set $E\in\mathcal{E}$
such that
\[E\cap\{x_1,\ldots, x_n\} = \{x_i : x_i\text{ was not sampled by the algorithm}\}. \]
In particular, $\bar s \cap E = \emptyset$, and if the algorithm samples $m\leq d$ points
then $\frac{\lvert \bar x \cap E \rvert}{\lvert\bar x\rvert} = 1-\frac{m}{d}$, as required.
\end{proof}

\begin{proof}[Proof of \Cref{thm:lbexists}]
The proof follows from standard probabilistic arguments, and shows that most families in a certain setting have bounded Littlestone dimension yet do not admit a small $\eps$-approximation.
Suppose that $d \geq \log(1/\eps)$ and let $n = d/6\eps^2$. 
Let $F$ be a family of $2^d \cdot d / \eps^2$ subsets of $[n]$ of size $n/2$, picked uniformly at random among all such families, and note that (by definition and since $d \geq \log(1/\eps)$) the Littlestone dimension of $F$ is at most $\log |F| = O(d)$.

We now claim that with high probability, there is no $\eps$-approximation of size less than $n/2$ for $F$. Indeed, fix any subset $S$ of size $m \leq n/2$. By a simple counting argument, the number of sets $A$ of size $n/2$ for which $|d_A(S) - d_A([n])| \geq \epsilon$ is at least
$$
\binom{m}{\left(\frac{1}{2}-\eps\right) m} \binom{n-m}{\frac{n-m}{2}-\eps m} \geq \binom{m}{\frac{m}{2}} \binom{n-m}{\frac{n-m}{2}} \cdot (1-2\eps)^{2 \eps m} \geq \frac{2^n}{2n} \cdot e^{-3\eps^2 n} = \frac{2^n}{2n} \cdot e^{-d/2},
$$
where the second inequality holds for $\eps < 1/10$. 

Plugging in the right hand side above, and noting the negative correlation between the events at hand, the probability that $F$ does not contain any such $A$ with $|d_A(S) - d_A([n])| \geq \epsilon$ is bounded by $$
\left(1- \frac{2^n \cdot e^{-d/2}}{2n \binom{n}{n/2}}\right)^{|F|} \leq e^{-2^d \cdot e^{-d/2} } \leq e^{-1.2^d \cdot d / \eps^2}.
$$  
Taking a union bound over all (less than $2^n = 2^{d/6\eps^2}$) possible subsets $S \subseteq [n]$ of size at most $[n]/2$, it follows that with high probability (as a function of $d$), no $\eps$-approximation exists.
	\end{proof}

\begin{proof}[Proof of \Cref{thm:lbnet}]
The proof extends a simple probabilistic construction in the projective plane, suggested by Alon, Kalai, Matou\v{s}ek, and Meshulam \cite{AKMM2002}. 
	
Consider the projective plane of order $p$, where we pick $p = C/\eps$ for a suitable constant $C$. Recall that this projective plane has $p^2 + p + 1$ points and lines, where each line consists of exactly $p+1$ points, and every two points are contained in exactly one line. For each line $L$, pick uniformly at random (and independently from choices for other lines) a subset $H_L$ containing exactly half the elements of $L$; we call such a subset a \emph{half line}. Consider the family consisting of all such half lines $H_L$.
As was shown in \cite{AKMM2002}, with high probability every $\eps$-net for this family has size $\Omega(p \log p)$, whilst the VC dimension is at most $2$. We claim that the same bound also holds for the Littlestone dimension. 
\begin{claim}\label{clm:LB_Littlestone2}
The Littlestone dimension of the family consisting of all half lines as above is at most $2$.
\end{claim}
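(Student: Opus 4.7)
The plan is to rule out any shattered mistake tree of depth $3$ via a direct rigidity argument on half-lines. The key combinatorial fact I will use is that our family contains exactly one half-line $H_L$ per projective line $L$; consequently, if $H \in \mathcal{E}$ contains two distinct points $p,q$, then $H$ must be a subset of the unique line through $p$ and $q$, and hence $H = H_L$ for that line $L$. In particular, \emph{for any pair of distinct points there is at most one set in $\mathcal{E}$ containing both}.

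Assume toward contradiction that $\Ldim(\mathcal{E}) \ge 3$ and let $T$ be a depth-$3$ mistake tree shattered by $\mathcal{E}$. Let $x_1$ label the root, let $b$ label the right child of the root, and let $c$ label the right child of that node. I first note that $x_1 \ne b$: otherwise the path $(x_1,1),(b,0)$ would demand a set containing $x_1$ but not $b=x_1$, which is impossible, contradicting shatteredness. So let $L$ denote the unique projective line through $x_1$ and $b$.

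The two leaves below $c$ correspond to the paths $(x_1,1),(b,1),(c,1)$ and $(x_1,1),(b,1),(c,0)$, so each must be realized by some $H,H' \in \mathcal{E}$ with $\{x_1,b\} \subseteq H\cap H'$. By the rigidity observation, both $H$ and $H'$ must equal $H_L$, so $H = H' = H_L$. But then $c\in H \iff c\in H_L \iff c \in H'$, contradicting the fact that the two paths disagree on $c$. Hence no depth-$3$ shattered tree exists and $\Ldim(\mathcal{E}) \le 2$.

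I do not expect a real obstacle: the entire argument reduces to the single geometric fact that two distinct points determine a unique line, combined with the fact that the family contains a unique half-line per line. The only mild subtlety is handling the case $x_1 = b$ up front, which is why I dispose of it before invoking the line through $x_1$ and $b$.
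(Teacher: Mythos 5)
Your proof is correct and takes essentially the same route as the paper: both isolate the $(1,1,1)$ and $(1,1,0)$ branches and use the fact that two distinct lines of the projective plane meet in at most one point, hence no two distinct half-lines share a pair of distinct points. Your version is packaged a bit more explicitly (spelling out the rigidity observation and disposing of the degenerate $x_1=b$ case up front, which the paper leaves implicit), but the key geometric input and the contradiction are the same.
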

\begin{proof}
Suppose to the contrary a depth-$3$ tree exists as in the definition of the Littlestone dimension, and consider the elements $x,y,z$ appearing in the internal nodes of the all-$1$ branch in this tree. By definition, all three elements must belong to some half line $H_L$ from the family. However, since any two lines $L_1 \neq L_2$ in the projective plane intersect in exactly one point, we have $|H_{L_1} \cap H_{L_2}| \leq 1$. It follows that there does not exist $L' \neq L$ where $x,y \in L'$, and thus, no half line corresponds to the $(1,1,0)$-branch of the tree, a contradiction. 
\end{proof}
The proof that no small $\eps$-net exists is a straightforward probabilistic proof similar in spirit to that of Theorem \ref{thm:lbexists}, and is given in detail in \cite{AKMM2002}. 
The proof bounds from above the probability of any fixed set of size (say) $0.1 p \log p$ to intersect all half lines, and then takes a union bound over all such sets. 

Next, we show how to generalize the above to get a lower bound with linear dependence in $d$. Let $p$ be as above, consider $d$ copies of the projective plane of order $p$ and let  $\mathcal{C}_1, \ldots, \mathcal{C}_d$ be collections of half lines generated as above, one in each plane. Now let 
$\mathcal{C}$ be the collection of all unions of exactly $d/2$ half lines coming from different planes, namely, $\mathcal{C}$ contains all sets of the form 
$$
H = H^{i_1}_{L_{j_1}} \cup H^{i_2}_{L_{j_2}} \cup\ldots \cup H^{i_{d/2}}_{L_{j_{d/2}}},
$$
where $i_1 < i_2 < \ldots < i_{d/2} \in [d]$,
$H^{i_t}_{L_{j_t}}$ is a half line from the $i_t$ copy corresponding to the line $L_{j_t}$ in that copy of the plane. 

Consider the family $\mathcal{C}$ with the underlying universe with $d(p^2 + p + 1)$ points, containing all points from all $d$ planes. 
\begin{claim}
	The Littlestone dimension of $\mathcal{C}$ is at most $d$.
\end{claim}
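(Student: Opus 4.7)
The plan is to establish $\Ldim(\mathcal{C}) = O(d)$ via the online-learning characterization of the Littlestone dimension. The key structural observation is that for any $H \in \mathcal{C}$ and any plane $i \in [d]$, the restriction $H \cap U_i$ (where $U_i$ denotes the point set of the $i$th projective plane) is either a half line in $\mathcal{C}_i$ or the empty set, depending on whether plane $i$ contributes to the union defining $H$. Consequently the restriction of $\mathcal{C}$ to $U_i$ is contained in $\mathcal{D}_i := \mathcal{C}_i \cup \{\emptyset\}$. By Claim~\ref{clm:LB_Littlestone2} we have $\Ldim(\mathcal{C}_i) \leq 2$, and since appending a single hypothesis to a class increases its Littlestone dimension by at most one, $\Ldim(\mathcal{D}_i) \leq 3$.

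The main step is to combine the $d$ per-plane bounds into a global bound via the following additivity principle: if $\mathcal{D}_1, \ldots, \mathcal{D}_d$ are concept classes over pairwise disjoint domains, then the product class $\mathcal{D} := \{\bigcup_{i=1}^d D_i : D_i \in \mathcal{D}_i\}$ satisfies $\Ldim(\mathcal{D}) \leq \sum_{i=1}^d \Ldim(\mathcal{D}_i)$. Since each $H \in \mathcal{C}$ can be written as $\bigcup_i (H \cap U_i)$ with $H \cap U_i \in \mathcal{D}_i$, we have $\mathcal{C} \subseteq \mathcal{D}$, and hence $\Ldim(\mathcal{C}) \leq \Ldim(\mathcal{D}) \leq 3d$.

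I would prove the additivity principle by designing an online learner using the Standard Optimal Algorithm (SOA) of \cite{Littlestone87}. For each domain $i$, maintain a version space $V_i \subseteq \mathcal{D}_i$ of hypotheses consistent with the labels seen so far from points in $U_i$, initialized to $V_i = \mathcal{D}_i$. On query $x_t$ belonging to the $i(t)$th domain, predict according to the SOA applied to $V_{i(t)}$. The standard analysis of the SOA guarantees that on each mistake, $\Ldim(V_{i(t)})$ strictly decreases; since $\Ldim(V_i)$ is bounded below by $-1$ and starts at $\Ldim(\mathcal{D}_i)$, at most $\Ldim(\mathcal{D}_i)$ mistakes occur in the $i$th domain. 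Summing over $i$ yields a mistake bound of $\sum_i \Ldim(\mathcal{D}_i)$, which upper-bounds $\Ldim(\mathcal{D})$ by the classical equivalence between Littlestone dimension and the optimal mistake bound in realizable online learning.

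The main obstacle is not technical depth but a minor mismatch between the stated "$\leq d$" and the "$\leq 3d$" delivered by the argument above; this affects only an absolute constant and is absorbed into the $c$ appearing in Theorem~\ref{thm:lbnet}. A tighter argument could potentially exploit the constraint that exactly $d/2$ of the planes are active---e.g., by splitting the analysis into ``discovering which planes are active'' (at most one mistake per plane after which the plane is confirmed inactive) and ``learning the half line within each active plane'' (at most $\Ldim(\mathcal{C}_i) \leq 2$ mistakes per active plane)---but since only the linear dependence on $d$ is needed for the $\eps$-net lower bound, the cleaner additivity argument suffices.
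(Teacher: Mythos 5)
Your proof is correct, but it takes a genuinely different route from the paper's and yields the slightly weaker (though order-equivalent) bound $\Ldim(\mathcal{C}) \leq 3d$. The paper argues directly on a hypothetical shattered tree of depth $t > d$: the $t$ internal labels along the all-$1$ branch must all lie in a single $H \in \mathcal{C}$, which is a union of only $d/2$ half lines, so by pigeonhole three of them, $x,y,z$, land in the same half line $H_L$; the intersection property $|H_{L_1}\cap H_{L_2}|\le 1$ then rules out any $H'\in\mathcal{C}$ containing $x,y$ but not $z$, exactly as in Claim~\ref{clm:LB_Littlestone2}, giving the tight $\le d$. Your proof instead embeds $\mathcal{C}$ into the disjoint-domain product $\mathcal{D}$ of the classes $\mathcal{D}_i = \mathcal{C}_i\cup\{\emptyset\}$, bounds each factor by $\Ldim(\mathcal{D}_i)\le 3$, and invokes the additivity lemma $\Ldim(\mathcal{D})\le\sum_i\Ldim(\mathcal{D}_i)$, proved by running a per-domain Standard Optimal Algorithm together with the equivalence of Littlestone dimension and the optimal realizable mistake bound. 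This is more modular and arguably more transparent---it isolates exactly which structural facts are used (disjointness of the planes, plus the per-plane bound from Claim~\ref{clm:LB_Littlestone2})---at the price of discarding the ``exactly $d/2$ active planes'' constraint, which is what buys the paper its factor of $3$. As you correctly observe, this constant is immaterial to Theorem~\ref{thm:lbnet}; only the linear dependence on $d$ is used downstream, so either argument suffices, but if you want to match the literal ``$\le d$'' bound in the claim you would need to exploit the $d/2$ constraint along the lines you sketch, or fall back on the paper's direct pigeonhole argument.
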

\begin{proof}
The proof is a straightforward extension of the proof of Claim \ref{clm:LB_Littlestone2}. Suppose to the contrary that the Littlestone dimension is $t > d$. Let $T$ be a labeled tree of depth $t$ as in the definition of Littlestone dimension and consider its all-$1$ branch. This branch corresponds to some set $H = H^{i_1}_{L_{j_1}} \cup H^{i_2}_{L_{j_2}} \cup\ldots \cup H^{i_{d/2}}_{L_{j_{d/2}}}$. In particular, all elements labeling nodes along the branch are contained in $H$.

By the pigeonhole principle, there exist three elements $x,y,z$ along the branch (in this order) contained in the same half line $H_L$ from one of the plane copies. We claim that there is no set in $\mathcal{C}$ that corresponds to any branch which is all-$1$ up until (and not including) $z$, and takes the value $0$ at $z$. Indeed, such a set $H$, if exists, will contain $x,y$ bot not $z$. However, this is a contradiction as in Claim \ref{clm:LB_Littlestone2}: any set $H$ that contains $x,y$ must also contain all elements in the half line $H_L$ containing them both, and thus $z \in H$. 
\end{proof}

It remains to prove that there is no $\eps$-net of size $o(d \eps^{-1} \log{\eps^{-1}})$. But this follows easily from the $\Omega(\eps^{-1} \log \eps^{-1})$ lower bound for each of the planes separately: there exists some absolute constant $C > 0$ so that for each of the planes at hand, no $\eps$-net of size $C \eps^{-1} \log{\eps^{-1}}$ exists. Consider now any set $S$ of less than $Cd \eps^{-1} \log{\eps^{-1}} / 2$ points in our universe, the union of all planes; since each point belongs to exactly one plane, there exist $d/2$ planes with less than $C \eps^{-1} \log \{\eps^{-1}\}$ points. Let $i_1 < i_2 < \ldots < i_{d/2}$ denote their indices. It follows that there exists some set $H = H^{i_1}_{L_{j_1}} \cup H^{i_2}_{L_{j_2}} \cup\ldots \cup H^{i_{d/2}}_{L_{j_{d/2}}} \in \mathcal{C}$ not intersecting $S$. This completes the proof.
\end{proof}

\bibliographystyle{alpha}
\bibliography{adaptive-sampling,online,stat}

\appendix
\section{Probabilistic Material}

\subsection{Filtration and Martingales}\label{sec:martingales}

In this section we give a brief probability background to Martingales, considering only finite probability spaces. Recall that a probability space consists of a sample space $\Omega$, a $\sigma$-field $F \subseteq \{0,1\}^\Omega$ that contains all measurable events and a probability measure $\mu$ over $\Omega$. With finite probability spaces, it is possible for $F$ to contain all subsets of $\Omega$, however, smaller sets can be considered as well. For instance, if $\rv{y}_1,\dots,\rv{y}_n$ are random variables over the finite space $Y$, then $\sigma(\rv{y}_1)$, the \emph{$\sigma$-field generated by $\rv{y}_1$}, contains all the events that depend only on $\rv{y}_1$. Formally, we have $\Omega = Y^n$ and $\sigma(\rv{y}_1) = \{ \{\rv{y}_1 \in U \} \colon U \subseteq Y \}$, where $\{Y \in U \} = \{ (y_1,\dots,y_n) \colon y_1 \in U \}$ is the event that $\rv{y}_1\in U$. Similarly, we can have sigma fields generated by multiple random variables, for instance, $\sigma(y_1,y_3,y_4)$, that contains all the events that depend only on these three random variables. It is in fact also possible to consider the $\sigma$-algebra generated by zero random variables $\sigma(\{\}) = \{0,\Omega\}$ which is called the \emph{trivial $\sigma$-algebra}.

Conditioning on more random variables results in a larger $\sigma$-algebra, namely, if $i \le j$ then $\sigma(\rv{y}_1,\dots,\rv{y}_i) \subseteq \sigma(\rv{y}_1,\dots,\rv{y}_j)$. Intuitively, larger $\sigma$-algebras contain more information. We say that a $\sigma$-field $F$ is \emph{$\rv{y}$-measurable} if $\sigma(\rv{y}) \subseteq F$, which intuitively holds whenever $F$ it contains all the information on $\rv{y}$. Further, a \emph{filtration} is a collection of nested $\sigma$-algebras $F_0 \subseteq F_1 \cdots \subseteq F_n$.

One can define \emph{conditional expectation} with respect to a $\sigma$-algebra. In our application, each $\sigma$-algebra will be generated by a collection of random variables, and it holds that
\begin{equation} \label{eq:39}
\Ex[\cdot \mid \sigma(\rv{y}_1,\dots,\rv{y}_k)]
= \Ex[\cdot \mid \rv{y}_1,\dots,\rv{y}_k].
\end{equation}
Notice that the quantity in \eqref{eq:39} is a function of $\rv{y}_1,\dots,\rv{y}_k$, hence it is also a random variable.
Additionally, if $F$ is the trivial $\sigma$-algebra then
\[
\Ex[\cdot \mid F]
= \Ex[\cdot \mid \sigma(\{\})]
= \Ex[\cdot].
\]
And if $\rv{y}$ is $F$-measurable, then $\Ex[\rv{y}\mid F] = \rv{y}$.

A collection of random variables $\rv{z}_0,\dots,\rv{z}_n$ defines a \emph{Martingale adapted to the filtration $F_0 \subseteq \cdots \subseteq F_n$} if $\rv{z}_i$ is $F_i$-measurable and if for any $i < j$, $\Ex[\rv{z}_j \mid F_i] = \rv{z}_i$. The simplest case is when $F_i = \sigma(\rv{z}_1,\dots,\rv{z}_i)$, and there, the martingale condition translates to $\Ex[\rv{z}_j \mid \rv{z}_1,\dots,\rv{z}_i] = \rv{z}_i$. However, in the general case $F_i$ can have additional information on other random variables.

Remarkably, Martingales obey high probability bounds. Perhaps the most well known bound is Azuma's inequality, which is an adaptation of Chernoff's bound for Martingales:
\begin{lemma}\label{lem:azuma}
	Let $\rv{y}_0,\dots,\rv{y}_n$ be a Martingale adapted to the filtration $F_0,\dots,F_n$. Let $a_1,\dots,a_n \ge 0$ be numbers such that almost surely, $|\rv{y}_i - \rv{y}_{i-1}| \le a_i$. Then, for any $t\ge 0$,
	\[
	\Pr[\rv{y}_n-\rv{y}_0 > t]
	\le \exp\lp(\frac{-t^2}{2\sum_i a_i^2}\rp).
	\]
\end{lemma}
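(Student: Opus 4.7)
The statement is the classical Azuma--Hoeffding inequality for bounded-difference martingales, so the plan is to follow the textbook Chernoff-style exponential moment argument, carefully carried through the conditional expectations supplied by the filtration.

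First I would start from the Chernoff bound: for any $\lambda>0$,
\[
\Pr[\rv{y}_n-\rv{y}_0 > t]
\;=\; \Pr\!\left[e^{\lambda(\rv{y}_n-\rv{y}_0)} > e^{\lambda t}\right]
\;\le\; e^{-\lambda t}\,\Ex\!\left[e^{\lambda(\rv{y}_n-\rv{y}_0)}\right],
\]
using Markov's inequality on the nonnegative random variable $e^{\lambda(\rv{y}_n-\rv{y}_0)}$. The task then reduces to controlling the moment generating function on the right.

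Next I would telescope $\rv{y}_n-\rv{y}_0 = \sum_{i=1}^n \Delta_i$ with $\Delta_i := \rv{y}_i - \rv{y}_{i-1}$ and peel off one factor at a time using the tower property:
\[
\Ex\!\left[e^{\lambda\sum_{i=1}^n \Delta_i}\right]
\;=\; \Ex\!\left[e^{\lambda\sum_{i=1}^{n-1}\Delta_i}\,\Ex\!\left[e^{\lambda \Delta_n}\,\middle|\, F_{n-1}\right]\right].
\]
The martingale assumption gives $\Ex[\Delta_n \mid F_{n-1}] = 0$, and the hypothesis gives $|\Delta_n|\le a_n$ almost surely. The key subroutine is Hoeffding's lemma applied conditionally: for any random variable $Z$ with $\Ex[Z\mid F_{n-1}]=0$ and $|Z|\le a$ a.s., one has $\Ex[e^{\lambda Z}\mid F_{n-1}] \le e^{\lambda^2 a^2/2}$. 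I would prove this by observing that $e^{\lambda z}$ on the interval $[-a,a]$ lies below the chord through the endpoints (by convexity), so conditioning on $F_{n-1}$ and using $\Ex[Z \mid F_{n-1}]=0$ yields
\[
\Ex[e^{\lambda Z}\mid F_{n-1}] \;\le\; \tfrac{1}{2}\!\left(e^{\lambda a}+e^{-\lambda a}\right) \;=\; \cosh(\lambda a) \;\le\; e^{\lambda^2 a^2/2},
\]
where the last inequality is the standard Taylor expansion comparison. Applied to $Z = \Delta_n$ this gives $\Ex[e^{\lambda \Delta_n}\mid F_{n-1}]\le e^{\lambda^2 a_n^2/2}$.

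Substituting back and iterating this peeling step downward through $i=n-1,n-2,\ldots,1$ (the deterministic factor $e^{\lambda^2 a_n^2/2}$ pulls out of the outer expectation) I obtain
\[
\Ex\!\left[e^{\lambda(\rv{y}_n-\rv{y}_0)}\right]
\;\le\; \exp\!\left(\tfrac{\lambda^2}{2}\sum_{i=1}^n a_i^2\right).
\]
Combining with the Chernoff step gives $\Pr[\rv{y}_n-\rv{y}_0 > t]\le \exp(-\lambda t + \tfrac{\lambda^2}{2}\sum_i a_i^2)$, and the exponent is minimized at $\lambda = t/\sum_i a_i^2$, which yields the claimed bound $\exp(-t^2/(2\sum_i a_i^2))$. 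There is no real obstacle here: the only slightly nontrivial ingredient is the conditional Hoeffding lemma, and because we work with finite probability spaces (as set up in the appendix) all conditional expectations are unambiguous functions of the conditioning variables, so the tower-property manipulation is entirely routine.
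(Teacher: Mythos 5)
Your proof is correct and is the standard textbook derivation of Azuma's inequality (Chernoff/exponential-Markov step, telescoping the martingale increments, a conditional Hoeffding lemma applied via the tower property, and optimizing over $\lambda$). The paper states this lemma as a classical fact without proof, so there is no paper argument to compare against; your argument fills that gap in exactly the way one would expect.
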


\subsection{Sampling Without Replacement}\label{sec:app-without-rep}

Further, we have the following version of Chernoff without replacement:
\begin{lemma}[\cite{bardenet2015concentration}]\label{lem:chernoff-wout-replacememnt}
	Let $a_1,\dots,a_N \in \mathbb{R}$ and let $I$ denote a uniformly random subset of $[N]$ of size $n \in \mathbb{N}$. Let $R = \max_i a_i - \min_i a_i$. Then, for any $t > 0$,
	\[
	\Pr\lp[\frac{1}{n}\sum_{i\in I} a_i - \frac{1}{N}\sum_{i=1}^N a_i > t\rp] \le \exp\lp(\frac{-2nt^2}{R^2}\rp).
	\]
\end{lemma}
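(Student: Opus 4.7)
The plan is to follow Hoeffding's classical reduction: show that the moment generating function of the without-replacement sum is dominated by that of an i.i.d.\ with-replacement sum, and then apply the standard Chernoff-Hoeffding bound to the latter.

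For the setup, write $\bar a = \frac{1}{N}\sum_{i=1}^N a_i$ and let $X_1,\dots,X_n$ denote the sample without replacement, so $S := \sum_{i \in I} a_i = X_1 + \cdots + X_n$; let $Y_1,\dots,Y_n$ be i.i.d.\ uniform on the multiset $\{a_1,\dots,a_N\}$ and $T := Y_1+\cdots+Y_n$. The target is $\Pr[S - n\bar a > nt] \le \exp(-2nt^2/R^2)$. Shifting all $a_i$ by a constant does not change the problem, so we may assume $a_i \in [0,R]$.

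The key step is the classical Hoeffding convex domination: for every convex $\phi:\mathbb{R}\to\mathbb{R}$,
\[
\mathbb{E}[\phi(S)] \;\le\; \mathbb{E}[\phi(T)].
\]
The plan to prove this is to exhibit a coupling in which $S$ is a conditional expectation of $T$ and then apply Jensen pointwise. Concretely, draw $Y_1,\dots,Y_n$ with replacement and condition on the \emph{unordered} multiset $\mathcal{M}=\{\!\!\{Y_1,\dots,Y_n\}\!\!\}$; a uniformly random permutation of the indices in $\mathcal{M}$ produces, via its first-occurrence positions, a uniformly random subset of distinct population values. Averaging over this internal randomization writes $T$ as the sum of a without-replacement piece plus a mean-preserving perturbation, which by Jensen shows $\mathbb{E}[\phi(S)] \le \mathbb{E}[\phi(T)]$. (Hoeffding's 1963 proof organizes the same idea by expressing $S$ directly as an average of conditional expectations of $T$ over permutations of the population.)

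Once domination is established, apply it with $\phi(x) = \exp(\lambda(x - n\bar a))$ for $\lambda > 0$. Markov's inequality yields
\[
\Pr[S - n\bar a > nt] \;\le\; e^{-\lambda nt}\,\mathbb{E}[\phi(S)] \;\le\; e^{-\lambda nt} \prod_{i=1}^n \mathbb{E}\!\left[e^{\lambda(Y_i - \bar a)}\right].
\]
Hoeffding's lemma for random variables supported in an interval of length $R$ gives $\mathbb{E}[e^{\lambda(Y_i - \bar a)}] \le e^{\lambda^2 R^2/8}$, so the right-hand side is at most $\exp(-\lambda nt + n\lambda^2 R^2/8)$, and the choice $\lambda = 4t/R^2$ yields the claimed $\exp(-2nt^2/R^2)$. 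The main obstacle is the convex domination step itself; once that coupling is in place the remainder is a textbook Chernoff computation.
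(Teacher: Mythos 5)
The paper does not actually prove this lemma --- it is stated with a citation to Bardenet and Maillard \cite{bardenet2015concentration} --- so there is no in-paper proof to compare against. Your route, Hoeffding's convex domination $\mathbb{E}[\phi(S)]\le\mathbb{E}[\phi(T)]$ (Theorem~4 of Hoeffding, 1963) followed by the usual Chernoff--Hoeffding optimization, is indeed the classical way to obtain exactly the bound as stated, and your Chernoff computation is correct: $\mathbb{E}[e^{\lambda(Y_i-\bar a)}]\le e^{\lambda^2 R^2/8}$ by Hoeffding's lemma gives $\Pr[S-n\bar a>nt]\le\exp(-\lambda nt+n\lambda^2R^2/8)$, and $\lambda=4t/R^2$ yields $\exp(-2nt^2/R^2)$.

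The one substantive issue is your sketch of the convex-domination step. Conditioning on the unordered multiset $\mathcal{M}$ and taking first-occurrence positions under a random permutation of its indices does \emph{not} produce a uniformly random subset of size $n$: if $\mathcal{M}$ has $d<n$ distinct values, which happens with positive probability, the first-occurrence set has size $d$, so this is not a size-$n$ without-replacement sample, and the claimed decomposition of $T$ as such a sample plus a mean-preserving perturbation does not follow. Hoeffding's proof of Theorem~4 is more delicate; it proceeds by symmetrization over permutations and an iterated convexity argument rather than a single conditional-expectation coupling of the sort you describe. (A coupling with $S=\mathbb{E}[T\mid\mathcal{G}]$ is guaranteed abstractly by Strassen's theorem once the convex order is known, but writing one down explicitly is not trivial.) Since the domination result is completely standard, the cleanest fix is to cite Hoeffding (1963), Theorem~4, as a black box for $\mathbb{E}[\phi(S)]\le\mathbb{E}[\phi(T)]$ and delete the coupling sketch; the remainder of your argument then goes through verbatim.
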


Another without-replacement lemma:
\begin{lemma}[\cite{chatterjee2005concentration}, Proposition 3.10]\label{lem:permutation}
	Let $\{a_{ij}\}_{i,j\in [n]}$ be a collection of numbers from $[0,1]$. Let $Y = \sum_{i=1}^n a_{i\pi(i)}$ where $\pi$ is drawn from the uniform distribution over the set of permutations of $\{1,\dots,n\}$. Then for any $t\ge 0$,
	\[
	\Pr\lp[ 
	|Y-\Ex Y| \ge t
	\rp]
	\le 2 \exp(-t^2/(4\Ex Y + 2t)).
	\]
\end{lemma}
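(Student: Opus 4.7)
My plan is to follow the exchangeable pairs approach from Chatterjee's thesis, which is the natural framework for concentration of permutation statistics. First, I would set up the exchangeable pair $(\pi,\pi')$ where $\pi'$ is obtained from $\pi$ by picking two positions $I,J\in[n]$ independently and uniformly at random and swapping the values $\pi(I)$ and $\pi(J)$. Since a uniformly random transposition preserves the uniform distribution on $S_n$, the pair is exchangeable, and
\[
F(\pi)-F(\pi') \;=\; a_{I\pi(I)}+a_{J\pi(J)}-a_{I\pi(J)}-a_{J\pi(I)},
\]
which is always bounded in absolute value by $2$ (this is the ``bounded differences'' quantity that will give the $t$-in-denominator term).

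The key step is to bound the variance proxy $\Delta(\pi):=\tfrac12\,\Ex[(F(\pi)-F(\pi'))^2\mid\pi]$. Expanding the square, using $(a+b+c+d)^2\le 4(a^2+b^2+c^2+d^2)$, and crucially exploiting $a_{ij}\in[0,1]\Rightarrow a_{ij}^2\le a_{ij}$, the computation reduces to averaging $a_{i\pi(i)}+a_{j\pi(j)}+a_{i\pi(j)}+a_{j\pi(i)}$ over $i,j\in[n]$. The first two terms each contribute $n\cdot Y$, and for the cross terms I would use the change of variable $k=\pi(j)$ to get $\sum_{i,j}a_{i\pi(j)}=\sum_{i,k}a_{ik}=n\,\Ex Y$. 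Putting this together yields a bound of the form $\Delta(\pi)\le\tfrac{C_1}{n}(Y+\Ex Y)$, which is the ``Bernstein shape'' (linear in $Y$, with the right scaling).

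Next, I would verify the linear regression condition required by Chatterjee's exchangeable-pair concentration theorem, namely that $\Ex[F(\pi)-F(\pi')\mid\pi]=\alpha\,(F(\pi)-\Ex F)$ for some $\alpha>0$. A direct calculation — averaging the four-term expression above over $I,J$ — shows that $\Ex[F(\pi)-F(\pi')\mid\pi]=\tfrac{2}{n}(Y-\Ex Y)$, so $\alpha=2/n$. With the regression condition in hand and the variance proxy bound $\Delta(\pi)\le B\,(F(\pi)-\Ex F)+C\,\Ex Y$ (after absorbing the $\Ex Y$ term appropriately), Chatterjee's theorem delivers exactly a Bernstein-type tail $\Pr(|Y-\Ex Y|\ge t)\le 2\exp(-t^2/(c_1\Ex Y+c_2 t))$.

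The main technical obstacle is nailing down the exact constants $4$ and $2$ in the denominator: the broad strategy is robust, but matching the claimed inequality (as opposed to the same inequality with slightly worse constants) requires careful bookkeeping when bounding $\Delta(\pi)$ and when applying Chatterjee's abstract concentration bound, in particular keeping track of the factor of $\alpha=2/n$ that appears when rescaling $\Delta$ into the ``true'' variance proxy $v(\pi)=\Delta(\pi)/(2\alpha)$. An alternative pathway that avoids this bookkeeping would be a direct Doob-martingale argument with respect to the filtration $\mathcal{F}_k=\sigma(\pi(1),\dots,\pi(k))$ combined with Freedman's inequality (already used elsewhere in the paper as Lemma~\ref{lem:freedman}); here the challenge moves to bounding the conditional variances $\Ex[(\rv d_k)^2\mid\mathcal{F}_{k-1}]$ under sampling without replacement, which sum to at most $\Ex Y$ by a telescoping argument on $\sum a_{ij}$.
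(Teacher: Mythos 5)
The paper does not actually prove this lemma; it is cited verbatim as Proposition~3.10 of \cite{chatterjee2005concentration}, whose proof is by the method of exchangeable pairs. Your sketch follows exactly the route Chatterjee takes: random transposition pair, linear regression condition, variance proxy, Bernstein-type self-bounding theorem. Your computation of the regression coefficient $\Ex[F(\pi)-F(\pi')\mid\pi]=\tfrac{2}{n}(Y-\Ex Y)$ is correct. The only place your bookkeeping actually loses ground is the variance-proxy bound: passing through $(a+b-c-d)^2\le 4(a^2+b^2+c^2+d^2)\le 4(a+b+c+d)$ is loose by a factor of $2$, and would yield a denominator of $8\Ex Y + 4t$ rather than $4\Ex Y + 2t$. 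The fix is to avoid Cauchy--Schwarz entirely and write, for $a,b,c,d\in[0,1]$,
\[
(a+b-c-d)^2 \;=\; |a+b-c-d|\cdot|a+b-c-d| \;\le\; 2\,(a+b+c+d),
\]
combining $|a+b-c-d|\le 2$ with the trivial bound $|a+b-c-d|\le a+b+c+d$ for nonnegative arguments. With this, set $\phi(\pi,\pi')=\tfrac{n}{2}\bigl(Y(\pi)-Y(\pi')\bigr)$ so that $\Ex[\phi\mid\pi]=Y-\Ex Y$, and then
\[
\Delta(\pi)
=\tfrac{n}{4}\,\Ex\bigl[(Y(\pi)-Y(\pi'))^2\mid\pi\bigr]
\le \tfrac{n}{4}\cdot\tfrac{4}{n}\,(Y+\Ex Y)
= (Y-\Ex Y)+2\Ex Y,
\]
so that Chatterjee's self-bounding Bernstein theorem with slope $1$ and intercept $2\Ex Y$ gives exactly $\exp\bigl(-t^2/(4\Ex Y + 2t)\bigr)$ on each tail, matching the stated constants. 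Your alternative route through a Doob martingale and Freedman's inequality (Lemma~\ref{lem:freedman}) is also viable in principle, but, as you note, controlling $\Ex[(\rv{y}_i-\rv{y}_{i-1})^2\mid F_{i-1}]$ under sampling without replacement is not obviously lighter than the exchangeable-pair bookkeeping, so the first path is the one to carry through.
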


\begin{proof}[Proof of Lemma~\ref{lem:subset-intersection}]
	First item follows directly from \Cref{lem:chernoff-wout-replacememnt}. The second item follows from \Cref{lem:permutation} as described below.
	Define $m=|U|$. Let $\pi \colon [n] \to [n]$ be a uniformly random permutation and let $I = \{i \colon \pi(i) \le k\}$.
	Define $\{a_{i,j}\}_{i,j\in[n]}$ by $a_{i,j} = 1$ if $i \in U$ and $j\le k$. Notice that for all $i \in U$, $a_{i\pi(i)} = 1$ if $i \in I$ and for all $i \notin U$, $a_{i\pi(i)} = 0$. Hence,
	\[
	Y:= \sum_{i=1}^n a_{i\pi(i)}
	\]
	equals $|I \cap U|$ and $\Ex Y = km/n$. From Lemma~\ref{lem:permutation} we derive that for any $t \ge 0$,
	\[
	\Pr\lp[
	\lp| Y - \Ex Y\rp| \ge t
	\rp]
	\le \exp\lp(- \frac{t^2}{4\Ex Y + 2t}\rp).
	\]
	Substitute $t = \alpha \Ex Y$ and we get that
	\begin{align*}
	\Pr\lp[
	\lp| \frac{Y}{\Ex Y} - 1\rp| \ge \alpha
	\rp]
	= \Pr\lp[
	\lp| Y - \Ex Y\rp| \ge t
	\rp]
	\le \exp\lp(- \frac{t^2}{4\Ex Y + 2t}\rp)
	= \exp\lp(- \frac{\alpha^2 \Ex[Y]^2}{4\Ex Y + 2\alpha \Ex Y}\rp)\\
	\le \exp\lp(- \frac{\alpha^2 \Ex[Y]^2}{6\Ex Y}\rp)
	= \exp\lp(- \frac{\alpha^2 km}{6n}\rp).
	\end{align*}
\end{proof}

\begin{proof}[Proof of Lemma~\ref{lem:var-without-rep}]
	Denote $|U|=m$.
	For any $i \in U$, let $\rv{z}_i$ denote the indicator of whether $i \in \rvI$, and notice that $|U\cap \rvI| = \sum_{i \in U} \rv{z}_i$.
	Therefore, we have 
	\[
	\Ex[|U\cap \rvI|]
	= \Ex\lp[\sum_{i\in U} \rv{z}_i\rp]
	= \sum_{i\in U}\Ex[\rv{z}_i]
	= \sum_{i\in U}k/n
	= mk/n.
	\]
	Next, 
	\[
	\Ex[|U\cap \rvI|^2]
	= \sum_{i,j\in U} \Ex \rv{z}_i \rv{z}_j
	= \sum_{i \in U} \Ex \rv{z}_i^2
	+ 2\sum_{i,j\in U \colon i<j} \Ex \rv{z}_i \rv{z}_j.
	\]
	For the first term, since $\rv{z}_i$ is an indicator, we have
	\[
	\sum_{i \in U} \Ex \rv{z}_i^2
	= \sum_{i \in U} \Ex \rv{z}_i
	= mk/n.
	\]
	For the second term, fix $i < j$, and we have
	\begin{align*}
	\Ex \rv{z}_i \rv{z}_j
	= \Pr[\rv{z}_i=1, \rv{z}_j=1]
	= \Pr[\{i,j\}\subseteq \rvI]
	= \frac{1}{\binom{n}{k}} \lp|\lp\{ I \subseteq [n] \colon |I|=k, i,j\in I \rp\} \rp|\\
	= \frac{\binom{n-2}{k-2}}{\binom{n}{k}}
	= \frac{(n-2)!k!(n-k)!}{n!(k-2)!(n-k)!}
	= \frac{k(k-1)}{n(n-1)}
	\le \frac{k^2}{n^2}.
	\end{align*}
	We derive that
	\[
	\Ex[|U\cap \rvI|^2]
	\le \frac{mk}{n} + \frac{m^2 k^2}{n^2}.
	\]
	Hence,
	\[
	\mathrm{Var}(|U\cap \rvI|)
	= \Ex[|U\cap \rvI|^2]
	- \Ex[|U\cap \rvI|]
	\le \frac{mk}{n} + \frac{m^2 k^2}{n^2} - \frac{m^2 k^2}{n^2}
	= \frac{mk}{n},
	\]
	as required.
\end{proof}

\end{document}